%% file: iclr2026_conference.tex
\documentclass{article} 
\usepackage{iclr2026_conference,times}

\usepackage{amsthm}
\usepackage{amsmath,amsfonts,amssymb,bm,mathtools}
\usepackage{url}

\usepackage{thmtools}
\usepackage{thm-restate}

\usepackage{mdframed}

\newmdenv[
    leftline=false,
    rightline=false,
    topline=true,
    bottomline=true,
    linewidth=0.5pt,
    skipabove=0pt,
    skipbelow=0pt,
    leftmargin=0pt,
    rightmargin=0pt,
    innerleftmargin=0pt,
    innerrightmargin=0pt,
    innertopmargin=2pt,
    innerbottommargin=2pt
]{proofblock}

\input{macros}

\usepackage[hypertexnames=false]{hyperref}

\title{Global Convergence of Gradient Descent \\ for Score Matching in Gaussian Mixtures \\ via Reverse Fisher Divergence}

\author{
   Alexander Tyurin \\
   \phantom{,}AXXX, Moscow, Russia \\
   \phantom{,}Applied AI Institute, Moscow, Russia \\
}

\hypersetup{
  colorlinks   = true, 
  urlcolor     = blue, 
  linkcolor    = {red!75!black}, 
  citecolor   = {blue!50!black} 
}

\theoremstyle{plain}
\newtheorem{theorem}{Theorem}[section]
\newtheorem*{theorem*}{Theorem}

\newtheorem{lemma}[theorem]{Lemma}
\newtheorem{corollary}[theorem]{Corollary}
\theoremstyle{definition}

\theoremstyle{remark}

\usepackage{xcolor}
\definecolor{myblue}{RGB}{0,102,204}
\definecolor{myred}{RGB}{190,35,35}

\allowdisplaybreaks

\usepackage{tcolorbox}
\newtcolorbox{mytheobox}{
  boxrule=0pt,      
  left=1mm,
  right=1mm,
  top=1mm,
  bottom=1mm
}

\DeclareSymbolFont{extraup}{U}{zavm}{m}{n}
\DeclareMathSymbol{\varheart}{\mathalpha}{extraup}{86}
\DeclareMathSymbol{\vardiamond}{\mathalpha}{extraup}{87}

\iclrfinalcopy 
\begin{document}

\maketitle

\begin{abstract}
   The score matching problem is a central training objective in modern generative modeling, diffusion models, fitting unnormalized statistical models, and inverse problems. A standard approach is to minimize the \emph{forward Fisher divergence}, where the expectation is taken with respect to the teacher distribution. However, recent results show that even in simple Gaussian mixture model settings, this objective can lead to undesirable and initialization-dependent convergence behavior. In this paper, we study an alternative objective: the \emph{reverse Fisher divergence}, where the expectation is taken with respect to the student distribution. We analyze gradient descent (GD) for fitting Gaussian mixture models and show that this change in the objective leads to significantly better optimization properties. First, when the teacher distribution is a single Gaussian and the student is a Gaussian mixture model with fixed weights and identity covariances, we prove the global convergence of GD from \emph{arbitrary initializations}. Second, we extend the analysis to the case where the teacher is also a Gaussian mixture model and prove global convergence guarantees under a global random initialization scheme and a ${\scalebox{0.85}{$\widetilde{\Omega}(1)$}}$-separation assumption on the target means. In particular, with high probability, each student component converges near its closest teacher component, and we provide conditions under which the student distribution converges in total variation distance. Our proofs rely on a new Lyapunov-based analysis of the gradient descent dynamics, showing that the \emph{reverse Fisher divergence} has a much more favorable optimization landscape than the \emph{forward Fisher divergence}.
\end{abstract}

\section{Introduction}
The score matching problem has become a central training objective in many applications, including modern generative modeling, diffusion models, fitting unnormalized statistical models, and inverse problems \citep{hyvarinen2005estimation,song2019generative,chung2022diffusion}. In this task, we have two distributions, $p_{\mu}(x)$ and $\bar{p}(x)$, called the student and the teacher, respectively, where the former is parameterized by $\mu$. The main goal is to find $\mu$ such that the score function $\nabla_x \log p_{\mu}(x)$ of the student is close to the score function $\nabla_x \log \bar{p}(x)$ of the teacher. Matching scores rather than densities directly is appealing, for instance, when the target density, or both densities, are known only up to normalizing constants, since the score functions do not depend on these constants.

Empirically, score-based methods have been highly successful and have led to a broad range of applications in image generation, text-to-image generation, and audio synthesis~\citep{rombach2022high,kong2020diffwave,ramesh2022hierarchical}. However, the \emph{theoretical} properties of the score matching problem remain much less understood. \emph{How should we match the score functions? How can we guarantee the convergence of algorithms? What theoretical convergence rates can we achieve?}

In both theory and practice, a typical approach to matching scores is to minimize the \emph{forward Fisher divergence},
$\textstyle \mathcal L_{\mathrm{for}}(\mu)
=
\mathbb E_{\color{myred} x\sim \bar{p}}
\left[
\left\|
\nabla_x \log p_\mu(x)
-
\nabla_x \log \bar{p}(x)
\right\|^2
\right],$
where the expectation is taken with respect to the teacher distribution. This objective is minimized using the gradient descent (GD) method $\mu^{k+1} = \mu^k - \gamma \nabla \mathcal L_{\mathrm{for}}(\mu^k)$ (in practice, Adam~\citep{kingma2014adam} and other adaptive stochastic methods are often used, whereas in this paper we consider GD for the theoretical analysis).

\subsection{Related work}
Many theoretical works on diffusion and score-based models analyze the sampling process when an exact or sufficiently accurate score function is available~\citep{chen2022sampling,lee2023convergence,benton2024nearly,gupta2025faster}. These results provide important guarantees for the generative process, but they do not fully explain how and when gradient-based training can learn the score function. Another line of work studies the statistical and optimization properties of score estimation, including generalization and statistical complexity~\citep{koehler2022statistical,li2023generalization,wibisono2024optimal}. \citet{han2024neural} analyzed the convergence of gradient-based methods using the neural tangent kernel (NTK) approach. \citet{wang2024evaluating} analyzed GD on a finite-sum empirical approximation of the score-matching objective under an overparameterized neural network. 
\citet{gatmiry2024learning,chen2024learning} provide statistical and computational guarantees for recovering Gaussian mixtures. In contrast, our study belongs to the optimization side of this literature: we analyze the behavior and properties of GD dynamics for the score-matching problem.

\textbf{Score Matching with Gaussian Mixture Models (GMMs).} Analyzing the convergence of GD applied to the forward Fisher divergence for arbitrary distributions is nontrivial. Recent studies focus on score matching problems where the student and teacher are \emph{Gaussian mixture models} (GMMs), but even this setting leads to a complex analysis because the resulting optimization problem is nonconvex. Before we discuss the most relevant works, it is also worth mentioning that fitting GMMs has been extensively studied in the context of the Expectation-Maximization (EM) and gradient EM algorithms~\citep{xu2016global,daskalakis2017ten,dwivedi2020sharp,wu2021randomly,xu2024toward,zhou2025global}; there are several parallels with score matching for GMMs, and some EM-type proof techniques are useful in this context.
\begin{figure}[t]
    \centering
    \includegraphics[width=0.8\linewidth]{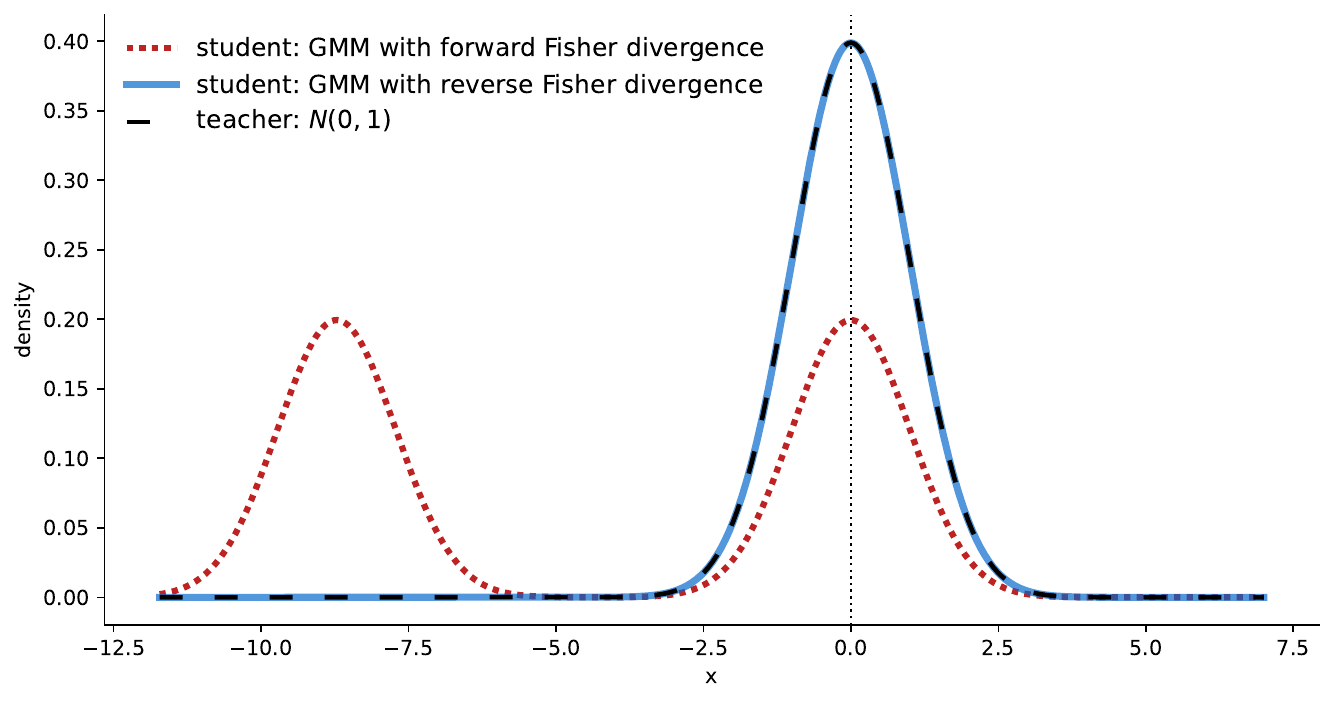}
    \caption{A result of running gradient descent (GD) on the \emph{forward} and \emph{reverse} Fisher divergences. The teacher/target distribution is a single Gaussian, while the student/model is a Gaussian mixture model with two components (see the setup in Sections~\ref{sec:prel} and \ref{sec:exp}). Even in this simple setting, GD on the \emph{forward Fisher divergence} converges to an incorrect distribution, whereas GD on the \emph{reverse Fisher divergence} almost perfectly recovers the teacher distribution.}
    \label{fig:gmm-n2-final-distributions}
\end{figure}

\citet{shah2023learning} considered the forward Fisher divergence with GD for fitting the mean parameters of a mixture of two symmetric Gaussians with constant separation. They study the DDPM/forward Fisher objective, where one can choose the noise level, and use this structure in a two-phase algorithm: first fitting the parameters in the large-noise regime and then refining them in the low-noise regime. Their random-initialization analysis is limited to the setting with two components. They also consider the problem of fitting GMMs with multiple components, but they achieve convergence only under the assumption that the initial student means are close to the ground truth. \citet{zhang2025convergence} analyzed the case where the teacher is a single Gaussian and the student is a GMM with multiple components, motivated by the fact that real models are often overparameterized and the target number of components may be unknown. Their theoretical results show that even in this simple setting, the forward Fisher divergence \emph{exhibits undesirable and unstable convergence behavior}: depending on the noise level and initialization, GD may converge to the ground truth, become stuck at a local maximum with nonzero loss, or drive only one student component toward the teacher while the remaining components diverge. 

\subsection{Motivation and contributions} 
\citet{zhang2025convergence} proved that the \emph{forward Fisher divergence} can lead to subtle and initialization-dependent optimization behavior. Ideally, a loss function should possess the following properties: \emph{absence of local maxima or spurious stationary points} and \emph{global convergence of GD}. The forward Fisher divergence clearly does \emph{not} satisfy these properties, even in the simple setting of fitting the score function of a GMM to that of a single Gaussian, which motivates the search for alternative objectives with more favorable optimization landscapes.

$\spadesuit$ In this paper, we study an alternative objective: the \emph{reverse Fisher divergence},
\begin{align}
\label{eq:main}
\textstyle \mathcal L_{\mathrm{rev}}(\mu)
=
\mathbb E_{\color{myblue} x\sim p_\mu}
\left[
\left\|
\nabla_x \log p_\mu(x)
-
\nabla_x \log \bar{p}(x)
\right\|^2
\right],
\end{align}
where the expectation is taken with respect to the student distribution instead of the teacher distribution. Although the difference between the forward Fisher divergence and \eqref{eq:main} is only the distribution under the expectation, we show that this change leads to significantly better optimization properties. 

$\clubsuit$ (\textbf{One target component}). We consider the \emph{reverse Fisher divergence} and prove the convergence of GD for score matching a student GMM distribution to a single Gaussian distribution from \emph{arbitrary initialization} (Theorem~\ref{thm:main_one}). In contrast to the previous results for the \emph{forward Fisher divergence}, which study the same score-matching setting for GMMs, our analysis establishes \emph{global convergence}. Moreover, the convergence rate is dimension-independent when $d$ is sufficiently large and depends only logarithmically on the initial distance to the target mean. These results provide strong theoretical evidence that the reverse Fisher divergence possesses a significantly more favorable optimization landscape and stronger convergence guarantees than the forward Fisher divergence.

$\vardiamond$ (\textbf{Multiple target components}). In Theorem~\ref{thm:mainmany_simple}, we extend this result to the case where the teacher is also a GMM. For arbitrary numbers $n$ and $m$ of student and teacher components, we establish \emph{global convergence guarantees} for the student mean parameters to the target means under random initialization\footnote{As expected, convergence is not possible under deterministic initialization; see Theorem~\ref{thm:folklore}.} and a $\tilde\Omega(1)$-separation assumption on the target means. In particular, we uniformly sample the initial means on a sufficiently large sphere of radius $R^0$ and show that, with high probability, each student mean converges to its closest teacher mean up to $\varepsilon$-accuracy. When the dimension satisfies $d \geq n,$ the rate is $\tilde\Theta\left(\nicefrac{n^2}{\varepsilon}\right),$ which depends only logarithmically on $R^0$ and is independent of $d.$ In Section~\ref{sec:tv}, we also analyze conditions under which the student distribution converges in total variation distance.

$\varheart$ (\textbf{Proof techniques}). In this work, we face mathematical difficulties similar to those in related work: the problem is nonstandard, with a nontrivial landscape and highly complex gradients. The fact that GD with the reverse Fisher divergence converges globally is somewhat surprising. To prove this, in Section~\ref{sec:single}, we develop a new proof technique. The main idea is to show that the method is equivalent to
$\mZ^{k+1} = \left(\mI - \frac{2 \gamma}{n} \mB^k\right) \mZ^{k}$ and to prove that the Lyapunov function $\|\mZ^{k}\|_F^2$ is non-increasing and converges at a sublinear rate. We show that $\mB^k$ is symmetric, positive semidefinite, and has bounded norm. Most importantly, it satisfies $\langle\mZ^{k},\mB^k \mZ^{k}\rangle \geq \Omega(\min\{\|\mZ^{k}\|_F^4, \|\mZ^{k}\|_F^2\}).$ In Section~\ref{sec:proof_technique}, we present new proof techniques to show convergence in the case of multiple target components, where we provide a two-stage analysis approach with careful control of errors.
\subsection{Preliminaries}
\label{sec:prel}
We consider the score matching problem, where $\bar{p}(x)$ and $p_{\mu}(x)$ denote the teacher and student distributions, respectively. Our goal is to find parameters $\mu$ such that the score function of the student distribution, $\nabla_x \log p_{\mu}(x)$, is as close as possible to the score function of the teacher distribution, $\nabla_x \log \bar{p}(x).$ Unlike most previous works, we consider the problem of minimizing the \emph{reverse Fisher divergence} \eqref{eq:main}. In our setting, we follow \citep{shah2023learning,zhang2025convergence} and assume that $\bar{p}$ and $p_{\mu}$ are GMMs of the form
\begin{align}
\label{eq:p}
\textstyle p_\mu(x)
=
\frac1n\sum\limits_{i=1}^n \varphi(x - \mu_i)
\quad \textnormal{ and } \quad
\bar{p}(x)
=
\frac1m\sum\limits_{a=1}^m \varphi(x - \bar\mu_a) \quad \forall x \in \R^d, 
\end{align}
where $\varphi(x) \eqdef (2 \pi)^{-d/2}\exp\left(-\nicefrac{\norm{x}^2}{2}\right)$ is the standard normal probability density function. Here, the target means $\bar\mu_1,\ldots,\bar\mu_m \in \R^d$ and the number of components $m \geq 1$ are fixed but unknown parameters, and $n \geq 1$ is the number of Gaussian components in the student distribution, parameterized by $\mu \equiv [\mu_1;\dots;\mu_n] \in \R^{n d}$.

\textbf{Gradient descent.} To minimize \eqref{eq:main}, we consider the gradient descent (GD) method:
\begin{align}
\label{eq:wsltovjir}
\mu^{k+1} = \mu^{k} - \gamma \nabla_\mu \mathcal L_{\mathrm{rev}}(\mu^{k}),
\end{align}
where $\gamma > 0$ is the step size, $\mu^{k} \equiv [\mu^k_1;\dots;\mu^k_n]$, and $\{\mu^0_i\}_{i \in [n]}$ are the initial means. Before we derive the gradient,
it is convenient to introduce the following notation.
Define
$\textstyle r_i(x)
\eqdef
\frac{\exp\left(-\frac12\|x-\mu_i\|^2\right)}{\sum_{j=1}^n\exp\left(-\frac12\|x-\mu_j\|^2\right)},$
$m_\mu(x)
\eqdef
\sum\limits_{i=1}^n r_i(x)\mu_i ,
$
and
$
\textstyle \mC_\mu(x)
\eqdef
\sum\limits_{i=1}^n
r_i(x)
\bigl(\mu_i-m_\mu(x)\bigr)
\bigl(\mu_i-m_\mu(x)\bigr)^\top .
$
Similarly, $\bar r_a(x)
\eqdef
\frac{
\exp\left(-\frac12\|x-\bar\mu_a\|^2\right)
}{
\sum_{b=1}^m
\exp\left(-\frac12\|x-\bar\mu_b\|^2\right)
},$ $
\bar m(x)
\eqdef
\sum_{a=1}^m \bar r_a(x)\bar\mu_a,$
and $
\bar \mC(x)
\eqdef
\sum\limits_{a=1}^m
\bar r_a(x)
\bigl(\bar\mu_a-\bar m(x)\bigr)
\bigl(\bar\mu_a-\bar m(x)\bigr)^\top.$ In Section~\ref{sec:gradient}, we derive the gradient of $\mathcal L_{\mathrm{rev}}$.
\begin{restatable}[Gradient of \eqref{eq:main} with GMMs]{proposition}{PROPGRADIENT}
\label{prop:grad-general}
Let $\nabla_{\mu}\mathcal L_{\mathrm{rev}}(\mu) \equiv [\nabla_{\mu_1}\mathcal L_{\mathrm{rev}}(\mu); \dots; \nabla_{\mu_n}\mathcal L_{\mathrm{rev}}(\mu)].$
Then, for all $\mu \in \R^{n d}$ and $\ell \in [n]$, the gradient with respect to the $\ell$\textsuperscript{th} mean component is
\[
\nabla_{\mu_\ell}\mathcal L_{\mathrm{rev}}(\mu)
=
\frac{2}{n}
\mathbb E_{x \sim \mathcal N(\mu_\ell,\mI)}
\left[
m_\mu(x)-\bar m(x)
+
\mC_\mu(x)\bigl(\mu_\ell-m_\mu(x)\bigr)
-
\bar \mC(x)\bigl(\mu_\ell-\bar m(x)\bigr)
\right].
\]
\end{restatable}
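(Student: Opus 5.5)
Write $s_\mu(x)\eqdef\nabla_x\log p_\mu(x)=m_\mu(x)-x$ and $\bar s(x)\eqdef\nabla_x\log\bar p(x)=\bar m(x)-x$, and set $\delta(x)\eqdef s_\mu(x)-\bar s(x)=m_\mu(x)-\bar m(x)$. Then
\[
\mathcal L_{\mathrm{rev}}(\mu)=\int p_\mu(x)\,\|\delta(x)\|^2\,dx ,
\]
and differentiating under the integral sign (justified by Gaussian tails, since $m_\mu$ and $\bar m$ lie in bounded convex hulls and the $\mu_\ell$-derivatives are dominated by polynomial-times-Gaussian functions locally uniformly in $\mu$) gives two contributions:
\[
\nabla_{\mu_\ell}\mathcal L_{\mathrm{rev}}=\underbrace{\int \nabla_{\mu_\ell}p_\mu(x)\,\|\delta(x)\|^2dx}_{T_1}+\underbrace{2\int p_\mu(x)\,\bigl(\partial_{\mu_\ell}m_\mu(x)\bigr)^{\top}\delta(x)\,dx}_{T_2}.
\]

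\textbf{The term $T_2$.} Since $\partial_{\mu_\ell} r_i=r_i(\mathbb 1_{i=\ell}-r_\ell)(x-\mu_\ell)^\top$, the Jacobian is
\[
\partial_{\mu_\ell}m_\mu=r_\ell\mI+r_\ell(\mu_\ell-m_\mu)(x-\mu_\ell)^\top .
\]
Using $p_\mu r_\ell=\frac1n\varphi(x-\mu_\ell)$, this becomes
\[
T_2=\frac2n\mathbb E_{x\sim\mathcal N(\mu_\ell,\mI)}\bigl[\delta+(x-\mu_\ell)(\mu_\ell-m_\mu)^\top\delta\bigr].
\]

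\textbf{The term $T_1$.} Here $\nabla_{\mu_\ell}p_\mu=\frac1n\varphi(x-\mu_\ell)(x-\mu_\ell)$, so
\[
T_1=\frac1n\mathbb E\bigl[(x-\mu_\ell)\|\delta\|^2\bigr].
\]

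\textbf{Stein's lemma.} Apply $\mathbb E[(x-\mu_\ell)g(x)]=\mathbb E[\nabla g(x)]$ to both pieces involving $(x-\mu_\ell)$. This needs the Jacobians of $m_\mu$ and $\bar m$ in $x$: a direct computation gives $\nabla_x m_\mu=\mC_\mu$ and $\nabla_x\bar m=\bar\mC$, both symmetric. Hence $\nabla_x\delta=\mC_\mu-\bar\mC$, and:

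1. $\mathbb E[(x-\mu_\ell)\|\delta\|^2]=2\,\mathbb E[(\mC_\mu-\bar\mC)\delta]$.
2. The gradient of $g(x)=(\mu_\ell-m_\mu)^\top\delta$ is $-\mC_\mu\delta+(\mC_\mu-\bar\mC)(\mu_\ell-m_\mu)$.

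\textbf{Collecting terms.} Summing $T_1$ and $T_2$ gives
\[
\frac2n\mathbb E\bigl[\delta+(\mC_\mu-\bar\mC)\delta-\mC_\mu\delta+(\mC_\mu-\bar\mC)(\mu_\ell-m_\mu)\bigr]
=\frac2n\mathbb E\bigl[\delta-\bar\mC\delta+(\mC_\mu-\bar\mC)(\mu_\ell-m_\mu)\bigr].
\]
Substituting $\delta=m_\mu-\bar m$, the combination $-\bar\mC(m_\mu-\bar m)-\bar\mC(\mu_\ell-m_\mu)$ collapses to $-\bar\mC(\mu_\ell-\bar m)$. This is exactly the claimed formula.

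\textbf{Main obstacle.} The delicate step is the bookkeeping of the Stein integrations by parts and of the $\mu_\ell$-Jacobian of $m_\mu$ (transposes and which factor is hit). This must be done carefully so that the $\mC_\mu\delta$ terms cancel as shown. Justifying the boundary terms and the interchange of differentiation and integration is routine, since all integrands have polynomial growth against a Gaussian.
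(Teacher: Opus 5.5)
Your proof is correct and follows the same route as the paper. You differentiate under the integral into the density-derivative and score-difference terms, compute the $\mu_\ell$-Jacobian of $m_\mu$, use $p_\mu r_\ell=\frac1n\varphi(\cdot-\mu_\ell)$, and remove the $(x-\mu_\ell)$ factors via Stein's identity with $\nabla_x m_\mu=\mC_\mu$ and $\nabla_x\bar m=\bar\mC$. The only cosmetic difference is in the bookkeeping: the paper first merges the two $(x-\mu_\ell)$-weighted terms into the single scalar $-\|m_\mu\|^2+\|\bar m\|^2+2\langle m_\mu-\bar m,\mu_\ell\rangle$ and applies Stein once, whereas you apply it to each piece separately and cancel the $\mC_\mu\delta$ terms afterward.
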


\section{Learning Single Gaussian Component}
\label{sec:single}
We present our first theoretical result for the case where the teacher distribution consists of a single component, i.e., $m = 1$. Equivalently, the teacher distribution is a Gaussian distribution. We consider the setting with multiple Gaussian components in the teacher in Section~\ref{sec:multiple}. Unlike the case of the forward Fisher divergence, where it is infeasible to establish global convergence guarantees \citep{zhang2025convergence}, we establish the \emph{global convergence} of GD for the reverse Fisher divergence from an \emph{arbitrary initialization} to the target mean $\bar\mu_1$ of the Gaussian distribution $\bar{p}.$
\begin{mytheobox}
\begin{theorem}[Global Convergence of GD]
   \label{thm:main_one}
   If $\gamma \leq n / (4 + 512 \log(n))$, then for all $\mu^0_1, \dots, \mu^0_n \in \R^d$, GD \eqref{eq:wsltovjir} with GMMs \eqref{eq:p} and $m = 1$ guarantees that
      $$\textstyle \norm{\mu^k_i - \bar\mu_1}^2
      \leq
      \max\left\{
         \left(1-\frac{\gamma}{2 n}\right)^{k/2} n (R^0)^2,
         \frac{8 n^2 \min\{n, d\}}{\gamma k}
      \right\}$$ for all $i \in [n], k \geq 1,$ where $R^0 \eqdef \max_{i \in [n]} \norm{\mu^0_i - \bar\mu_1}$.
\end{theorem}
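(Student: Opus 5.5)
With $m=1$ we have $\bar m(x)=\bar\mu_1$ and $\bar\mC(x)=0$, so Proposition~\ref{prop:grad-general} reduces to
\[
\nabla_{\mu_\ell}\mathcal L_{\mathrm{rev}}(\mu)=\frac{2}{n}\mathbb E_{x\sim\mathcal N(\mu_\ell,\mI)}\bigl[m_\mu(x)-\bar\mu_1+\mC_\mu(x)(\mu_\ell-m_\mu(x))\bigr].
\]
Without loss of generality I set $\bar\mu_1=0$ and write $z_i=\mu_i-\bar\mu_1$. Let $\mZ^k\in\R^{n\times d}$ be the matrix whose rows are $(z_i^k)^\top$. The quantity $m_\mu(x)=\sum_j r_j(x)z_j$ is linear in the rows of $\mZ$ with nonnegative, $x$-dependent weights. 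Likewise $\mC_\mu(x)(\mu_\ell-m_\mu(x))$ can be written as $\sum_j r_j(x)\,[(z_j-m)^\top(z_\ell-m)]\,(z_j-m)$. I therefore aim to show that the update has the form $\mZ^{k+1}=(\mI-\frac{2\gamma}{n}\mB^k)\mZ^k$, where $\mB^k\in\R^{n\times n}$ is built from expectations of $r_i r_j$-type weights and Gram quantities. The natural candidate is $\mB=\mA+\mD$. Here $\mA_{\ell j}=\mathbb E_{x\sim\mathcal N(\mu_\ell,\mI)}[r_j(x)]$, which after symmetrization becomes $n\,\mathbb E_{x\sim p_\mu}[r_\ell r_j]$ and is PSD since it is a Gram matrix of the functions $r_j$ in $L^2(p_\mu)$. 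The matrix $\mD$ comes from the covariance term. It is PSD because $\sum_\ell r_\ell(x)(z_\ell-m)(z_\ell-m)^\top=\mC$ gives $\langle \mZ,\mD\mZ\rangle=n\,\mathbb E_{p_\mu}[\operatorname{tr}\mC^2]\ge0$. The key identity is the change of measure $\mathbb E_{x\sim\mathcal N(\mu_\ell,\mI)}[f]=n\,\mathbb E_{x\sim p_\mu}[r_\ell f]$, which makes both matrices symmetric.

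Next I would prove the Lyapunov descent. Expanding gives
\[
\|\mZ^{k+1}\|_F^2=\|\mZ^k\|_F^2-\frac{4\gamma}{n}\langle\mZ^k,\mB^k\mZ^k\rangle+\frac{4\gamma^2}{n^2}\|\mB^k\mZ^k\|_F^2 .
\]
I need two bounds. The first is a norm bound $\|\mB^k\|\lesssim 1+\log n$, so that $\gamma\le n/(4+512\log n)$ makes the quadratic term at most half of the linear term. The $\mA$ part has operator norm at most $n$ after the $1/n$ scaling, i.e.\ $O(1)$ in the normalized form. The $\mD$ part requires controlling $\mathbb E[\operatorname{tr}\mC^2]$ relative to $\mathbb E[\operatorname{tr}\mC]$, since $\mC$ is the posterior covariance of a softmax. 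I expect a bound like $\mathbb E_{p_\mu}\|\mC\|\lesssim\log n$, via the fact that $r_i(x)$ is tiny unless $\mu_i$ is nearly closest, with Gaussian tails giving the $\log n$. The second bound, and the heart of the argument, is the lower bound
\[
\langle\mZ,\mB\mZ\rangle\ge c\min\{\|\mZ\|_F^4/(n\min\{n,d\}),\ \|\mZ\|_F^2\}.
\]
Combined with the descent inequality, this gives a recursion $a_{k+1}\le a_k-\frac{c\gamma}{n}\min\{a_k^2/(n\min\{n,d\}),a_k\}$ with $a_k=\|\mZ^k\|_F^2$. The linear regime yields the geometric factor $(1-\gamma/2n)^{k/2}a_0$ with $a_0\le n(R^0)^2$; the quadratic regime yields $a_k\lesssim n^2\min\{n,d\}/(\gamma k)$. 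Splitting $k$ into the steps spent in each regime and taking the max gives the stated bound, since $\|\mu_i^k-\bar\mu_1\|^2\le a_k$.

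For the lower bound I would use $\langle\mZ,\mA\mZ\rangle=n\,\mathbb E_{p_\mu}\|m_\mu(x)\|^2$, the norm of the posterior mean, together with $\langle\mZ,\mD\mZ\rangle\gtrsim n\,\mathbb E\operatorname{tr}\mC^2$. I would also use $\sum_\ell\mathbb E_{x\sim\mathcal N(\mu_\ell,\mI)}[\|z_\ell\|^2]$-type decompositions: $\|z_\ell\|^2$ splits into $\|m\|^2$, a cross term, and the variance part, and the variance part is controlled by $\operatorname{tr}\mC$. Then $\operatorname{tr}\mC^2\ge(\operatorname{tr}\mC)^2/\operatorname{rank}$ with $\operatorname{rank}\le\min\{n-1,d\}$ gives the quartic term with the $\min\{n,d\}$ factor. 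Roughly, either the posterior mean is large, giving a quadratic lower bound, or the spread is large, giving a quartic one via Cauchy--Schwarz. The main obstacle is making this dichotomy rigorous: showing $\|\mZ\|_F^2\lesssim\mathbb E_{p_\mu}[\|m\|^2+\operatorname{tr}\mC]$ up to constants uniformly over all configurations, including when some components are far apart and others coincide. I would try to prove it via the law of total variance, $\frac1n\sum\|z_\ell\|^2=\mathbb E_{p_\mu}[\|m\|^2+\operatorname{tr}\mC]$, which holds exactly for the latent-index posterior. That identity should reduce the obstacle to the norm bound on $\mD$.
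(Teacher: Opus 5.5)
Your proposal is correct and follows essentially the same route as the paper: the recursion $\mZ^{k+1}=(\mI-\frac{2\gamma}{n}\mB^k)\mZ^k$ with the Lyapunov function $\norm{\mZ^k}_F^2$, symmetry of $\mB^k$ via the change of measure $\mathbb E_{x\sim\mathcal N(\mu_\ell,\mI)}[f]=n\,\mathbb E_{x\sim p_\mu}[r_\ell f]$, the identity $\inp{\mZ}{\mB\mZ}=n\,\mathbb E_{p_\mu}\norm{m_\mu-\bar\mu_1}^2+n\,\mathbb E_{p_\mu}\textnormal{tr}(\mC_\mu^2)$, the trace--rank inequality, the law of total variance plus Jensen for the dichotomy, and Lemma~\ref{lemma:recursion} for the two regimes. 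Two points need tightening: positive semidefiniteness of the covariance part $\mathbf{D}$ must be shown for every $v\in\R^n$, not just along $\mZ$, because the descent step uses $\norm{\mB\mZ}_F^2\le\norm{\mB}_{\textnormal{op}}\inp{\mZ}{\mB\mZ}$ (it does hold, since $v^\top\mathbf{D}v=n\,\mathbb E_{p_\mu}\norm{\sum_j r_j v_j(z_j-m_\mu)}^2$); and the operator-norm bound needs the per-component estimate $\max_j\mathbb E_{x\sim\mathcal N(\mu_j,\mI)}\norm{\mu_j-m_\mu(x)}^2\le 128\log(n)$ (from Lemma~\ref{lemma:dist_1} together with $\norm{\sum_j r_jv_j(\mu_j-m_\mu)}^2\le\sum_j r_jv_j^2\norm{\mu_j-m_\mu}^2$), not an average over $p_\mu$, in order to get $\norm{\mB^k}_{\textnormal{op}}\le 1+128\log(n)$ and hence the stated step-size condition.
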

\end{mytheobox}
We now present the proof of this result and highlight some interesting properties of the problem that enable us to establish global convergence.\\
\begin{proofblock}
\textit{Proof.} We first rewrite the update \eqref{eq:wsltovjir} minus $\bar\mu_{1}$ in matrix form to make the proof more compact and intuitive. We defer the derivations of Lemma~\ref{lemma:Z} to Section~\ref{sec:missing}.
\begin{restatable}{lemma}{LEMMAZZ}
   \label{lemma:Z}
   Let $m = 1$ and $\mZ^{k}
\eqdef
\bigl[(\mu^{k}_{1}-\bar\mu_{1})^\top;\ldots;
(\mu^{k}_{n}-\bar\mu_{1})^\top\bigr]
\in \R^{n\times d},$ stacking the vectors vertically. Using this notation and \eqref{eq:wsltovjir}, the matrix sequence $\{\mZ^{k}\}_{k \geq 0}$ evolves as
\begin{align*}
   \textstyle \mZ^{k+1} = \left(\mI - \frac{2 \gamma}{n} \mB^k\right) \mZ^{k},
\end{align*}
   where $\mB^{k} \in \R^{n \times n}$ and 
   $\mB^{k}_{\ell i} \eqdef \mathbb E_{x \sim \mathcal N(\mu^k_\ell,\mI)}
   \left[r_i^{k}(x) + r_i^{k}(x) \bigl(\mu^k_i-m_{\mu^k}(x)\bigr)^\top \bigl(\mu^k_\ell-m_{\mu^k}(x)\bigr)\right]$
is the $(\ell, i)$\textsuperscript{th} entry of $\mB^{k}$ and $r^k_i(x)
\eqdef \exp\left(-\frac12\|x-\mu^k_i\|^2\right) / \sum_{j=1}^n\exp\left(-\frac12\|x-\mu^k_j\|^2\right).$
\end{restatable}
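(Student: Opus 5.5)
Lemma~\ref{lemma:Z} follows by specializing Proposition~\ref{prop:grad-general} to $m=1$ and regrouping. When $m=1$ we have $\bar r_1(x)\equiv 1$, $\bar m(x)\equiv \bar\mu_1$ and $\bar\mC(x)\equiv 0$, so the teacher covariance term drops out. The gradient with respect to $\mu_\ell$ then becomes
\[
\nabla_{\mu_\ell}\mathcal L_{\mathrm{rev}}(\mu)=\frac{2}{n}\mathbb E_{x\sim\mathcal N(\mu_\ell,\mI)}\left[m_\mu(x)-\bar\mu_1+\mC_\mu(x)\bigl(\mu_\ell-m_\mu(x)\bigr)\right].
\]
The plan is to write each of the two terms inside the expectation as a linear combination $\sum_i c_{\ell i}(x)(\mu_i-\bar\mu_1)$ with scalar, $x$-dependent coefficients. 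Taking expectations then gives exactly $\frac{2}{n}\sum_i \mB_{\ell i}(\mu_i-\bar\mu_1)$.

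\textbf{First term.} Since $\sum_i r_i(x)=1$, we have $m_\mu(x)-\bar\mu_1=\sum_i r_i(x)(\mu_i-\bar\mu_1)$. This contributes the coefficient $r_i(x)$.

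\textbf{Second term.} Write
\[
\mC_\mu(x)\bigl(\mu_\ell-m_\mu(x)\bigr)=\sum_i r_i(x)\bigl(\mu_i-m_\mu(x)\bigr)\Bigl[\bigl(\mu_i-m_\mu(x)\bigr)^\top\bigl(\mu_\ell-m_\mu(x)\bigr)\Bigr].
\]
The bracket is a scalar, so it can be moved in front. Next use
\[
\sum_i r_i(x)\bigl(\mu_i-m_\mu(x)\bigr)=0 ,
\]
which means we may subtract the vector $m_\mu(x)-\bar\mu_1$ multiplied by the scalar $\sum_i r_i(x)(\mu_i-m_\mu(x))^\top(\mu_\ell-m_\mu(x))$. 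That scalar equals $(\sum_i r_i(\mu_i-m_\mu))^\top(\mu_\ell-m_\mu)=0$, so the subtraction changes nothing. Hence $\mu_i-m_\mu(x)$ can be replaced by $\mu_i-\bar\mu_1$ in the vector factor, giving
\[
\mC_\mu(x)\bigl(\mu_\ell-m_\mu(x)\bigr)=\sum_i r_i(x)\bigl(\mu_i-m_\mu(x)\bigr)^\top\bigl(\mu_\ell-m_\mu(x)\bigr)\,(\mu_i-\bar\mu_1).
\]
This contributes the second part of $\mB_{\ell i}$.

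\textbf{Assembling the update.} Adding the two terms and taking $\mathbb E_{x\sim\mathcal N(\mu_\ell,\mI)}$ gives $\nabla_{\mu_\ell}\mathcal L_{\mathrm{rev}}(\mu^k)=\frac{2}{n}\sum_i\mB^k_{\ell i}(\mu^k_i-\bar\mu_1)$. The expectations are finite because $r_i\in[0,1]$ and the quadratic factors have Gaussian moments. Subtracting $\bar\mu_1$ from both sides of \eqref{eq:wsltovjir} gives row $\ell$ of $\mZ^{k+1}$ as $(\mu_\ell^k-\bar\mu_1)^\top-\frac{2\gamma}{n}\sum_i\mB^k_{\ell i}(\mu^k_i-\bar\mu_1)^\top$, which is row $\ell$ of $(\mI-\frac{2\gamma}{n}\mB^k)\mZ^k$.

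The only nontrivial step is the replacement of $\mu_i-m_\mu(x)$ by $\mu_i-\bar\mu_1$ in the covariance term, which depends on the zero-weighted-mean identity above. Everything else is bookkeeping.
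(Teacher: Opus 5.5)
Your proof is correct and matches the paper's argument: specialize Proposition~\ref{prop:grad-general} to $m=1$, write $m_\mu(x)-\bar\mu_1=\sum_i r_i(x)(\mu_i-\bar\mu_1)$, and use $\sum_i r_i(x)\bigl(\mu_i-m_\mu(x)\bigr)=0$ to replace $\mu_i-m_\mu(x)$ by $\mu_i-\bar\mu_1$ in the vector factor of the covariance term. Reading off row $\ell$ then gives $\mZ^{k+1}=\bigl(\mI-\frac{2\gamma}{n}\mB^k\bigr)\mZ^k$.
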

Using Lemma~\ref{lemma:Z}, we can write the distance between the student and teacher means in matrix form:
\begin{align}
   \label{eq:ilagOzaJW}
   \textstyle \norm{\mZ^{k+1}}_F^2 = \norm{\left(\mI - \frac{2 \gamma}{n} \mB^k\right) \mZ^{k}}_F^2 = \norm{\mZ^{k}}_F^2 - \frac{4 \gamma}{n} \inp{\mZ^{k}}{\mB^k \mZ^{k}} + \frac{4 \gamma^2}{n^2} \norm{\mB^k \mZ^{k}}_F^2,
\end{align}
where $\norm{\cdot}_F$ is the standard Frobenius norm and $\inp{\cdot}{\cdot}$ is the standard matrix inner product. The matrix $\mB^k$ has the following useful properties, proved in Section~\ref{sec:missing}:
\begin{restatable}{lemma}{LEMMAB}
   \label{lemma:b}
   For all $k \geq 0$, matrix $\mB^k$ is symmetric and positive semidefinite, and $\norm{\mB^k}_{\textnormal{op}} \leq 1 + 128 \log(n).$ 
\end{restatable}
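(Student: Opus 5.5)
The plan is to rewrite every entry of $\mB^k$ as an expectation under the student mixture $p_{\mu^k}$ itself; symmetry and positive semidefiniteness then become immediate, and the norm bound reduces to a single estimate on posterior variance. The key identity is $\varphi(x-\mu_\ell)=n\,p_\mu(x)\,r_\ell(x)$. It gives, for any integrable $f$,
\[
\mathbb E_{x\sim\mathcal N(\mu_\ell,\mI)}[f(x)] = n\,\mathbb E_{x\sim p_\mu}\bigl[r_\ell(x) f(x)\bigr].
\]
Dropping the superscript $k$, this yields
\[
\mB_{\ell i} = n\,\mathbb E_{x\sim p_\mu}\Bigl[r_\ell(x)r_i(x) + r_\ell(x)r_i(x)\bigl(\mu_i-m_\mu(x)\bigr)^\top\bigl(\mu_\ell-m_\mu(x)\bigr)\Bigr],
\]
which is visibly symmetric in $(\ell,i)$.

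\textbf{Positive semidefiniteness.} Introduce $r(x)\in\R^n$ with entries $r_\ell(x)$, and the matrix $\mG(x)\in\R^{n\times d}$ whose $\ell$-th row is $r_\ell(x)(\mu_\ell-m_\mu(x))^\top$. Then $\mB = n\,\mathbb E_{p_\mu}[r r^\top + \mG\mG^\top]$. For any $v\in\R^n$,
\[
v^\top\mB v = n\,\mathbb E_{p_\mu}\Bigl[\bigl(\textstyle\sum_\ell r_\ell v_\ell\bigr)^2 + \bigl\|\textstyle\sum_\ell v_\ell r_\ell(\mu_\ell-m_\mu)\bigr\|^2\Bigr] \ge 0 .
\]

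\textbf{Operator norm: reduction.} Since $r$ is a probability vector, Jensen gives $(\sum_\ell r_\ell v_\ell)^2\le \sum_\ell r_\ell v_\ell^2$. Also $n\,\mathbb E_{p_\mu}[r_\ell]=1$, so the first part contributes at most $\|v\|^2$. For the second part, Cauchy--Schwarz with weights $r_\ell$ gives
\[
\bigl\|\textstyle\sum_\ell v_\ell r_\ell(\mu_\ell-m_\mu)\bigr\|^2 \le \bigl(\textstyle\sum_\ell r_\ell v_\ell^2\bigr)\,\mathrm{tr}\,\mC_\mu(x).
\]
Hence the second part is at most $\sum_\ell v_\ell^2\,\mathbb E_{x\sim\mathcal N(\mu_\ell,\mI)}[\mathrm{tr}\,\mC_\mu(x)]$. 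It therefore suffices to show the following, uniformly over all configurations of means:
\[
\mathbb E_{x\sim\mathcal N(\mu_\ell,\mI)}[\mathrm{tr}\,\mC_\mu(x)]\le 128\log n \quad\text{for every } \ell.
\]
The case $n=1$ is trivial because then $\mC_\mu\equiv 0$.

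\textbf{Operator norm: the variance estimate.} This is the main obstacle, since the bound must not depend on the separations $\Delta_i\eqdef\mu_i-\mu_\ell$. Because $m_\mu(x)$ minimizes $c\mapsto\sum_i r_i(x)\|\mu_i-c\|^2$, we have $\mathrm{tr}\,\mC_\mu\le\sum_i r_i\|\Delta_i\|^2$. Write $x=\mu_\ell+g$ with $g\sim\mathcal N(0,\mI)$. Then
\[
r_i\le \min\{1,\,r_i/r_\ell\}=\min\{1,\exp(\langle g,\Delta_i\rangle-\|\Delta_i\|^2/2)\}.
\]
I will split the indices with the threshold $T=64\log n$.
\begin{itemize}
\item Indices with $\|\Delta_i\|^2\le T$ contribute at most $T\sum_i r_i\le T$.
\item For an index with $s^2\eqdef\|\Delta_i\|^2>T$, set $Z=\langle g,\Delta_i\rangle\sim\mathcal N(0,s^2)$. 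On the event $Z\le s^2/4$ the factor is at most $e^{-s^2/4}$, and the complementary event has probability at most $e^{-s^2/32}$. So the expected contribution is at most $s^2(e^{-s^2/4}+e^{-s^2/32})$.
\end{itemize}
The function $s^2 e^{-s^2/32}$ is decreasing for $s^2\ge 32$. Its value at $s^2=64\log n$ is $64\log n\cdot n^{-2}$, so each large index contributes $O(\log n/n^2)$. Summing over at most $n$ such indices gives $O(\log n / n) = O(1)$.

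\textbf{Conclusion.} Altogether $\mathbb E[\mathrm{tr}\,\mC_\mu]\le 64\log n+O(1)\le 128\log n$ for $n\ge 2$; this last inequality is where the constants need routine checking. Combining the two parts gives $v^\top\mB v\le(1+128\log n)\|v\|^2$, which, together with positive semidefiniteness, yields $\|\mB^k\|_{\mathrm{op}}\le 1+128\log(n)$.
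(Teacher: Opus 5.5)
Your proposal is correct and follows essentially the same route as the paper: you rewrite $\mB^k$ as an expectation under $p_{\mu^k}$ to get symmetry and positive semidefiniteness, apply Jensen to the first quadratic term, and bound the remaining posterior spread with the same $64\log(n)$ threshold split and the Gaussian tail estimate of Lemma~\ref{lemma:dist_1}. The one minor difference is that you control the second term by Cauchy--Schwarz, arriving at $\mathbb{E}_{x\sim\mathcal N(\mu_\ell,\mI)}[\mathrm{tr}\,\mC_\mu(x)]$, whereas the paper uses Jensen to arrive at $\mathbb{E}_{x\sim\mathcal N(\mu_\ell,\mI)}\|\mu_\ell-m_\mu(x)\|^2$; both are dominated by the same quantity $\sum_i \mathbb{E}[r_i(x)]\|\mu_i-\mu_\ell\|^2$, and your constants do close, since $128\log(n)/n\le 64\log(n)$ for $n\ge 2$.
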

Due to Lemma~\ref{lemma:b}, \eqref{eq:ilagOzaJW}, and standard inequalities, $\norm{\mB^k \mZ^{k}}_F^2 \leq \norm{\mB^k}_{\textnormal{op}}\inp{\mZ^{k}}{\mB^k \mZ^{k}}$ and 
\begin{align*}
   \textstyle \norm{\mZ^{k+1}}_F^2 
   \leq \norm{\mZ^{k}}_F^2 - \frac{4 \gamma}{n} \inp{\mZ^{k}}{\mB^k \mZ^{k}} + \frac{(4 + 512 \log(n)) \gamma^2 }{n^2} \inp{\mZ^{k}}{\mB^k \mZ^{k}}.
\end{align*}
Since $\gamma \leq n / (4 + 512 \log(n))$ and $\inp{\mZ^{k}}{\mB^k \mZ^{k}} \geq 0,$ we get 
\begin{align}
   \textstyle \norm{\mZ^{k+1}}_F^2 &\leq \textstyle \norm{\mZ^{k}}_F^2 - \frac{\gamma}{n} \inp{\mZ^{k}}{\mB^k \mZ^{k}}. \label{eq:ICtmusbxPuxs}
\end{align}
We conclude that $\{\|\mZ^{k}\|_F^2\}_{k \geq 0}$ is non-increasing. However, this is not sufficient to prove the convergence rate, and we still need one more property of $\mB^k$ and $\mZ^k,$ proved in Section~\ref{sec:missing}:
\begin{restatable}{lemma}{LEMMAZORIG}
   \label{lemma:z_b}
   For all $k \geq 0,$
      $\textstyle \inp{\mZ^{k}}{\mB^k \mZ^{k}} \geq \min\left\{\frac{1}{4 n \min\{n, d\}} \norm{\mZ^{k}}^4_F, \frac{1}{2} \norm{\mZ^{k}}^2_F\right\}.$
\end{restatable}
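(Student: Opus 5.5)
The plan is to turn the quadratic form $\inp{\mZ^{k}}{\mB^k \mZ^{k}}$ into an expectation under the student distribution $p_{\mu^k}$ of an explicitly nonnegative quantity. Then I will bound that quantity from below using only the trace of a covariance matrix, and finish with a convexity (Jensen) argument. Throughout I drop the superscript $k$ and write $z_i \eqdef \mu_i - \bar\mu_1$, the rows of $\mZ$.

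\textbf{Step 1 (change of measure).} Since $\varphi(x-\mu_\ell) = n\, r_\ell(x)\, p_\mu(x)$, we have $\mathbb E_{x\sim\mathcal N(\mu_\ell,\mI)}[g(x)] = n\,\mathbb E_{x\sim p_\mu}[r_\ell(x) g(x)]$ for any integrable $g$. Therefore
\[
\mB_{\ell i} = n\,\mathbb E_{p_\mu}\bigl[r_\ell r_i + r_\ell r_i (\mu_i - m_\mu)^\top(\mu_\ell - m_\mu)\bigr].
\]
This formula also makes the symmetry in Lemma~\ref{lemma:b} visible. Translation invariance gives $\mu_i - m_\mu(x) = z_i - \bar z(x)$ with $\bar z(x) \eqdef \sum_j r_j(x) z_j$.

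\textbf{Step 2 (sum of squares).} Expanding $\sum_{\ell,i}\mB_{\ell i} z_\ell^\top z_i$, the first term gives $n\,\mathbb E_{p_\mu}\|\bar z\|^2$. For the second term, set $\mM(x) \eqdef \sum_i r_i (z_i-\bar z) z_i^\top$. Because $\sum_i r_i(z_i-\bar z)=0$, this equals $\sum_i r_i (z_i-\bar z)(z_i-\bar z)^\top = \mC_\mu(x)$, so it is symmetric PSD. A direct computation shows the second term equals $n\,\mathbb E_{p_\mu}\operatorname{tr}(\mM\mM^\top) = n\,\mathbb E_{p_\mu}\|\mC_\mu\|_F^2$. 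Hence
\[
\inp{\mZ}{\mB\mZ} = n\,\mathbb E_{p_\mu}\bigl[\|\bar z(x)\|^2 + \|\mC_\mu(x)\|_F^2\bigr].
\]

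\textbf{Step 3 (trace bound).} $\mC_\mu(x)$ is PSD of size $d\times d$ and has rank at most $\min\{n,d\}=:q$. Cauchy--Schwarz on its eigenvalues gives $\|\mC_\mu\|_F^2 \ge (\operatorname{tr}\mC_\mu)^2/q$. Moreover $\operatorname{tr}\mC_\mu = a(x) - b(x)$ with $a \eqdef \sum_i r_i\|z_i\|^2$ and $b \eqdef \|\bar z\|^2$. So $\inp{\mZ}{\mB\mZ} \ge n\,\mathbb E_{p_\mu}[G(a,b)]$ with $G(a,b) \eqdef b + (a-b)^2/q$.

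\textbf{Step 4 (Jensen and minimization).} This is the step I expect to be the main obstacle. A pointwise case split ($b\ge a/2$ or not) yields $\min\{a/2, a^2/(4q)\}$, but that function is not convex, so Jensen cannot be applied to it directly. I will avoid this by applying Jensen to $G$ itself, which is jointly convex. This gives $\mathbb E G(a,b) \ge G(\mathbb E a, \mathbb E b) \ge \min_{b} G(A,b)$, where $A \eqdef \mathbb E_{p_\mu} a = \frac1n\|\mZ\|_F^2$ because $\mathbb E_{p_\mu} r_i = 1/n$.

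It remains to minimize over $b$. If $b\ge A/2$, then $G\ge A/2$. If $b<A/2$, then $(A-b)^2/q \ge A^2/(4q)$. Hence $\min_b G(A,b)\ge \min\{A/2, A^2/(4q)\}$. Multiplying by $n$ gives
\[
\inp{\mZ}{\mB\mZ}\ge \min\Bigl\{\tfrac12\|\mZ\|_F^2,\ \tfrac{1}{4n\min\{n,d\}}\|\mZ\|_F^4\Bigr\},
\]
which is the claim of Lemma~\ref{lemma:z_b}.
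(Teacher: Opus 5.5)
Your proposal is correct and follows the paper's proof essentially step for step. Both use the change of measure to $p_\mu$, the identity $\inp{\mZ}{\mB\mZ}=n\,\mathbb E_{p_\mu}[\|\bar z\|^2+\operatorname{tr}(\mC_\mu^2)]$, the rank bound $(\operatorname{tr}\mC_\mu)^2\le \min\{n,d\}\operatorname{tr}(\mC_\mu^2)$, and then Jensen followed by minimizing $nx+\kappa(\bar x-x)^2$; your Jensen on the jointly convex $G$ is equivalent to the paper's Jensen on the squared term, since $G$ is linear in $b$. The one thing to make explicit is that the final minimization is over $b\ge 0$, as the paper does with $x\in[0,\bar x]$: your second case uses $G(A,b)\ge (A-b)^2/q$, which fails for negative $b$, and the restriction holds because $\mathbb E_{p_\mu}\|\bar z\|^2\ge 0$.
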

Substituting this inequality into \eqref{eq:ICtmusbxPuxs},
   $$\textstyle \norm{\mZ^{k+1}}_F^2 \leq \textstyle \norm{\mZ^{k}}_F^2 - \frac{\gamma}{n} \min\left\{\frac{1}{4 n \min\{n, d\}} \norm{\mZ^{k}}^4_F, \frac{1}{2} \norm{\mZ^{k}}^2_F\right\},$$ and it remains to use Lemma~\ref{lemma:recursion} and $n \max\limits_{i \in [n]} \norm{\mu^k_i - \bar\mu_1}^2 \geq \norm{\mZ^{k}}_F^2 \geq \max\limits_{i \in [n]} \norm{\mu^k_i - \bar\mu_1}^2.$ \hfill $\square$
\end{proofblock}
\begin{mytheobox}
\begin{corollary}
   \label{cor:main_one}
   In view of Theorem~\ref{thm:main_one}, with $\gamma = n / (4 + 512 \log(n)),$ GD finds $\{\mu^k_i\}_{i \in [n]}$ such that $\norm{\mu^k_i - \bar\mu_1}^2 \leq \varepsilon$ for all $i \in [n]$ and 
   $\mathcal L_{\mathrm{rev}}(\mu^k) \leq \varepsilon$ 
   after at most 
\begin{align}
   \label{eq:rate_one}
   \textstyle 4096 \max\left\{(1 + \log(n))\log\left(\frac{n (R^0)^2}{\varepsilon}\right), \frac{(1 + \log(n)) n \min\{n, d\}}{\varepsilon}\right\}
\end{align}
   iterations.
\end{corollary}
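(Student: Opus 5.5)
Two things have to be shown: the iteration count needed to get $\norm{\mu^k_i-\bar\mu_1}^2\le\varepsilon$, and the fact that the loss is then at most $\varepsilon$.

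\textbf{Iteration count.} Plug $\gamma = n/(4+512\log(n))$ into Theorem~\ref{thm:main_one} and make each term of the maximum at most $\varepsilon$.

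For the linear term, note that $\gamma/(2n) = 1/(8+1024\log(n)) \le 1$. Using $(1-t)^{k/2}\le \exp(-tk/2)$, it suffices that
$$k \ge \frac{4n}{\gamma}\log\frac{n(R^0)^2}{\varepsilon} = 4(4+512\log(n))\log\frac{n(R^0)^2}{\varepsilon}.$$
This is at most $2048(1+\log(n))\log(n(R^0)^2/\varepsilon)$.

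For the sublinear term, we need $8n^2\min\{n,d\}/(\gamma k)\le\varepsilon$. This gives
$$k\ge \frac{8n(4+512\log(n))\min\{n,d\}}{\varepsilon},$$
which is at most $4096(1+\log(n))\,n\min\{n,d\}/\varepsilon$. Taking the maximum of the two thresholds yields \eqref{eq:rate_one}.

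\textbf{Loss bound.} For $m=1$ we have $\bar m(x)=\bar\mu_1$ and $\bar\mC=0$. Hence
$$\nabla_x\log p_\mu(x)-\nabla_x\log\bar p(x) = -(x-m_\mu(x)) + (x-\bar\mu_1) = m_\mu(x)-\bar\mu_1 .$$
Since $m_\mu(x)$ is a convex combination of the $\mu_i$ (the weights $r_i(x)$ are nonnegative and sum to one), Jensen's inequality gives
$$\norm{m_\mu(x)-\bar\mu_1}^2\le \sum_i r_i(x)\norm{\mu_i-\bar\mu_1}^2\le \max_i\norm{\mu_i-\bar\mu_1}^2 .$$
Therefore $\mathcal L_{\mathrm{rev}}(\mu^k)\le \max_i\norm{\mu^k_i-\bar\mu_1}^2\le\varepsilon$, so both guarantees hold at the same $k$.

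The only place needing care is the constant bookkeeping in the linear term (the factor $k/2$ in the exponent and the $\log$ of $1-t$). The loss identity is immediate from the score formula.
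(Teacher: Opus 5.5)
Your proposal is correct and is the argument the paper intends. The paper gives no separate proof: the corollary follows by substituting $\gamma = n/(4+512\log(n))$ into Theorem~\ref{thm:main_one} and making each term of the maximum at most $\varepsilon$, as you do. The loss bound follows from the identity $\mathcal L_{\mathrm{rev}}(\mu)=\mathbb E_{x\sim p_\mu}\norm{m_\mu(x)-\bar\mu_1}^2$ (proof of Proposition~\ref{prop:grad-general}) and the same Jensen step used in the proof of Lemma~\ref{lemma:z_b}. There it gives the slightly sharper $\mathcal L_{\mathrm{rev}}(\mu^k)\le \norm{\mZ^k}_F^2/n$, an average rather than a maximum, but your bound suffices.
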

\end{mytheobox}

\subsection{Discussion}
Theorem~\ref{thm:main_one} and Corollary~\ref{cor:main_one} provide global convergence guarantees for GD with the reverse Fisher divergence in the score-matching problem. This highlights the importance of this loss compared to the forward Fisher divergence, since the latter does not guarantee convergence in the same setting. With the forward Fisher divergence, one can guarantee convergence only when the initial means are close to the target mean (the large-noise regime) \citep{zhang2025convergence}, whereas the reverse Fisher divergence guarantees convergence in all regimes. Moreover, all $\{\mu^k_i\}_{i \in [n]}$ converge to $\bar\mu_1,$ which is an important property in the score matching problem. 

Intuitively, the reverse Fisher divergence, due to its averaging with respect to $p_{\mu}$, emphasizes the regions where $p_{\mu}$ places mass and tends to move such mass towards the target distribution $\bar{p}$, thereby yielding the global convergence of GD. On the other hand, the forward Fisher divergence places little weight on regions where $\bar{p}$ has small mass, and therefore may not penalize far-away modes of $p_{\mu}$; in some settings, it can even push them away, as formalized in \citep{zhang2025convergence}.

The convergence rate \eqref{eq:rate_one} has many nice properties: first, it does not depend on $d$ when $d$ is large; second, it has a logarithmic dependence on the initial distance $R^0;$ and third, the dependence on $\varepsilon$ is $\tilde\cO\left(\nicefrac{1}{\varepsilon}\right).$ The proof technique is arguably compact and intuitive (which we believe is a major advantage of this theory): we use the Lyapunov function $\|\mZ^{k}\|_F^2$ and show its monotonicity. The only technical parts are Lemmas~\ref{lemma:b} and \ref{lemma:z_b}, which establish the properties of the matrix $\mB^k.$ We show that $\mB^k$ is symmetric and positive semidefinite, and $\|\mB^k\|_{\textnormal{op}} \leq \tilde\cO(1).$ Moreover, in Lemma~\ref{lemma:z_b}, we prove that $\langle\mZ^{k}, \mB^k \mZ^{k}\rangle$ is larger than zero, with the gap $\Omega(\min\{\|\mZ^{k}\|^4_F, \|\mZ^{k}\|^2_F\}),$ which allows us to obtain the final two-regime convergence rate.

\section{Learning Multiple Gaussian Components}
\label{sec:multiple}
We are ready to extend our result to the setting with multiple Gaussian components in the teacher distribution, i.e., $m > 1$ in $\bar{p}.$ We will show that global convergence up to a small $\varepsilon$-accuracy is possible even in this setting. However, as expected, it can only be guaranteed with high probability and under a (non-local) random initialization. Indeed, before presenting our main result, let us first demonstrate a simple counterexample with a ``bad'' initialization.
\begin{theorem}[Folklore result]
   \label{thm:folklore}
   For $m = 2$ and $n \geq 1,$ let $\bar\mu_1 = -1,$ $\bar\mu_2 = 1,$ and $\mu^0_i = 0$ for all $i \in [n].$ Then $\mu^0$ is a stationary point of both the forward and reverse Fisher divergences. Consequently, GD initialized at $\mu^0$ does not move for all $\gamma > 0$, i.e., $\mu^k = \mu^0$ for all $k \geq 0$, and both losses remain unchanged.
\end{theorem}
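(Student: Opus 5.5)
We prove the statement by symmetry. Work in $d=1$ with $\bar\mu_1=-1$, $\bar\mu_2=1$ and all student means equal to $0$. The plan is to show that every gradient component vanishes at $\mu^0$, for both objectives, by exploiting the reflection $x \mapsto -x$. Once $\nabla \mathcal L(\mu^0)=0$ is established, the GD update \eqref{eq:wsltovjir} gives $\mu^1=\mu^0$, and induction on $k$ gives $\mu^k=\mu^0$ and constant losses for every $\gamma>0$.

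\textbf{Reverse Fisher divergence.} Apply Proposition~\ref{prop:grad-general}. Since all $\mu_i=0$, we have $r_i(x)=1/n$, hence $m_\mu(x)=0$ and $\mC_\mu(x)=0$ for every $x$. The gradient with respect to $\mu_\ell$ therefore reduces to
\[
\tfrac{2}{n}\,\mathbb E_{x\sim\mathcal N(0,1)}\bigl[-\bar m(x)+\bar \mC(x)\,\bar m(x)\bigr].
\]
Here $\bar r_2(-x)=\bar r_1(x)$, so $\bar m(x)=\tanh(x)$ is odd and $\bar \mC(x)=1-\tanh^2(x)$ is even. The integrand is thus an odd function. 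It is also integrable, being bounded by $2$. Integrating it against the symmetric density $\mathcal N(0,1)$ gives zero.

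\textbf{Forward Fisher divergence.} We differentiate directly rather than rely on a gradient formula from the paper. For any smooth student family we have
\[
\nabla_{\mu_\ell}\mathcal L_{\mathrm{for}}=2\,\mathbb E_{x\sim\bar p}\bigl[(\partial_{\mu_\ell}s_\mu(x))^\top(s_\mu(x)-\bar s(x))\bigr],
\]
where $s_\mu(x)=m_\mu(x)-x$ and $\bar s(x)=\bar m(x)-x$ are the two scores. Differentiating $m_\mu(x)=\sum_i r_i\mu_i$ at a point where all $\mu_i$ coincide gives $\partial_{\mu_\ell}m_\mu=r_\ell+\sum_i\mu_i\,\partial_{\mu_\ell}r_i$. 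Since every $\mu_i=0$, the second term drops out and $\partial_{\mu_\ell}m_\mu(x)=1/n$, a constant. The residual is $s_\mu-\bar s=-\tanh(x)$, which is odd. The measure $\bar p$ is symmetric under $x\mapsto -x$, so the expectation vanishes.

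\textbf{Main obstacle.} The only delicate points are the derivative computation for the forward objective and justifying differentiation under the integral. The latter follows by dominated convergence, since all quantities have polynomial growth against Gaussian tails. A cleaner alternative covers both objectives at once. Let $T$ be the reflection $\mu_i\mapsto-\mu_i$ on parameters. Both losses are invariant under $T$, because the teacher is symmetric and a change of variables $x\mapsto -x$ applies, so $\mathcal L(T\mu)=\mathcal L(\mu)$. Differentiating gives $\nabla\mathcal L(T\mu)=T\nabla\mathcal L(\mu)$. At the fixed point $\mu^0=T\mu^0$ this forces $\nabla\mathcal L(\mu^0)=-\nabla\mathcal L(\mu^0)$, that is, $\nabla\mathcal L(\mu^0)=0$. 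I would present this symmetry argument as the main proof and keep the explicit computations above as a check.
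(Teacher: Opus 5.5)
Your proof is correct, and there is no argument in the paper to compare it with: the paper states this folklore result in Section~\ref{sec:multiple} and never proves it, in the appendix or elsewhere.

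Both of your explicit checks go through.
\begin{itemize}
\item For the reverse divergence, at $\mu^0$ the bracket in Proposition~\ref{prop:grad-general} reduces to $-\tanh(x)+(1-\tanh^2(x))\tanh(x)=-\tanh^3(x)$.
\item For the forward divergence, the gradient is $\frac{2}{n}\mathbb E_{x\sim\bar p}[-\tanh(x)]$.
\end{itemize}
In each case the integrand is odd and bounded, and it is integrated against a symmetric measure, $\mathcal N(0,1)$ or $\bar p$, so the gradient vanishes.

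Leading with the reflection argument is the right choice. It needs no gradient formula, only two ingredients:
\begin{itemize}
\item the invariance $\mathcal L(-\mu)=\mathcal L(\mu)$, which follows from $p_{-\mu}(x)=p_\mu(-x)$, $\bar p(-x)=\bar p(x)$, and the change of variables $x\mapsto -x$;
\item differentiability at $\mu^0$, which you justify by dominated convergence.
\end{itemize}
The same argument works verbatim in any dimension with $\bar\mu_2=-\bar\mu_1$ and all $\mu^0_i=0$. It also applies to any differentiable objective invariant under this reflection. That makes precise the paper's unproved remark that the counterexample should extend to other ``reasonable'' losses. The explicit computations then serve only as a sanity check.
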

This result and counterexample likely extend to other ``reasonable'' losses. Nevertheless, we can show that this example is unstable and establish global convergence under random initialization with the reverse Fisher divergence.
\begin{mytheobox}
\begin{restatable}[Theorem~\ref{thm:mainmany} when $d \geq n$]{theorem}{THMSIMPL}
\label{thm:mainmany_simple}
   Let $\varepsilon \in (0, 1]$ denote the target accuracy. We define the smallest and largest gaps in the target distribution as
      $\delta_{\min} \eqdef \min_{a \neq b}\norm{\bar\mu_a - \bar\mu_b} > 0$ and $\delta_{\max} \eqdef \max_{a \neq b}\norm{\bar\mu_a - \bar\mu_b},$
   choose the initialization radius as any
   $R^0 \geq \tilde\Theta\left(\max\left\{\frac{n m^2 \sqrt{d} \delta^2_{\max}}{\rho}, \sqrt{n} \bar{R}\right\}\right),$
   where $\bar{R} \eqdef \max_{b \in [m]} \norm{\bar \mu_b}$ and $\rho \in (0, 1].$ We sample the initial parameters as $\mu^0_{\ell} = R^0 \eta_{\ell} / \norm{\eta_{\ell}}$, where $\eta_{\ell} \sim \mathcal{N}(0, \mI)$ for all $\ell \in [n]$ (i.i.d.).
   Assume that $\delta_{\min} \geq \tilde\Omega\left(1\right)$ and $d \geq \tilde\Omega(n).$
   If $\gamma \leq \tilde{\Theta}\left(n\right)$ and GD \eqref{eq:wsltovjir} with GMMs \eqref{eq:p} is run for 
   $$\tilde{K} \eqdef \tilde\Theta\left(\frac{n^3}{\gamma \varepsilon}\right)$$
   iterations, then, with probability at least $1 - 4\rho$, we have that
      $\norm{\mu^{k}_{\ell} - \bar\mu_{a^*_{\ell}}}^2 \leq \varepsilon,$
   for all $\ell \in [n]$ and $k \geq \tilde{K}$, where $a^*_{\ell} = \arg\min_{a \in [m]} \norm{\mu^0_{\ell} - \bar{\mu}_a}$ is the index of the target mode closest to the initialization point $\mu^0_{\ell}$. In particular, $\tilde{K} = \tilde\Theta\left(\frac{n^2}{\varepsilon}\right)$ if $\gamma = \tilde{\Theta}\left(n\right).$
\end{restatable}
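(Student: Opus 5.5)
The plan is a two-stage analysis. In the first stage the student means start far away on a sphere of radius $R^0$ and move almost rigidly toward the target cloud. In the second stage each student mean has entered the basin of a single target mode $\bar\mu_{a^*_\ell}$, and we rerun the Lyapunov argument of Theorem~\ref{thm:main_one} locally, with error terms controlled.

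\textbf{Stage 1 (far regime).} We use Proposition~\ref{prop:grad-general}. For $x\sim\mathcal N(\mu_\ell,\mI)$ with $\norm{\mu_\ell}\gg \bar R$, the teacher posterior satisfies $\bar r_a(x)\propto \exp(\inp{x}{\bar\mu_a}-\norm{\bar\mu_a}^2/2)$. So $\bar m(x)$ is a soft-argmax over $a$ of $\inp{\mu_\ell}{\bar\mu_a}$, perturbed by Gaussian noise whose effect is of order $\delta_{\max}$. Moreover $\norm{\bar\mC(x)}_{\mathrm{op}}\le\delta_{\max}^2$.

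The random initialization enters through two concentration facts for $d\ge\tilde\Omega(n)$, each holding with probability $1-\rho$.
\begin{itemize}
\item[(i)] The student means are nearly orthogonal, with pairwise distances about $\sqrt2 R^0$. Hence the student responsibilities satisfy $r_i(x)\approx\mathbb 1[i=\ell]$, and the student terms $m_\mu-\mu_\ell$ and $\mC_\mu$ are exponentially small.
\item[(ii)] For each $\ell$, the gap between the largest and second-largest of $\inp{\mu^0_\ell}{\bar\mu_a}$, after subtracting $\norm{\bar\mu_a}^2/2$, is at least of order $\rho R^0\delta_{\min}/(m^2\sqrt d)$. This follows from an anti-concentration bound on a uniform direction projected onto the differences $\bar\mu_a-\bar\mu_b$, with a union bound over pairs. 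The lower bound on $R^0$ is chosen so that this gap is $\tilde\Omega(1)\cdot\delta_{\max}^2$, which makes the softmax essentially a hard argmax selecting $a^*_\ell$.
\end{itemize}
Then the gradient is $\approx\frac{2}{n}(\mu_\ell-\bar\mu_{a^*_\ell})$ plus a small error. The update contracts $\mu_\ell-\bar\mu_{a^*_\ell}$ geometrically at rate $1-\Theta(\gamma/n^2)$, up to $\tilde{O}(\gamma/n)$-per-step errors.

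The key invariant is that the argmax stays $a^*_\ell$ along the trajectory. The iterate moves along the segment toward $\bar\mu_{a^*_\ell}$, and along that segment the margin over the other modes only increases. Near orthogonality is also preserved, because the means shrink toward distinct bounded targets. We track this until $\norm{\mu_\ell-\bar\mu_{a^*_\ell}}\le c\,\delta_{\min}$, which takes $\tilde O(n/\gamma\cdot\log R^0)$ steps.

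\textbf{Stage 2 (local regime).} With $\delta_{\min}\ge\tilde\Omega(1)$, we group the students by assigned mode. For $x\sim\mathcal N(\mu_\ell,\mI)$ with $\mu_\ell$ within $c\delta_{\min}$ of $\bar\mu_a$, the teacher quantities $\bar r_b(x)$ for $b\ne a$ are $\exp(-\Omega(\delta_{\min}^2))$-small. Hence $\bar m\approx\bar\mu_a$ and $\bar\mC\approx0$. Students in other groups likewise contribute negligible responsibilities.

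Each group therefore evolves as $\mZ_a^{k+1}=(\mI-\frac{2\gamma}{n}\mB_a^k)\mZ_a^k+\mE^k$, with $\norm{\mE^k}_F\le\gamma\,\mathrm{poly}(n)e^{-\Omega(\delta_{\min}^2)}$. We reuse Lemmas~\ref{lemma:b} and~\ref{lemma:z_b} on $\mB_a^k$. The group size $n_a\le n$ and the $1/n$ weights replace $n$ only in constants; the Frobenius-norm condition $n\max\norm{\cdot}^2\ge\norm{\mZ}_F^2$ still applies. This gives the perturbed recursion $\norm{\mZ_a^{k+1}}_F^2\le\norm{\mZ_a^k}_F^2-\frac{\gamma}{n}\min\{\frac{\norm{\mZ_a^k}_F^4}{4n^2},\frac{\norm{\mZ_a^k}_F^2}{2}\}+\text{err}$. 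Applying Lemma~\ref{lemma:recursion} yields $\norm{\mZ_a^k}_F^2\lesssim n^3/(\gamma k)$, provided the error is below $\varepsilon^2\gamma/n^3$. This is guaranteed by $\delta_{\min}\ge\tilde\Omega(1)$, where the $\tilde\Omega$ absorbs $\log(n/\varepsilon)$.

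Stage 2 also needs an invariant: the iterates stay inside the $c\delta_{\min}$ ball. This holds because the Lyapunov function is non-increasing up to the tiny error. The total iteration count is $\tilde\Theta(n^3/(\gamma\varepsilon))$, and a union bound over the failure events gives probability $1-4\rho$.

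\textbf{Main obstacle.} The hardest step is Stage 1. We must show the nonlinear teacher softmax never switches its selected mode while the mean travels from radius $R^0$ to $O(\delta_{\min})$. At intermediate radii comparable to $\delta_{\max}^2$-scaled margins, Gaussian smoothing and the off-segment drift could erode the margin. My first attempt would be a monotone-margin argument. Write $\mu_\ell^{k}=\bar\mu_{a^*}+t_k(\mu^0_\ell-\bar\mu_{a^*})+e_k$ with $t_k$ decreasing and $\norm{e_k}$ small. The margin $\inp{\mu}{\bar\mu_{a^*}-\bar\mu_b}-\tfrac12(\norm{\bar\mu_{a^*}}^2-\norm{\bar\mu_b}^2)$ equals $\tfrac12\norm{\bar\mu_{a^*}-\bar\mu_b}^2$ at the target. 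Since this expression is affine in $\mu$, it interpolates between the initial margin and that value, so it stays bounded below along the path, up to $\norm{e_k}\delta_{\max}$.
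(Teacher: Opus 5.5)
\textbf{Gap: the Stage 2 invariant.} Your Stage 1 is essentially the paper's global stage. Your affine-margin argument is exactly its induction on $\norm{\mu_\ell-\bar\mu_b}^2-\norm{\mu_\ell-\bar\mu_{a^*_\ell}}^2$, and your Stage 2 recursion is its local stage. However, the invariant you need in Stage 2 does not follow from what you state. Let $K_1$ be the end of Stage 1. Non-increase of $\norm{\mZ_a^k}_F^2$ only gives $\max_{i\in I_a}\norm{\mu_i^k-\bar\mu_a}\le\norm{\mZ_a^k}_F\lesssim\norm{\mZ_a^{K_1}}_F\le\sqrt{n_a}\,c\,\delta_{\min}$. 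Individual rows are not controlled: off-diagonal entries of $\mB_a^k$ can be negative, so one mean can move outward while the sum decreases. Once a mean may sit at distance $\sqrt{n_a}\,c\,\delta_{\min}$ from $\bar\mu_a$, your bound on the error $\mE_a^k$ collapses. That bound rests on $\bar r_b$ ($b\neq a$) and the other groups' responsibilities being exponentially small, which no longer holds. Stopping Stage 1 earlier, at radius $c\delta_{\min}/\sqrt n$, does not rescue this. Same-group means would then be only $O(\delta_{\min}/\sqrt n)$ apart, and your decoupling $r_i\approx\mathbf{1}[i=\ell]$ would need $\delta_{\min}\ge\tilde\Omega(\sqrt n)$ rather than $\tilde\Omega(1)$.

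\textbf{The missing idea: propagate the operator norm.} Since $\mB_a^k$ is PSD and the step size gives $\norm{\mI-\frac{2\gamma}{n}\mB_a^k}_{\mathrm{op}}\le 1$, you get $\norm{\mZ_a^{k+1}}_{\mathrm{op}}\le\norm{\mZ_a^{k}}_{\mathrm{op}}+\frac{2\gamma}{n}\norm{\mE_a^k}_{\mathrm{op}}$. The operator norm bounds every row, so this controls each mean individually. At the end of Stage 1, $\mZ_a\approx t\,[\mu^0_i]_{i\in I_a}-t\,\mathbf{1}\bar\mu_a^\top$ with $t\approx\sqrt{\log(nR^0)}/R^0$. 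You therefore need, with high probability, $\norm{[\mu^0_i]_{i\in I_a}}_{\mathrm{op}}\le R^0\sqrt{n/d+O(\log(d/\rho))}$.
\begin{itemize}
\item The paper obtains this bound from matrix Bernstein (Lemma~\ref{lemma:norm}). Pairwise near-orthogonality alone would give it only for $d\gtrsim n^2$.
\item This fourth probabilistic event is exactly where $d\ge\tilde\Omega(n)$ is used.
\item The rank-one term $t\sqrt{n_a}\,\bar R$ is where $R^0\ge\tilde\Theta(\sqrt n\,\bar R)$ comes from.
\end{itemize}
Your plan accounts for neither of these two assumptions. With this in place, the paper only needs $\delta_{\min}\ge 8\bar z$, where $\bar z$ bounds $\norm{\mZ_a^{\bar K}}_{\mathrm{op}}$ and satisfies $\bar z=\tilde O(1)$ when $d\ge n$.

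\textbf{Smaller points.} Your claim that ``near orthogonality is preserved because the means shrink toward distinct targets'' is not right. Many students share a target, and their mutual distances shrink like $t\norm{\mu^0_i-\mu^0_j}$; this is why Stage 1 must stop at a $\sqrt{\log(nR^0)}$ scale. For students with different targets you still need separation at intermediate times. The paper uses a segment-separation event for this. Alternatively, because both means contract with the same factor $t$, the Voronoi inequality $\inp{\mu^0_i-\mu^0_j}{\bar\mu_{a^*_i}-\bar\mu_{a^*_j}}\ge 0$ already gives $\norm{(1-t)(\bar\mu_{a^*_i}-\bar\mu_{a^*_j})+t(\mu^0_i-\mu^0_j)}^2\ge\frac12\min\{\delta_{\min}^2,\norm{\mu^0_i-\mu^0_j}^2\}$. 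Finally, the Stage 1 contraction factor is $1-2\gamma/n$, not $1-\Theta(\gamma/n^2)$.
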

\end{mytheobox}
\subsection{Discussion}
This theorem states that, if the initial means $\{\mu^0_{\ell}\}_{\ell \in [n]}$ are sampled uniformly from the sphere of radius $R^0$, then, for every $\ell \in [n]$, the sequence $\{\mu^k_{\ell}\}_{k \geq 0}$ converges to a small-$\varepsilon$ neighborhood of the target mean $\bar{\mu}_{a^*_{\ell}}$, where $\bar{\mu}_{a^*_{\ell}}$ is the target mean closest to $\mu^0_{\ell}$. 
Notice that the initial means are sampled uniformly from the sphere of large radius $R^0$, and that the initialization algorithm does not depend on $\{\bar\mu_a\}_{a \in [m]}$.
Nevertheless, under the random initialization scheme, we obtain global convergence of the means to small neighborhoods of their closest target modes. The fact that $R^0$ is large is not a problem, since the dependence on $R^0$ is only logarithmic. Similarly to the single-target case, up to logarithmic factors, the convergence rate does not depend on $d$ when $d$ is large, and the dependence on $\varepsilon$ is $\tilde\cO\left(\nicefrac{1}{\varepsilon}\right).$

The theorem relies on two assumptions: $d \geq \tilde\Omega(n)$ and $\delta_{\min} \geq \tilde\Omega\left(1\right).$ The former is arguably mild; moreover, in Theorem~\ref{thm:mainmany} we only require $d \geq \tilde\Omega(1)$ and include $d \geq \tilde\Omega(n)$ here only for simplicity. The latter assumption $\delta_{\min} \geq \tilde\Omega\left(1\right)$ requires a logarithmic separation between the target means and is standard in the literature \citep{shah2023learning,zhou2025global}; analyzing GD using the reverse Fisher divergence under a near-zero separation is an important direction for future work.

\textit{Remark.} The full dependencies on all the parameters are presented in Theorem~\ref{thm:mainmany}. Notice the requirement on $\delta_{\min}$ with explicit logarithmic dependencies when $d \geq \tilde\Omega(n)$ is
   $\textstyle \delta_{\min} \geq \Omega\left(\sqrt{\log\left(n^{3} \max\left\{n, m\right\} (R^0)^4/ \varepsilon^2\right)} \sqrt{\log\left(2 d / \rho\right)}\right).$
The requirement depends on $\varepsilon$, which is expected since, intuitively, when $\mu^{k}_{\ell}$ iterates towards $\bar{\mu}_{a^*_{\ell}}$, all other target means $\{\bar{\mu}_{b}\}_{b \neq a^*_{\ell}}$ also “attract” $\mu^{k}_{\ell}$, and in reality it converges to some point not belonging to the set $\{\bar{\mu}_{b}\}_{b \in [m]}$. However, when $\delta_{\min} \geq \tilde\Omega(1)$, this point is extremely close to one of the targets. A similar observation about $\delta_{\min}$ and $\varepsilon$ appears in \citep{shah2023learning}, where the authors obtain convergence under only \emph{local initialization} close to the targets, whereas our approach allows \emph{global randomized initialization}.
\subsection{Proof technique}
\label{sec:proof_technique}
While the proof techniques of Theorem~\ref{thm:mainmany_simple} use ideas from Section~\ref{sec:single}, the proof as a whole is much more challenging and technical. Unlike Theorem~\ref{thm:mainmany_simple}, we divide the proof into two stages.

\textbf{Global convergence stage.} The fact that we initialize on the sphere with a large radius $R^0$ helps us significantly, since the initial means $\mu^0_i$ are far away from each other with high probability (w.h.p.). Since they are far apart, we can show that $\nabla_{\mu^k_\ell}\mathcal L_{\mathrm{rev}}(\mu^k) \approx \frac{2}{n} \left(\mu^k_{\ell} - \bar\mu_{a^*_\ell}\right).$ Thus, $\mu^{k+1}_{\ell} \approx \bar \mu_{a^*_\ell} + \left(1 - \nicefrac{2 \gamma}{n}\right)^{k+1} (\mu^0_{\ell} - \bar \mu_{a^*_\ell})$ during the first iterations, and each $\mu^k_{\ell}$ converges linearly towards the corresponding closest target mean. However, these approximations contain errors that require careful control: we derive auxiliary lemmas to control how these errors evolve and show that they do not introduce significant fluctuations, and that each $\mu^k_{\ell}$ indeed converges linearly towards its target.

\textbf{Local convergence stage.} After $\bar{K}$ iterations, each $\mu^{\bar{K}}_{\ell}$ is close to its target mean $\bar{\mu}_{a^*_\ell}.$ This is the final stage, where the gradients of $\mu^{\bar{K}}_{\ell}$ corresponding to the same group $I_a \eqdef \{i \in [n]\,:\, a^*_i = a\}$ start ``feeling'' each other, and we can no longer ignore the fact that the means from $I_a$ are close to each other. In this stage, for each group $I_a,$ we use almost the same analysis as in Section~\ref{sec:single}. However, the main difficulty here is to show that $\mu^{k}_{i}$ and $\mu^{k}_{j}$ corresponding to different groups, i.e., $i \in I_a,$ $j \in I_b,$ and $a \neq b,$ do not ``interfere with each other significantly,'' and that the means corresponding to $I_a$ converge towards $\bar\mu_a$ as if there were no other target means $\{\bar\mu_b\}_{b \neq a}$ and $\{\mu^{k}_i\}_{i \notin I_a},$ which also requires careful control of errors.

\subsection{Convergence of the total variation distance}
\label{sec:tv}
In general, Theorem~\ref{thm:mainmany_simple} guarantees the convergence of each $\mu^{k}_{\ell}$ only to a small neighborhood of the closest target mean. As an example, with $m = n = 2,$ it might be possible that $\mu^{k}_{1}$ and $\mu^{k}_{2}$ converge towards $\bar\mu_1,$ while none of the student's means converge near $\bar\mu_{2}.$ Nevertheless, intuitively, when $n$ is large, $\{\mu^{k}_{i}\}_{i \in [n]}$ should be distributed among $\{\bar\mu_{a}\}_{a \in [m]}$ according to how ``open'' each $\bar\mu_{a}$ is in space. Let us formalize this.
\begin{mytheobox}
\begin{restatable}{theorem}{THEOREMTV}
   \label{thm:thm_tv}
   Using the notation and assumptions of Theorem~\ref{thm:mainmany}, we additionally assume that $n \geq \frac{m^2}{2\varepsilon}\log\frac{2m}{\rho}.$ 
   Let $n_a$ be the number of student's components that converge to $\bar{\mu}_a,$ i.e., $n_a \eqdef \abs{\{\ell \in [n]\,:\, a = a^*_{\ell}\}}.$ 
   Then, with probability at least $1 - 5\rho,$ we have $\max_{a \in [m]} \abs{\frac{n_a}{n}-p_a} \leq \nicefrac{\sqrt{\varepsilon}}{m}$ and 
   \begin{align*}
      \textnormal{TV}(p_{\mu^k}, \bar{p}) &\eqdef \frac{1}{2} \int_{\R^d} \abs{p_{\mu^k}(x) - \bar{p}(x)} \, dx \leq \sqrt{\varepsilon} + \frac{1}{2} \sum_{a = 1}^m \abs{p_a - \frac1m} \quad \forall k \geq \tilde{K},
   \end{align*}
   where $\eta \sim \mathrm{Unif}(\mathbb S^{d-1})$ and $p_a \eqdef \Prob{\norm{R^0 \eta - \bar\mu_{a}} < \min_{b \in [m], b \neq a} \norm{R^0 \eta - \bar\mu_{b}}}$
   is the probability that $\bar\mu_a$ is the closest vector to $R^0 \eta$ among $\{\bar\mu_b\}_{b \in [m]}.$
\end{restatable}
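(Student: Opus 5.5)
The proof combines three ingredients: a concentration bound for the counts $n_a$, the convergence guarantee of Theorem~\ref{thm:mainmany}, and a coupling bound for the total variation distance between Gaussian mixtures.

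\textbf{Concentration of $n_a/n$.} The initial means $\mu^0_\ell = R^0 \eta_\ell/\norm{\eta_\ell}$ are i.i.d. uniform on the sphere of radius $R^0$. Hence the labels $a^*_\ell$ are i.i.d. with $\Prob{a^*_\ell = a} = p_a$; ties have probability zero, so the strict inequality in the definition of $p_a$ is harmless. It follows that $n_a = \sum_\ell \mathbf{1}[a^*_\ell = a]$ is Binomial$(n,p_a)$. Hoeffding's inequality gives
\[
\Prob{\abs{n_a/n - p_a} > t} \leq 2\exp(-2nt^2).
\]
Setting $t = \sqrt{\varepsilon}/m$ and taking a union bound over $a \in [m]$, the failure probability is at most $2m\exp(-2n\varepsilon/m^2)$. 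This is $\leq \rho$ precisely when $n \geq \frac{m^2}{2\varepsilon}\log\frac{2m}{\rho}$, which is the assumption. Intersecting with the event of Theorem~\ref{thm:mainmany}, which has probability at least $1-4\rho$, gives total probability at least $1-5\rho$.

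\textbf{Total variation bound.} On this event, we have $\norm{\mu^k_\ell - \bar\mu_{a^*_\ell}}^2 \leq \varepsilon$ for all $\ell$ and all $k \geq \tilde K$. We introduce an intermediate mixture
\[
q(x) \eqdef \sum_a \frac{n_a}{n}\,\varphi(x-\bar\mu_a) = \frac1n\sum_\ell \varphi(x - \bar\mu_{a^*_\ell})
\]
and apply the triangle inequality: $\mathrm{TV}(p_{\mu^k},\bar p) \leq \mathrm{TV}(p_{\mu^k}, q) + \mathrm{TV}(q,\bar p)$.

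For the first term, we pair the components as $\mu^k_\ell \leftrightarrow \bar\mu_{a^*_\ell}$ with equal weights $1/n$. By convexity of TV in mixtures of equal weights,
\[
\mathrm{TV}(p_{\mu^k},q) \leq \frac1n\sum_\ell \mathrm{TV}\bigl(\mathcal N(\mu^k_\ell,\mI),\mathcal N(\bar\mu_{a^*_\ell},\mI)\bigr).
\]
Each summand is bounded by Pinsker's inequality: $\mathrm{TV}\leq \sqrt{\mathrm{KL}/2} = \norm{\mu^k_\ell-\bar\mu_{a^*_\ell}}/2 \leq \sqrt{\varepsilon}/2$.

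For the second term, both $q$ and $\bar p$ are mixtures over the same components $\varphi(\cdot-\bar\mu_a)$, so
\[
\mathrm{TV}(q,\bar p) \leq \frac12\sum_a \abs{\frac{n_a}{n} - \frac1m}.
\]
Adding and subtracting $p_a$ gives the bound $\frac12\sum_a\abs{p_a - \frac1m} + \frac12 \sum_a \abs{\frac{n_a}{n}-p_a}$. By the concentration step, the last sum is at most $\frac12 \cdot m\cdot \sqrt{\varepsilon}/m = \sqrt{\varepsilon}/2$.

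Summing the two terms gives $\sqrt{\varepsilon}/2 + \sqrt{\varepsilon}/2 + \frac12\sum_a\abs{p_a-\frac1m}$, which is the claimed bound.

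\textbf{Main obstacle.} The delicate point is the probabilistic bookkeeping: the event of Theorem~\ref{thm:mainmany} and the concentration event are both defined through the same random initialization. Since we only need a union bound over the two failure events, no independence between them is required. Given Theorem~\ref{thm:mainmany}, the rest is routine.
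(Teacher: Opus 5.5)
Your proof is correct and follows the paper's argument. You use the same Hoeffding-plus-union-bound step: the paper takes $t=\sqrt{\frac{1}{2n}\log\frac{2m}{\rho}}$ and then checks $t\le\sqrt{\varepsilon}/m$, which is equivalent to your choice of $t$. You intersect it with the $1-4\rho$ event of Theorem~\ref{thm:mainmany}, as the paper does, and your intermediate mixture $q$ is the paper's add-and-subtract of $\frac{n_a}{n}\varphi(x-\bar\mu_a)$.

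The only difference is the per-component Gaussian TV bound. You use Pinsker to get $\norm{\mu^k_\ell-\bar\mu_{a^*_\ell}}/2$. The paper uses the exact formula for equal-covariance Gaussians to get the slightly smaller $\norm{\mu^k_\ell-\bar\mu_{a^*_\ell}}/\sqrt{2\pi}$, but then relaxes it to $\sqrt{\varepsilon}/2$, so both arrive at the same constant.
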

\end{mytheobox}
The theorem extends Theorem~\ref{thm:mainmany} and shows that the total variation distance between the student's and teacher's distributions is less than or equal to $\sqrt{\varepsilon}$ up to $\frac{1}{2} \sum_{a = 1}^m \abs{p_a - \frac1m}.$ The latter depends on the probabilities $p_a,$ which characterize the probability that $\bar\mu_a$ is the closest target mean to a uniformly sampled random variable. Moreover, the student's components are approximately distributed between the target components according to probabilities $p_a$ since $\max_{a \in [m]} \abs{\nicefrac{n_a}{n}-p_a}$ is small.
\begin{mytheobox}
\begin{restatable}{corollary}{COROLTV}
Using the notation and assumptions of Theorem~\ref{thm:thm_tv}, if the vectors $\{\bar\mu_b\}_{b \in [m]}$ are symmetric, i.e., $p_a = \frac{1}{m}$ for all $a \in [m],$ then the total variation distance satisfies $\textnormal{TV}(p_{\mu^k}, \bar{p}) \leq \sqrt{\varepsilon}$ for all $k \geq \tilde{K}.$ In particular, if the vectors $\{\bar\mu_b\}_{b \in [m]}$ have the same norm and are pairwise orthogonal, then we have $p_a = \frac{1}{m}$ for all $a \in [m],$ and hence $\textnormal{TV}(p_{\mu^k}, \bar{p}) \leq \sqrt{\varepsilon}$ for all $k \geq \tilde{K}.$
\end{restatable}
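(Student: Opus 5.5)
The first claim is immediate from Theorem~\ref{thm:thm_tv}. Under the stated symmetry assumption we have $p_a = \frac1m$ for every $a \in [m]$, so the extra term $\frac12 \sum_{a=1}^m \abs{p_a - \frac1m}$ vanishes. The bound $\textnormal{TV}(p_{\mu^k}, \bar p) \leq \sqrt{\varepsilon}$ for all $k \geq \tilde K$ then holds on the same high-probability event as Theorem~\ref{thm:thm_tv}. All that remains is to verify the particular case: if the $\bar\mu_b$ have equal norm and are pairwise orthogonal, then $p_a = \frac1m$.

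\textbf{Step 1: rewrite the event.} Let $\eta \sim \mathrm{Unif}(\mathbb S^{d-1})$. Since $\norm{\bar\mu_b} = r$ for all $b$,
\[
\norm{R^0\eta - \bar\mu_b}^2 = (R^0)^2 - 2R^0\inp{\eta}{\bar\mu_b} + r^2 .
\]
Therefore $\bar\mu_a$ is the strictly closest target mean exactly when $\inp{\eta}{\bar\mu_a} > \inp{\eta}{\bar\mu_b}$ for all $b \neq a$. This gives $p_a = \Prob{\inp{\eta}{\bar\mu_a} > \max_{b\neq a}\inp{\eta}{\bar\mu_b}}$, which does not depend on $R^0$.

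\textbf{Step 2: symmetry.} Pairwise orthogonality requires $m \leq d$. Set $u_b = \bar\mu_b / r$; these are orthonormal. For any $a \neq c$, let $Q$ be the orthogonal map that swaps $u_a$ and $u_c$ and fixes their orthogonal complement (a reflection). The uniform distribution on the sphere is invariant under $Q$, so $Q^\top \eta$ has the same law as $\eta$. Moreover, $\inp{Q^\top\eta}{\bar\mu_b} = \inp{\eta}{Q\bar\mu_b}$, and $Q$ permutes $\{\bar\mu_b\}$ by swapping indices $a$ and $c$. Hence $p_a = p_c$.

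\textbf{Step 3: ties and summing.} Ties $\inp{\eta}{\bar\mu_a} = \inp{\eta}{\bar\mu_b}$ mean $\eta$ lies on the hyperplane orthogonal to $\bar\mu_a - \bar\mu_b \neq 0$. This has measure zero under the uniform law on the sphere, since the law is absolutely continuous with respect to surface measure, at least when $d \geq 2$. So the events defining $p_1, \dots, p_m$ are disjoint and cover the sphere up to a null set, which gives $\sum_a p_a = 1$ and therefore $p_a = \frac1m$.

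More generally, the "symmetric" case can be handled by the same argument: whenever there is a group of orthogonal maps acting transitively on $\{\bar\mu_b\}$, Steps 2 and 3 go through unchanged.

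The only point needing care is the measure-zero tie argument together with the degenerate case $d = 1$. Under the assumptions of Theorem~\ref{thm:mainmany} we have $d \geq \tilde\Omega(1)$, and in any case orthogonality with $m \geq 2$ forces $d \geq 2$. For $m = 1$ the claim $p_1 = 1$ is trivial.
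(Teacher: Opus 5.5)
Your proposal is correct and follows the paper's argument. The symmetric case is read off directly from Theorem~\ref{thm:thm_tv}. For equal-norm orthogonal means, you reduce "closest" to comparing $\inp{\eta}{\bar\mu_b}$ and then use the permutation invariance of the uniform law on the sphere, which is exactly the paper's appeal to exchangeability of the coordinates $\eta_1,\dots,\eta_m$; your explicit reflection swapping $u_a$ and $u_c$ and your null-set treatment of ties make rigorous two points the paper leaves implicit.
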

\end{mytheobox}
Thus, if the target means are symmetric and $n \gg m,$ then it is possible to show the convergence of GD with the reverse Fisher divergence in terms of TV distance even when fitting one GMM to another, and the student's components are distributed between the target components almost evenly since $n_a \approx \nicefrac{n}{m}.$ A modified bound also holds when $p_a \neq \nicefrac{1}{m}.$ For instance, if $p_a \in [\nicefrac{1 - \delta}{m}, \nicefrac{1 + \delta}{m}],$ then $\textnormal{TV}(p_{\mu^k}, \bar{p}) \leq \sqrt{\varepsilon} + \delta / 2.$

\subsection{One-step convergence and non-isotropic generalization}
While proving Theorem~\ref{thm:mainmany}, we noticed an interesting convergence property of the problem. When $\mu^0_i$ is initialized on the sphere of radius $R^0,$ the method can converge ``very close'' to the closest target mean in one step. To achieve this, one should choose the step size $\gamma$ to be exactly equal to $\nicefrac{n}{2}.$ This is not surprising in view of our discussion in Section~\ref{sec:proof_technique}, where we explain that $\mu^{1}_{\ell} \approx \bar \mu_{a^*_\ell} + \left(1 - \nicefrac{2 \gamma}{n}\right) (\mu^0_{\ell} - \bar \mu_{a^*_\ell}).$ Exactly the same behavior is well known for GD and the optimization problem $\min_{x \in \R^d}\{f(x) \eqdef \|x\|^2\},$ where GD can also converge in one step because $f$ is \emph{isotropic}. We now consider a \emph{non-isotropic} score matching problem and extend our theoretical results to this setting.

Let $\mu_{1} \equiv [\mu_{1,1}; \dots; \mu_{1,n_{\min}}] \in \R^{n_{\min} \cdot d}$ and $\mu_{2} \equiv [\mu_{2,1}; \dots; \mu_{2,n_{\max}}] \in \R^{n_{\max} \cdot d}$ be two collections of parameters with $n_1 \equiv n_{\min} < n_2 \equiv n_{\max}.$ Define the
joint model on $\R^{2d}$ by $q(x,y)
\eqdef
p_{\mu_{1}}(x)p_{\mu_{2}}(y),$
and define the joint target distribution by
\(\bar q(x,y) \eqdef \bar p_1(x)\bar p_2(y)\), where \(\bar p_1\) and \(\bar p_2\) are arbitrary GMMs of the form \eqref{eq:p} with the same number \(m\) of target components $\bar\mu_{1,1}, \dots, \bar\mu_{1,m}$ and $\bar\mu_{2,1}, \dots, \bar\mu_{2,m},$ respectively.
We consider the reverse Fisher loss
$\mathcal L_{\mathrm{rev}}^{\mathrm{joint}}
(\mu_{1},\mu_{2})
\eqdef
\mathbb E_{(x,y)\sim q}
[
\|
\nabla_{(x,y)}\log q(x,y)
-
\nabla_{(x,y)}\log \bar q(x,y)
\|^2
].$
One can show that
\begin{align}
\label{eq:main_gener}
\mathcal L_{\mathrm{rev}}^{\mathrm{joint}}
(\mu_{1},\mu_{2})
=
\mathcal L^{(1)}_{\mathrm{rev}}(\mu_{1})
+
\mathcal L^{(2)}_{\mathrm{rev}}(\mu_{2}),
\end{align}
where $\mathcal L^{(1)}_{\mathrm{rev}}$ is equal to \eqref{eq:main} with $p_{\mu} \to p_{\mu_{1}}$ and $\bar p \to \bar p_1$ ($\mathcal L^{(2)}_{\mathrm{rev}}$ is similar). For this separable problem, we can directly apply Theorem~\ref{thm:mainmany_simple} and get the following result.
\begin{corollary}
   Under the assumptions and parameters of Theorem~\ref{thm:mainmany_simple}, we sample the initial parameters as $\mu^{0}_{j,\ell} = R^0 \eta_{j,\ell} / \norm{\eta_{j,\ell}}$, where $\eta_{j,\ell} \sim \mathcal{N}(0, \mI)$ for all $j \in \{1,2\}, \ell \in [n_j]$ (i.i.d.) and $R^0 = \tilde\Theta\left(\max\left\{\nicefrac{n_{\max} \sqrt{d} \delta^2_{\max}}{\rho}, \sqrt{n_{\max}} \bar{R}\right\}\right)$
   Assume that the smallest target gap $\min_{j\in\{1,2\}}\min_{a\neq b} \|\bar\mu_{j,a}-\bar\mu_{j,b}\| \geq \tilde\Omega\left(1\right)$ and $d \geq \tilde\Omega(n_{\max}).$
   If $\gamma \leq \tilde{\Theta}\left(n_{\min}\right)$ and GD is run for
   $\tilde{K} \eqdef \tilde\Theta\left(\nicefrac{n_{\max}^3}{\gamma \varepsilon}\right)$
   iterations, then, with probability at least $1 - 8\rho$, we have that
      $\|\mu^{k}_{j,\ell} - \bar\mu_{j, a^*_{j,\ell}}\|^2 \leq \varepsilon,$
   for all $j \in \{1, 2\}, \ell \in [n_j]$ and $k \geq \tilde{K}$, where $a^*_{j,\ell} = \arg\min_{a \in [m]} \|\mu^0_{j,\ell} - \bar{\mu}_{j,a}\|.$ In particular, $\tilde{K} = \tilde\Theta\left(\nicefrac{n_{\max}^3}{n_{\min} \varepsilon}\right)$ if $\gamma = \tilde{\Theta}\left(n_{\min}\right).$
\end{corollary}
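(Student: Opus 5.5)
The plan is to reduce the joint problem to two independent instances of Theorem~\ref{thm:mainmany_simple} and combine them with a union bound. Three steps: prove the decomposition \eqref{eq:main_gener}, check that GD on the joint loss is GD on each block, and then match the parameters of each block to the hypotheses of the theorem.

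\textbf{Decomposition.} Since $q(x,y)=p_{\mu_1}(x)p_{\mu_2}(y)$ and $\bar q(x,y)=\bar p_1(x)\bar p_2(y)$, the score splits into two blocks:
\[
\nabla_{(x,y)}\log q-\nabla_{(x,y)}\log\bar q=\bigl(\nabla_x\log p_{\mu_1}(x)-\nabla_x\log\bar p_1(x);\ \nabla_y\log p_{\mu_2}(y)-\nabla_y\log\bar p_2(y)\bigr).
\]
The squared norm is therefore a sum of two terms, the first depending only on $x$ and the second only on $y$. Under $q$, the marginal of $x$ is $p_{\mu_1}$ and the marginal of $y$ is $p_{\mu_2}$. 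Taking expectations term by term gives \eqref{eq:main_gener}.

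\textbf{Separation of GD.} By \eqref{eq:main_gener}, $\nabla_{\mu_1}\mathcal L^{\mathrm{joint}}_{\mathrm{rev}}=\nabla\mathcal L^{(1)}_{\mathrm{rev}}(\mu_1)$, and likewise for $\mu_2$. So joint GD with step $\gamma$ is exactly two decoupled GD runs, one on $\mathcal L^{(1)}_{\mathrm{rev}}$ with $n_1$ components and one on $\mathcal L^{(2)}_{\mathrm{rev}}$ with $n_2$ components, both with step $\gamma$. The initializations of the two blocks are independent, and each is uniform on the sphere of radius $R^0$ in $\R^d$, as Theorem~\ref{thm:mainmany_simple} requires.

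\textbf{Parameter matching.} For block $j$, Theorem~\ref{thm:mainmany_simple} needs four conditions:
\begin{itemize}
\item $R^0\ge\tilde\Theta(\max\{n_j m^2\sqrt d\,\delta_{\max}^2/\rho,\sqrt{n_j}\bar R\})$. This follows from the choice of $R^0$ with $n_{\max}\ge n_j$, reading $\delta_{\max},\bar R$ as the maxima over both blocks and absorbing the factor $m^2$ into the $\tilde\Theta$ as in the statement.
\item $\delta_{\min}^{(j)}\ge\tilde\Omega(1)$. This is assumed.
\item $d\ge\tilde\Omega(n_j)$. This follows from $d\ge\tilde\Omega(n_{\max})$.
\item $\gamma\le\tilde\Theta(n_j)$. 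This follows from $\gamma\le\tilde\Theta(n_{\min})$.
\end{itemize}
The hidden logarithmic factors depend on $n_j$ only logarithmically, so I bound them by the corresponding expressions in $n_{\max}$. This monotonicity is the only point needing care: the explicit logs in Theorem~\ref{thm:mainmany} are increasing in $n$ and $R^0$, so the $n_{\max}$ versions dominate.

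With these conditions, block $j$ succeeds after $\tilde\Theta(n_j^3/(\gamma\varepsilon))\le\tilde K$ iterations with probability at least $1-4\rho$. A union bound over the two blocks gives the claimed probability $1-8\rho$ for all $j,\ell$ and all $k\ge\tilde K$. Finally, setting $\gamma=\tilde\Theta(n_{\min})$ gives $\tilde K=\tilde\Theta(n_{\max}^3/(n_{\min}\varepsilon))$.
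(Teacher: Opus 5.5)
Your proof is correct and follows the paper's route. The paper simply observes the separability \eqref{eq:main_gener} and applies Theorem~\ref{thm:mainmany_simple} to each block, with $n_{\min}$ governing the step size and $n_{\max}$ governing $R^0$, $d$, and $\tilde K$; the $1-8\rho$ comes from a union bound over the two $1-4\rho$ events. One small correction: $m^2$ is polynomial, not logarithmic, so it cannot be absorbed into $\tilde\Theta$. Instead, read ``under the assumptions and parameters of Theorem~\ref{thm:mainmany_simple}'' as requiring $R^0 \geq \tilde\Theta(n_{\max} m^2\sqrt{d}\,\delta_{\max}^2/\rho)$, which is what your block-wise application actually uses and which the paper's statement also glosses over.
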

In this more general case, the convergence rate depends on the ratio (condition number) between $n_{\max}$ and $n_{\min}.$ Moreover, in order to apply Theorem~\ref{thm:mainmany_simple}, we have to take $\gamma \leq \tilde\Theta(\min\{n_{\min}, n_{\max}\}) = \tilde\Theta(n_{\min}),$ and the one-step convergence phenomenon is unlikely to be possible for this problem. We can formalize this and prove a lower bound showing that we cannot take $\gamma \gg \tilde\Theta(n_{\min})$ and the method requires at least $\tilde\Omega(\nicefrac{n_{\max}}{n_{\min}})$ steps.
\begin{restatable}[Lower bound; Simplified version of Theorem~\ref{thm:lower_bound}]{theorem}{THMLOWERBOUNDSIMPL}
   \label{thm:lower_bound_simpl}
   We run GD on the problem \eqref{eq:main_gener} with $m = 1,$ $\bar\mu_{1,1} = \bar\mu_{2,1},$ $d \geq \tilde\Omega(n_{\max}),$ and $n_{\min} < \nicefrac{n_{\max}}{32}.$ Assume that $\varepsilon \in (0, 1],$ $\rho \in (0, 1],$ $R^0 = \tilde\Theta(\max\{1, \sqrt{n_{\max}} \bar{R}\}),$ where $\bar{R} \eqdef \norm{\bar \mu_{1,1}},$ and we sample the initial parameters as $\mu^0_{i,\ell} = R^0 \eta_{i,\ell} / \norm{\eta_{i,\ell}}$, where $\eta_{i,\ell} \sim \mathcal{N}(0, \mI)$ for all $i \in \{1,2\}, \ell \in [n_i]$ (i.i.d.). Under these assumptions, if the step size $\gamma \leq 2 n_{\min},$ then, with probability at least $1 - \rho,$ for all $k < \tilde{K}_1 \eqdef \tilde{\Theta}\left(\nicefrac{n_{\max}}{n_{\min}}\right),$ there exists $i \in \{1,2\}$ and $\ell \in [n_i]$ such that 
   $\|\mu^{k}_{i,\ell} - \bar\mu_{i,1}\|^2 > \varepsilon.$ If instead $\gamma > 2 n_{\min},$ then, with probability at least $1 - \rho,$ the method diverges.
\end{restatable}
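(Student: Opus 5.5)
Since $\mathcal L^{\mathrm{joint}}_{\mathrm{rev}}$ separates as in \eqref{eq:main_gener}, GD on the joint problem is exactly two independent GD runs with the same step size $\gamma$. The first is on $\mathcal L^{(1)}_{\mathrm{rev}}$ with $n_{\min}$ components; the second is on $\mathcal L^{(2)}_{\mathrm{rev}}$ with $n_{\max}$ components. Both have the same single target $\bar\mu:=\bar\mu_{1,1}=\bar\mu_{2,1}$. By Lemma~\ref{lemma:Z}, each block satisfies $\mZ_j^{k+1}=(\mI-\frac{2\gamma}{n_j}\mB_j^k)\mZ_j^k$. Hence the effective step size in block $j$ is $2\gamma/n_j$, and the whole argument is a comparison of these two effective step sizes.

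The key intermediate claim is a \textbf{far-regime linearization}, which I would prove first. With probability at least $1-\rho$, the random initial means on the sphere of radius $R^0$ are pairwise nearly orthogonal, so $\|\mu^0_{j,\ell}-\mu^0_{j,\ell'}\|\approx\sqrt2 R^0$. This uses $d\ge\tilde\Omega(n_{\max})$ and a union bound over pairs. In this regime $r_i(x)$ for $x\sim\mathcal N(\mu_\ell,\mI)$ is exponentially close to $\mathbb 1[i=\ell]$. Thus $\mB^k\approx\mI$ up to an error of size $e^{-\Omega(\mathrm{sep}^2)}$ times polynomial factors, and the update is
\[\mu^{k+1}_{j,\ell}-\bar\mu\approx\Bigl(1-\tfrac{2\gamma}{n_j}\Bigr)(\mu^k_{j,\ell}-\bar\mu).\]
I would then show inductively that separation is preserved along the trajectory while the distances stay above a threshold $\tilde\Theta(1)$. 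The idea is that all means in a block contract toward $\bar\mu$ by the same scalar, so pairwise distances shrink in proportion to the distances to $\bar\mu$. Choosing $R^0=\tilde\Theta(\max\{1,\sqrt{n_{\max}}\bar R\})$ makes $\|\mu^0_{j,\ell}-\bar\mu\|\approx R^0$ and keeps the errors controlled by the logs hidden in $\tilde\Theta$.

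\textbf{Case $\gamma\le 2n_{\min}$.} Block $2$ has contraction factor $1-2\gamma/n_{\max}\ge 1-4n_{\min}/n_{\max}\ge 7/8$, using $n_{\min}<n_{\max}/32$. Up to the small error, $\|\mu^k_{2,\ell}-\bar\mu\|\ge (1-4n_{\min}/n_{\max})^k R^0(1-o(1))$. This stays above $\sqrt\varepsilon$ as long as $k\lesssim \frac{n_{\max}}{n_{\min}}\log(R^0/\sqrt\varepsilon)$, which gives $\tilde K_1=\tilde\Theta(n_{\max}/n_{\min})$. Over this range the linearization remains valid, since the distances stay above $\tilde\Theta(1)\ge\sqrt\varepsilon$ at least until near the end. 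If necessary I would stop at the first time the distance reaches $\tilde\Theta(1)$; this already exceeds $\sqrt\varepsilon$ because $\varepsilon\le1$.

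\textbf{Case $\gamma>2n_{\min}$.} Block $1$ has factor $1-2\gamma/n_{\min}<-1$. I would show $\|\mu^{k+1}_{1,\ell}-\bar\mu\|\ge(2\gamma/n_{\min}-1-\text{err})\|\mu^k_{1,\ell}-\bar\mu\|$, and that separation only improves as the points move outward. Hence the iterates grow geometrically, i.e.\ diverge. The main obstacle is this case when $\gamma$ is only slightly above $2n_{\min}$: the factor is $1+\theta$ with $\theta$ small, and the error must be smaller than $\theta$. For this I would use the exponential smallness of the off-diagonal and correction terms of $\mB^k$ at separation $\Omega(R^0)$. I would also bound $\|\mB^k-\mI\|_{\mathrm{op}}$ via Lemma~\ref{lemma:b}-type estimates restricted to well-separated configurations, and absorb any remaining dependence on $\theta$ into the full version, Theorem~\ref{thm:lower_bound}.
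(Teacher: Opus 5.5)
Your proposal follows essentially the same route as the paper. You split the problem into the two independent blocks and linearize in the far regime, $\mu^{k+1}_{j,\ell}=\bar\mu+(1-2\gamma/n_j)^{k+1}(\mu^0_{j,\ell}-\bar\mu)-E^k_{j,\ell}$, with separation preserved by the common contraction. For $\gamma\le 2n_{\min}$ you stop the slowly contracting block $2$ at distance $\tilde\Theta(1)$, and for $\gamma>2n_{\min}$ you run an induction on block $1$ in which the growing separation makes the error super-exponentially small.

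One correction: for $\gamma>2n_{\min}$ you actually have $1-2\gamma/n_{\min}<-3$, not merely $<-1$. So the ``main obstacle'' you describe, a factor $1+\theta$ with small $\theta$ when $\gamma$ is just above $2n_{\min}$, does not arise. The paper simply uses $|1-2\gamma/n_{\min}|>2$ to sum the error series and obtain $\|E^k_{1,\ell}\|\le\frac14|1-2\gamma/n_{\min}|^{k+1}$. A factor near $-1$ would correspond to $\gamma\approx n_{\min}$, a regime the theorem makes no claim about.
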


\section{Experiments}
\label{sec:exp}
We compare the forward and reverse Fisher divergences numerically for the cases of a single target component, $m = 1,$ and multiple target components, $m > 1.$ In all experiments, we run GD with a step size of $0.01,$ use Gauss–Hermite quadrature to evaluate expectations numerically, and compare the convergence trajectories of GD under the different losses. GD starts from the same initial means for both losses. In Figure~\ref{fig:plottwo}, we visualize the convergence trajectories, which agree with the theory: the student parameters under the forward Fisher divergence may converge in an arbitrary direction, while the reverse Fisher divergence guarantees convergence of all parameters to a small neighborhood of one of the target components.
\begin{figure}[t]
    \centering
    \includegraphics[width=0.75\linewidth]{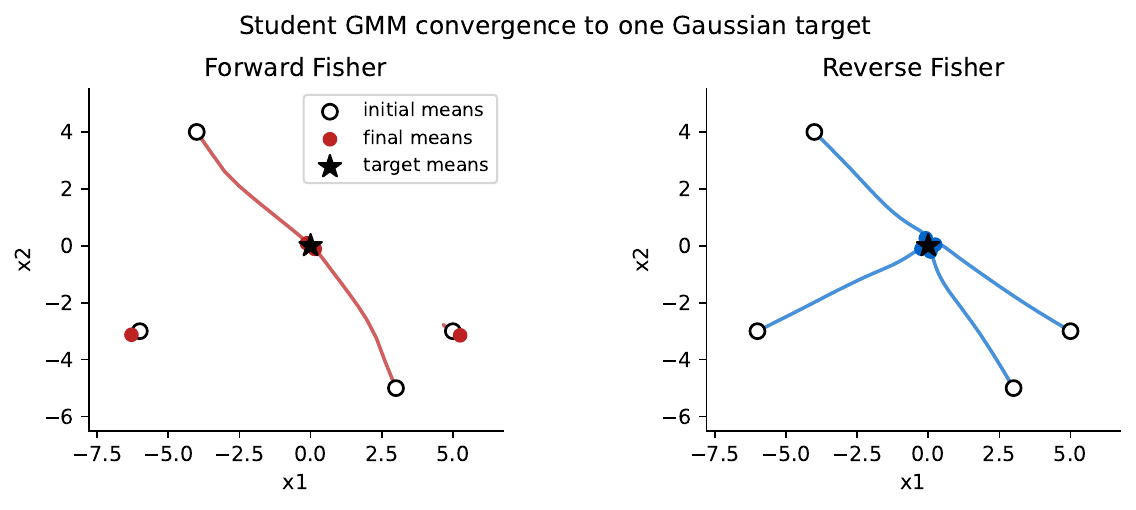}
    \includegraphics[width=0.75\linewidth]{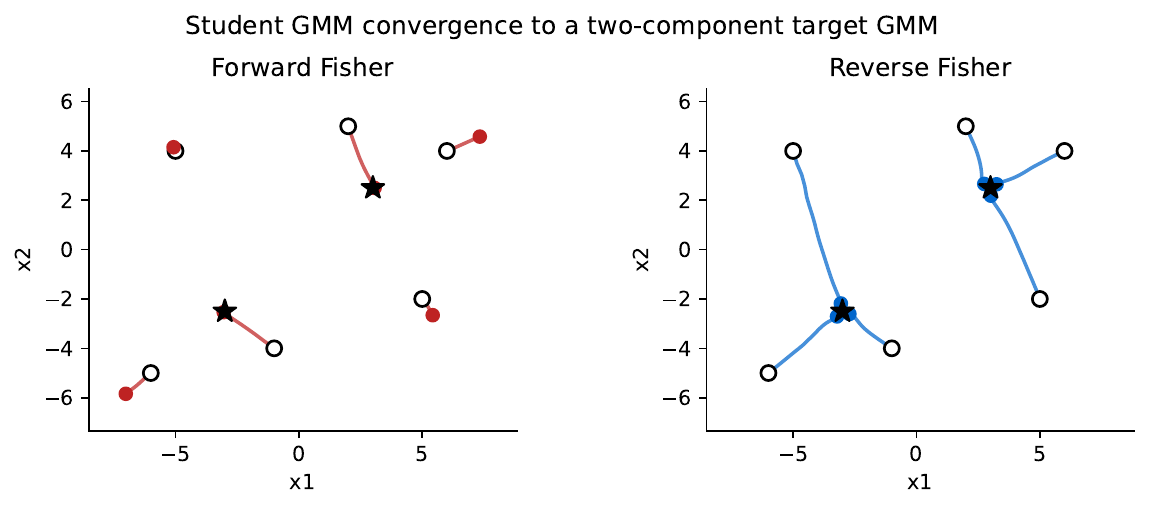}
    \caption{Convergence trajectories of GD with the forward and reverse Fisher divergences in two dimensions. First row: a single target component ($m = 1$). Second row: two target components ($m = 2$).}
    \label{fig:plottwo}
\end{figure}

\subsection{Computing gradients in practice}
A natural question concerns the computational complexity and feasibility of computing the gradient of the reverse Fisher divergence. Notice that this loss depends on the parameters through the expectation. For GMMs, computing the gradient is as feasible as in the forward loss case, where almost all formulas can be computed analytically, except for the outer expectation, which can be approximated, for instance, using sampling or numerical integration techniques. For more general distributions, depending on the domain, there may be different ways to find or approximate the gradient of \eqref{eq:main}. One generic way is to use a zeroth-order approximation. The idea is to approximate $\nabla \mathcal L_{\mathrm{rev}}(\mu)$ by $\nicefrac{\textnormal{dim}(\mu)}{B} \sum_{j=1}^{B} (\mathcal L_{\mathrm{rev}}(\mu + \alpha u_j) - \mathcal L_{\mathrm{rev}}(\mu - \alpha u_j)) (u_j / 2 \alpha),$ where $B \geq 1, \alpha > 0$ and $u_j$ are i.i.d. uniformly random directions from the sphere of radius $1.$ The function values can be approximated using Monte Carlo sampling, where samples from $p_{\mu + \alpha u_j}$ and $p_{\mu - \alpha u_j}$ can be approximately drawn using Langevin-type samplers. Developing effective algorithms and adapting current approaches to reverse-divergence methods is an important present and future research direction, and development has already begun in many related topics \citep{yang2019variational,yusemi,caibatch,cai2024eigenvi}.

\section{Conclusion}
The comparison of the forward and reverse Fisher divergences is not new and has been carried out in other contexts (e.g., \citep{margossian2025variational,lai2026unified}). Our works analyzes the reverse Fisher divergence and provides strong theoretical evidence that \emph{it has a much more favorable optimization landscape and better convergence properties formalized by our theory.} Even for the score matching problem for GMMs, we believe that this result is interesting and may motivate new research directions for designing algorithms and developing approaches based on the reverse Fisher divergence. In future work, it would be interesting to extend the result to more general families of distributions and to consider more realistic settings with stochastic gradients instead of exact gradients.

\section*{Acknowledgments}
The main observations, ideas, proofs, and research directions were proposed by the authors. However, during the research phase, we used ChatGPT Pro to accelerate technical calculations and quickly check different research directions before obtaining the final result. ChatGPT Pro helped us significantly in proving one of the key lemmas, Lemma~\ref{lemma:z_b}, by suggesting an interesting way to bound the trace of a square matrix and derive the lower bound \eqref{eq:zpfbtoigdmEHSImuC} for the dot product. All results in this paper were manually checked and written in LaTeX by the authors.

\bibliography{iclr2026_conference}
\bibliographystyle{iclr2026_conference}

\appendix

\newpage

\tableofcontents

\newpage

\section{Table of Notations}

\begin{table}[h]
\centering
\begin{tabular}{ll}
\toprule
Notation & Description \\
\midrule
$d$ & Dimension \\
$[A_1,\ldots,A_n]$ & Horizontal concatenation of matrices or vectors \\
$[A_1;\ldots;A_n]$ & Vertical concatenation of matrices or vectors \\
$\mathrm{dist}(\mathcal{A},\mathcal{B})$ & Euclidean distance between sets $\mathcal{A}$ and $\mathcal{B}$, i.e.,
$\mathrm{dist}(\mathcal{A},\mathcal{B}) = \inf_{x\in\mathcal{A},\,y\in\mathcal{B}} \|x-y\|$ \\
$[x,y]$ & Closed line segment joining $x$ and $y$, i.e., $\{(1-\lambda)x+\lambda y:\lambda\in[0,1]\}$ \\
$\bar{\mu}_a$ & Target mode $a$ \\
$\bar{R}$ & Maximum norm of a target mode, $\bar{R} = \max_{a \in [m]} \|\bar{\mu}_a\|$ \\
$\delta_{\min}$ & Minimum distance between distinct target modes \\
$\delta_{\max}$ & Maximum distance between distinct target modes \\
$\mu_\ell^k$ & Parameter $\ell$ at iteration $k$ \\
$\mu_\ell^0$ & Initial parameter $\ell$ \\
$a_\ell^*$ & Index of the target mode closest to $\mu_\ell^0$ \\
$R^0$ & Initialization radius \\
$\gamma$ & Gradient descent step size \\
$\rho$ & Failure probability parameter \\
$\varepsilon$ & Target accuracy \\
$\bar{p}$ & Teacher probability density function \\
$p_{\mu}$ & Student probability density function parameterized by $\mu$ \\
$\varphi(x)$ & Standard Gaussian density, $\varphi(x) = (2\pi)^{-d/2}\exp(-\|x\|^2/2)$ \\
$\mathbb E_{x \sim p}$ & Expectation with respect to the distribution $p$ \\
$\Gamma(t)$ & Gamma function, $\Gamma(t)=\int_0^\infty x^{t-1}e^{-x}\,dx$ \\
$\mI$ & Identity matrix \\
$\|\cdot\|$ & Euclidean norm of a vector \\
$\|\cdot\|_F$ & Frobenius norm of a matrix \\
$\|\cdot\|_{\mathrm{op}}$ & Operator (spectral) norm of a matrix \\
$\mathrm{Tr}(A)$ & Trace of a square matrix $A$ \\
$\langle A,B\rangle$ & Frobenius inner product, $\mathrm{Tr}(A^\top B)$ \\
$\mathrm{dim}(\mu)$ & Dimension of the parameter space containing $\mu$ \\
$g = \cO(f)$ & There exists $C > 0$ such that $g(z) \le C \, f(z)$ for all $z \in \cZ$. \\
$g = \Omega(f)$ & There exists $C > 0$ such that $g(z) \ge C \, f(z)$ for all $z \in \cZ$. \\
$g = \Theta(f)$ & There exist $C_1, C_2 > 0$ such that $C_1 f(z) \le g(z) \le C_2 f(z)$ for all $z \in \cZ$. \\
$\tilde{\cO},\tilde{\Omega},$ and $\tilde{\Theta}$ & The same as $\cO$, $\Omega,$ and $\Theta,$ but up to logarithmic factors. \\
\bottomrule
\end{tabular}
\caption{Frequently used notation.}
\end{table}

\newpage
\section{Generic Lemmas}
In this section, we state several useful generic lemmas that will be used to prove the convergence of GD.

\begin{lemma}
   \label{lemma:gamma}
   Let the Gamma function be defined by
   \[
      \Gamma(t) \eqdef \int_0^\infty x^{t-1} e^{-x} \, dx,
      \qquad t > 0.
   \]
   Then, for any integer $d \geq 2,$
   \[
      \Gamma\left(\frac{d}{2}\right)
      \leq
      \sqrt{\frac{d - 1}{2}}\,
      \Gamma\left(\frac{d - 1}{2}\right).
   \]
\end{lemma}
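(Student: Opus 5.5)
The plan is to reduce the inequality to a known bound on ratios of Gamma functions at arguments differing by $1/2$. Set $s \eqdef \frac{d-1}{2} \geq \frac12$. Then the claim reads
\[
\Gamma\left(s + \tfrac12\right) \leq \sqrt{s}\,\Gamma(s),
\]
which is Wendel's (or Gautschi's) inequality $\Gamma(s+\frac12)/\Gamma(s) \leq s^{1/2}$ for $s > 0$. Since the paper has not stated this, I will prove it directly from log-convexity of $\Gamma$.

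The key step is Hölder's inequality applied to the integral definition. Write $x^{s-1/2}e^{-x} = \bigl(x^{s-1}e^{-x}\bigr)^{1/2}\bigl(x^{s}e^{-x}\bigr)^{1/2}$. Cauchy--Schwarz then gives
\[
\Gamma\left(s+\tfrac12\right) = \int_0^\infty x^{s-1/2}e^{-x}\,dx \leq \left(\int_0^\infty x^{s-1}e^{-x}\,dx\right)^{1/2}\left(\int_0^\infty x^{s}e^{-x}\,dx\right)^{1/2} = \Gamma(s)^{1/2}\,\Gamma(s+1)^{1/2}.
\]
Now use the functional equation $\Gamma(s+1) = s\,\Gamma(s)$, which follows from one integration by parts. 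The right-hand side becomes $\sqrt{s}\,\Gamma(s)$, and substituting $s = \frac{d-1}{2}$ gives exactly the statement.

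There is essentially no obstacle. The only care needed is that all integrals converge, which holds because $s > 0$ whenever $d \geq 2$, and that the Cauchy--Schwarz split uses the correct exponents ($\frac{s-1}{2} + \frac{s}{2} = s - \frac12$). The restriction $d \geq 2$ is used only to ensure $\Gamma\left(\frac{d-1}{2}\right)$ is finite.
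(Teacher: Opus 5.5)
Your proof is correct and is essentially the paper's argument. The paper invokes log-convexity of $\Gamma$ at the points $\frac{d-1}{2}$ and $\frac{d+1}{2}$ and then applies $\Gamma\left(\frac{d+1}{2}\right) = \frac{d-1}{2}\Gamma\left(\frac{d-1}{2}\right)$; your Cauchy--Schwarz step simply proves that midpoint log-convexity inequality directly from the integral definition instead of citing it.
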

\begin{proof}
Since $\Gamma$ is log-convex, $\log\left(\Gamma\left(\frac{x}{2} + \frac{y}{2}\right)\right) \leq \frac{1}{2}\log\left(\Gamma(x)\right) + \frac{1}{2} \log\left(\Gamma(y)\right)$ for all $x, y \geq 0.$ Taking $x = \frac{d - 1}{2}$ and $y = \frac{d + 1}{2},$ we get $\log\left(\Gamma\left(\frac{d}{2}\right)\right) \leq \frac{1}{2}\log\left(\Gamma\left(\frac{d - 1}{2}\right)\right) + \frac{1}{2} \log\left(\Gamma\left(\frac{d + 1}{2}\right)\right).$ Due to $\Gamma\left(\frac{d + 1}{2}\right) = \frac{d - 1}{2}\Gamma\left(\frac{d - 1}{2}\right),$ we obtain $\Gamma\left(\frac{d}{2}\right) \leq \sqrt{\frac{d - 1}{2}} \Gamma\left(\frac{d - 1}{2}\right).$
\end{proof}

\begin{lemma}
\label{lemma:close}
Let $\rho \in (0,1]$ and $\mu^0_{\ell} = R^0 \eta_{\ell} / \norm{\eta_{\ell}}$, where $R^0 > 0$ and
$\eta_{\ell} \sim \mathcal{N}(0,\mI_d)$ for all $\ell \in [n]$, independently. The vectors $\{\bar\mu_a\}_{a \in [m]}$ are deterministic.
Assume that $d > 3$ and define
\[
\delta_{\min} \eqdef \min_{a \neq b}\norm{\bar\mu_a - \bar\mu_b} > 0.
\]
Let
\[
a^*_i \eqdef \arg\min_{a \in [m]} \norm{\mu^0_i - \bar\mu_a}
\]
with ties broken arbitrarily, and let
\[
\Omega_1 \eqdef
\left\{
\forall i \in [n]:
\norm{\mu^0_i - \bar\mu_{a^*_i}}^2
\leq
\min_{b \neq a^*_i}
\norm{\mu^0_i - \bar\mu_b}^2
-
\Delta
\right\}.
\]
Then choosing
\[
\Delta
=
\frac{2 R^0 \delta_{\min} \rho}{n m^2 \sqrt{d}}
\]
guarantees that
\[
\Prob{\Omega_1} \geq 1-\rho .
\]
\end{lemma}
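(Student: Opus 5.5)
The plan is a union bound over the $n$ independent initial points and over pairs of target means, together with an anti-concentration estimate for one coordinate of a uniform point on the sphere. Write $u_i \eqdef \eta_i/\norm{\eta_i}$, which is uniform on $\mathbb S^{d-1}$. For a fixed $i$ and a fixed pair $a\neq b$,
\[
\norm{\mu^0_i-\bar\mu_b}^2-\norm{\mu^0_i-\bar\mu_a}^2 = 2R^0\inp{u_i}{\bar\mu_a-\bar\mu_b} + \norm{\bar\mu_b}^2-\norm{\bar\mu_a}^2 .
\]
This is an affine function of the scalar $t \eqdef \inp{u_i}{v_{ab}}$, where $v_{ab} \eqdef (\bar\mu_a-\bar\mu_b)/\norm{\bar\mu_a-\bar\mu_b}$, and its slope is $2R^0\norm{\bar\mu_a-\bar\mu_b}\ge 2R^0\delta_{\min}$. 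If $\Omega_1$ fails at index $i$, then for $a=a^*_i$ and some $b\neq a$ this difference lies in $[0,\Delta)$. Therefore $t$ must lie in an interval of length at most $\Delta/(2R^0\delta_{\min}) = \rho/(n m^2\sqrt d)$.

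The key step is to bound the density of $t$. For $u$ uniform on $\mathbb S^{d-1}$, the density of $t=\inp{u}{v}$ is
\[
f_d(t)=\frac{\Gamma(d/2)}{\sqrt{\pi}\,\Gamma((d-1)/2)}(1-t^2)^{(d-3)/2}.
\]
Since $d>3$, the factor $(1-t^2)^{(d-3)/2}$ is at most $1$. Lemma~\ref{lemma:gamma} then gives $f_d(t)\le \sqrt{(d-1)/(2\pi)}\le \sqrt d$. So the probability that $t$ falls into any fixed interval of length $L$ is at most $\sqrt d\, L$. Applied to our interval, the probability of the bad event for fixed $(i,a,b)$ is at most $\rho/(nm^2)$.

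Finally we take a union bound over $i\in[n]$ and over ordered pairs $(a,b)$ with $a\ne b$. There are at most $m^2$ such pairs. We do not need to handle the random index $a^*_i$ separately, because the bad event is contained in the union over all pairs $(a,b)$ of the event that the difference above lies in $[0,\Delta)$. This gives a total failure probability of at most $n\cdot m^2\cdot \rho/(nm^2)=\rho$.

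The only delicate point is the density bound. It requires the explicit marginal density formula and the Gamma-ratio estimate from Lemma~\ref{lemma:gamma}, which is presumably why that lemma was stated. One should also check that ties, which occur with probability zero, are covered by the same event.
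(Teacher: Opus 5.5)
Your proposal is correct and is essentially the paper's proof. Both arguments take a union bound over $i\in[n]$ and ordered pairs $a\neq b$, reduce the bad event to the projection $\inp{u_i}{v_{ab}}$ landing in a short interval, and bound the density using the explicit marginal formula and Lemma~\ref{lemma:gamma}. The only difference is cosmetic: the paper uses the two-sided event $\abs{D}<\Delta$, recentres it at $0$ by unimodality and uses the sharper bound $p(0)\le\sqrt{d}/2$, whereas you use the one-sided event $D\in[0,\Delta)$ with the cruder density bound $\sqrt{d}$. The two factors of $2$ cancel, so both give the same per-pair bound $\sqrt{d}\,\Delta/(2R^0\delta_{\min})$.
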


\begin{proof}
For fixed $i \in [n]$, define
$g_i
\eqdef
\min_{b \neq a^*_i}
\norm{\mu^0_i - \bar\mu_b}^2
-
\norm{\mu^0_i - \bar\mu_{a^*_i}}^2 .$
Then $\Omega_1 = \{g_i \geq \Delta \ \forall i \in [n]\}$, and hence
\[
\Prob{\Omega_1^c}
\leq
\sum_{i=1}^n \Prob{g_i < \Delta}
=
n \Prob{g_1 < \Delta} = n \Prob{\norm{\mu^0_1 - \bar\mu_{b^*_1}}^2 - \norm{\mu^0_1 - \bar\mu_{a^*_1}}^2 < \Delta},
\]
where $\Omega_1^c$ is the complementary event of $\Omega_1$ and $b^*_1 \eqdef \arg\min\limits_{a \in [m], a \neq a^*_1} \norm{\mu^0_1 - \bar\mu_a}.$ Thus
\begin{align}
\label{eq:sezYUFHvRIgXyZV}
\Prob{\Omega_1^c}
\leq n \sum_{a,b \in [m], a \neq b} \Prob{\abs{\norm{\mu^0_1 - \bar\mu_{b}}^2 - \norm{\mu^0_1 - \bar\mu_{a}}^2} < \Delta},
\end{align}
Let $u = \eta_1/\norm{\eta_1}$, so that $\mu^0_1 = R^0 u$ and $u$ is uniform on the unit sphere. For fixed $a \neq b$,
\[
\norm{R^0 u - \bar\mu_b}^2
-
\norm{R^0 u - \bar\mu_a}^2
=
\norm{\bar\mu_b}^2
-
\norm{\bar\mu_a}^2
-
2 R^0 \inp{u}{\bar\mu_b - \bar\mu_a}.
\]
Thus,
\begin{align*}
&\Prob{
\abs{\norm{\mu^0_1 - \bar\mu_b}^2
-
\norm{\mu^0_1 - \bar\mu_a}^2}
<
\Delta
} \\
&=
\Prob{\abs{\inp{u}{\frac{\bar\mu_b - \bar\mu_a}{\norm{\bar\mu_b - \bar\mu_a}}} - \frac{\norm{\bar\mu_b}^2 - \norm{\bar\mu_a}^2}{2 R^0 \norm{\bar\mu_b - \bar\mu_a}}} < \frac{\Delta}{2 R^0 \norm{\bar\mu_b - \bar\mu_a}}}, \\
&=
\Prob{\abs{\inp{u}{\frac{\bar\mu_b - \bar\mu_a}{\norm{\bar\mu_b - \bar\mu_a}}} -\bar{t}} < \bar{b}},
\end{align*}
where $\bar{t} \eqdef \frac{\norm{\bar\mu_b}^2 - \norm{\bar\mu_a}^2}{2 R^0 \norm{\bar\mu_b - \bar\mu_a}}$ and $\bar{b} \eqdef \frac{\Delta}{2 R^0 \norm{\bar\mu_b - \bar\mu_a}}.$ It is well-known (e.g. \citep{spruill2007asymptotic}) that $Z \eqdef \inp{u}{\frac{\bar\mu_b - \bar\mu_a}{\norm{\bar\mu_b - \bar\mu_a}}}$ has the density 
\begin{align}
   \label{eq:BLbUPxXmJZHAm}
   p(x) = \frac{\Gamma\left(\frac{d}{2}\right)}{\sqrt{\pi} \Gamma\left(\frac{d - 1}{2}\right)} (1 - x^2)^{(d - 3) / 2}
\end{align}
for all $x \in [-1, 1],$ and $p(x) = 0$ otherwise, where $\Gamma$ is the gamma function. Since $d>3$, the density $p$ is symmetric and uniquely maximized at $0$. Hence
\begin{align*}
&\Prob{
\abs{\norm{\mu^0_1 - \bar\mu_b}^2
-
\norm{\mu^0_1 - \bar\mu_a}^2}
<
\Delta
} \\
&=
\Prob{\abs{Z-\bar t}<\bar b} \leq
\Prob{\abs{Z}<\bar b} 
\leq
2\bar b \, p(0) 
=
2\bar b
\frac{\Gamma\left(\frac{d}{2}\right)}
{\sqrt{\pi}\Gamma\left(\frac{d-1}{2}\right)}.
\end{align*}
Using Lemma~\ref{lemma:gamma},
\begin{align*}
   \Prob{
\abs{\norm{\mu^0_1 - \bar\mu_b}^2
-
\norm{\mu^0_1 - \bar\mu_a}^2}
<
\Delta
} \leq \frac{1}{\sqrt{\pi}} \sqrt{\frac{d - 1}{2}} \frac{\Delta}{R^0 \norm{\bar\mu_b - \bar\mu_a}} \leq \frac{\sqrt{d} \Delta}{2 R^0 \norm{\bar\mu_b - \bar\mu_a}}.
\end{align*}
Substituting into \eqref{eq:sezYUFHvRIgXyZV} and using the definition of $\delta_{\min}$,
\begin{align*}
   \Prob{\Omega_1^c} \leq \frac{n m^2 \sqrt{d} \Delta}{2 R^0 \delta_{\min}}.
\end{align*}
It remains to use our choice of $\Delta.$
\end{proof}

\begin{lemma}
\label{lemma:far}
Let $\rho \in (0,1]$ and $\mu^0_{\ell} = R^0 \eta_{\ell} / \norm{\eta_{\ell}}$, where $R^0 > 0$ and
$\eta_{\ell} \sim \mathcal{N}(0,\mI_d)$ for all $\ell \in [n]$, independently.
Assume $d \geq 3$. Let
\[
\Omega_2 \eqdef
\left\{
\forall i \neq j \in [n]:
\norm{\mu^0_i - \mu^0_j}^2
\geq \Delta_2
\right\}.
\]
Then choosing
\[
\Delta_2
=
(R^0)^2
\left(
\frac{\sqrt{2\pi}\rho}{\sqrt d\, n^2}
\right)^{\frac{2}{d-1}}
\]
guarantees that
\[
\Prob{\Omega_2} \geq 1-\rho .
\]
\end{lemma}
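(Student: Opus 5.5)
We bound the failure probability with a union bound over pairs, using the same spherical marginal density \eqref{eq:BLbUPxXmJZHAm} as in the proof of Lemma~\ref{lemma:close}. Write $u_i = \eta_i/\norm{\eta_i}$, so that the $u_i$ are i.i.d.\ uniform on $\mathbb S^{d-1}$ and $\norm{\mu^0_i-\mu^0_j}^2 = 2(R^0)^2(1-\inp{u_i}{u_j})$. Then
\[
\Prob{\Omega_2^c} \leq \sum_{i<j}\Prob{2(R^0)^2(1-\inp{u_i}{u_j}) < \Delta_2} \leq \frac{n^2}{2}\,\Prob{Z > 1-s},
\qquad s \eqdef \frac{\Delta_2}{2(R^0)^2}.
\]
Here $Z \eqdef \inp{u_1}{u_2}$. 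Conditioning on $u_2$ and using rotation invariance, $Z$ has the density $p(x)=\frac{\Gamma(d/2)}{\sqrt\pi\,\Gamma((d-1)/2)}(1-x^2)^{(d-3)/2}$ on $[-1,1]$.

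Next we bound the tail near $x=1$. For $x\in[1-s,1]$ we have $1-x^2=(1-x)(1+x)\le 2(1-x)$, and $d\geq 3$ makes the exponent $(d-3)/2$ nonnegative. Hence
\[
\Prob{Z>1-s}\le C_d\int_{1-s}^1 \bigl(2(1-x)\bigr)^{(d-3)/2}\,dx = C_d\,2^{(d-3)/2}\frac{s^{(d-1)/2}}{(d-1)/2}.
\]
By Lemma~\ref{lemma:gamma}, $C_d \eqdef \frac{\Gamma(d/2)}{\sqrt\pi\,\Gamma((d-1)/2)}\le \sqrt{\frac{d-1}{2\pi}}$. Combining these gives
\[
\Prob{Z>1-s}\le \frac{2\cdot 2^{(d-3)/2}}{\sqrt{2\pi(d-1)}}\,s^{(d-1)/2}=\frac{(2s)^{(d-1)/2}}{\sqrt{2\pi(d-1)}}.
\]
This computation is where the exponent $2/(d-1)$ in $\Delta_2$ comes from.

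Finally we substitute $2s=\Delta_2/(R^0)^2=\bigl(\sqrt{2\pi}\rho/(\sqrt d\,n^2)\bigr)^{2/(d-1)}$, so that $(2s)^{(d-1)/2}=\sqrt{2\pi}\rho/(\sqrt d\, n^2)$. The union bound then yields
\[
\Prob{\Omega_2^c}\le \frac{n^2}{2}\cdot\frac{\rho}{\sqrt d\,\sqrt{d-1}}\le \rho .
\]
The last inequality uses $d\ge 3$, or $d\geq 2$ more generally, since then $\sqrt{d(d-1)}\ge 1$.

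Most of this is routine. The main obstacle is bookkeeping in the tail integral: the bound $1+x\le 2$ has to be used consistently with the Gamma-ratio estimate so that all the constants line up with the stated $\Delta_2$. There is slack here, coming from the factor $1/2$ from unordered pairs and from $\sqrt{d(d-1)}$. So if a constant turns out slightly off, I would absorb it by using the crude bound $n^2$ on the number of pairs instead of $n^2/2$.
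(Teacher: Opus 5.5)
Your proof is correct and follows the paper's argument (union bound over pairs, the density \eqref{eq:BLbUPxXmJZHAm} of $\inp{u_i}{u_j}$, and Lemma~\ref{lemma:gamma}). The only difference is the tail estimate: you use $1-x^2\le 2(1-x)$ and integrate exactly, whereas the paper bounds the integral by $(1-t)(1-t^2)^{(d-3)/2}\le (1-t^2)^{(d-1)/2}\le (2s)^{(d-1)/2}$. As a result your per-pair bound is smaller by a factor $\sqrt{d(d-1)}$, and it does not need $t=1-s\ge 0$.

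One slip: your final display drops the $n^2$ in the denominator coming from $(2s)^{(d-1)/2}=\sqrt{2\pi}\rho/(\sqrt{d}\,n^2)$. It should read $\frac{n^2}{2}\cdot\frac{\rho}{n^2\sqrt{d}\sqrt{d-1}}=\frac{\rho}{2\sqrt{d(d-1)}}\le\rho$, which is what your stated justification actually uses.
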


\begin{proof}
Let $u_i \eqdef \eta_i/\norm{\eta_i}$. Then $u_1,\dots,u_n$ are independent
and uniformly distributed on the unit sphere $\mathbb S^{d-1}$, and
$\mu_i^0=R^0u_i$. For any fixed pair $i\neq j$,
\[
\norm{\mu_i^0-\mu_j^0}^2
=
(R^0)^2\norm{u_i-u_j}^2.
\]
Therefore,
\begin{align*}
   \Prob{\norm{\mu_i^0-\mu_j^0}^2 \leq \Delta_2}
   &= \Prob{- 2 (R^0)^2 \inp{u_i}{u_j} \leq \Delta_2 - 2 (R^0)^2} \\
   &= \Prob{\inp{u_i}{u_j} \geq t},
\end{align*}
where $t \eqdef 1 - \frac{\Delta_2}{2 (R^0)^2}.$
Conditioning on $u_i$, $Z \eqdef \inp{u_i}{u_j}$ has the density \eqref{eq:BLbUPxXmJZHAm}. Thus,
\begin{align*}
   \Prob{\norm{\mu_i^0-\mu_j^0}^2 \leq \Delta_2}
   &\leq
   \frac{\Gamma\left(\frac{d}{2}\right)}
   {\sqrt{\pi} \Gamma\left(\frac{d - 1}{2}\right)}
   \int_{t}^{1} (1 - x^2)^{(d - 3) / 2} \, dx \\
   &\leq
   \frac{\Gamma\left(\frac{d}{2}\right) (1 - t)}
   {\sqrt{\pi} \Gamma\left(\frac{d - 1}{2}\right)}
   (1 - t^2)^{(d - 3) / 2} \\
   &\leq
   \frac{\Gamma\left(\frac{d}{2}\right)}
   {\sqrt{\pi} \Gamma\left(\frac{d - 1}{2}\right)}
   (1 - t^2)^{(d - 1) / 2}.
\end{align*}
Due to Lemma~\ref{lemma:gamma},
\begin{align*}
   \Prob{\norm{\mu_i^0-\mu_j^0}^2 \leq \Delta_2}
   \leq
   \frac{\sqrt{d}}{\sqrt{2 \pi}}
   (1 - t^2)^{(d - 1) / 2}.
\end{align*}
Moreover,
\[
1-t^2
=
1-\left(1-\frac{\Delta_2}{2(R^0)^2}\right)^2
=
\frac{\Delta_2}{(R^0)^2}
-
\frac{\Delta_2^2}{4(R^0)^4}
\leq
\frac{\Delta_2}{(R^0)^2}.
\]
Therefore, by the union bound,
\[
\Prob{\Omega_2^c}
\leq
\sum_{i<j}
\Prob{\norm{\mu_i^0-\mu_j^0}^2 \leq \Delta_2}
\leq
\frac{\sqrt{d} n^2}{\sqrt{2 \pi}}
\left(
\frac{\Delta_2}{(R^0)^2}
\right)^{(d - 1) / 2}.
\]
Choosing
\[
\Delta_2
=
(R^0)^2
\left(
\frac{\sqrt{2\pi}\rho}{\sqrt d\, n^2}
\right)^{\frac{2}{d-1}}
\]
gives $\Prob{\Omega_2^c} \leq \rho$, and hence
$\Prob{\Omega_2} \geq 1-\rho$.
\end{proof}

\begin{lemma}
\label{lemma:dist}
Let $\rho \in (0,1],$ $d \geq 3,$ $n \geq 2,$ and $m \geq 2.$ Let $\mu^0_{\ell} = R^0 \eta_{\ell} / \norm{\eta_{\ell}}$, where $R^0 > 0$ and
$\eta_{\ell} \sim \mathcal{N}(0,\mI_d)$ for all $\ell \in [n]$, independently. Moreover, $\{\bar\mu_a\}_{a \in [m]}$ are deterministic vectors such that $\delta_{\min} \eqdef \min_{a \neq b}\norm{\bar\mu_a - \bar\mu_b} > 0$ and $\norm{\bar\mu_a} \leq R^0$ for all $a \in [m].$ Let
\[
\Omega_3 \eqdef \left\{ \forall i \neq j \in [n] \textnormal{ s.t. } a^*_i \neq a^*_j: \textnormal{dist}([\mu^0_i, \bar{\mu}_{a^*_i}], [\mu^0_j, \bar{\mu}_{a^*_j}]) \geq \Delta_3 \right\},
\]
where $a^*_i = \arg\min_{a \in [m]} \norm{\mu^0_i - \bar \mu_a}$ for all $i \in [n].$
Then choosing
\[
\Delta_3
=
\frac{\delta_{\min}}{8}
\left(\frac{\rho}{m n^2}\right)^{\frac{1}{d-2}}
\]
guarantees that
\[
\Prob{\Omega_3} \geq 1-\rho .
\]
\end{lemma}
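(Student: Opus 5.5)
The plan is to reduce to a pairwise statement about two independent uniform directions and then take a union bound over the at most $n^2$ pairs and the at most $m$ choices of target indices. Fix $i \neq j$ and condition on the event $a^*_i = a$, $a^*_j = b$ with $a \neq b$. Write $u_i = \mu^0_i/R^0$ and $u_j = \mu^0_j/R^0$, which are independent and uniform on $\mathbb S^{d-1}$. Since $\norm{\bar\mu_a}, \norm{\bar\mu_b} \leq R^0$, both segments lie in the ball of radius $R^0$.

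The geometric core is a deterministic claim. If the two segments come within distance $\Delta_3$ of each other, then either the two directions $u_i, u_j$ are within a small angle, or one segment passes close to the ``wrong'' target. I would argue this using the Voronoi structure. Every point of $[\mu^0_i, \bar\mu_a]$ lies in the Voronoi cell of $\bar\mu_a$: this holds because Voronoi cells are convex and both endpoints lie in the cell of $\bar\mu_a$, the point $\mu^0_i$ by definition of $a^*_i$. Similarly, $[\mu^0_j, \bar\mu_b]$ lies in the cell of $\bar\mu_b$. Two points at distance less than $\Delta_3$, one in each cell, must both lie within $\Delta_3$ of the bisector hyperplane $H_{ab}$. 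So each segment comes within $\Delta_3$ of $H_{ab}$ near a common point $x$ of that bisecting slab.

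The next step is to convert this into a small-probability event for a single random direction. The segment $[R^0 u_i, \bar\mu_a]$ starts at $\bar\mu_a$, which is at distance at least $\delta_{\min}/2$ from $H_{ab}$. Hence the point where it approaches $H_{ab}$ pins down where the ray from $\bar\mu_a$ must pass, and this constrains $u_i$. A cleaner route, which I would try first, is to avoid hyperplanes altogether. The two segments being $\Delta_3$-close means some point $x$ lies within $\Delta_3$ of both segments. Both segments emanate from targets that are $\delta_{\min}$ apart, so $x$ is at distance at least about $\delta_{\min}/2 - \Delta_3$ from at least one of the targets, say from $\bar\mu_a$. Viewed from $\bar\mu_a$, the ray toward $R^0u_i$ then passes within $\Delta_3$ of the segment $[\bar\mu_b, R^0 u_j]$ at distance at least about $\delta_{\min}/4$. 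Consequently, the direction of $R^0u_i - \bar\mu_a$ lies in a neighbourhood of angular radius $O(\Delta_3/\delta_{\min})$ of the two-dimensional ``shadow'' that the segment casts on the sphere around $\bar\mu_a$.

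Conditioning on $u_j$, I would bound the probability that $u_i$ lands in this set. I would cover the neighbourhood of the shadow by about $O(\delta_{\min}/\Delta_3)$ caps of angular radius $O(\Delta_3/\delta_{\min})$; the shadow is a curve of angular length at most $\pi$. Caps of angular radius $\theta$ have measure at most roughly $\theta^{d-1}$, using the density \eqref{eq:BLbUPxXmJZHAm} together with Lemma~\ref{lemma:gamma}, as in Lemma~\ref{lemma:far}. The total is then of order $(\Delta_3/\delta_{\min})^{d-2}$, which matches the exponent $1/(d-2)$ in the stated $\Delta_3$. One subtlety needs care: the map from $u_i$ to the direction of $R^0u_i - \bar\mu_a$ has bounded Jacobian distortion, because $\norm{\bar\mu_a} \leq R^0$. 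This is the main obstacle, since it degenerates when $\norm{\bar\mu_a}$ is close to $R^0$. I would handle it by working directly with $u_i$: the condition says that $R^0u_i$ lies in a cone of half-angle $O(\Delta_3/\delta_{\min})$ around the shadow, and intersecting that cone with the sphere of radius $R^0$ costs at most a constant factor.

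The final step is the union bound. Summing over pairs $i<j$ and the $m$ choices of $a$ gives $\Prob{\Omega_3^c} \leq m n^2 (8\Delta_3/\delta_{\min})^{d-2}$, and the stated choice of $\Delta_3$ makes this at most $\rho$.
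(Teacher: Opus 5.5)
Your geometric reduction is essentially the paper's. Convexity of the half-space $\{x:\norm{x-\bar\mu_a}\le\norm{x-\bar\mu_b}\}$ keeps each segment in its own cell. Hence a point $p\in[\mu^0_i,\bar\mu_a]$ that is $\Delta_3$-close to $[\mu^0_j,\bar\mu_b]$ has $\norm{p-\bar\mu_a}\ge\delta_{\min}/2-\Delta_3\ge\delta_{\min}/4$; this holds for both segments, so your case split is unnecessary. One then conditions on the other initialization and union-bounds over at most $n^2$ ordered pairs and $m$ labels.

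The gap is in the probabilistic step. Nothing you wrote yields the bound $mn^2(8\Delta_3/\delta_{\min})^{d-2}$ asserted in your last line. Because $\Delta_3$ is pinned with the constant $1/8$, any extra multiplicative factor breaks the statement.
\begin{itemize}
\item Covering the $\theta$-tube around the shadow by $O(1/\theta)$ caps requires caps of radius $c\theta$ with $c>1$. This costs a factor $c^{d-1}$, and the cap estimate of Lemma~\ref{lemma:far} adds a further $\sqrt d$.
\item Your handling of the distortion is also incorrect. The direction $w$ of $R^0u_i-\bar\mu_a$ has density $r^{d-1}/\bigl((R^0)^{d-1}\cos\alpha\bigr)$ with respect to the uniform law on $\mathbb S^{d-1}$. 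Here $r=\norm{R^0u_i-\bar\mu_a}$ and $\alpha$ is the angle between $w$ and the outer normal at $R^0u_i$. For $\norm{\bar\mu_a}=R^0$ and $w=-\bar\mu_a/R^0$ this density equals $2^{d-1}$. So angular smallness seen from $\bar\mu_a$ transfers to $u_i$ only up to a factor exponential in $d$, not a constant.
\end{itemize}

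The missing idea is to measure closeness by Euclidean distance to a fixed plane and to use an exact distributional fact instead of covers.
\begin{itemize}
\item Write $p=\bar\mu_a+t(\mu^0_i-\bar\mu_a)$. Then $t\ge\delta_{\min}/(8R^0)$, since $\norm{\mu^0_i-\bar\mu_a}\le 2R^0$.
\item The segment $[\mu^0_j,\bar\mu_b]$ lies in the affine plane through $\bar\mu_a$ spanned by $\mu^0_j-\bar\mu_a$ and $\bar\mu_b-\bar\mu_a$. Let $\mathcal B$ be the corresponding linear plane. Homogeneity of $\textnormal{dist}(\mathcal B,\cdot)$ gives $\textnormal{dist}(\mathcal B,\mu^0_i-\bar\mu_a)<\Delta_3/t\le 8R^0\Delta_3/\delta_{\min}$.
\item For each fixed value $a$ of $a^*_i$ (this is where the factor $m$ comes from), $\mathcal B$ is a function of $\mu^0_j$ alone.
\item The orthogonal projection of a uniform point of the radius-$R^0$ sphere onto the $(d-2)$-dimensional orthogonal complement of a fixed $2$-plane containing $\mathcal B$ is uniform on the $(d-2)$-ball of radius $R^0$.
\item Therefore the probability is at most $(8\Delta_3/\delta_{\min})^{d-2}$ with no loss, wherever $\bar\mu_a$ sits in the ball.
\end{itemize}

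This is the paper's argument with $i$ and $j$ interchanged. In your language, the factor $2$ from $\norm{\mu^0_i-\bar\mu_a}\le 2R^0$ is exactly what turns your angular radius (at best $4\Delta_3/\delta_{\min}$) into the $8\Delta_3/\delta_{\min}$ of the statement. The projection fact accounts for that factor exactly, with no covering and no Jacobian.
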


\begin{proof}
   Clearly,
   \begin{align*}
      \Prob{\Omega_3^c} 
      &\leq \sum_{i \neq j \in [n]} \Prob{\left\{a^*_i \neq a^*_j\right\} \cap \left\{\textnormal{dist}([\mu^0_i, \bar{\mu}_{a^*_i}], [\mu^0_j, \bar{\mu}_{a^*_j}]) < \Delta_3\right\}}.
   \end{align*}
   Fix $i \neq j$, and assume that $a^*_i \neq a^*_j$. Next, $Q \eqdef \left\{x \in \R^d \,:\, \norm{x - \bar{\mu}_{a^*_i}} \leq \norm{x - \bar{\mu}_{a^*_j}}\right\}.$ This set is convex; thus, $[\mu^0_i, \bar{\mu}_{a^*_i}] \subseteq Q,$ and 
   \begin{align}
      \label{eq:bpHaLdpUoTJasZGKA}
      \frac{\delta_{\min}}{2} \leq \frac{1}{2}\norm{\bar{\mu}_{a^*_i} - \bar{\mu}_{a^*_j}} \leq \frac{1}{2}\norm{a - \bar{\mu}_{a^*_i}} + \frac{1}{2}\norm{a - \bar{\mu}_{a^*_j}} \leq \norm{a - \bar{\mu}_{a^*_j}}
   \end{align}
   for all $a \in [\mu^0_i, \bar{\mu}_{a^*_i}]$. Clearly,
   \begin{align*}
      &\Prob{\left\{a^*_i \neq a^*_j\right\} \cap \textnormal{dist}([\mu^0_i, \bar{\mu}_{a^*_i}], [\mu^0_j, \bar{\mu}_{a^*_j}]) < \Delta_3} \\
      &\leq\underbrace{\Prob{\left\{a^*_i \neq a^*_j\right\} \cap \exists b \in [\mu^0_j, \bar{\mu}_{a^*_j}] \textnormal{ such that } \textnormal{dist}([\mu^0_i, \bar{\mu}_{a^*_i}], b) < \Delta_3 \textnormal{ and } \norm{b - \bar{\mu}_{a^*_j}} < \frac{\delta_{\min}}{4}}}_{P_1} \\
      &+\quad \underbrace{\Prob{\left\{a^*_i \neq a^*_j\right\} \cap \exists b \in [\mu^0_j, \bar{\mu}_{a^*_j}] \textnormal{ such that } \textnormal{dist}([\mu^0_i, \bar{\mu}_{a^*_i}], b) < \Delta_3 \textnormal{ and } \norm{b - \bar{\mu}_{a^*_j}} \geq \frac{\delta_{\min}}{4}}}_{P_2}.
   \end{align*}
   Consider any point $b \in [\mu^0_j, \bar{\mu}_{a^*_j}].$ If $\norm{b - \bar{\mu}_{a^*_j}} < \frac{\delta_{\min}}{4},$ then 
   \begin{align*}
      \norm{a - b} 
      &= \norm{(a - \bar{\mu}_{a^*_j}) - (b - \bar{\mu}_{a^*_j})} \geq \norm{a - \bar{\mu}_{a^*_j}} - \norm{b - \bar{\mu}_{a^*_j}} \geq \frac{\delta_{\min}}{4} > \Delta_3
   \end{align*}
   for all $a \in [\mu^0_i, \bar{\mu}_{a^*_i}]$ due to \eqref{eq:bpHaLdpUoTJasZGKA}. Thus,
   \begin{align*}
      P_1 = 0.
   \end{align*}
   Otherwise, consider the case $\norm{b - \bar{\mu}_{a^*_j}} \geq \frac{\delta_{\min}}{4}.$ Let $\mathcal A = \{x \in \R^d\,:\,x = \bar{\mu}_{a^*_j} + \alpha (\mu^0_i - \bar{\mu}_{a^*_j}) + \beta (\bar{\mu}_{a^*_i} - \bar{\mu}_{a^*_j}), \alpha, \beta \in \R\}$ be the affine space.
   Notice that if $\norm{b - \bar{\mu}_{a^*_j}} \geq \frac{\delta_{\min}}{4}$ and $b \in [\mu^0_j, \bar{\mu}_{a^*_j}],$ then
   \begin{align*}
      b = \bar{\mu}_{a^*_j} + t (\mu^0_j - \bar{\mu}_{a^*_j})
   \end{align*}
   for some $t \geq \frac{\delta_{\min}}{4 \norm{\mu^0_j - \bar{\mu}_{a^*_j}}} \geq \frac{\delta_{\min}}{8 R^0}.$ We get
   \begin{align*}
      P_2 
      &\leq \Prob{\cup_{t \in [\nicefrac{\delta_{\min}}{8 R^0}, 1]}\textnormal{dist}(\mathcal A, \bar{\mu}_{a^*_j} + t (\mu^0_j - \bar{\mu}_{a^*_j})) < \Delta_3} \\
      &= \Prob{\cup_{t \in [\nicefrac{\delta_{\min}}{8 R^0}, 1]} \textnormal{dist}(\mathcal B_{a^*_j}, t (\mu^0_j - \bar{\mu}_{a^*_j})) < \Delta_3} \\
      &= \Prob{\cup_{t \in [\nicefrac{\delta_{\min}}{8 R^0}, 1]} \textnormal{dist}(\mathcal B_{a^*_j}, \mu^0_j - \bar{\mu}_{a^*_j}) < \frac{\Delta_3}{t}} \\
      &\leq \Prob{\textnormal{dist}(\mathcal B_{a^*_j}, \mu^0_j - \bar{\mu}_{a^*_j}) < \frac{8 \Delta_3 R^0}{\delta_{\min}}},
   \end{align*}
   where $\mathcal B_{a^*_j} \eqdef  \{x \in \R^d\,:\,x = \alpha (\mu^0_i - \bar{\mu}_{a^*_j}) + \beta (\bar{\mu}_{a^*_i} - \bar{\mu}_{a^*_j}), \alpha, \beta \in \R\}$ is the linear space. Since $a^*_j$ is random, we further bound
   \begin{align*}
      P_2 \leq \sum_{a \in [m]} \Prob{\textnormal{dist}(\mathcal B_{a}, \mu^0_j - \bar{\mu}_{a}) < \frac{8 \Delta_3 R^0}{\delta_{\min}}},
   \end{align*}
   where
   \[
      \mathcal B_a \eqdef  \{x \in \R^d\,:\,x = \alpha (\mu^0_i - \bar{\mu}_{a}) + \beta (\bar{\mu}_{a^*_i} - \bar{\mu}_{a}), \alpha, \beta \in \R\}.
   \]
   The space $\mathcal B_a$ has dimension at most $2$ and is independent of
   $\mu_j^0$. Let $\mathcal L_a$ be a linear subspace of dimension exactly $2$ such that
   $\mathcal B_a \subseteq \mathcal L_a$. Then, for all $r>0$,
   \begin{align*}
      \Prob{\textnormal{dist}(\mathcal B_a,\mu_j^0-\bar\mu_a)<r}
      \leq
      \Prob{\textnormal{dist}(\mathcal L_a,\mu_j^0-\bar\mu_a)<r}.
   \end{align*}
   Therefore, for $r > 0,$
   \begin{align*}
      \Prob{\textnormal{dist}(\mathcal B_{a}, \mu^0_j - \bar{\mu}_{a}) < r} \leq \Prob{\norm{\mP \mu^0_j - \mP \bar{\mu}_{a}} < r},
   \end{align*}
   where $\mP$ is the orthogonal projection onto $\mathcal L_a^\perp.$ Since $\mP = \mU \mL \mU^\top,$ where $\mU$ is an orthogonal matrix and $\mL$ is a diagonal matrix with diagonal entries $[1, \dots, 1, 0, 0] \in \R^d,$
   \begin{align*}
      \Prob{\textnormal{dist}(\mathcal B_{a}, \mu^0_j - \bar{\mu}_{a}) < r} \leq \Prob{\norm{\mL \mU^\top \mu^0_j - \mL \mU^\top\bar{\mu}_{a}} < r},
   \end{align*}
   It is well known that the first $d - 2$ coordinates of $\mL \mU^\top \mu^0_j$ are uniformly distributed in the ball of radius $R^0$ (e.g., \citep[formula~(6), with $k=s-2$]{khokhlov}). Thus,
   \begin{align*}
      \Prob{\textnormal{dist}(\mathcal B_{a}, \mu^0_j - \bar{\mu}_{a}) < r} \leq \frac{\textnormal{Vol}_{d-2}(B(0, R^0) \bigcap B(v, r))}{\textnormal{Vol}_{d-2}(B(0, R^0))} \leq \frac{\textnormal{Vol}_{d-2}(B(0, r))}{\textnormal{Vol}_{d-2}(B(0, R^0))} = \left(\frac{r}{R^0}\right)^{d - 2}.
   \end{align*}
   Here, $B(a, r) \eqdef \{x \in \R^{d-2}\,:\, \norm{x - a} \leq r\},$ $\textnormal{Vol}_{d-2}(\cdot)$ is the volume of a set, and $v$ is the vector of the first $d - 2$ coordinates of $\mL \mU^\top\bar{\mu}_{a}.$
   Applying this with $r = 8 \Delta_3 R^0 / \delta_{\min}$, we get
   \begin{align*}
      P_2 \leq m \left(\frac{8 \Delta_3}{\delta_{\min}}\right)^{d - 2}.
   \end{align*}
   Therefore,
   \begin{align*}
      \Prob{\Omega_3^c}
      \leq
      m n^2
      \left(\frac{8 \Delta_3}{\delta_{\min}}\right)^{d - 2}
      \leq \rho
   \end{align*}
   by the choice of $\Delta_3$. Hence,
   \[
      \Prob{\Omega_3} \geq 1-\rho .
   \]
\end{proof}

\begin{lemma}
\label{lemma:norm}
Let $\rho \in (0,1]$ and $\mu^0_{\ell} = R^0 \eta_{\ell} / \norm{\eta_{\ell}}$, where $R^0 > 0$ and
$\eta_{\ell} \sim \mathcal{N}(0,\mI_d)$ for all $\ell \in [n]$, independently.
Let
\[
\Omega_4^{\textnormal{full}} \eqdef
\left\{\norm{\begin{bmatrix}
      \left(\mu^0_{1}\right)^\top \\
      \vdots \\
      \left(\mu^0_{n}\right)^\top \\
   \end{bmatrix}}_{\textnormal{op}} \leq \Delta_4\right\}.
\]
Then choosing
\[
\Delta_4
= R^0 \sqrt{\frac{n}{d} + \max\left\{\frac{4}{3}\log\left(\frac{2 d}{\rho}\right), \sqrt{\frac{4 n}{d} \log\left(\frac{2 d}{\rho}\right)}\right\}}
\]
guarantees that
\[
\Prob{\Omega_4^{\textnormal{full}}} \geq 1-\rho .
\]
\end{lemma}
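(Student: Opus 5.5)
We will bound the operator norm through the Gram-type matrix $\mM \eqdef \sum_{\ell=1}^n \mu^0_\ell (\mu^0_\ell)^\top \in \R^{d \times d}$. Let $\mA$ be the $n \times d$ matrix in the statement, so that $\mA^\top \mA = \mM$ and $\norm{\mA}_{\textnormal{op}}^2 = \norm{\mM}_{\textnormal{op}}$. It is therefore enough to show that $\norm{\mM}_{\textnormal{op}} \leq \Delta_4^2$ with probability at least $1-\rho$. Write $\mM = \sum_{\ell} \mX_\ell$ with $\mX_\ell \eqdef \mu^0_\ell(\mu^0_\ell)^\top = (R^0)^2 u_\ell u_\ell^\top$, where $u_\ell = \eta_\ell/\norm{\eta_\ell}$ are i.i.d. uniform on $\mathbb S^{d-1}$. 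The summands are independent, symmetric, and positive semidefinite. Each satisfies $\norm{\mX_\ell}_{\textnormal{op}} = (R^0)^2$ almost surely, and by rotational invariance $\mathbb E[\mX_\ell] = \frac{(R^0)^2}{d}\mI$.

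We center the sum by setting $\mY_\ell \eqdef \mX_\ell - \frac{(R^0)^2}{d}\mI$. Then $\mathbb E \mY_\ell = 0$, and $\norm{\mY_\ell}_{\textnormal{op}} \leq (R^0)^2$ because both $\mX_\ell$ and $\frac{(R^0)^2}{d}\mI$ are PSD with norm at most $(R^0)^2$. For the variance term we use $(u u^\top)^2 = u u^\top$:
\begin{align*}
\mathbb E \mY_\ell^2 = (R^0)^4\left(\mathbb E[u_\ell u_\ell^\top] - \frac{1}{d^2}\mI\right) \preceq \frac{(R^0)^4}{d}\mI .
\end{align*}
This gives $\sigma^2 \eqdef \norm{\sum_\ell \mathbb E\mY_\ell^2}_{\textnormal{op}} \leq \frac{n (R^0)^4}{d}$.

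Next we apply the matrix Bernstein inequality (Tropp) with dimension $d$ and uniform bound $L = (R^0)^2$:
\begin{align*}
\Prob{\norm{\textstyle\sum_\ell \mY_\ell}_{\textnormal{op}} \geq t} \leq 2d \exp\left(\frac{-t^2/2}{\sigma^2 + L t/3}\right).
\end{align*}
Setting the right-hand side equal to $\rho$ and writing $\lambda \eqdef \log(2d/\rho)$, the condition becomes $t^2 \geq 2\lambda\sigma^2 + \frac{2}{3}\lambda L t$. This holds for
\begin{align*}
t = \max\left\{\tfrac{4}{3}\lambda L,\ 2\sqrt{\lambda}\,\sigma\right\} = (R^0)^2 \max\left\{\tfrac43 \lambda, \sqrt{\tfrac{4n}{d}\lambda}\right\}.
\end{align*}
To verify this, consider the two cases. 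If $t \geq \frac43\lambda L$, then $\frac23 \lambda L t \leq t^2/2$, so it suffices that $t^2/2 \geq 2\lambda\sigma^2$, which is exactly $t \geq 2\sqrt{\lambda}\,\sigma$. Taking the maximum of the two thresholds covers both requirements.

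Finally, by the triangle inequality, with probability at least $1-\rho$,
\begin{align*}
\norm{\mM}_{\textnormal{op}} \leq \norm{\mathbb E\mM}_{\textnormal{op}} + t = \frac{n(R^0)^2}{d} + t = \Delta_4^2 .
\end{align*}
Taking square roots gives $\norm{\mA}_{\textnormal{op}} \leq \Delta_4$.

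The main obstacle is the constant bookkeeping: the form of matrix Bernstein invoked must have exactly the $Lt/3$ term and the $2d$ prefactor for the constants $\frac43$ and $4$ in $\Delta_4$ to come out. Everything else follows from the exact second-moment identity for uniform vectors on the sphere.
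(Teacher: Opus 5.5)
Your proposal is correct and follows essentially the same route as the paper: reduce to $\norm{\sum_\ell \mu^0_\ell(\mu^0_\ell)^\top}_{\textnormal{op}}$, center by $\frac{(R^0)^2}{d}\mI$, and apply matrix Bernstein with uniform bound $(R^0)^2$, variance $\sigma^2 \leq n(R^0)^4/d$, and the same choice of $t$, then conclude with the triangle inequality. The only differences are cosmetic: the paper normalizes $R^0=1$, and it handles the exponent via the bound $\min\{t^2/(4\sigma^2),\,3t/4\}$ where you use a two-case check.
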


\begin{proof}
   Without loss of generality, we assume that $R^0 = 1.$ It is sufficient to rescale the final result. For all $v \in \R^d,$
   \begin{align*}
      \norm{\begin{bmatrix}
      \left(\mu^0_{1}\right)^\top \\
      \vdots \\
      \left(\mu^0_{n}\right)^\top \\
   \end{bmatrix} v}^2 =
   \norm{\begin{bmatrix}
      \left(\mu^0_{1}\right)^\top v \\
      \vdots \\
      \left(\mu^0_{n}\right)^\top v \\
   \end{bmatrix}}^2 = \sum_{i=1}^{n} \left(\left(\mu^0_{i}\right)^\top v\right)^2 = v^\top \left(\sum_{i=1}^{n} \mu^0_{i} (\mu^0_{i})^\top\right) v
   \end{align*}
   Let $\mA_i \eqdef \mu^0_{i} (\mu^0_{i})^\top$ for all $i \in [n].$ Thus,
   \begin{align}
      \label{eq:CvnhQLXebnWEBxrRQq}
      \norm{\begin{bmatrix}
      \left(\mu^0_{1}\right)^\top \\
      \vdots \\
      \left(\mu^0_{n}\right)^\top \\
   \end{bmatrix}}_{\textnormal{op}}^2 = \norm{\sum_{i=1}^{n} \mA_i}_{\textnormal{op}}
   \end{align}
   
   Let us also temporarily define $\xi = [\xi_1, \dots, \xi_d]^\top,$ where $\{\xi_i\}_{i = 1}^d$ are i.i.d. normal random variables, then, for all $i \neq j,$
      $\Exp{\frac{\xi_i \xi_j}{\norm{\xi}^2}} = c$
   for some $c \in \R.$ At the same time, $\Exp{\frac{-\xi_i \xi_j}{\norm{\xi}^2}} = c$ since $\xi_i$ and $-\xi_i$ have the same distribution. Thus, $c = 0.$ Moreover, $\Exp{\frac{\xi_i^2}{\norm{\xi}^2}} = \bar{c}$ for all $i \in [d]$ and some $\bar{c} \in \R.$ Therefore, $1 = \sum_{i=1}^{d} \Exp{\frac{\xi_i^2}{\norm{\xi}^2}} = d \bar{c}$ and $\bar{c} = \frac{1}{d}.$ We can conclude that
   \begin{align*}
      \Exp{\mA_i} = \frac{1}{d} \mI_d
   \end{align*}
   for all $i \in [n].$ 
   Next,
   \begin{align*}
      \norm{\mA_i - \Exp{\mA_i}}_{\textnormal{op}} = \norm{\mu^0_{i} (\mu^0_{i})^\top - \frac{1}{d} \mI_d}_{\textnormal{op}} \leq 1
   \end{align*}
   for all $i \in [n]$ since $v^\top (\mu^0_{i} (\mu^0_{i})^\top - \frac{1}{d} \mI_d) v = (v^\top \mu^0_{i})^2 - \frac{1}{d} \norm{v}^2 \leq \left(1 - \frac{1}{d}\right) \norm{v}^2 \leq \norm{v}^2$ and $v^\top (\mu^0_{i} (\mu^0_{i})^\top - \frac{1}{d} \mI_d) v \geq - \frac{1}{d} \norm{v}^2$ for all $v \in \R^d.$
   Finally, 
   \begin{align*}
      \Exp{(\mA_i - \Exp{\mA_i})^2} 
      &= \Exp{\left(\mu^0_{i} (\mu^0_{i})^\top - \frac{1}{d} \mI_d\right)^2} \\
      &= \Exp{\left(\mu^0_{i} (\mu^0_{i})^\top \mu^0_{i} (\mu^0_{i})^\top - \frac{2}{d} \mu^0_{i} (\mu^0_{i})^\top + \frac{1}{d^2} \mI_d\right)} \\
      &= \left(\Exp{\mu^0_{i} (\mu^0_{i})^\top} - \frac{2}{d^2} \mI_d + \frac{1}{d^2} \mI_d\right) = \frac{1}{d} \left(1 - \frac{1}{d}\right) \mI_d
   \end{align*}
   and 
   \begin{align*}
      \sigma^2 \eqdef \norm{\sum_{i=1}^{n} \Exp{(\mA_i - \Exp{\mA_i})^2}}_{\textnormal{op}} \leq \frac{n}{d}.
   \end{align*}
   Using \eqref{eq:CvnhQLXebnWEBxrRQq}, 
   \begin{align*}
      \norm{\begin{bmatrix}
      \left(\mu^0_{1}\right)^\top \\
      \vdots \\
      \left(\mu^0_{n}\right)^\top \\
   \end{bmatrix}}_{\textnormal{op}}^2 \leq \norm{\sum_{i=1}^{n} (\mA_i - \Exp{\mA_i})}_{\textnormal{op}} + \norm{\sum_{i=1}^{n} \Exp{\mA_i}}_{\textnormal{op}} = \norm{\sum_{i=1}^{n} (\mA_i - \Exp{\mA_i})}_{\textnormal{op}} + \frac{n}{d},
   \end{align*}
   where $\{\mA_i - \Exp{\mA_i}\}_{i=1}^n$ are i.i.d. symmetric random matrices with mean zero, $\norm{\mA_i - \Exp{\mA_i}}_{\textnormal{op}} \leq 1$ for all $i \in [n],$ and $\sigma^2 \leq \frac{n}{d}.$ We can use the matrix Bernstein’s inequality (Theorem 5.4.1 in \citep{vershynin}) to get
   \begin{align*}
      \Prob{\norm{\sum_{i=1}^{n} (\mA_i - \Exp{\mA_i})}_{\textnormal{op}} \geq t} \leq 2 d \exp\left(-\frac{t^2}{2 \sigma^2 + \frac{2 t}{3}}\right) \leq 2 d \exp\left(-\min\left\{\frac{t^2}{4 \sigma^2}, \frac{3 t}{4}\right\}\right).
   \end{align*}
   Thus, with probability at least $1 - \rho,$
   \begin{align*}
      \norm{\begin{bmatrix}
      \left(\mu^0_{1}\right)^\top \\
      \vdots \\
      \left(\mu^0_{n}\right)^\top \\
   \end{bmatrix}}_{\textnormal{op}}^2 \leq t + \frac{n}{d},
   \end{align*}
   for 
   \begin{align*}
      t = \max\left\{\frac{4}{3}\log\left(\frac{2 d}{\rho}\right), \sqrt{\frac{4 n}{d} \log\left(\frac{2 d}{\rho}\right)}\right\}.
   \end{align*}
\end{proof}

\begin{lemma}
\label{lemma:norm2}
Let $\rho \in (0,1]$ and $\mu^0_{\ell} = R^0 \eta_{\ell} / \norm{\eta_{\ell}}$, where $R^0 > 0$ and
$\eta_{\ell} \sim \mathcal{N}(0,\mI_d)$ for all $\ell \in [n]$, independently. Moreover, let $\{I_1, \dots, I_{k}\}$ be a (potentially random) partition of $[n]$ of (potentially random) size $k \in [n].$
Let
\[
\Omega_4 \eqdef
\left\{\forall a \in [k]: \norm{\begin{bmatrix}
      \left(\mu^0_{i_{a,1}}\right)^\top \\
      \vdots \\
      \left(\mu^0_{i_{a,n_a}}\right)^\top \\
   \end{bmatrix}}_{\textnormal{op}} \leq \Delta_4\right\}, \qquad I_a \equiv \{i_{a,1}, \dots, i_{a,n_a}\}.
\]
Then choosing
\[
\Delta_4
= R^0 \sqrt{\frac{n}{d} + \max\left\{\frac{4}{3}\log\left(\frac{2 d}{\rho}\right), \sqrt{\frac{4 n}{d} \log\left(\frac{2 d}{\rho}\right)}\right\}}
\]
guarantees that
\[
\Prob{\Omega_4} \geq 1-\rho .
\]
\end{lemma}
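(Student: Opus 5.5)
The plan is to reduce Lemma~\ref{lemma:norm2} to Lemma~\ref{lemma:norm}, using the fact that the operator norm of a row-submatrix is at most the operator norm of the full matrix. The constant $\Delta_4$ is literally the same as in Lemma~\ref{lemma:norm}, so no new probabilistic estimate is needed. In particular, the randomness of the partition $\{I_1,\dots,I_k\}$, which may depend on the $\mu^0_\ell$ (for instance through $a^*_\ell$), will not matter: the bound holds deterministically for every subset once the full-matrix event holds.

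Let $\mM \in \R^{n\times d}$ be the full matrix with rows $(\mu^0_1)^\top,\dots,(\mu^0_n)^\top$. For any subset $I_a=\{i_{a,1},\dots,i_{a,n_a}\}\subseteq[n]$, let $\mM_{I_a}$ be the corresponding $n_a\times d$ submatrix. For all $v\in\R^d$,
\[
\norm{\mM_{I_a} v}^2=\sum_{i\in I_a}\bigl((\mu^0_i)^\top v\bigr)^2\leq \sum_{i=1}^n\bigl((\mu^0_i)^\top v\bigr)^2=\norm{\mM v}^2 .
\]
Hence $\norm{\mM_{I_a}}_{\textnormal{op}}\leq\norm{\mM}_{\textnormal{op}}$. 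Equivalently, $\mM_{I_a}=\mS_a\mM$, where $\mS_a$ is a row-selection matrix with $\norm{\mS_a}_{\textnormal{op}}\leq 1$.

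It follows that $\Omega_4^{\textnormal{full}}\subseteq\Omega_4$ pointwise on the sample space. On $\Omega_4^{\textnormal{full}}$, every submatrix, for every partition and every $a\in[k]$, has operator norm at most $\Delta_4$. Lemma~\ref{lemma:norm} then gives $\Prob{\Omega_4}\geq\Prob{\Omega_4^{\textnormal{full}}}\geq 1-\rho$.

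The only point needing care is that the partition is random and possibly dependent on the initialization, which would break a naive union bound over parts. The pointwise inclusion above avoids this entirely, so I expect no real obstacle.
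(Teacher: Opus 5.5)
Your proposal is correct and follows the paper's proof: the paper also notes that on the full-matrix event of Lemma~\ref{lemma:norm}, every row-submatrix satisfies the same bound, since restricting to a subset of rows can only decrease $\sup_{\norm{v}=1}$ of the squared norm, and then applies Lemma~\ref{lemma:norm}. Your explicit remark that this pointwise inclusion handles a random partition that may depend on the initialization is a useful clarification that the paper leaves implicit.
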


\begin{proof}
   Clearly, if the event
   \begin{align*}
      \norm{\begin{bmatrix}
      \left(\mu^0_{1}\right)^\top \\
      \vdots \\
      \left(\mu^0_{n}\right)^\top \\
   \end{bmatrix}}_{\textnormal{op}} \leq \Delta_4
   \end{align*}
   holds, then event $\Omega_4$ holds since 
   \begin{align*}
      \sup_{\norm{v} = 1} \norm{\begin{bmatrix}
      \left(\mu^0_{1}\right)^\top \\
      \vdots \\
      \left(\mu^0_{n}\right)^\top \\
   \end{bmatrix} v}^2 \geq 
   \sup_{\norm{v} = 1} \norm{\begin{bmatrix}
      \left(\mu^0_{i_{a,1}}\right)^\top \\
      \vdots \\
      \left(\mu^0_{i_{a,n_a}}\right)^\top \\
   \end{bmatrix} v}^2
   \end{align*}
   for all $a \in [k].$ It remains to use Lemma~\ref{lemma:norm}.
\end{proof}

\begin{lemma}
\label{lemma:dist_clust}
Let $\rho \in (0,1].$ Let $\mu^0_{\ell} = R^0 \eta_{\ell} / \norm{\eta_{\ell}}$, where $R^0 > 0$ and
$\eta_{\ell} \sim \mathcal{N}(0,\mI_d)$ for all $\ell \in [n]$, independently. Moreover, $\{\bar\mu_a\}_{a \in [m]}$ are non-equal deterministic vectors.
Let
\begin{align}
\label{eq:IMEDMwOCneYdqoINSF}
\Omega_5 \eqdef
\left\{
\max_{a\in[m]}
\left|\frac{n_a}{n}-p_a\right|
\leq \Delta_5
\right\},
\end{align}
where $I_a \eqdef \left\{i \in [n]\,:\, a^*_i = a\right\},$ $a^*_i = \arg\min_{a \in [m]} \norm{\mu^0_i - \bar \mu_a},$ $n_{a} \eqdef |I_{a}|,$ and $p_a \eqdef \Prob{a_1^*=a}$ for all $a \in [m].$
Then choosing
\[
\Delta_5
=
\sqrt{\frac{1}{2n}\log\frac{2m}{\rho}}
\]
guarantees that
\[
\Prob{\Omega_5}\geq 1-\rho.
\]
\end{lemma}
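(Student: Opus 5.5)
Since the $\mu^0_i$ are i.i.d. and $a^*_i$ is a deterministic (measurable) function of $\mu^0_i$ with a fixed tie-breaking rule, the labels $a^*_1,\dots,a^*_n$ are i.i.d. random variables on $[m]$. Their common law is $\Prob{a^*_1=a}=p_a$. For each fixed $a\in[m]$ this gives
\[
n_a=\sum_{i=1}^n \mathbf{1}\{a^*_i=a\},
\]
a sum of $n$ i.i.d. Bernoulli$(p_a)$ variables with mean $n p_a$. The plan is a per-coordinate concentration bound followed by a union bound over $a\in[m]$.

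\textbf{Step 1 (Hoeffding).} The summands take values in $[0,1]$, so Hoeffding's inequality gives, for every $t>0$,
\[
\Prob{\left|\frac{n_a}{n}-p_a\right|>t}\leq 2\exp(-2nt^2).
\]

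\textbf{Step 2 (union bound).} Summing over $a\in[m]$,
\[
\Prob{\Omega_5^c}\leq 2m\exp(-2nt^2).
\]
Setting $t=\Delta_5=\sqrt{\frac{1}{2n}\log\frac{2m}{\rho}}$ makes the right-hand side equal to exactly $\rho$, so $\Prob{\Omega_5}\geq 1-\rho$.

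The argument has no real obstacle. The only point needing care is independence: because $\{\bar\mu_a\}$ are deterministic and the tie-breaking rule is fixed in advance, the indicators $\mathbf{1}\{a^*_i=a\}$ are genuinely i.i.d. across $i$, which is what Hoeffding requires. With this convention, $p_a$ agrees with the definition in Theorem~\ref{thm:thm_tv} up to a measure-zero tie event, since $u$ is uniform on the sphere and ties occur on hyperplane intersections.
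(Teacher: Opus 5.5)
Your proposal is correct and matches the paper's proof step for step: the indicators $\mathbf{1}[a^*_i = a]$ are i.i.d.\ Bernoulli$(p_a)$, Hoeffding gives $2\exp(-2nt^2)$ for each label, a union bound over $a \in [m]$ gives $2m\exp(-2nt^2)$, and $t = \Delta_5$ makes this exactly $\rho$. One small quibble is your side remark that ties have probability zero: it needs $d \geq 2$, since for $d = 1$ the sphere is $\{\pm 1\}$ and a tie can occur with probability $1/2$. The lemma does not use this remark, because there $p_a$ is defined directly as $\mathbb{P}(a^*_1 = a)$.
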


\begin{proof}
   Clearly,
   \begin{align*}
      n_a = \sum_{i=1}^{n} \mathbf{1}[a^*_i = a].
   \end{align*}
   Since $\mu_1^0,\ldots,\mu_n^0$ are independent and identically distributed,
   the random variables $\mathbf{1}[a_i^*=a]$ are independent Bernoulli random
   variables with mean $p_a$. 
   By Hoeffding's inequality, for every $t>0$,
   \begin{align*}
      \Prob{
      \left|\frac{n_a}{n}-p_a\right| \geq t
      }
      \leq
      2\exp(-2nt^2).
   \end{align*}
   Taking the union bound over $a\in[m]$, we obtain
   \begin{align*}
      \Prob{
      \max_{a\in[m]}
      \left|\frac{n_a}{n}-p_a\right|
      \geq t
      }
      &\leq
      \sum_{a=1}^m
      \Prob{
      \left|\frac{n_a}{n}-p_a\right| \geq t
      } \leq
      2m\exp(-2nt^2).
   \end{align*}
   Choosing
   \[
      t=\Delta_5
      \eqdef
      \sqrt{\frac{1}{2n}\log\frac{2m}{\rho}}
   \]
   gives
   \[
      2m\exp(-2n\Delta_5^2)=\rho.
   \]
   Hence,
   \[
      \Prob{
      \max_{a\in[m]}
      \left|\frac{n_a}{n}-p_a\right|
      \leq \Delta_5
      }
      \geq 1-\rho.
   \]
\end{proof}
\begin{lemma}
   \label{lemma:dist_1}
   For all $a, b \in \R^d,$
   \begin{align*}
      \mathbb E_{x \sim \mathcal N(a,\mI)} \left[\min\left\{\exp\left(\frac12\|x-a\|^2-\frac12\|x-b\|^2\right), 1\right\}\right] \leq 2 \exp\left(-\frac{1}{32}\norm{b - a}^2\right).
   \end{align*}
\end{lemma}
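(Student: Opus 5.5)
Write $x = a + z$ with $z\sim\mathcal N(0,\mI)$ and $v \eqdef b-a$. The exponent becomes
\[
\tfrac12\|z\|^2-\tfrac12\|z-v\|^2 = \inp{z}{v}-\tfrac12\|v\|^2 .
\]
The integrand is therefore a function of the one-dimensional Gaussian $g \eqdef \inp{z}{v}/\|v\|\sim\mathcal N(0,1)$; the case $v=0$ is trivial since the bound is $2\ge 1$. With $s\eqdef\|v\|$, the quantity to bound is
\[
\mathbb E\bigl[\min\{\exp(sg - s^2/2),1\}\bigr].
\]

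I split on the event $\{g \le s/4\}$ and its complement.

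\textbf{On $\{g\le s/4\}$.} Here $sg-s^2/2\le -s^2/4$, so the integrand is at most $\exp(-s^2/4)\le \exp(-s^2/32)$.

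\textbf{On $\{g> s/4\}$.} Bound the integrand by $1$. The contribution is then $\Prob{g>s/4}\le \exp(-s^2/32)$ by the standard Gaussian tail bound $\Prob{g>t}\le e^{-t^2/2}$, applied with $t=s/4$.

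Adding the two contributions gives at most $2\exp(-s^2/32)$, which is the claim.

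There is no real obstacle. The only point needing care is checking that the split threshold $s/4$ makes both exponents at least $s^2/32$. An exact computation is also available: the expectation equals $2\Phi(-s/2)$, which would give the sharper rate $e^{-s^2/8}$, but the crude split already suffices for the stated bound.
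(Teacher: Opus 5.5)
Your proof is correct and is essentially the paper's own argument: translate to a standard Gaussian, rewrite the exponent as $\langle \xi, b-a\rangle - \frac12\|b-a\|^2$, split at the threshold $\|b-a\|^2/4$, and bound the two pieces by $e^{-\|b-a\|^2/4}$ and by the Gaussian tail $e^{-\|b-a\|^2/32}$. Your side remark that the expectation equals exactly $2\Phi(-\|b-a\|/2)$, which gives the sharper rate $e^{-\|b-a\|^2/8}$, is also correct but is not needed for the stated bound.
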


\begin{proof}
   If $a = b,$ then the bound is true. Assume that $a \neq b.$ Then
   \begin{align*}
      &\mathbb E_{x \sim \mathcal N(a,\mI)} \left[\min\left\{\exp\left(\frac12\|x-a\|^2-\frac12\|x-b\|^2\right), 1\right\}\right] \\
      &=\mathbb E_{\xi \sim \mathcal N(0,\mI)} \left[\min\left\{\exp\left(\frac12\|\xi\|^2-\frac12\|\xi-(b - a)\|^2\right), 1\right\}\right] \\
      &=\mathbb E_{\xi \sim \mathcal N(0,\mI)} \left[\min\left\{\exp\left(\inp{\xi}{b - a}-\frac12\|b - a\|^2\right), 1\right\}\right].
   \end{align*}
   Note that $\inp{\xi}{\frac{b - a}{\norm{b - a}}} \sim \mathcal{N}(0, 1).$ Thus
   \begin{align*}
      \Prob{\inp{\xi}{b - a} \geq c} = \Prob{\inp{\xi}{\frac{b - a}{\norm{b - a}}} \geq \frac{c}{\norm{b - a}}} \leq \exp\left(-\frac{c^2}{2 \norm{b - a}^2}\right)
   \end{align*}
   for all $c \geq 0$ and 
   \begin{align*}
      \Prob{\inp{\xi}{b - a} \geq \frac{\norm{b - a}^2}{4}} \leq \exp\left(-\frac{\norm{b - a}^2}{32}\right)
   \end{align*}
   for $c = \frac{\norm{b - a}^2}{4}.$
   Using this bound, we get
   \begin{align*}
      &\mathbb E_{x \sim \mathcal N(a,\mI)} \left[\min\left\{\exp\left(\frac12\|x-a\|^2-\frac12\|x-b\|^2\right), 1\right\}\right] \\
      &\leq\mathbb E_{\xi \sim \mathcal N(0,\mI)} \left[\min\left\{\exp\left(\inp{\xi}{b - a}-\frac12\|b - a\|^2\right), 1\right\} \mathbf{1}\left[\inp{\xi}{b - a} < \frac{\norm{b - a}^2}{4}\right]\right] \\
      &\quad + \mathbb E_{\xi \sim \mathcal N(0,\mI)} \left[\min\left\{\exp\left(\inp{\xi}{b - a}-\frac12\|b - a\|^2\right), 1\right\} \mathbf{1}\left[\inp{\xi}{b - a} \geq \frac{\norm{b - a}^2}{4}\right]\right] \\ 
      &\leq \exp\left(-\frac{1}{4}\|b - a\|^2\right) + \exp\left(-\frac{\norm{b - a}^2}{32}\right) \leq 2 \exp\left(-\frac{1}{32}\norm{b - a}^2\right).
   \end{align*}
\end{proof}

\begin{lemma}
   \label{lemma:dist_2}
   For all $a, b, c \in \R^d$ such that $\norm{b - a} < \norm{c - a},$ we have
   \begin{align*}
      &\mathbb E_{x \sim \mathcal N(a,\mI)} \left[\min\left\{\exp\left(\frac12\|x-b\|^2-\frac12\|x-c\|^2\right), 1\right\}\right] \\
      &\leq \exp\left(-\frac{1}{2}\left(\frac12\|c - a\|^2 - \frac12\|b - a\|^2\right)\right) + 2 \exp\left(- \frac{\left(\frac12\|c - a\|^2 - \frac12\|b - a\|^2\right)^2}{2 \norm{c - b}^2}\right).
   \end{align*}
\end{lemma}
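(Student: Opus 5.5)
Let $\Delta \eqdef \frac12\|c-a\|^2-\frac12\|b-a\|^2>0$, which is positive by the assumption $\norm{b-a}<\norm{c-a}$. We follow the template of Lemma~\ref{lemma:dist_1}. First substitute $x=a+\xi$ with $\xi\sim\mathcal N(0,\mI)$ and expand the exponent:
\begin{align*}
\tfrac12\|\xi-(b-a)\|^2-\tfrac12\|\xi-(c-a)\|^2
= \inp{\xi}{c-b} - \Delta .
\end{align*}
Indeed, the quadratic terms $\tfrac12\|\xi\|^2$ cancel, and the cross terms give $-\inp{\xi}{b-a}+\inp{\xi}{c-a}=\inp{\xi}{c-b}$. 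The constant terms give $\tfrac12\|b-a\|^2-\tfrac12\|c-a\|^2=-\Delta$. The quantity to bound is therefore $\mathbb E\bigl[\min\{\exp(\inp{\xi}{c-b}-\Delta),1\}\bigr]$.

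Next, split on the event $E\eqdef\{\inp{\xi}{c-b}<\Delta/2\}$. If $c=b$, then $\Delta=0$, which contradicts the assumption, so $c\neq b$.
\begin{itemize}
\item On $E$, the integrand is at most $\exp(\Delta/2-\Delta)=\exp(-\Delta/2)$. This gives the first term of the claimed bound.
\item On $E^c$, the integrand is at most $1$, so this part is bounded by $\Prob{\inp{\xi}{c-b}\ge \Delta/2}$.
\end{itemize}

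Since $\inp{\xi}{c-b}/\norm{c-b}\sim\mathcal N(0,1)$, the standard Gaussian tail bound $\Prob{Z\ge t}\le\exp(-t^2/2)$ with $t=\Delta/(2\norm{c-b})$ gives
\begin{align*}
\Prob{\inp{\xi}{c-b}\ge \Delta/2}\le \exp\left(-\frac{\Delta^2}{8\norm{c-b}^2}\right).
\end{align*}

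This is where the main obstacle sits: the stated second term is $2\exp(-\Delta^2/(2\norm{c-b}^2))$, whose exponent is four times sharper than what the split at $\Delta/2$ delivers. A threshold of $\Delta/2$ cannot produce the stated constant, so I would move the threshold. Splitting at $\Delta$ instead, the tail part becomes $\Prob{\inp{\xi}{c-b}\ge\Delta}\le\exp(-\Delta^2/(2\norm{c-b}^2))$, which fits within the factor-$2$ term. However, the part on $\{\inp{\xi}{c-b}<\Delta\}$ then needs a finer estimate, because the crude pointwise bound only gives $1$ there.

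For that part I would compute exactly via the Gaussian identity. Writing $s=\norm{c-b}$ and $Z\sim\mathcal N(0,1)$,
\begin{align*}
\mathbb E\bigl[e^{sZ-\Delta}\mathbf 1[sZ<\Delta]\bigr]
= e^{s^2/2-\Delta}\,\Phi\!\left(\frac{\Delta}{s}-s\right).
\end{align*}
I would then bound this by $e^{-\Delta/2}+\exp(-\Delta^2/(2s^2))$ using a two-case argument:
\begin{itemize}
\item If $s^2\le \Delta$, use $\Phi\le 1$ together with $s^2/2-\Delta\le-\Delta/2$, which gives the first term.
\item If $s^2>\Delta$, use the Mills-type bound $e^{s^2/2}\,\Phi(\Delta/s-s)\le e^{s^2/2}e^{-(s-\Delta/s)^2/2}=e^{\Delta-\Delta^2/(2s^2)}$. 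After multiplying by $e^{-\Delta}$ this gives exactly $\exp(-\Delta^2/(2s^2))$.
\end{itemize}

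Adding the two parts yields at most $e^{-\Delta/2}+2\exp(-\Delta^2/(2\norm{c-b}^2))$, which is precisely the claim.
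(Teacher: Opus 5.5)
Your proof is correct and, once you discard the $\Delta/2$ split, it follows the paper's argument essentially step for step: the same reduction to $\mathbb E[\min\{e^{sZ-\Delta},1\}]$, the same split at $sZ=\Delta$ with the exact Gaussian identity $e^{s^2/2-\Delta}\Phi(\Delta/s-s)$, and the same two cases, $s^2\le\Delta$ versus $s^2>\Delta$. The exploratory $\Delta/2$ paragraph can simply be dropped from a final write-up.
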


\begin{proof}
   If $b = c,$ then the bound is true. Assume that $b \neq c.$ Then
   \begin{align*}
      &\mathbb E_{x \sim \mathcal N(a,\mI)} \left[\min\left\{\exp\left(\frac12\|x-b\|^2-\frac12\|x-c\|^2\right), 1\right\}\right] \\
      &=\mathbb E_{\xi \sim \mathcal N(0,\mI)} \left[\min\left\{\exp\left(\frac12\|\xi - (b - a)\|^2-\frac12\|\xi-(c - a)\|^2\right), 1\right\}\right] \\
      &=\mathbb E_{\xi \sim \mathcal N(0,\mI)} \left[\min\left\{\exp\left(\inp{\xi}{c - b} - \left(\frac12\|c - a\|^2 - \frac12\|b - a\|^2\right)\right), 1\right\}\right].
   \end{align*}
   Note that $Z \eqdef \inp{\xi}{\frac{c - b}{\norm{c - b}}} \sim \mathcal{N}(0, 1).$ We also define $\delta \eqdef \norm{c - b}$ and $\kappa \eqdef \frac12\|c - a\|^2 - \frac12\|b - a\|^2 \geq 0.$ Then,
   \begin{align*}
      \mathbb E_{x \sim \mathcal N(a,\mI)} \left[\min\left\{\exp\left(\frac12\|x-b\|^2-\frac12\|x-c\|^2\right), 1\right\}\right] = \mathbb E_{Z \sim \mathcal N(0,1)} \left[\min\left\{\exp\left(\delta Z - \kappa\right), 1\right\}\right].
   \end{align*}
   Next,
\begin{align*}
&\mathbb E_{Z \sim \mathcal N(0,1)}
\left[\min\left\{\exp\left(\delta Z - \kappa\right), 1\right\}\right] \\
&=
\mathbb E_{Z \sim \mathcal N(0,1)}
\left[\exp\left(\delta Z - \kappa\right) \mathbf{1}\left[\delta Z \leq \kappa\right]\right]
+
\Prob{\delta Z \geq \kappa} \\
&=
e^{-\kappa}
\mathbb E_{Z \sim \mathcal N(0,1)}
\left[
e^{\delta Z}
\mathbf{1}\left[Z \leq \frac{\kappa}{\delta}\right]
\right]
+
\Prob{Z \geq \frac{\kappa}{\delta}} \\
&=
e^{-\kappa+\delta^2/2}
\Prob{Z \leq \frac{\kappa}{\delta}-\delta}
+
\Prob{Z \geq \frac{\kappa}{\delta}},
\end{align*}
since 
\begin{align*}
&e^{-\kappa}
\mathbb E_{Z \sim \mathcal N(0,1)}
\left[
e^{\delta Z}
\mathbf{1}\left[Z \leq \frac{\kappa}{\delta}\right]
\right]
=
e^{-\kappa}
\frac{1}{\sqrt{2\pi}}
\int_{-\infty}^{\kappa/\delta}
e^{\delta z} e^{-z^2/2}\,dz
=
e^{-\kappa}
\frac{1}{\sqrt{2\pi}}
\int_{-\infty}^{\kappa/\delta}
\exp\left(-\frac12(z-\delta)^2+\frac{\delta^2}{2}\right)\,dz
 \\
&=
e^{-\kappa+\delta^2/2}
\frac{1}{\sqrt{2\pi}}
\int_{-\infty}^{\kappa/\delta-\delta}
e^{-u^2/2}\,du
=
e^{-\kappa+\delta^2/2} \Prob{Z \leq \frac{\kappa}{\delta}-\delta}.
\end{align*}
If $\kappa \geq \delta^2,$ then the Gaussian tail bound gives
\begin{align*}
&\mathbb E_{Z \sim \mathcal N(0,1)}
\left[\min\left\{\exp\left(\delta Z - \kappa\right), 1\right\}\right] \\
&= e^{-\kappa+\delta^2/2} \Prob{Z \leq \frac{\kappa}{\delta}-\delta} + \Prob{Z \geq \frac{\kappa}{\delta}} \leq e^{-\kappa/2} + e^{- \frac{\kappa^2}{2 \delta^2}}.
\end{align*}
Otherwise,
\begin{align*}
   \mathbb E_{Z \sim \mathcal N(0,1)}
\left[\min\left\{\exp\left(\delta Z - \kappa\right), 1\right\}\right] \leq e^{-\kappa+\delta^2/2 - \frac{1}{2}(\frac{\kappa}{\delta}-\delta)^2} + e^{- \frac{\kappa^2}{2 \delta^2}} = 2 e^{- \frac{\kappa^2}{2 \delta^2}}.
\end{align*}

Therefore,
\begin{align*}
\mathbb E_{Z \sim \mathcal N(0,1)}
\left[\min\left\{\exp\left(\delta Z - \kappa\right), 1\right\}\right]
&\leq e^{-\kappa/2} + 2 e^{- \frac{\kappa^2}{2 \delta^2}}.
\end{align*}
\end{proof}

\begin{lemma}
   \label{lemma:seq}
   Let $\alpha > 0$ and let $\{z_k\}_{k=0}^{\bar{k}+1}$ be a non-negative sequence such that
   \begin{align*}
      z_{k+1} \leq z_{k} - \alpha z^2_{k}
   \end{align*}
   for all $k \in \{0, \dots, \bar{k}\}.$ Then, for all $k \in \{1, \dots, \bar{k}+1\},$
   \begin{align*}
      z_k \leq \frac{1}{\alpha k}.
   \end{align*}
\end{lemma}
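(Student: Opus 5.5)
The natural approach is to invert the sequence: pass from $z_k$ to $1/z_k$ and show that each step increases this reciprocal by at least $\alpha$. Summing the increments then gives the claim.

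\textbf{Step 1: dispose of the degenerate cases.} Since $z_{k+1} \geq 0$, the hypothesis gives $\alpha z_k^2 \leq z_k$, so $z_k \leq 1/\alpha$ whenever $z_k > 0$ (and trivially when $z_k = 0$), for every $k \leq \bar{k}$. If $z_j = 0$ for some $j \leq \bar k$, then the recursion gives $0 \leq z_{j+1} \leq 0$, so all later terms vanish and satisfy the bound. It therefore suffices to handle an index range on which every $z_k$ is positive, plus the first index at which the sequence becomes zero.

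\textbf{Step 2: the reciprocal increment.} For $k \leq \bar k$ with $z_k > 0$ and $z_{k+1} > 0$, dividing $z_{k+1} \leq z_k - \alpha z_k^2$ by $z_k z_{k+1}$ gives
\[
\frac{1}{z_k} \leq \frac{1}{z_{k+1}} - \alpha \frac{z_k}{z_{k+1}}.
\]
The recursion also shows $z_{k+1} \leq z_k$, so $z_k / z_{k+1} \geq 1$. Hence
\[
\frac{1}{z_{k+1}} \geq \frac{1}{z_k} + \alpha .
\]

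\textbf{Step 3: telescope.} For any $k \geq 1$ such that $z_1, \dots, z_k$ are all positive, summing Step 2 from $0$ to $k-1$ gives
\[
\frac{1}{z_k} \geq \frac{1}{z_0} + \alpha k \geq \alpha k,
\]
and therefore $z_k \leq \frac{1}{\alpha k}$. If instead some $z_j$ with $j \leq k$ is zero, then $z_k = 0$ by the monotonicity in Step 1, and the bound holds trivially.

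The only care needed is with these zero and degenerate cases. Once they are handled, the argument is a standard telescoping bound on the reciprocal.
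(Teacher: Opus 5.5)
Your proposal is correct and follows essentially the same route as the paper. Both arguments use monotonicity to dispose of zero terms, divide by $z_k z_{k+1}$ and use $z_{k+1} \leq z_k$ to get $1/z_{k+1} - 1/z_k \geq \alpha$, and then telescope.
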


\begin{proof}
   Since
   \begin{align*}
      z_{k+1} \leq z_k - \alpha z_k^2,
   \end{align*}
   and $z_{k+1} \geq 0,$ we have
   \begin{align*}
      z_k - z_{k+1} \geq \alpha z_k^2.
   \end{align*}
   In particular, $z_{k+1} \leq z_k.$ Therefore, if $z_k = 0$ for some $k,$ then $z_{k+1}=0,$ and the claim is trivial for all subsequent iterates. Let $\hat{k} + 1$ be the first index such that $z_{\hat{k} + 1} = 0.$ If it does not exist, then we take $\hat{k} = \infty.$ If $\hat{k} + 1 = 0,$ then the claim is true. We now assume that $\hat{k} + 1 \geq 1.$ For all $k \in \{0, \dots, \min\{\hat{k} - 1, \bar{k}\}\},$
   dividing by $z_k z_{k+1},$ we get
   \begin{align*}
      \frac{1}{z_{k+1}} - \frac{1}{z_k}
      =
      \frac{z_k - z_{k+1}}{z_k z_{k+1}}
      \geq
      \alpha \frac{z_k}{z_{k+1}}
      \geq
      \alpha,
   \end{align*}
   where we used $z_{k+1} \leq z_k.$ Summing this inequality, we obtain
   \begin{align*}
      \frac{1}{z_{k+1}}
      \geq
      \frac{1}{z_0}
      +
      \alpha (k + 1)
   \end{align*}
   and
   \begin{align*}
      z_{k+1} \leq \frac{1}{\alpha (k + 1)}
   \end{align*}
   for all $k \in \{0, \dots, \min\{\hat{k} - 1, \bar{k}\}\}.$
\end{proof}

\begin{lemma}
   \label{lemma:recursion}
   Let $\alpha > 0,$ $\beta \in (0,1),$ and let
   $\{z_k\}_{k=0}^{\bar{k}+1}$ be a non-negative sequence such that
   \begin{align*}
      z_{k+1} \leq z_{k} - \min\{\alpha z^2_{k}, \beta z_{k}\}
   \end{align*}
   for all $k \in \{0, \dots, \bar{k}\}.$ Then, for all
   $k \in \{1, \dots, \bar{k}+1\},$
   \begin{align*}
      z_k
      \leq
      \max\left\{
         (1-\beta)^{k/2} z_0,
         \frac{2}{\alpha k}
      \right\}.
   \end{align*}
\end{lemma}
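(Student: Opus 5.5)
The plan is to split the iterations according to which term attains the minimum in the recursion, handling the two regimes separately. First, the sequence is non-increasing: since $z_k \geq 0$ and both $\alpha z_k^2$ and $\beta z_k$ are non-negative, we have $z_{k+1} \leq z_k$. Call an index $j$ \emph{linear} if $\beta z_j \leq \alpha z_j^2$, so that $z_{j+1} \leq (1-\beta) z_j$. Otherwise call it \emph{quadratic}, so that $z_{j+1} \leq z_j - \alpha z_j^2$. Because $z_j$ is non-increasing, the linear condition $z_j \geq \beta/\alpha$ (when $z_j>0$) holds on an initial block of indices $\{0,\dots,k_0-1\}$, and every later index is quadratic. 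This monotone structure is the key observation: the quadratic phase starts at $k_0$ and never ends.

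Now fix $k \in \{1,\dots,\bar k+1\}$ and consider two cases.

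\textbf{Case $k \leq 2k_0$ (roughly).} At least $\lceil k/2 \rceil$ of the indices $0,\dots,k-1$ are linear, or all $k$ are if $k \leq k_0$. Each linear step contracts by $(1-\beta)$ and the remaining steps do not increase $z$, so $z_k \leq (1-\beta)^{\min\{k,k_0\}} z_0 \leq (1-\beta)^{k/2} z_0$.

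\textbf{Case $k_0 < k/2$.} More than $k/2$ of the steps $k_0,\dots,k-1$ are quadratic. Apply Lemma~\ref{lemma:seq} to the shifted sequence $w_t = z_{k_0+t}$, which satisfies $w_{t+1} \leq w_t - \alpha w_t^2$ for $t$ up to $k-1-k_0$. This gives $z_k = w_{k-k_0} \leq 1/(\alpha(k-k_0)) \leq 2/(\alpha k)$.

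Combining the two cases yields $z_k \leq \max\{(1-\beta)^{k/2}z_0,\, 2/(\alpha k)\}$.

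The only delicate points are bookkeeping, not real obstacles. The first is the zero case: if $z_j = 0$, then $j$ is quadratic, both bounds hold trivially afterward, and Lemma~\ref{lemma:seq} already covers it. The second is correctly identifying $k_0$ as the first quadratic index, which may be $0$, or $\bar k+1$ if all steps are linear. With that definition, the split $k_0 \geq k/2$ versus $k_0 < k/2$ yields exactly the constants $1/2$ in the exponent and $2$ in the sublinear term.
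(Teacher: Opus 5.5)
Your proof is correct and follows the paper's argument: split at the first index where $z_j$ drops below $\beta/\alpha$, use the $(1-\beta)$ contraction before that index, and after it use the $1/z$ telescoping, which you take from Lemma~\ref{lemma:seq} whereas the paper redoes it inline; the case split at $k/2$ is also the paper's. One small cleanup: define ``linear'' as $z_j \geq \beta/\alpha$ rather than $\beta z_j \leq \alpha z_j^2$, because the latter holds trivially at $z_j = 0$ and contradicts your convention that zero indices are quadratic, which you need for the initial-block structure.
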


\begin{proof}
   Since
   \begin{align*}
      z_{k+1} \leq z_k - \min\{\alpha z_k^2,\beta z_k\},
   \end{align*}
   and $\min\{\alpha z_k^2,\beta z_k\} \geq 0,$ we have
   \begin{align*}
      z_{k+1} \leq z_k.
   \end{align*}
   Therefore, if $z_k = 0$ for some $k,$ then $z_{k+1}=0,$ and the claim is
   trivial for all subsequent iterates. Fix $k \in \{1,\dots,\bar{k}+1\}$ and define
   \begin{align*}
      \tau \eqdef \frac{\beta}{\alpha}.
   \end{align*}
   We distinguish two cases. First, suppose that $z_j \geq \tau$ for all
   $j \in \{0,\dots,k-1\}.$ Then
      $\alpha z_j^2 \geq \beta z_j,$
   and hence
      $\min\{\alpha z_j^2,\beta z_j\} = \beta z_j.$
   Therefore,
   \begin{align*}
      z_{j+1}
      \leq
      z_j - \beta z_j
      =
      (1-\beta) z_j.
   \end{align*}
   Unrolling the recursion, we get
   \begin{align*}
      z_k
      \leq
      (1-\beta)^k z_0 \leq (1-\beta)^{k/2} z_0.
   \end{align*}

   We now consider the case where there exists
   $j \in \{0,\dots,k-1\}$ such that $z_j < \tau.$ Let $m$ be the first such
   index. Then, for all $j \in \{0,\dots,m-1\},$ we have $z_j \geq \tau,$
   and therefore, arguing as above,
   \begin{align*}
      z_m
      \leq
      (1-\beta)^m z_0.
   \end{align*}
   Since the sequence is non-increasing, for all $j \geq m,$ we have
   \begin{align*}
      z_j \leq z_m < \tau.
   \end{align*}
   Hence, for all $j \in \{m,\dots,k-1\},$
      $\min\{\alpha z_j^2,\beta z_j\} = \alpha z_j^2.$

   If $m > k/2,$ then
   \begin{align*}
      z_k
      \leq
      z_m
      \leq
      (1-\beta)^m z_0
      \leq
      (1-\beta)^{k/2} z_0.
   \end{align*}

   It remains to consider the case $m \leq k/2.$ If $z_j = 0$ for some
   $j \in \{m,\dots,k\},$ then the claim is trivial. Thus, without loss of
   generality, we assume that $z_j > 0$ for all $j \in \{m,\dots,k\}.$ For all
   $j \in \{m,\dots,k-1\},$ we have
   \begin{align*}
      z_{j+1}
      \leq
      z_j - \alpha z_j^2.
   \end{align*}
   Hence,
   \begin{align*}
      z_j - z_{j+1}
      \geq
      \alpha z_j^2.
   \end{align*}
   Dividing by $z_jz_{j+1},$ we get
   \begin{align*}
      \frac{1}{z_{j+1}} - \frac{1}{z_j}
      =
      \frac{z_j - z_{j+1}}{z_jz_{j+1}}
      \geq
      \alpha \frac{z_j}{z_{j+1}}
      \geq
      \alpha,
   \end{align*}
   where we used $z_{j+1} \leq z_j.$ Summing this inequality from $j=m$ to
   $j=k-1,$ we obtain
   \begin{align*}
      \frac{1}{z_k}
      \geq
      \frac{1}{z_m}
      +
      \alpha(k-m)
      \geq
      \alpha(k-m).
   \end{align*}
   Since $m \leq k/2,$ we have $k-m \geq k/2.$ Thus,
   \begin{align*}
      \frac{1}{z_k}
      \geq
      \frac{\alpha k}{2},
   \end{align*}
   and therefore
   \begin{align*}
      z_k
      \leq
      \frac{2}{\alpha k}.
   \end{align*}

   Combining the two cases, we obtain
   \begin{align*}
      z_k
      \leq
      \max\left\{
         (1-\beta)^{k/2} z_0,
         \frac{2}{\alpha k}
      \right\}.
   \end{align*}
\end{proof}

\begin{lemma}
\label{lemma:log}
Let $a>1$, $b>1$, and
\[
x \geq 4a\sqrt{\max\{1,\log(a^4 b)\}}.
\]
Then
\[
x \geq a\sqrt{\log(bx^4)}.
\]
\end{lemma}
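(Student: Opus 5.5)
Since $x>0$ is implicit in both sides (we may take $x\geq 4a>0$), squaring and rearranging turns the goal into $x^2 \geq a^2\log(b x^4)$. Set $L \eqdef \max\{1,\log(a^4 b)\}\geq 1$, and use the assumption as $x \geq 4a\sqrt{L}$. The idea is to exploit that $g(x) \eqdef x^2 - a^2\log(bx^4) = x^2 - a^2\log b - 4a^2\log x$ is increasing for $x \geq \sqrt{2}\,a$. Indeed, $g'(x) = 2x - 4a^2/x \geq 0$ iff $x^2\geq 2a^2$, and $4a\sqrt{L}\geq 4a \geq \sqrt2 a$ covers this range. It therefore suffices to check the inequality at the left endpoint $x_0 \eqdef 4a\sqrt{L}$, i.e., to show $g(x_0)\geq 0$.

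At $x_0$ the left side is $x_0^2 = 16a^2 L$. The right side is
\[
a^2\log(b x_0^4) = a^2\bigl(\log b + \log 256 + 4\log a + 2\log L\bigr) = a^2\bigl(\log(a^4 b) + \log 256 + 2\log L\bigr).
\]
So, after dividing by $a^2$, it suffices to show
\[
16L \geq \log(a^4b) + \log 256 + 2\log L .
\]
This follows term by term. First, $\log(a^4 b)\leq L$ by the definition of $L$. Second, $\log 256 \approx 5.55 \leq 6L$ since $L\geq 1$. Third, $2\log L \leq 2(L-1) \leq 2L$. Summing gives a right side of at most $9L \leq 16L$, which proves $g(x_0)\geq 0$. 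Combined with the monotonicity of $g$ on $[x_0,\infty)$, this yields $x^2\geq a^2\log(bx^4)$ for all $x\geq x_0$. Taking square roots gives the claim; when $\log(bx^4)<0$ the claim holds trivially, and in fact $bx^4>1$ here anyway.

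The only delicate point is the monotonicity step, specifically making sure the threshold $\sqrt2 a$ lies below $x_0$. This holds because $L\geq 1$, which is exactly why the $\max\{1,\cdot\}$ appears in the hypothesis. The constant $4$ leaves ample slack, so no step should be a real obstacle.
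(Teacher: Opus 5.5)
Your proof is correct and follows essentially the same line as the paper. Both set $L=\max\{1,\log(a^4b)\}$, split $\log(bx^4)$ into $\log(a^4b)\le L$ plus a term depending only on $x/a$, and absorb everything into $x^2/a^2\ge 16L$; the only difference is that the paper argues pointwise with $y=x^2/a^2\ge 16L\ge 16$, using $2\log y\le \frac{15}{16}y$ and $L\le y/16$, whereas you reduce to the endpoint $x_0=4a\sqrt{L}$ via monotonicity of $g$ and then check $x_0$ term by term.
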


\begin{proof}
Let
\[
L \eqdef \max\{1,\log(a^4b)\}.
\]
Since $x \geq 4a\sqrt{L}$, we have
\[
\frac{x^2}{a^2} \geq 16L.
\]
Denote $y \eqdef x^2/a^2$. Then $y \geq 16L$ and
\[
\log(bx^4)
=
\log(a^4 b y^2)
=
\log(a^4b)+2\log y
\leq
L+2\log y.
\]
Since $y \geq 16L \geq 16$, we have
\[
2\log y \leq \frac{15}{16}y.
\]
Moreover, $L \leq y/16$. Therefore,
\[
\log(bx^4)
\leq
L+2\log y
\leq
\frac{y}{16}+\frac{15y}{16}
=
y.
\]
Thus
\[
a\sqrt{\log(bx^4)}
\leq
a\sqrt{y}
=
x.
\]
\end{proof}

\begin{lemma}
\label{lemma:log-linear}
Let $a \geq 1$, $b \geq 1$, and
\[
x \geq 4\sqrt{a\max\{1,\log(ab)\}}.
\]
Then
\[
x \geq \sqrt{a\log(bx^3)}.
\]
\end{lemma}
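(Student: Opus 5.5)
We prove Lemma~\ref{lemma:log-linear} the same way as Lemma~\ref{lemma:log}: substitute $y$ for $x^2/a$ and reduce the claim to an elementary inequality between $\log y$ and a linear function of $y$. Let $L \eqdef \max\{1,\log(ab)\}$ and $y \eqdef x^2/a$. The hypothesis $x \geq 4\sqrt{aL}$ gives $y \geq 16L \geq 16$. The target inequality $x \geq \sqrt{a\log(bx^3)}$ is equivalent to $\log(bx^3) \leq x^2/a = y$, so it suffices to bound $\log(bx^3)$ by $y$.

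Write $x^3 = (ay)^{3/2}$. Then
\[
\log(bx^3) = \log b + \tfrac{3}{2}\log a + \tfrac{3}{2}\log y .
\]
We cannot combine $\log b + \tfrac32\log a$ directly into $\log(ab)$, because an extra $\tfrac12\log a$ remains. We therefore bound the pieces separately. Since $a,b\geq 1$, both $\log a$ and $\log b$ are nonnegative, so
\[
\log b + \tfrac32\log a \leq \tfrac32\log(ab) \leq \tfrac32 L .
\]
Hence
\[
\log(bx^3) \leq \tfrac32 L + \tfrac32 \log y .
\]
Now use $L \leq y/16$ to get $\tfrac32 L \leq \tfrac{3}{32}y$. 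It remains to check that $\tfrac32\log y \leq \tfrac{29}{32}y$ for all $y\geq 16$. The function $y - c\log y$ is increasing for $y \geq c$, so it is enough to verify the inequality at $y=16$, where $\tfrac32\log 16 \approx 4.16 \leq 14.5$. Adding the two bounds gives $\log(bx^3)\leq y$, which proves the claim.

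The only delicate step is the bookkeeping of the leftover $\log a$ term. Here it is absorbed by the factor $\tfrac32$ in front of $L$. Alternatively, since $y \geq 16L$ forces $y \geq 16\log a$, the extra $\tfrac12\log a$ can be absorbed directly. In either case the constant $16$ in the hypothesis leaves ample slack.
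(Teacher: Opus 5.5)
Your proof is correct and follows essentially the same route as the paper: substitute $y = x^2/a$, bound the $a$- and $b$-dependent terms by $\tfrac32 L \le \tfrac{3}{32}y$ (using $a,b \ge 1$), and control $\tfrac32\log y$ linearly for $y \ge 16$. The only cosmetic difference is in that last step: the paper uses $\log y \le y/2$, which gives a total of $\tfrac{27}{32}y \le y$, while you verify $\tfrac32\log y \le \tfrac{29}{32}y$ by a monotonicity check at $y = 16$.
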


\begin{proof}
Let
\[
L \eqdef \max\{1,\log(ab)\}.
\]
Since $x \geq 4\sqrt{aL}$, we have
\[
\frac{x^2}{a} \geq 16L.
\]
Denote $y \eqdef x^2/a$. Then $y \geq 16L$ and
\[
\log(bx^3)
=
\log\left(ba^{3/2}y^{3/2}\right)
=
\log(ab)+\frac{1}{2}\log a+\frac{3}{2}\log y.
\]
Since $b \geq 1$, we have $\log a \leq \log(ab) \leq L$, and therefore
\[
\log(bx^3)
\leq
\frac{3}{2}L+\frac{3}{2}\log y.
\]
Since $y \geq 16L \geq 16$, we have
\[
\log y \leq \frac{1}{2}y.
\]
Moreover, $L \leq y/16$. Therefore,
\[
\log(bx^3)
\leq
\frac{3y}{32}+\frac{3y}{4}
=
\frac{27y}{32}
\leq
y.
\]
Thus
\[
\sqrt{a\log(bx^3)}
\leq
\sqrt{ay}
=
x.
\]
\end{proof}

\newpage\section{Gradient of the Reverse Fisher Divergence for Gaussian Mixture Models}
\label{sec:gradient}
\PROPGRADIENT*

\begin{proof}
   The loss \eqref{eq:main} is differentiable. Thus, using standard differential calculus, it is sufficient to find a vector $a$ such that $d \mathcal L_{\mathrm{rev}}(\mu) = \inp{a}{d \mu_{\ell}}$, where $d \mathcal L_{\mathrm{rev}}(\mu)$ is the differential with respect to $\mu_\ell$. Direct calculations yield 
   \begin{align*}
      \mathcal L_{\mathrm{rev}}(\mu) = \mathbb E_{x\sim p_\mu}
\left[
\|m_\mu(x)-\bar m(x)\|^2
\right].
   \end{align*}
   Thus
   \begin{align*}
      d \mathcal L_{\mathrm{rev}}(\mu) = d \mathbb E_{x\sim p_\mu}
\left[
\|m_\mu(x)-\bar m(x)\|^2
\right] = d \left(\int_{x \in \R^d} \|m_\mu(x)-\bar m(x)\|^2 p_{\mu}(x) \, dx\right).
   \end{align*}
   Interchanging the order of integration and differentiation,
   \begin{align}
      \label{eq:FDsGncgfeDo}
      d \mathcal L_{\mathrm{rev}}(\mu) 
      &= \int_{x \in \R^d} \left(d \|m_\mu(x)-\bar m(x)\|^2\right) p_{\mu}(x) \, dx + \int_{x \in \R^d} \|m_\mu(x)-\bar m(x)\|^2  (d p_{\mu}(x)) \, dx.
   \end{align}
   We now consider the terms separately. Clearly,
   \begin{align}
      \label{eq:HHZhAaJ}
      & d \|m_\mu(x)-\bar m(x)\|^2 = 2 \inp{m_\mu(x)-\bar m(x)}{d (m_\mu(x)-\bar m(x))} = 2 \inp{m_\mu(x)-\bar m(x)}{d m_\mu(x)}
   \end{align}
   because $\bar m(x)$ does not depend on $\mu_{\ell}.$ Next,
   \begin{align*}
      &d m_\mu(x) = \sum_{i=1}^n (d r_i(x)) \mu_i + \sum_{i=1}^n r_i(x) d \mu_i = \sum_{i=1}^n (d r_i(x)) \mu_i + r_{\ell}(x) d \mu_\ell.
   \end{align*}
   Since 
   \begin{align*}
      &d r_{\ell}(x) = d \left(\frac{\exp\left(-\frac12\|x-\mu_{\ell}\|^2\right)}{\sum_{k=1}^n\exp\left(-\frac12\|x-\mu_k\|^2\right)}\right) = r_{\ell} (x) (1 - r_{\ell}(x)) \inp{x - \mu_{\ell}}{d \mu_{\ell}}.
   \end{align*}
   and 
   \begin{align*}
      &d r_{i}(x) = d \left(\frac{\exp\left(-\frac12\|x-\mu_{i}\|^2\right)}{\sum_{k=1}^n\exp\left(-\frac12\|x-\mu_k\|^2\right)}\right) = - r_i(x) r_{\ell}(x) \inp{x - \mu_{\ell}}{d \mu_{\ell}}
   \end{align*}
   for all $i \neq \ell,$ we get
   \begin{align*}
      d m_\mu(x) 
      &= - \sum_{i = 1}^{n} r_i(x) r_{\ell}(x) \inp{x - \mu_{\ell}}{d \mu_{\ell}} \mu_i + r_{\ell} (x) \inp{x - \mu_{\ell}}{d \mu_{\ell}} \mu_{\ell} + r_{\ell}(x) d \mu_\ell \\
      &= - r_{\ell}(x) \inp{x - \mu_{\ell}}{d \mu_{\ell}} m_\mu(x) + r_{\ell} (x) \inp{x - \mu_{\ell}}{d \mu_{\ell}} \mu_{\ell} + r_{\ell}(x) d \mu_\ell \\
      &= r_{\ell} (x) (\mu_{\ell} - m_\mu(x)) \inp{x - \mu_{\ell}}{d \mu_{\ell}} + r_{\ell}(x) d \mu_\ell
   \end{align*}
   due to the definition of $m_\mu(x).$ We now consider $d p_{\mu}(x):$
   \begin{align*}
      d p_{\mu}(x) = d \left(\frac1n\sum_{i=1}^n \varphi(x - \mu_i)\right) = \frac{1}{n} \varphi(x - \mu_{\ell}) \inp{x - \mu_{\ell}}{d \mu_{\ell}}.
   \end{align*}
   Substituting $d m_\mu(x)$ and $d p_{\mu}(x)$ into \eqref{eq:HHZhAaJ} and \eqref{eq:FDsGncgfeDo}, we get
   \begin{align*}
      d \mathcal L_{\mathrm{rev}}(\mu) 
      &= 2 \int_{x \in \R^d} \inp{m_\mu(x)-\bar m(x)}{r_{\ell} (x) (\mu_{\ell} - m_\mu(x)) \inp{x - \mu_{\ell}}{d \mu_{\ell}} + r_{\ell}(x) d \mu_\ell} p_{\mu}(x) \, dx \\
      &\quad + \frac{1}{n} \int_{x \in \R^d} \|m_\mu(x)-\bar m(x)\|^2 \varphi(x - \mu_{\ell}) \inp{x - \mu_{\ell}}{d \mu_{\ell}} \, dx.
   \end{align*}
   Since 
   $r_{\ell} (x) p_{\mu}(x) = \frac{
\exp\left(-\frac12\|x-\mu_{\ell}\|^2\right)
}{
\sum_{k=1}^n
\exp\left(-\frac12\|x-\mu_k\|^2\right)
} \frac1n\sum_{i=1}^n \varphi(x - \mu_i) = \frac{1}{n} \varphi(x - \mu_{\ell}),$
\begin{align*}
   d \mathcal L_{\mathrm{rev}}(\mu) 
   &= \frac{2}{n} \int_{x \in \R^d} \inp{m_\mu(x)-\bar m(x)}{(\mu_{\ell} - m_\mu(x)) \inp{x - \mu_{\ell}}{d \mu_{\ell}} + d \mu_\ell} \varphi(x - \mu_{\ell}) \, dx \\
   &\quad + \frac{1}{n} \int_{x \in \R^d} \|m_\mu(x)-\bar m(x)\|^2 \inp{x - \mu_{\ell}}{d \mu_{\ell}} \varphi(x - \mu_{\ell}) \, dx \\
   &= \frac{2}{n} \mathbb E_{x\sim \mathcal N(\mu_\ell,\mI)} \left[\inp{m_\mu(x)-\bar m(x)}{(\mu_{\ell} - m_\mu(x)) \inp{x - \mu_{\ell}}{d \mu_{\ell}} + d \mu_\ell} \right] \\
   &\quad + \frac{1}{n} \mathbb E_{x\sim \mathcal N(\mu_\ell,\mI)} \left[ \|m_\mu(x)-\bar m(x)\|^2 \inp{x - \mu_{\ell}}{d \mu_{\ell}} \right] \\
   &= \inp{\frac{2}{n} \mathbb E_{x\sim \mathcal N(\mu_\ell,\mI)} \left[m_\mu(x)-\bar m(x)\right]}{d \mu_\ell} \\
   &\quad + \inp{\frac{1}{n} \mathbb E_{x\sim \mathcal N(\mu_\ell,\mI)} \left[ \left(-\norm{m_\mu(x)}^2 + \norm{\bar m(x)}^2 + 2 \inp{m_\mu(x)-\bar m(x)}{\mu_{\ell}}\right)(x - \mu_{\ell})\right]}{d \mu_{\ell}}.
\end{align*}
Using Stein’s identity,
\begin{equation}
\label{eq:gReTdImfiWpFzuWodJ}
\begin{aligned}
   d \mathcal L_{\mathrm{rev}}(\mu) 
   &= \inp{\frac{2}{n} \mathbb E_{x\sim \mathcal N(\mu_\ell,\mI)} \left[m_\mu(x)-\bar m(x)\right]}{d \mu_\ell} \\
   &\quad + \inp{\frac{1}{n} \mathbb E_{x\sim \mathcal N(\mu_\ell,\mI)} \left[ \nabla_x \left(-\norm{m_\mu(x)}^2 + \norm{\bar m(x)}^2 + 2 \inp{m_\mu(x)-\bar m(x)}{\mu_{\ell}}\right)\right]}{d \mu_{\ell}}.
\end{aligned}
\end{equation}
Let $d_x$ be the differential with respect to $x.$ Then,
\begin{equation}
\label{eq:tDypQoOQHznCPpdSg}
\begin{aligned}
   &d_x \left(-\norm{m_\mu(x)}^2 + \norm{\bar m(x)}^2 + 2 \inp{m_\mu(x)-\bar m(x)}{\mu_{\ell}}\right) \\
   &= \left(-2 \inp{m_\mu(x)}{d_x m_\mu(x)} + 2 \inp{\bar m(x)}{d_x \bar m(x)} + 2 \inp{d_x m_\mu(x) - d_x \bar m(x)}{\mu_{\ell}}\right).
\end{aligned}
\end{equation}
Next,
\begin{align*}
   d_x m_\mu(x) 
   &= \sum_{i=1}^n d_x r_i(x) \mu_i = \sum_{i=1}^n \left(d_x \left(\frac{\exp\left(-\frac12\|x-\mu_i\|^2\right)}{\sum_{k=1}^n\exp\left(-\frac12\|x-\mu_k\|^2\right)}\right)\right) \mu_i \\
   &=\sum_{i=1}^n \left(-r_i(x) \inp{x - \mu_i}{d x} + r_i(x) \left(\sum_{k=1}^n r_k(x) \inp{x - \mu_k}{d x}\right)\right) \mu_i \\
   &=\sum_{i=1}^n \left(-r_i(x) \inp{x - \mu_i}{d x} + r_i(x) \inp{x - m_{\mu}(x)}{d x}\right) \mu_i \\
   &=\sum_{i=1}^n \left(r_i(x) \inp{\mu_i - m_{\mu}(x)}{d x}\right) \mu_i \\
   &= \sum_{i=1}^n r_i(x) (\mu_i - m_{\mu}(x)) (\mu_i - m_{\mu}(x))^\top d x = \mC_{\mu} (x) d x
\end{align*}
because $\sum_{i=1}^n r_i(x) m_{\mu}(x) (\mu_i - m_{\mu}(x)) = 0.$ Similarly,
\begin{align*}
   d_x \bar{m} (x) 
   &= \sum_{i=1}^m d_x \bar{r}_i(x) \bar{\mu}_i = \bar \mC (x) d x.
\end{align*}
Substituting into \eqref{eq:tDypQoOQHznCPpdSg},
\begin{align*}
   &d_x \left(-\norm{m_\mu(x)}^2 + \norm{\bar m(x)}^2 + 2 \inp{m_\mu(x)-\bar m(x)}{\mu_{\ell}}\right) \\
   &= -2 \inp{m_\mu(x)}{\mC_{\mu} (x) d x} + 2 \inp{\bar m(x)}{\bar \mC (x) d x} + 2 \inp{\mC_{\mu} (x) d x - \bar \mC (x) d x}{\mu_{\ell}} \\
   &= \inp{2 \mC_{\mu} (x) (\mu_{\ell} - m_\mu(x)) - 2 \bar \mC (x) (\mu_{\ell} - \bar m(x))}{d x}.
\end{align*}
We have found the gradient that we substitute into \eqref{eq:gReTdImfiWpFzuWodJ}:
\begin{align*}
   d \mathcal L_{\mathrm{rev}}(\mu) 
   &= \inp{\frac{2}{n} \mathbb E_{x\sim \mathcal N(\mu_\ell,\mI)} \left[m_\mu(x)-\bar m(x) + \mC_{\mu} (x) (\mu_{\ell} - m_\mu(x)) - \bar \mC (x) (\mu_{\ell} - \bar m(x))\right]}{d \mu_\ell},
\end{align*}
which yields the result.
\end{proof}

\newpage
\section{Missing Steps in the Proof of Theorem~\ref{thm:main_one}}
\label{sec:missing}
The full proof of Theorem~\ref{thm:main_one} is provided in Section~\ref{sec:single}. Here, we provide auxiliary lemmas. The following lemma rewrites the GD steps in matrix format.
\LEMMAZZ*

\begin{proof}
Using Proposition~\ref{prop:grad-general}, in the case when $m = 1,$
\begin{align*}
   \nabla_{\mu_\ell}\mathcal L_{\mathrm{rev}}(\mu)
   =
   \frac{2}{n}
   \mathbb E_{x \sim \mathcal N(\mu_\ell,\mI)}
   \left[
   m_\mu(x)-\bar\mu_1
   +
   \mC_\mu(x)\bigl(\mu_\ell-m_\mu(x)\bigr)
   \right]
\end{align*}
since $\bar \mC(x) = 0$ and $\bar m(x) = \bar\mu_1.$ Then, using \eqref{eq:wsltovjir},
for all $\ell \in [n],$
\begin{align*}
   &\mu^{k+1}_{\ell} - \bar\mu_{1} \\
   &= \mu^{k}_{\ell} - \bar\mu_{1} - \gamma \nabla_{\mu^k_\ell}\mathcal L_{\mathrm{rev}}(\mu^k) \\
   &= \mu^{k}_{\ell} - \bar\mu_{1} - \frac{2 \gamma}{n} \mathbb E_{x \sim \mathcal N(\mu^k_\ell,\mI)} \left[m_{\mu^k}(x) - \bar\mu_{1} + \mC^k(x)\bigl(\mu^k_\ell-m_{\mu^k}(x)\bigr)\right]  \\
   &= \mu^{k}_{\ell} - \bar\mu_{1} - \frac{2 \gamma}{n} \mathbb E_{x \sim \mathcal N(\mu^k_\ell,\mI)} \left[\sum_{i=1}^n r_i^{k}(x)(\mu^k_i - \bar\mu_{1}) + \sum_{i=1}^n r_i^{k}(x) \bigl(\mu^k_i-m_{\mu^k}(x)\bigr) \bigl(\mu^k_i-m_{\mu^k}(x)\bigr)^\top \bigl(\mu^k_\ell-m_{\mu^k}(x)\bigr)\right].
\end{align*}
Since $\sum_{i=1}^n r_i^{k}(x) b \bigl(\mu^k_i-m_{\mu^k}(x)\bigr)^\top = 0$ for all $b \in \R^d,$
\begin{align*}
   \mu^{k+1}_{\ell} - \bar\mu_{1}
   &= \mu^{k}_{\ell} - \bar\mu_{1} - \frac{2 \gamma}{n} \mathbb E_{x \sim \mathcal N(\mu^k_\ell,\mI)} \left[\sum_{i=1}^n r_i^{k}(x)(\mu^k_i - \bar\mu_{1}) + \sum_{i=1}^n r_i^{k}(x) \bigl(\mu^k_i - \bar\mu_{1}\bigr) \bigl(\mu^k_i - m_{\mu^k}(x)\bigr)^\top \bigl(\mu^k_\ell-m_{\mu^k}(x)\bigr)\right] \\
   &= \mu^{k}_{\ell} - \bar\mu_{1} - \frac{2 \gamma}{n}  \sum_{i=1}^n \mathbb E_{x \sim \mathcal N(\mu^k_\ell,\mI)} \left[r_i^{k}(x) + r_i^{k}(x) \bigl(\mu^k_i - m_{\mu^k}(x)\bigr)^\top \bigl(\mu^k_\ell-m_{\mu^k}(x)\bigr) \right] \bigl(\mu^k_i - \bar\mu_{1}\bigr).
\end{align*}
We can conclude that 
\begin{align*}
   \mu^{k+1}_{\ell} - \bar\mu_{1}
   &= \mu^{k}_{\ell} - \bar\mu_{1} - \frac{2 \gamma}{n} \sum_{i=1}^n \mB^{k}_{\ell i} (\mu^{k}_{i} - \bar\mu_{1}).
\end{align*}
\end{proof}

\LEMMAB*

\begin{proof}
Let us define $p^{k}_{\mu}(x) \eqdef \frac{1}{n} \sum_{i=1}^n \varphi(x - \mu^k_i).$ Then $n p^{k}_{\mu}(x) r_{j}^{k}(x) = \varphi(x - \mu^k_{j})$ for all $j \in [n]$ and 
\begin{equation}
\label{eq:BPpMhcWpz}
\begin{aligned}
\mB^{k}_{\ell i} 
&\eqdef \mathbb E_{x \sim \mathcal N(\mu^k_\ell,\mI)}
\left[r_i^{k}(x) + r_i^{k}(x) \bigl(\mu^k_i-m_{\mu^k}(x)\bigr)^\top \bigl(\mu^k_\ell-m_{\mu^k}(x)\bigr)\right] \\
&=\int_{x \in \R^d}
\left[r_i^{k}(x) + r_i^{k}(x) \bigl(\mu^k_i-m_{\mu^k}(x)\bigr)^\top \bigl(\mu^k_\ell-m_{\mu^k}(x)\bigr)\right] \varphi(x - \mu^k_\ell) \, dx\\
&=n \int_{x \in \R^d}
\left[r_i^{k}(x) r^{k}_{\ell} (x) + r_i^{k}(x) r^{k}_{\ell} (x) \bigl(\mu^k_i-m_{\mu^k}(x)\bigr)^\top \bigl(\mu^k_\ell-m_{\mu^k}(x)\bigr)\right] p^{k}_{\mu}(x) \, dx \\
&=n \mathbb E_{x \sim p^{k}_{\mu}}
\left[r_i^{k}(x) r^{k}_{\ell} (x) + r_i^{k}(x) r^{k}_{\ell} (x) \bigl(\mu^k_i-m_{\mu^k}(x)\bigr)^\top \bigl(\mu^k_\ell-m_{\mu^k}(x)\bigr)\right],
\end{aligned}
\end{equation}
where $p^{k}_{\mu}(x) \eqdef \frac{1}{n} \sum_{i=1}^n \varphi(x - \mu^k_i).$ Thus, $\mB^k$ is symmetric. 
For all $v \in \R^{n},$
\begin{align*}
v^\top \mB^k v 
&= n \int_{x \in \R^d}
\left[\left(\sum_{j=1}^{n} r_{j}^{k}(x) v_j\right)^2 + \norm{\sum_{j=1}^{n} r_{j}^{k}(x) v_j (\mu^k_{j}-m_{\mu^k}(x))}^2\right] p^{k}_{\mu}(x) \, dx,
\end{align*}
so it is positive semidefinite. Using Jensen's inequality,
\begin{align*}
v^\top \mB^k v 
&\leq n \int_{x \in \R^d}
\left[\sum_{j=1}^{n} r_{j}^{k}(x) v_j^2 + \sum_{j=1}^{n} r_{j}^{k}(x) v_j^2 \norm{\mu^k_{j}-m_{\mu^k}(x)}^2\right] p^{k}_{\mu}(x) \, dx \\
&=
\sum_{j=1}^{n} \left[ \int_{x \in \R^d} n p^{k}_{\mu}(x) r_{j}^{k}(x) \, dx\right] v_j^2 + \sum_{j=1}^{n} \left[ \int_{x \in \R^d} \norm{\mu^k_{j}-m_{\mu^k}(x)}^2 n p^{k}_{\mu}(x) r_{j}^{k}(x) \, dx \right] v_j^2 \\
&=
\sum_{j=1}^{n} \left[ \int_{x \in \R^d} n p^{k}_{\mu}(x) r_{j}^{k}(x) \, dx\right] v_j^2 + \sum_{j=1}^{n} \mathbb E_{x \sim \mathcal N(\mu^k_j,\mI)} \left[ \norm{\mu^k_{j}-m_{\mu^k}(x)}^2\right] v_j^2 \\
&\leq 
\sum_{j=1}^{n} v_j^2 + 128 \log(n) \sum_{j=1}^{n} v_j^2 = (1 + 128 \log(n)) \norm{v}^2
\end{align*}
since $n p^{k}_{\mu}(x) r_{j}^{k}(x) = \varphi(x - \mu^k_{j}).$ The last inequality follows from the following steps. Let $S \eqdef \{i \in [n]\,:\, \norm{\mu^k_{j}-\mu^k_i}^2 > 64 \log(n)\}.$ Assume $n \geq 2$; the case $n = 1$ is trivial. Then
\begin{align*}
   &\mathbb E_{x \sim \mathcal N(\mu^k_j,\mI)} \left[ \norm{\mu^k_{j}-m_{\mu^k}(x)}^2\right] =\mathbb E_{x \sim \mathcal N(\mu^k_j,\mI)} \left[ \norm{\mu^k_{j}-\sum\limits_{i=1}^n r^k_i(x) \mu^k_i}^2\right] \\
   &\leq \sum\limits_{i=1}^n \mathbb E_{x \sim \mathcal N(\mu^k_j,\mI)} \left[r^k_i(x)\right] \norm{\mu^k_{j}-\mu^k_i}^2 \\
   &= \sum\limits_{i \in S} \mathbb E_{x \sim \mathcal N(\mu^k_j,\mI)} \left[r^k_i(x)\right] \norm{\mu^k_{j}-\mu^k_i}^2 + \sum\limits_{i \not\in S} \mathbb E_{x \sim \mathcal N(\mu^k_j,\mI)} \left[r^k_i(x)\right] \norm{\mu^k_{j}-\mu^k_i}^2 \\
   &\leq \sum\limits_{i \in S} \mathbb E_{x \sim \mathcal N(\mu^k_j,\mI)} \left[\min\left\{\exp\left(\frac12\|x-\mu^k_{j}\|^2-\frac12\|x-\mu^k_i\|^2\right), 1\right\}\right] \norm{\mu^k_{j}-\mu^k_i}^2 + 64 \log(n) \\
   &\overset{\textnormal{L.\ref{lemma:dist_1}}}{\leq} 2 \sum\limits_{i \in S} \exp\left(-\frac{1}{32}\norm{\mu^k_{j}-\mu^k_i}^2\right) \norm{\mu^k_{j}-\mu^k_i}^2 + 64 \log(n) \\
   &\leq 128 n \log(n) \exp\left(-2 \log(n)\right) + 64 \log(n),
\end{align*}
where in the second inequality we use the definition of $r^k_i(x),$ and in the last inequality we use that $\exp\left(-\frac{x}{32}\right) x$ is decreasing for all $x \geq 32.$ Thus, $\mathbb E_{x \sim \mathcal N(\mu^k_j,\mI)} \left[ \norm{\mu^k_{j}-m_{\mu^k}(x)}^2\right] \leq 128 \log(n).$
\end{proof}

\LEMMAZORIG*

\begin{proof}
   Let $p^{k}_{\mu}(x) \eqdef \frac{1}{n} \sum_{i=1}^n \varphi(x - \mu^k_i).$ Using Lemma~\ref{lemma:Z} and \eqref{eq:BPpMhcWpz},
\begin{align*}
   &\inp{\mZ^{k}}{\mB^k \mZ^{k}} \\
   &= n \sum_{i=1}^n \sum_{j=1}^n \mathbb E_{x \sim p^{k}_{\mu}}
   \left[r_i^{k}(x) r^{k}_{j} (x)  + r_i^{k}(x) r^{k}_{j} (x) \bigl(\mu^k_i-m_{\mu^k}(x)\bigr)^\top \bigl(\mu^k_j-m_{\mu^k}(x)\bigr)\right] (\mu^{k}_{i} - \bar \mu_{1})^\top (\mu^{k}_{j} - \bar \mu_{1}) \\
   &= n \mathbb E_{x \sim p^{k}_{\mu}}
   \left[\sum_{i=1}^n r_i^{k}(x) (\mu^{k}_{i} - \bar \mu_{1})^\top \left(\sum_{j=1}^n r^{k}_{j} (x) (\mu^{k}_{j} - \bar \mu_{1})\right)\right] \\
   &\quad  + n \mathbb E_{x \sim p^{k}_{\mu}} 
   \left[\sum_{i=1}^n r_i^{k}(x) \bigl(\mu^k_i-m_{\mu^k}(x)\bigr)^\top \left(\sum_{j=1}^n  r^{k}_{j}(x) \bigl(\mu^k_j-m_{\mu^k}(x)\bigr) (\mu^{k}_{j} - \bar \mu_{1})^\top \right) (\mu^{k}_{i} - \bar \mu_{1})\right] \\
   &= n \mathbb E_{x \sim p^{k}_{\mu}}
   \left[\norm{\sum_{i=1}^n r_i^{k}(x) (\mu^{k}_{i} - \bar \mu_{1})}^2\right] + n \mathbb E_{x \sim p^{k}_{\mu}} 
   \left[\norm{\sum_{i=1}^n  r^{k}_{i}(x) \bigl(\mu^k_i-m_{\mu^k}(x)\bigr) (\mu^{k}_{i} - \bar \mu_{1})^\top}_F^2\right].
\end{align*}
Using the definition of $m_{\mu^k}(x)$ and $\sum_{i=1}^n r_i^{k}(x) \bigl(\mu^k_i-m_{\mu^k}(x)\bigr) b^\top = 0$ for all $b \in \R^d,$
\begin{align*}
   &\inp{\mZ^{k}}{\mB^k \mZ^{k}} \\
   &= n \mathbb E_{x \sim p^{k}_{\mu}}
   \left[\norm{m_{\mu^k}(x) - \bar \mu_{1}}^2\right] + n \mathbb E_{x \sim p^{k}_{\mu}} 
   \left[\norm{\sum_{i=1}^n  r^{k}_{i}(x) \bigl(\mu^k_i-m_{\mu^k}(x)\bigr) (\mu^{k}_{i} - m_{\mu^k}(x))^\top}_F^2\right] \\
   &= n \mathbb E_{x \sim p^{k}_{\mu}}
   \left[\norm{m_{\mu^k}(x) - \bar \mu_{1}}^2\right] + n \mathbb E_{x \sim p^{k}_{\mu}} 
   \textnormal{tr}\left(\left(\sum_{i=1}^n  r^{k}_{i}(x) \bigl(\mu^k_i-m_{\mu^k}(x)\bigr) (\mu^{k}_{i} - m_{\mu^k}(x))^\top\right)^2\right).
\end{align*}
Let $\kappa \eqdef \frac{n}{\min\{n, d\}}.$ Due to $\left(\textnormal{tr}(\mA)\right)^2 \leq \textnormal{rank}(\mA) \textnormal{tr}(\mA^2)$ for all symmetric $\mA \in \R^{d \times d},$ we get
\begin{align*}
   \inp{\mZ^{k}}{\mB^k \mZ^{k}} 
   &\geq n \mathbb E_{x \sim p^{k}_{\mu}}
   \left[\norm{m_{\mu^k}(x) - \bar \mu_{1}}^2\right] + \kappa \mathbb E_{x \sim p^{k}_{\mu}} 
   \left[\left(\sum_{i=1}^n  r^{k}_{i}(x) \norm{\mu^k_i-m_{\mu^k}(x)}^2\right)^2\right],
\end{align*}
where the rank of our matrix under the trace is $\leq \min\{n, d\}.$ Using Jensen's inequality,
\begin{align*}
   \inp{\mZ^{k}}{\mB^k \mZ^{k}}  &\geq n \mathbb E_{x \sim p^{k}_{\mu}}
   \left[\norm{m_{\mu^k}(x) - \bar \mu_{1}}^2\right] + \kappa \mathbb E_{x \sim p^{k}_{\mu}} 
   \left[\left(\sum_{i=1}^n  r^{k}_{i}(x) \norm{\mu^k_i - \bar\mu_1}^2 - \norm{m_{\mu^k}(x) - \bar\mu_1}^2\right)^2\right] \\
   &\geq n \mathbb E_{x \sim p^{k}_{\mu}}
   \left[\norm{m_{\mu^k}(x) - \bar \mu_{1}}^2\right] + \kappa \left(\mathbb E_{x \sim p^{k}_{\mu}} 
   \left[\sum_{i=1}^n  r^{k}_{i}(x) \norm{\mu^k_i - \bar\mu_1}^2\right] - \mathbb E_{x \sim p^{k}_{\mu}} 
   \left[\norm{m_{\mu^k}(x) - \bar\mu_1}^2\right]\right)^2 \\
   &= n \mathbb E_{x \sim p^{k}_{\mu}}
   \left[\norm{m_{\mu^k}(x) - \bar \mu_{1}}^2\right] + \kappa \left(\frac{1}{n} \norm{\mZ^{k}}^2_F - \mathbb E_{x \sim p^{k}_{\mu}} 
   \left[\norm{m_{\mu^k}(x) - \bar\mu_1}^2\right]\right)^2
\end{align*}
since 
\begin{align*}
   &\mathbb E_{x \sim p^{k}_{\mu}} \left[\sum_{i=1}^n  r^{k}_{i}(x) \norm{\mu^k_i - \bar\mu_1}^2\right] = \sum_{i=1}^n \int_{x \in \R^d} \left[r^{k}_{i}(x) p^{k}_{\mu}(x) \norm{\mu^k_i - \bar\mu_1}^2\right] \, d x \\
   &= \frac{1}{n} \sum_{i=1}^n \int_{x \in \R^d} \left[\norm{\mu^k_i - \bar\mu_1}^2\right] \varphi(x - \mu^k_i) \, d x = \frac{1}{n} \sum_{i=1}^n \norm{\mu^k_i - \bar\mu_1}^2 = \frac{1}{n} \norm{\mZ^{k}}^2_F
\end{align*}
Notice also that 
\begin{align*}
   \mathbb E_{x \sim p^{k}_{\mu}} \left[\norm{m_{\mu^k}(x) - \bar\mu_1}^2\right] \leq \mathbb E_{x \sim p^{k}_{\mu}} \left[\sum_{i=1}^n  r^{k}_{i}(x) \norm{\mu^k_i - \bar\mu_1}^2\right] = \frac{1}{n} \norm{\mZ^{k}}^2_F.
\end{align*}
Thus, 
\begin{align}
   \label{eq:zpfbtoigdmEHSImuC}
   \inp{\mZ^{k}}{\mB^k \mZ^{k}} 
   &\geq \min_{0 \leq x \leq \bar{x}} \left[n x + \kappa \left(\bar{x} - x\right)^2\right],
\end{align}
where $\bar{x} \eqdef \frac{1}{n} \norm{\mZ^{k}}^2_F.$ Since 
\begin{align*}
   \min_{0 \leq x \leq \bar{x}} \left[n x + \kappa \left(\bar{x} - x\right)^2\right] \geq \min\left\{\min_{0 \leq x \leq \frac{\bar{x}}{2}}\kappa \left(\bar{x} - x\right)^2, \min_{\frac{\bar{x}}{2} \leq x \leq \bar{x}} n x\right\} = \min\left\{\frac{\kappa}{4} \bar{x}^2, \frac{n}{2} \bar{x}\right\},
\end{align*}
we obtain
\begin{align*}
   \inp{\mZ^{k}}{\mB^k \mZ^{k}} \geq \min\left\{\frac{\kappa}{4 n^2} \norm{\mZ^{k}}^4_F, \frac{1}{2} \norm{\mZ^{k}}^2_F\right\}
   =
   \min\left\{
   \frac{1}{4 n \min\{n,d\}} \norm{\mZ^{k}}^4_F,
   \frac{1}{2} \norm{\mZ^{k}}^2_F
   \right\}.
\end{align*}
\end{proof}

\newpage

\newpage
\section{Proof of Theorem~\ref{thm:mainmany}}
\newcommand{\hatz}{65 \sqrt{\log\left(2^{16} n (R^0)^4\right)} \sqrt{\frac{n}{d} + \max\left\{\frac{4}{3}\log\left(\frac{2 d}{\rho}\right), \sqrt{\frac{4 n}
{d} \log\left(\frac{2 d}{\rho}\right)}\right\}}}
\newcommand{\hatK}{\frac{n \log\left(R^0\right)}{2 \gamma} + \frac{2 n \min\{n, d\} \max\left\{4 n, 8 \tilde{z}^2\right\}}{\gamma \varepsilon}}
\newcommand{\gammastep}{\frac{n}{36 \tilde{z}^2}}
\begin{theorem}[Main Result for Multiple Target Modes]
   \label{thm:mainmany}
   Let $\varepsilon \in (0, 1]$ denote the target accuracy. We define the smallest and largest gaps in the target distribution as
   \begin{align*}
      \delta_{\min} \eqdef \min_{a \neq b}\norm{\bar\mu_a - \bar\mu_b} > 0 \quad \textnormal{ and } \quad  \delta_{\max} \eqdef \max_{a \neq b}\norm{\bar\mu_a - \bar\mu_b},
   \end{align*}
   and the initialization radius as
   $$R^0 \geq \max\left\{50 + \frac{2^{20} n m^2 \sqrt{d} \delta^2_{\max}}{\rho}, 2^{12} \sqrt{n \max\{\log\left(2^{66} n^3 \bar{R}^4\right), 1\}} \bar{R}\right\},$$
   where $\bar{R} \eqdef \max_{b \in [m]} \norm{\bar \mu_b}$ and $\rho \in (0, 1].$ We sample the initial parameters as $\mu^0_{\ell} = R^0 \eta_{\ell} / \norm{\eta_{\ell}}$, where $\eta_{\ell} \sim \mathcal{N}(0, \mI)$ for all $\ell \in [n]$, and the random vectors $\{\eta_{\ell}\}_{\ell=1}^n$ are i.i.d.
   Assume that the minimal gap satisfies
   $$\delta_{\min} \geq 2^{10} \sqrt{\log\left(\frac{2^{20} n^{3} \max\left\{n, m\right\} (R^0)^4}{\varepsilon^2}\right)} \sqrt{\frac{n}{d} + \max\left\{\frac{4}{3}\log\left(\frac{2 d}{\rho}\right), \sqrt{\frac{4 n}{d} \log\left(\frac{2 d}{\rho}\right)}\right\}}$$
   and $d \geq 8 + \max\left\{2 \log\left(\frac{m n^2}{\rho}\right), 8 \log\left(\frac{n^4}{2 \pi \rho^2}\right)\right\}.$
   If
   $\gamma \leq \gammastep$, 
   where $\tilde{z} \eqdef \hatz$, and GD \eqref{eq:wsltovjir} is run for 
   \begin{align*}
      \tilde{K} \eqdef \hatK
   \end{align*}
   iterations, then, with probability at least $1 - 4\rho$, we have that
   \begin{align*}
      \norm{\mu^{k}_{\ell} - \bar\mu_{a^*_{\ell}}}^2 \leq \varepsilon,
   \end{align*}
   for all $\ell \in [n]$ and $k \geq \tilde{K}$, where $a^*_{\ell} = \arg\min\limits_{a \in [m]} \norm{\mu^0_{\ell} - \bar{\mu}_a}$ is the index of the target mode closest to the initialization point $\mu^0_{\ell}$.
\end{theorem}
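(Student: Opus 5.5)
We condition on the intersection of the events $\Omega_1$ (Lemma~\ref{lemma:close}), $\Omega_2$ (Lemma~\ref{lemma:far}), $\Omega_3$ (Lemma~\ref{lemma:dist}) and $\Omega_4$ (Lemma~\ref{lemma:norm2}, applied to the partition $\{I_a\}_{a\in[m]}$ with $I_a \eqdef \{i : a^*_i = a\}$). By a union bound this intersection has probability at least $1-4\rho$. The lower bounds on $d$ and $R^0$ are exactly what make $\Delta$, $\Delta_2$, $\Delta_3$ large: $\Delta_2 \gtrsim (R^0)^2$ once $d \gtrsim \log(n^4/\rho^2)$, and $\Delta_3 \gtrsim \delta_{\min}$ once $d \gtrsim \log(mn^2/\rho)$. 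The rest of the argument is deterministic on this event and runs in two stages.

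\textbf{Stage 1 (global, linear phase).} We write the exact update as
\begin{align*}
\mu^{k+1}_\ell - \bar\mu_{a^*_\ell} = \left(1-\tfrac{2\gamma}{n}\right)(\mu^k_\ell-\bar\mu_{a^*_\ell}) - \tfrac{2\gamma}{n} e^k_\ell,
\end{align*}
where $e^k_\ell$ collects all terms of Proposition~\ref{prop:grad-general} involving $r^k_i$ for $i\neq\ell$, $\bar r_b$ for $b\neq a^*_\ell$, and the covariances $\mC$ and $\bar\mC$. We bound these terms under $x\sim\mathcal N(\mu^k_\ell,\mI)$ using Lemmas~\ref{lemma:dist_1} and \ref{lemma:dist_2}. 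The foreign student weights are small while the means remain pairwise far apart. The foreign teacher weights are small because $\mu^k_\ell$ keeps a margin, inherited from $\Omega_1$, in favor of $\bar\mu_{a^*_\ell}$. The induction hypothesis is that each iterate stays within a small tube around the segment $[\mu^0_\ell,\bar\mu_{a^*_\ell}]$ and that the contraction $\norm{\mu^k_\ell-\bar\mu_{a^*_\ell}} \lesssim (1-\gamma/n)^k R^0 + \text{small}$ holds. Separation between different clusters is preserved by $\Omega_3$, since the segments of different groups are $\Delta_3$-apart. Within the same group, all iterates are affine contractions of the $\mu^0_i$ toward a common point, so their distances shrink proportionally from $\Delta_2$. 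The errors must be summed geometrically so that the accumulated drift stays below $\min\{\Delta,\Delta_3\}/4$. We run this stage for $\bar K \approx \frac{n\log R^0}{2\gamma}$ steps, at which point every $\mu^{\bar K}_\ell$ lies within $O(\tilde z)$ of its target. Here $\Omega_4$, with the spectral control of the stacked means, keeps the within-group Frobenius quantity $\norm{\mZ_a}_F \lesssim \tilde z$.

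\textbf{Stage 2 (local phase, per group).} For each $a$ with $I_a\neq\emptyset$, we set $\mZ^k_a \eqdef [(\mu^k_i-\bar\mu_a)^\top]_{i\in I_a}$ and mimic Section~\ref{sec:single}, obtaining
\begin{align*}
\mZ^{k+1}_a = \left(\mI-\tfrac{2\gamma}{n}\mB^k_a\right)\mZ^k_a - \tfrac{2\gamma}{n}\mE^k_a.
\end{align*}
Here $\mB^k_a$ is the matrix of Lemma~\ref{lemma:Z} built from the student components in $I_a$ alone. By Lemmas~\ref{lemma:b} and \ref{lemma:z_b}, which need only the within-group structure with $n_a\le n$, it is PSD with $\norm{\mB^k_a}_{\textnormal{op}}=\tilde O(1)$ and satisfies the quartic/quadratic lower bound. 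The error $\mE^k_a$ comes from cross-group students and foreign teachers. Since all groups are $\gtrsim\delta_{\min}$ apart and all means are within $O(\tilde z)\ll\delta_{\min}$ of their targets, the error is of order $\mathrm{poly}(n,R^0)\exp(-c\delta_{\min}^2)$. The assumption on $\delta_{\min}$, which has $\varepsilon$ inside the logarithm, makes it at most $\varepsilon^2/\mathrm{poly}(n,R^0)$. This yields the perturbed recursion
\begin{align*}
\norm{\mZ^{k+1}_a}_F^2 \le \norm{\mZ^k_a}_F^2 - \tfrac{\gamma}{n}\min\left\{\tfrac{\norm{\mZ^k_a}_F^4}{4n\min\{n,d\}},\tfrac{\norm{\mZ^k_a}_F^2}{2}\right\} + \tfrac{\gamma}{n}\xi,
\end{align*}
with $\xi$ tiny. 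A variant of Lemma~\ref{lemma:recursion}, in which the iterate either decreases or already lies below $\varepsilon$, gives $\norm{\mZ^k_a}_F^2\le\varepsilon$ after $\frac{2n\min\{n,d\}\max\{4n,8\tilde z^2\}}{\gamma\varepsilon}$ additional steps, and keeps it there for all subsequent $k$. The step-size condition $\gamma\le n/(36\tilde z^2)$ is used to absorb the quadratic term $\norm{\mB_a\mZ_a}_F^2$, as in \eqref{eq:ICtmusbxPuxs}.

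\textbf{Main obstacle.} The main obstacle is Stage~1's error control: we must show that the nonlinear terms (especially $\bar\mC(x)(\mu_\ell-\bar m(x))$, which is multiplied by the large quantity $\norm{\mu_\ell}\sim R^0$) remain negligible uniformly along the trajectory. This requires the margin $\Delta\propto R^0\delta_{\min}$ to dominate $\delta_{\max}^2$, which is why $R^0\gtrsim nm^2\sqrt d\,\delta_{\max}^2/\rho$. We also need to check via Lemma~\ref{lemma:dist_2} that the margin shrinks no faster than the distance itself. I would first prove a one-step lemma: if the margin and inter-student distance conditions hold at step $k$, then they hold at step $k+1$ with at most geometric loss. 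Induction then finishes the argument.
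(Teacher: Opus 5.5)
Your overall architecture matches the paper's. You condition on $\Omega_1,\dots,\Omega_4$. In Stage~1 each $\mu^k_\ell$ is an affine contraction of $\mu^0_\ell$ toward $\bar\mu_{a^*_\ell}$ plus a geometrically averaged error, with $\Omega_2,\Omega_3$ preserving separation. In Stage~2 you use the per-group recursion $\mZ^{k+1}_a=(\mI-\frac{2\gamma}{n}\mB^k_a)\mZ^k_a-\frac{2\gamma}{n}\mE^k_a$.

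\textbf{Gap in Stage~2.} Nothing in your argument keeps each individual mean of group $a$ within $\ll\delta_{\min}$ of $\bar\mu_a$ for all $k\ge\bar K$. That proximity is exactly what you need to bound $\mE^k_a$ by $\mathrm{poly}(n,R^0)e^{-c\delta_{\min}^2}$: it gives cross-group student distances $\ge\delta_{\min}/2$, the teacher margin, and $\norm{\mu^k_i}\le R$.

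The only quantity your recursion controls is $\norm{\mZ^k_a}_F$, and your claim that $\Omega_4$ gives $\norm{\mZ^{\bar K}_a}_F\lesssim\tilde z$ is false. $\Omega_4$ is a spectral-norm event. The Frobenius norm needs no probability at all, since $\norm{\mu^0_i}=R^0$ exactly. Each row of $\mZ^{\bar K}_a$ has norm between roughly $4\sqrt{\bar\Delta}$ and $8\sqrt{\bar\Delta}$. Hence $\norm{\mZ^{\bar K}_a}_F=\Theta(\sqrt{n_a\log(nR^0)})$, which exceeds $\tilde z$ by a factor growing like $\sqrt{n_a}$ (up to logarithms) when $d\ge n$.

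Bounding rows by the Frobenius norm would therefore require $\delta_{\min}\gtrsim\sqrt{n_a\log(nR^0)}$. The theorem's hypothesis only guarantees $\delta_{\min}$ of order $\tilde z$ up to logarithms, i.e.\ $\tilde{\cO}(1)$ for $d\ge n$. Incidentally, the iteration count you quote, with the factor $\min\{n,d\}\max\{4n,8\tilde z^2\}$, already encodes $\norm{\mZ_a}_F^2\le 4\min\{n_a,d\}\bar z^2$. That is operator-norm control plus a rank bound, not $\norm{\mZ_a}_F\lesssim\tilde z$.

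\textbf{The missing step: an operator-norm induction.} Run a separate induction showing $\norm{\mZ^k_a}_{\textnormal{op}}\le 2\bar z$ for $\bar K\le k\le\hat K$.
\begin{itemize}
\item Since $\mB^k_a\succeq 0$ and $\frac{2\gamma}{n}\norm{\mB^k_a}_{\textnormal{op}}\le 1$, the matrix $\mI-\frac{2\gamma}{n}\mB^k_a$ is non-expansive in operator norm. So $\norm{\mZ^{k+1}_a}_{\textnormal{op}}\le\norm{\mZ^k_a}_{\textnormal{op}}+\frac{2\gamma}{n}\norm{\mE^k_a}_F$.
\item Errors accumulate linearly, so you need $\norm{\mE^k_a}_F\le E_2\le\frac{n\bar z}{2\gamma\hat K}$ over the finite horizon $\hat K$. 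This is another reason $\varepsilon$ sits inside the logarithm in the $\delta_{\min}$ condition.
\item The base case is where $\Omega_4$ actually enters: $\norm{\mZ^{\bar K}_a}_{\textnormal{op}}\le(1-\frac{2\gamma}{n})^{\bar K}(\Delta_4+\sqrt{n_a}\bar R)+\sqrt{n_a}E_1\le\bar z$, with $\Delta_4/R^0\approx\sqrt{n/d+\log(2d/\rho)}$. This is the origin of $\tilde z$.
\item Every row norm is at most the operator norm, so $\norm{\mu^k_i-\bar\mu_a}\le 2\bar z\le\delta_{\min}/4$. That is what makes $\mE^k_a$ negligible.
\item The Frobenius recursion is then used only for the rate, with $\norm{\mZ^k_a}_F\le 2\sqrt{\min\{n_a,d\}}\,\bar z$. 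Once $\norm{\mZ^k_a}_F^2\le\varepsilon$, the Frobenius bound itself keeps the rows close for all later $k$.
\end{itemize}
Your use of Lemma~\ref{lemma:b} for $\norm{\mB^k_a}_{\textnormal{op}}$ is fine, since that bound is position-independent. The gap concerns only row proximity.

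\textbf{A smaller point in Stage~1.} The accumulated drift must also stay below about $\sqrt{\hat\Delta_2}/2$, because at $k=\bar K$ same-group distances have shrunk to $\Theta(\sqrt{\log(nR^0)})$. Your threshold $\min\{\Delta,\Delta_3\}/4$ does not ensure this when $\delta_{\min}$ is large. The paper's actual drift bound $E_1=\delta_{\min}/(8R)$ is far below both.
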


\begin{proof}
   Let 
   \begin{align*}
      I_a \eqdef \left\{i \in [n]\,:\, a^*_i = a\right\},
   \end{align*} 
   where $a^*_i = \arg\min_{a \in [m]} \norm{\mu^0_i - \bar \mu_a},$ $n_{a} \eqdef |I_{a}|,$ and $A^*$ is the set of $a \in [m]$ such that $I_{a}$ is non-empty.
   We have the initialization $\mu^0_{\ell} = R^0 \eta_{\ell} / \norm{\eta_{\ell}},$ where $\eta_{\ell} \sim \mathcal{N}(0, \mI)$ for all $\ell \in [n]$ and $\{\eta_{\ell}\}$ are i.i.d. Consider an event $\Omega = \Omega_1 \bigcap \Omega_2 \bigcap \Omega_3 \bigcap \Omega_4,$ where
   \begin{align}
      \Omega_1 
      &\eqdef \left\{\forall i \in [n]: \norm{\mu^0_i - \bar \mu_{a^*_i}}^2 \leq \min_{b \neq a^*_i}\norm{\mu^0_i - \bar \mu_b}^2 - \Delta_1\right\}, \label{eq:zolMqjeooMG}\\
      \Omega_2 &\eqdef \left\{ \forall i \neq j \in [n]: \norm{\mu^0_i - \mu^0_j}^2 \geq \Delta_2 \right\}, \label{eq:EvPlrGmTnwYCRQa} \\
      \Omega_3 &\eqdef \left\{ \forall i \neq j \in [n] \textnormal{ s.t. } a^*_i \neq a^*_j: \textnormal{dist}([\mu^0_i, \bar{\mu}_{a^*_i}], [\mu^0_j, \bar{\mu}_{a^*_j}]) \geq \Delta_3 \right\}, \label{eq:yZUzuDZUiFFPhgyiwgy} \\
      \Omega_4 &\eqdef
\left\{\forall a \in A^*: \norm{\begin{bmatrix}
      \left(\mu^0_{i_{a,1}}\right)^\top \\
      \vdots \\
      \left(\mu^0_{i_{a,n_a}}\right)^\top \label{eq:KfgolNiUc}
   \end{bmatrix}}_{\textnormal{op}} \leq \Delta_4\right\}, \qquad I_a \equiv \{i_{a,1}, \dots, i_{a,n_a}\}.
   \end{align}
   Using Lemmas~\ref{lemma:close}, \ref{lemma:far}, \ref{lemma:dist}, and \ref{lemma:norm2}, we get $\Prob{\Omega} \geq 1 - 4 \rho$ with $\Delta_1 = \frac{2 R^0 \delta_{\min} \rho}{n m^2 \sqrt{d}},$ $\Delta_2 = (R^0)^2 \left(\frac{\sqrt{2\pi}\rho}{\sqrt d\, n^2}\right)^{\frac{2}{d-1}},$ $\Delta_3 = \frac{\delta_{\min}}{8} \left(\frac{\rho}{m n^2}\right)^{\frac{1}{d-2}},$ and $$\Delta_4
= R^0 \sqrt{\frac{n}{d} + \max\left\{\frac{4}{3}\log\left(\frac{2 d}{\rho}\right), \sqrt{\frac{4 n}{d} \log\left(\frac{2 d}{\rho}\right)}\right\}}.$$ Starting from here, we assume that the event $\Omega$ holds.
   We now fix any $\ell \in [n].$ Using Proposition~\ref{prop:grad-general}, we have
   \begin{align*}
      \nabla_{\mu^k_\ell}\mathcal L_{\mathrm{rev}}(\mu^k)
      =
      \frac{2}{n}
      \mathbb E_{x \sim \mathcal N(\mu^k_\ell,\mI)}
      \left[
      m_{\mu^k}(x)-\bar m(x)
      +
      \mC_{\mu^k}(x)\bigl(\mu^k_\ell-m_{\mu^k}(x)\bigr)
      -
      \bar \mC(x)\bigl(\mu^k_\ell-\bar m(x)\bigr)
      \right].
   \end{align*}
   We define 
   \[
   r_i^{k}(x)
   \eqdef
   \frac{
   \exp\left(-\frac12\|x-\mu^k_i\|^2\right)
   }{
   \sum_{j\in [n]}
   \exp\left(-\frac12\|x-\mu^k_j\|^2\right)
   }.
   \]
   For all $i\in I_{a}$, we also define 
   \[
   r_i^{k,a}(x)
   \eqdef
   \frac{
   \exp\left(-\frac12\|x-\mu^k_i\|^2\right)
   }{
   \sum_{j\in I_{a}}
   \exp\left(-\frac12\|x-\mu^k_j\|^2\right)
   }, \qquad
   m^k_{a}(x)
   \eqdef
   \sum_{i\in I_{a}} r_i^{k,a}(x)\mu^k_i,
   \]
   and 
   \[
   \mC^k_{a}(x) \eqdef \sum_{i\in I_{a}} r_i^{k,a}(x) \bigl(\mu^k_i-m^k_{a}(x)\bigr) \bigl(\mu^k_i-m^k_{a}(x)\bigr)^\top.
   \]
   We now add and subtract $m^k_{a^*_\ell}(x) - \bar\mu_{a^*_\ell} + \mC^k_{a^*_\ell}(x)\bigl(\mu^k_\ell-m^k_{a^*_\ell}(x)\bigr),$ to get
   \begin{align}
      \label{eq:grad_1}
      &\nabla_{\mu^k_\ell}\mathcal L_{\mathrm{rev}}(\mu^k)
      = 
      \frac{2}{n} \mathbb E_{x \sim \mathcal N(\mu^k_\ell,\mI)} \left[m^k_{a^*_\ell}(x) - \bar\mu_{a^*_\ell} + \mC^k_{a^*_\ell}(x)\bigl(\mu^k_\ell-m^k_{a^*_\ell}(x)\bigr) + \mathcal{E}^k_{\ell}(x)\right],
   \end{align}
   where we define 
   \begin{align*}
      \mathcal{E}^k_{\ell}(x) 
      &\eqdef
      (m_{\mu^k}(x) - m^k_{a^*_\ell}(x)) - (\bar m(x) - \bar \mu_{a^*_\ell})
      +
      (\mC_{\mu^k}(x) - \mC^k_{a^*_\ell}(x))(\mu^k_\ell - m^k_{a^*_\ell}(x)) \\
      &\quad + \mC_{\mu^k}(x)(m^k_{a^*_\ell}(x) - m_{\mu^k}(x)) - \bar \mC(x)\bigl(\mu^k_\ell-\bar m(x)\bigr).
   \end{align*} 
   Let $\hat\tau^k_{a}(x) \eqdef \sum_{i \in [n], i \not\in I_a} r_i^{k}(x)$ and $\bar \tau_{a}(x) \eqdef \sum_{b \in [m] , b \neq a} \bar r_b(x).$  In Section~\ref{sec:auxlemmas}, we prove the following Lemma~\ref{lemma:bound1}.
   \begin{restatable}{lemma}{LEMMABOUND}
      \label{lemma:bound1}
   Let $R_{\max} \eqdef \max\{\max_{i \in [n]} \norm{\mu_i^k}, \max_{a \in [m]} \norm{\bar\mu_a}\}$ for some $k \geq 0.$ Then,
   \begin{align*}
      \norm{\mathcal{E}^k_{\ell}(x)} \leq 52 \max\{R_{\max}, R_{\max}^3\} \max\{\hat\tau^k_{a^*_\ell}(x), \bar \tau_{a^*_\ell}(x)\}.
   \end{align*}
   for all $x \in \R^d$ and $\ell \in [n].$
   \end{restatable}
   Next, we add and subtract $\mu^k_{\ell} - \bar\mu_{a^*_\ell}$ once again to get
   \begin{align}
      \label{eq:grad_2}
      &\nabla_{\mu^k_\ell}\mathcal L_{\mathrm{rev}}(\mu^k)
      = 
      \frac{2}{n} \mathbb E_{x \sim \mathcal N(\mu^k_\ell,\mI)}
      \left[\mu^k_{\ell} - \bar\mu_{a^*_\ell} + \bar{\mathcal{E}}^k_{\ell}(x) + \mathcal{E}^k_{\ell}(x)\right],
   \end{align}
   where
   \begin{align*}
      \bar{\mathcal{E}}^k_{\ell}(x) \eqdef m^k_{a^*_\ell}(x) - \mu^k_{\ell} + \mC^k_{a^*_\ell}(x)\bigl(\mu^k_\ell-m^k_{a^*_\ell}(x)\bigr).
   \end{align*}
   Let $\tau^k_{\ell}(x) \eqdef \sum_{i \in I_{a^*_{\ell}}, i \neq \ell} r_i^{k,a^*_{\ell}}(x).$ In Section~\ref{sec:auxlemmas}, we prove the following Lemma~\ref{lemma:bound2}.
   \begin{restatable}{lemma}{LEMMABOUNDTWO}
      \label{lemma:bound2}
   Let $R_{\max} \eqdef \max\{\max_{i \in [n]} \norm{\mu_i^k}, \max_{a \in [m]} \norm{\bar\mu_a}\}$ for some $k \geq 0.$ Then,
   \begin{align*}
      \norm{\bar{\mathcal{E}}^k_{\ell}(x)} \leq 10 \max\{R_{\max}, R_{\max}^3\} \tau^k_{\ell}(x).
   \end{align*}
   for all $x \in \R^d$ and $\ell \in [n].$
   \end{restatable}
   \textbf{(Global convergence stage).} Using \eqref{eq:grad_2} and the update rule $\mu^{k+1}_{\ell} = \mu^{k}_{\ell} - \gamma \nabla_{\mu^k_\ell}\mathcal L_{\mathrm{rev}}(\mu^k),$
   \begin{align}
      \label{eq:rwPVzXoMnmQzZXKel}
      \mu^{k+1}_{\ell} = \bar \mu_{a^*_\ell} + \left(1 - \frac{2 \gamma}{n}\right)^{k+1} (\mu^0_{\ell} - \bar \mu_{a^*_\ell}) - E_{\ell}^k,
   \end{align}
   where 
   \begin{align*}
      E_{\ell}^k \eqdef \frac{2 \gamma}{n}\sum_{j=0}^{k} \left(1 - \frac{2 \gamma}{n}\right)^{k - j} \mathbb E_{x \sim \mathcal N(\mu^j_\ell,\mI)}
      \left[\bar{\mathcal{E}}^j_{\ell}(x) + \mathcal{E}^j_{\ell}(x)\right]
   \end{align*}
   and 
   \begin{align}
      \label{eq:UEwmx}
      \norm{E_{\ell}^k} \leq \max_{j \in \{0, \dots, k\}}\mathbb E_{x \sim \mathcal N(\mu^j_\ell,\mI)}
      \left[\norm{\bar{\mathcal{E}}^j_{\ell}(x) + \mathcal{E}^j_{\ell}(x)}\right].
   \end{align}
   Let 
   \begin{align}
      \label{eq:TUyQcvjVGXjffeyd}
      \bar{K} \eqdef \flr{\log\left(\frac{R^0}{4 \sqrt{\bar{\Delta}}}\right) / \left(-\log\left(1 - \frac{2 \gamma}{n}\right)\right)},
   \end{align}
   where $\bar{\Delta} > 0$ is a free parameter.
   Then
   \begin{align}
      \label{eq:BVNUOpDnYoTieBnA}
      \left(1 - \frac{2 \gamma}{n}\right)^{k} \norm{\mu^0_{\ell} - \bar \mu_{a^*_\ell}} \geq \left(1 - \frac{2 \gamma}{n}\right)^{k} \frac{R^0}{2} \geq 2 \sqrt{\bar{\Delta}}
   \end{align}
   for all $0 \leq k \leq \bar{K}.$
   Using mathematical induction, for all $0 \leq k \leq \bar{K},$ we assume that 
   \begin{align}
      &\norm{\mu^k_{\ell}} \leq R, \label{eq:R}\\
      &\norm{\mu^k_{\ell} - \bar \mu_{a^*_\ell}}^2 \leq \min_{b \neq a^*_\ell} \left[\norm{\mu^k_{\ell} - \bar \mu_b}^2 - \frac{\norm{\bar \mu_{a^*_\ell} - \bar \mu_b}^2}{2}\right] \label{eq:Delta1}, \\
      &\norm{\mu^{k}_{\ell} - \mu^{k}_{j}}^2 \geq \hat\Delta_2 \label{eq:Delta2} 
   \end{align}
   for all $\ell \neq j \in [n],$ where 
   \begin{align}
      \label{eq:kjdsqYLZ}
      R \eqdef 2 R^0, \quad E_1 \eqdef \frac{\delta_{\min}}{8 R}, \quad \hat\Delta_2 \eqdef 32 \log\left(3472 n R^4\right), \quad \bar{\Delta} \eqdef \frac{\hat\Delta_2}{2}.
   \end{align}
   For our choice of parameters, we prove the following useful lemma in Section~\ref{sec:auxlemmas}.
   \begin{restatable}{lemma}{LEMMAPARAMS}
      For our choice of parameters, 
      \begin{equation}
      \begin{aligned}
         \label{eq:tdbKKNEBxox}
         &\Delta_1 \geq \delta^2_{\max}, \quad \Delta_2 \geq \hat\Delta_2 \geq 32 \log(434 n R^3 / E_1), \quad \delta_{\min} \geq \sqrt{32 \log\left(434 R^3 m / E_1\right)}, \\
         &E_1 \leq \frac{R}{2}, \quad \frac{4 \sqrt{\bar{\Delta} \Delta_2}}{R^0} \geq 2 \sqrt{\hat\Delta_2}, \quad \Delta_3 \geq 2 \sqrt{\hat\Delta_2}, \quad E_1 \leq \frac{\sqrt{\hat\Delta_2}}{2}.
      \end{aligned}
      \end{equation}
   \end{restatable}
   The case $k = 0$ of the mathematical induction is true: i) \eqref{eq:R} follows from $\norm{\mu^0_{\ell}} = R^0;$ ii) \eqref{eq:Delta1} follows from \eqref{eq:zolMqjeooMG} and $\Delta_1 \geq \delta^2_{\max}$ (see \eqref{eq:tdbKKNEBxox}); iii) \eqref{eq:Delta2} follows from \eqref{eq:EvPlrGmTnwYCRQa} and $\hat\Delta_2 \leq \Delta_2$ (see \eqref{eq:tdbKKNEBxox}). We now prove the next step of the induction $k \to k + 1.$ For all $\ell \in [n],$ using Lemmas~\ref{lemma:bound1} and \ref{lemma:bound2}, 
   \begin{align}
      &\mathbb E_{x \sim \mathcal N(\mu^k_\ell,\mI)} \left[\norm{\bar{\mathcal{E}}^k_{\ell}(x) + \mathcal{E}^k_{\ell}(x)}\right] \leq \mathbb E_{x \sim \mathcal N(\mu^k_\ell,\mI)} \left(\norm{\bar{\mathcal{E}}^k_{\ell}(x)} + \norm{\mathcal{E}^k_{\ell}(x)}\right) \nonumber \\
      &\leq 62 \max\{R, R^3\} \mathbb E_{x \sim \mathcal N(\mu^k_\ell,\mI)} (\hat\tau^k_{a^*_\ell}(x) + \bar \tau_{a^*_\ell}(x) + \tau^k_{\ell}(x)) \label{eq:xdpFroas}
   \end{align}
   First,
   \begin{align}
      \label{eq:mLEYuytmsTVADUzUp}
      &\mathbb E_{x \sim \mathcal N(\mu^k_\ell,\mI)} \tau^k_{\ell}(x) = \sum_{i \in I_{a^*_{\ell}}, i \neq \ell} \mathbb E_{x \sim \mathcal N(\mu^k_\ell,\mI)} r_i^{k,a^*_{\ell}}(x),
   \end{align}
   where
   \begin{align*}
      \mathbb E_{x \sim \mathcal N(\mu^k_\ell,\mI)} r_i^{k,a^*_{\ell}}(x) 
      &\leq \mathbb E_{x \sim \mathcal N(\mu^k_\ell,\mI)} \left[\min\left\{\exp\left(\frac12\|x-\mu^k_{\ell}\|^2-\frac12\|x-\mu^k_i\|^2\right), 1\right\}\right] \\
      &\leq 2 \exp\left(-\frac{1}{32} \norm{\mu^k_{\ell} - \mu^k_{i}}^2\right)
   \end{align*}
   for all $i \in I_{a^*_{\ell}},$ $i \neq \ell,$ due to Lemma~\ref{lemma:dist_1}. Using \eqref{eq:Delta2},
   \begin{align*}
      \mathbb E_{x \sim \mathcal N(\mu^k_\ell,\mI)} r_i^{k,a^*_{\ell}}(x) 
      &\leq 2 \exp\left(-\frac{\hat\Delta_2}{32}\right).
   \end{align*}
   Substituting into \eqref{eq:mLEYuytmsTVADUzUp},
   \begin{align}
      \label{eq:RpwaEvNpBJdp}
      &\mathbb E_{x \sim \mathcal N(\mu^k_\ell,\mI)} \tau^k_{\ell}(x) \leq 2 n \exp\left(-\frac{\hat\Delta_2}{32}\right).
   \end{align}
   Similarly,
   \begin{align*}
      \mathbb E_{x \sim \mathcal N(\mu^k_\ell,\mI)} \hat\tau^k_{a^*_\ell}(x) \leq 2 n \exp\left(-\frac{\hat\Delta_2}{32}\right)
   \end{align*}
   because
   \begin{align*}
      \mathbb E_{x \sim \mathcal N(\mu^k_\ell,\mI)} r_i^{k}(x) &\leq \mathbb E_{x \sim \mathcal N(\mu^k_\ell,\mI)} \left[\min\left\{\exp\left(\frac12\|x-\mu^k_{\ell}\|^2-\frac12\|x-\mu^k_i\|^2\right), 1\right\}\right] \\
      &\leq 2 \exp\left(-\frac{1}{32} \norm{\mu^k_{\ell} - \mu^k_{i}}^2\right)
   \end{align*}
   for all $i \not\in I_{a^*_{\ell}}.$ Finally,
   \begin{align*}
      \mathbb E_{x \sim \mathcal N(\mu^k_\ell,\mI)} \bar \tau_{a^*_\ell}(x) = \sum_{b \in [m] , b \neq a^*_{\ell}} \mathbb E_{x \sim \mathcal N(\mu^k_\ell,\mI)} \bar r_b(x).
   \end{align*}
   Due to Lemma~\ref{lemma:dist_2},
   \begin{align*}
      \mathbb E_{x \sim \mathcal N(\mu^k_\ell,\mI)} \bar{r}_b(x) 
      &\leq \mathbb E_{x \sim \mathcal N(\mu^k_\ell,\mI)} \left[\min\left\{\exp\left(
-\frac12
\left(
\|x-\bar\mu_b\|^2-\|x-\bar\mu_{a^*_\ell}\|^2
\right)
\right), 1\right\}\right] \\
&\leq \exp\left(-\frac{\delta_{ab}^k}{2}\right) + 2\exp\left(-\frac{(\delta_{ab}^k)^2}{2\Delta_{ab}^2}
\right),
   \end{align*}
   where $\delta_{ab}^k
\eqdef
\frac{1}{2}\|\mu_\ell^k-\bar\mu_b\|^2
-
\frac{1}{2}\|\mu_\ell^k-\bar\mu_{a^*_\ell}\|^2$
and $
\Delta_{ab} \eqdef \|\bar\mu_{a^*_\ell}-\bar\mu_b\|.$
Using \eqref{eq:Delta1},
\begin{align*}
      &\mathbb E_{x \sim \mathcal N(\mu^k_\ell,\mI)} \bar{r}_b(x) 
\leq \exp\left(-\frac{\Delta_{ab}^2}{8}\right) + 2\exp\left(-\frac{\Delta_{ab}^2}{32}\right) \leq 3\exp\left(-\frac{\Delta_{ab}^2}{32}\right) \leq 3\exp\left(-\frac{\delta_{\min}^2}{32}\right)
   \end{align*}
and 
\begin{align*}
   \mathbb E_{x \sim \mathcal N(\mu^k_\ell,\mI)} \bar \tau_{a^*_\ell}(x) \leq 3 m \exp\left(-\frac{\delta_{\min}^2}{32}\right).
\end{align*}
Notice that the derived bounds are also valid for $k - 1, \dots, 0.$ Substituting the bounds into \eqref{eq:xdpFroas}, we get 
\begin{align*}
   &\norm{E_{\ell}^k} \leq \max_{j \in \{0, \dots, k\}} \mathbb E_{x \sim \mathcal N(\mu^j_\ell,\mI)} \left[\norm{\bar{\mathcal{E}}^j_{\ell}(x) + \mathcal{E}^j_{\ell}(x)}\right] \\
   &\leq 62 \max\{R, R^3\} \left(4 n \exp\left(-\hat\Delta_2 / 32\right) + 3 m \exp\left(-\delta_{\min}^2 / 32\right)\right) \\
   &\leq 434 \max\{R, R^3\} \max\left\{n \exp\left(-\hat\Delta_2 / 32\right), m \exp\left(-\delta_{\min}^2 / 32\right)\right\}.
\end{align*}
Due to $R \geq 1,$ $\hat\Delta_2 \geq 32 \log(434 n R^3 / E_1)$ (see \eqref{eq:tdbKKNEBxox}), and $\delta_{\min} \geq \sqrt{32 \log\left(434 R^3 m / E_1\right)}$ (see \eqref{eq:tdbKKNEBxox}). Thus
\begin{align}
   \label{eq:KBsuZNNToRYMmnxS}
   \norm{E_{\ell}^k} \leq \max_{j \in \{0, \dots, k\}} \mathbb E_{x \sim \mathcal N(\mu^j_\ell,\mI)} \left[\norm{\bar{\mathcal{E}}^j_{\ell}(x) + \mathcal{E}^j_{\ell}(x)}\right] \leq E_1.
\end{align}
   Using \eqref{eq:rwPVzXoMnmQzZXKel} and the choice of $\gamma$ and $R$, $\norm{\bar \mu_{a^*_\ell}} \leq \bar{R} \leq \frac{R}{2},$ $\norm{\mu^0_{\ell}} = R^0 \leq \frac{R}{2},$ we get
   \begin{align*}
      \norm{\mu^{k+1}_{\ell}} \leq \max\left\{\norm{\bar \mu_{a^*_\ell}}, \norm{\mu^0_{\ell}}\right\} + \norm{E_{\ell}^k} \leq \frac{R}{2} + E_1 \leq R
   \end{align*}
   since $E_1 \leq \frac{R}{2}$ (see \eqref{eq:tdbKKNEBxox}).
   For all $i, j \in [n],$ using \eqref{eq:rwPVzXoMnmQzZXKel} and \eqref{eq:KBsuZNNToRYMmnxS},
   \begin{align*}
      &\norm{\mu^{k+1}_{i} - \mu^{k+1}_{j}} \\
      &=\norm{\bar \mu_{a^*_i} - \bar \mu_{a^*_j} + \left(1 - \frac{2 \gamma}{n}\right)^{k+1} (\mu^0_{i} - \mu^0_{j} - (\bar \mu_{a^*_i} - \bar \mu_{a^*_j})) - E_{i}^k + E_{j}^k} \\
      &\geq \norm{\bar \mu_{a^*_i} - \bar \mu_{a^*_j} + \left(1 - \frac{2 \gamma}{n}\right)^{k+1} (\mu^0_{i} - \mu^0_{j} - (\bar \mu_{a^*_i} - \bar \mu_{a^*_j}))} - 2 E_1.
   \end{align*}
   If $a^*_i = a^*_j,$ then
   \begin{align*}
      \norm{\mu^{k+1}_{i} - \mu^{k+1}_{j}} \geq \left(1 - \frac{2 \gamma}{n}\right)^{k+1}\norm{\mu^0_{i} - \mu^0_{j}} - 2 E_1 \geq \frac{4 \sqrt{\bar{\Delta} \Delta_2}}{R^0} - 2 E_1,
   \end{align*}
   where we use \eqref{eq:BVNUOpDnYoTieBnA} and \eqref{eq:EvPlrGmTnwYCRQa}. If $a^*_i \neq a^*_j,$ then
   \begin{align*}
      &\norm{\mu^{k+1}_{i} - \mu^{k+1}_{j}} \\
      &\geq \norm{\bar \mu_{a^*_i} - \bar \mu_{a^*_j} + \left(1 - \frac{2 \gamma}{n}\right)^{k+1} (\mu^0_{i} - \mu^0_{j} - (\bar \mu_{a^*_i} - \bar \mu_{a^*_j}))} - 2 E_1.
   \end{align*}
   Notice that $\bar \mu_{a^*_i} + \left(1 - \frac{2 \gamma}{n}\right)^{k+1} (\mu^0_{i} - \bar \mu_{a^*_i}) \in [\mu^0_i, \bar{\mu}_{a^*_i}]$ for all $i \in [n].$ Using \eqref{eq:yZUzuDZUiFFPhgyiwgy}, we get
   \begin{align*}
      &\norm{\mu^{k+1}_{i} - \mu^{k+1}_{j}} \geq \Delta_3 - 2 E_1.
   \end{align*}
   In total,
   \begin{align*}
      &\norm{\mu^{k+1}_{i} - \mu^{k+1}_{j}} \geq \min\left\{\frac{4 \sqrt{\bar{\Delta} \Delta_2}}{R^0}, \Delta_3\right\} - 2 E_1
   \end{align*}
   and 
   \begin{align*}
      \norm{\mu^{k+1}_{i} - \mu^{k+1}_{j}}^2 \geq \hat\Delta_2
   \end{align*}
   due to $\frac{4 \sqrt{\bar{\Delta} \Delta_2}}{R^0} \geq 2 \sqrt{\hat\Delta_2}, \Delta_3 \geq 2 \sqrt{\hat\Delta_2}, E_1 \leq \frac{\sqrt{\hat\Delta_2}}{2}$ (see \eqref{eq:tdbKKNEBxox}).
   It remains to prove \eqref{eq:Delta1} for the step $k \to k + 1.$ Indeed, for all $b \neq a^*_\ell,$
\begin{align*}
D_{\ell,ab}^{k+1}
&\eqdef
\|\mu_\ell^{k+1}-\bar\mu_b\|^2
-
\|\mu_\ell^{k+1}-\bar\mu_{a^*_\ell}\|^2 \\
&=
\|\mu_\ell^{k} - \gamma \nabla_{\mu^k_\ell}\mathcal L_{\mathrm{rev}}(\mu^k) - \bar\mu_b\|^2
-
\|\mu_\ell^{k} - \gamma \nabla_{\mu^k_\ell}\mathcal L_{\mathrm{rev}}(\mu^k) - \bar\mu_{a^*_\ell}\|^2 \\
&=
D_{\ell,ab}^{k}
+
2\gamma
\left\langle
\nabla_{\mu^k_\ell}\mathcal L_{\mathrm{rev}}(\mu^k),
\bar\mu_b-\bar\mu_{a^*_\ell}
\right\rangle \\
&=
D_{\ell,ab}^{k}
+
\frac{4\gamma}{n}
\mathbb E_{x \sim \mathcal N(\mu^k_\ell,\mI)}
\left[
\left\langle
\mu^k_\ell-\bar\mu_{a^*_\ell}
+
\bar{\mathcal{E}}^k_{\ell}(x) + \mathcal{E}^k_{\ell}(x),
\bar\mu_b-\bar\mu_{a^*_\ell}
\right\rangle
\right] \\
&=
D_{\ell,ab}^{k}
+
\frac{4\gamma}{n}
\left\langle
\mu^k_\ell-\bar\mu_{a^*_\ell},
\bar\mu_b-\bar\mu_{a^*_\ell}
\right\rangle
+
\frac{4\gamma}{n}
\mathbb E_{x \sim \mathcal N(\mu^k_\ell,\mI)}
\left[
\left\langle
\bar{\mathcal{E}}^k_{\ell}(x) + \mathcal{E}^k_{\ell}(x),
\bar\mu_b-\bar\mu_{a^*_\ell}
\right\rangle
\right] \\
&=
D_{\ell,ab}^{k}
+
\frac{2\gamma}{n}
\left(
\|\bar\mu_b-\bar\mu_{a^*_\ell}\|^2
-
D_{\ell,ab}^{k}
\right)
+
\frac{4\gamma}{n}
\mathbb E_{x \sim \mathcal N(\mu^k_\ell,\mI)}
\left[
\left\langle
\bar{\mathcal{E}}^k_{\ell}(x) + \mathcal{E}^k_{\ell}(x),
\bar\mu_b-\bar\mu_{a^*_\ell}
\right\rangle
\right] \\
&=
\left(1-\frac{2\gamma}{n}\right)
D_{\ell,ab}^{k}
+
\frac{2\gamma}{n}
\|\bar\mu_b-\bar\mu_{a^*_\ell}\|^2
+
\frac{4\gamma}{n}
\mathbb E_{x \sim \mathcal N(\mu^k_\ell,\mI)}
\left[
\left\langle
\bar{\mathcal{E}}^k_{\ell}(x) + \mathcal{E}^k_{\ell}(x),
\bar\mu_b-\bar\mu_{a^*_\ell}
\right\rangle
\right].
\end{align*}
Thus 
\begin{align*}
   D_{\ell,ab}^{k+1} - \|\bar\mu_b-\bar\mu_{a^*_\ell}\|^2 = \left(1-\frac{2\gamma}{n}\right)
\left(D_{\ell,ab}^{k} - \|\bar\mu_b-\bar\mu_{a^*_\ell}\|^2\right)
+
\frac{4\gamma}{n}
\mathbb E_{x \sim \mathcal N(\mu^k_\ell,\mI)}
\left[
\left\langle
\bar{\mathcal{E}}^k_{\ell}(x) + \mathcal{E}^k_{\ell}(x),
\bar\mu_b-\bar\mu_{a^*_\ell}
\right\rangle
\right]
\end{align*}
and 
\begin{align*}
   &D_{\ell,ab}^{k+1} - \|\bar\mu_b-\bar\mu_{a^*_\ell}\|^2 = \left(1-\frac{2\gamma}{n}\right)^{k+1}
\left(D_{\ell,ab}^{0} - \|\bar\mu_b-\bar\mu_{a^*_\ell}\|^2\right) \\
&\quad + \frac{4\gamma}{n}
\sum_{j=0}^{k} \left(1 - \frac{2 \gamma}{n}\right)^{k - j} \mathbb E_{x \sim \mathcal N(\mu^j_\ell,\mI)}
\left[
\left\langle
\bar{\mathcal{E}}^j_{\ell}(x) + \mathcal{E}^j_{\ell}(x),
\bar\mu_b-\bar\mu_{a^*_\ell}
\right\rangle
\right].
\end{align*}
Notice that
\begin{align*}
   D_{\ell,ab}^{0}
\eqdef
\|\mu_\ell^{0}-\bar\mu_b\|^2
-
\|\mu_\ell^{0}-\bar\mu_{a^*_\ell}\|^2 \geq \Delta_1 \geq \delta^2_{\max} \geq \|\bar\mu_b-\bar\mu_{a^*_\ell}\|^2
\end{align*}
due to \eqref{eq:zolMqjeooMG} and $\Delta_1 \geq \delta^2_{\max}$ (see \eqref{eq:tdbKKNEBxox}). Using this inequality,
\begin{align*}
D_{\ell,ab}^{k+1} - \|\bar\mu_b-\bar\mu_{a^*_\ell}\|^2 
&\geq \frac{4\gamma}{n}
\sum_{j=0}^{k} \left(1 - \frac{2 \gamma}{n}\right)^{k - j} \mathbb E_{x \sim \mathcal N(\mu^j_\ell,\mI)}
\left[
\left\langle
\bar{\mathcal{E}}^j_{\ell}(x) + \mathcal{E}^j_{\ell}(x),
\bar\mu_b-\bar\mu_{a^*_\ell}
\right\rangle
\right] \\
& \geq - \frac{4\gamma}{n}
\sum_{j=0}^{k} \left(1 - \frac{2 \gamma}{n}\right)^{k - j} 2 R \mathbb E_{x \sim \mathcal N(\mu^j_\ell,\mI)}
\left(\norm{\bar{\mathcal{E}}^j_{\ell}(x) + \mathcal{E}^j_{\ell}(x)}\right) \\
& \geq - 4 R \max_{j \in \{0, \dots, k\}} \mathbb E_{x \sim \mathcal N(\mu^j_\ell,\mI)}
\left(\norm{\bar{\mathcal{E}}^j_{\ell}(x) + \mathcal{E}^j_{\ell}(x)}\right) \\
& \overset{\eqref{eq:KBsuZNNToRYMmnxS}}{\geq} - 4 R E_1.
\end{align*}
Due to \eqref{eq:kjdsqYLZ} and $\delta_{\min} \geq 1$, $4 R E_1 \leq \frac{\delta^2_{\min}}{2}$ and 
\begin{align*}
   \|\mu_\ell^{k+1}-\bar\mu_b\|^2 - \|\mu_\ell^{k+1}-\bar\mu_{a^*_\ell}\|^2 \geq \frac{\|\bar\mu_b-\bar\mu_{a^*_\ell}\|^2}{2}
\end{align*}
for all $b \neq a^*_{\ell},$ and we have proved the mathematical induction for $k \to k + 1.$

Using \eqref{eq:rwPVzXoMnmQzZXKel}, \eqref{eq:TUyQcvjVGXjffeyd}, and \eqref{eq:KBsuZNNToRYMmnxS},
\begin{align*}
   \norm{\mu^{\bar{K}}_{\ell} - \bar \mu_{a^*_\ell}} 
   &\leq \left(1 - \frac{2 \gamma}{n}\right)^{\bar{K}} \norm{\mu^0_{\ell} - \bar \mu_{a^*_\ell}} + \norm{E_{\ell}^{\bar{K} - 1}} \\
   &\leq \frac{8 \sqrt{\bar{\Delta}}}{R^0} \norm{\mu^0_{\ell} - \bar \mu_{a^*_\ell}} + E_1 \leq 16 \sqrt{\bar{\Delta}} + E_1
\end{align*}
because $\norm{\mu^0_{\ell}} = R^0$ and $\norm{\bar \mu_{a^*_\ell}} \leq R^0.$ Due to \eqref{eq:tdbKKNEBxox} and \eqref{eq:kjdsqYLZ},
\begin{align}
   \label{eq:BUOoEmxScVtFXWFu}
   \norm{\mu^{\bar{K}}_{\ell} - \bar \mu_{a^*_\ell}} \leq 16 \sqrt{\bar{\Delta}} + E_1 \leq 17 \sqrt{\bar{\Delta}}
\end{align}
for all $\ell \in [n].$ 
Let $I_a = \{i_{a,1}, \dots, i_{a,n_a}\}.$ The inequality ensures that after $\bar{K}$ steps, each component converges to a small-$\bar{\Delta}$ neighborhood of the corresponding target component.

\textbf{(Local convergence stage).} The proof now would be similar to that of Theorem~\ref{thm:main_one}, but with careful and technical error control.
For all $a \in A^*,$ we define 
\begin{align*}
   \mZ^{k}_a &\eqdef
   \begin{bmatrix}
      (\mu^{k}_{i_{a,1}} - \bar \mu_{a})^\top \\
      \vdots \\
      (\mu^{k}_{i_{a,n_a}} - \bar \mu_{a})^\top \\
   \end{bmatrix}.
\end{align*}
Then,
\begin{align*}
   \mZ^{\bar{K}}_a &=
   \begin{bmatrix}
      (\mu^{\bar{K}}_{i_{a,1}} - \bar \mu_{a})^\top \\
      \vdots \\
      (\mu^{\bar{K}}_{i_{a,n_a}} - \bar \mu_{a})^\top \\
   \end{bmatrix} 
   =
   \begin{bmatrix}
      \left(\left(1 - \frac{2 \gamma}{n}\right)^{\bar{K}} (\mu^0_{i_{a,1}} - \bar \mu_{a}) - E_{i_{a,1}}^{\bar{K} - 1}\right)^\top \\
      \vdots \\
      \left(\left(1 - \frac{2 \gamma}{n}\right)^{\bar{K}} (\mu^0_{i_{a,n_a}} - \bar \mu_{a}) - E_{i_{a,n_a}}^{\bar{K} - 1}\right)^\top \\
   \end{bmatrix} \\
   &=
   \left(1 - \frac{2 \gamma}{n}\right)^{\bar{K}} \begin{bmatrix}
      \left(\mu^0_{i_{a,1}}\right)^\top \\
      \vdots \\
      \left(\mu^0_{i_{a,n_a}}\right)^\top \\
   \end{bmatrix}
   -
   \left(1 - \frac{2 \gamma}{n}\right)^{\bar{K}} \begin{bmatrix}
      \left(\bar \mu_{a}\right)^\top \\
      \vdots \\
      \left(\bar \mu_{a}\right)^\top \\
   \end{bmatrix}
   -
   \begin{bmatrix}
      \left(E_{i_{a,1}}^{\bar{K} - 1}\right)^\top \\
      \vdots \\
      \left(E_{i_{a,n_a}}^{\bar{K} - 1}\right)^\top \\
   \end{bmatrix}.
\end{align*}
Thus,
\begin{align*}
   \norm{\mZ^{\bar{K}}_a}_{\textnormal{op}} 
   &\leq \left(1 - \frac{2 \gamma}{n}\right)^{\bar{K}} \norm{\begin{bmatrix}
      \left(\mu^0_{i_{a,1}}\right)^\top \\
      \vdots \\
      \left(\mu^0_{i_{a,n_a}}\right)^\top \\
   \end{bmatrix}}_{\textnormal{op}}
   +
   \left(1 - \frac{2 \gamma}{n}\right)^{\bar{K}} \norm{\begin{bmatrix}
      \left(\bar \mu_{a}\right)^\top \\
      \vdots \\
      \left(\bar \mu_{a}\right)^\top \\
   \end{bmatrix}}_{\textnormal{op}}
   +
   \norm{\begin{bmatrix}
      \left(E_{i_{a,1}}^{\bar{K} - 1}\right)^\top \\
      \vdots \\
      \left(E_{i_{a,n_a}}^{\bar{K} - 1}\right)^\top \\
   \end{bmatrix}}_{\textnormal{op}} \\
   &\leq \left(1 - \frac{2 \gamma}{n}\right)^{\bar{K}} \norm{\begin{bmatrix}
      \left(\mu^0_{i_{a,1}}\right)^\top \\
      \vdots \\
      \left(\mu^0_{i_{a,n_a}}\right)^\top \\
   \end{bmatrix}}_{\textnormal{op}}
   +
   \sqrt{n_a} \left(1 - \frac{2 \gamma}{n}\right)^{\bar{K}} \norm{\bar \mu_{a}}
   +
   \sqrt{n_a} E_1
\end{align*}
Using the choice of $\bar{K},$ $\norm{\bar \mu_{a}} \leq \bar{R},$ and \eqref{eq:KfgolNiUc},
\begin{equation}
\label{eq:XgWawiQIXsOKPBdfn}
\begin{aligned}
   \norm{\mZ^{\bar{K}}_a}_{\textnormal{op}}
   &\leq \frac{8 \sqrt{\bar{\Delta}} \Delta_4}{R^0} + \frac{8 \sqrt{n_a \bar{\Delta}} \bar{R}}{R^0} + \sqrt{n_a} E_1 \\
   &\leq \bar{z} \eqdef \frac{8 \sqrt{\bar{\Delta}} \Delta_4}{R^0} + \frac{8 \sqrt{n \bar{\Delta}} \bar{R}}{R^0} + \sqrt{n} E_1
\end{aligned}
\end{equation}
for all $a \in A^*.$ For all $a \in A^*,$ consider the subgroup $I_a.$ Due to \eqref{eq:grad_1}, for all $\ell \in I_a,$
\begin{align*}
   \mu^{k+1}_{\ell} - \bar\mu_{a}
   &= \mu^{k}_{\ell} - \bar\mu_{a} - \gamma \nabla_{\mu^k_\ell}\mathcal L_{\mathrm{rev}}(\mu^k) \\
   &= \mu^{k}_{\ell} - \bar\mu_{a} - \frac{2 \gamma}{n} \mathbb E_{x \sim \mathcal N(\mu^k_\ell,\mI)} \left[m^k_{a}(x) - \bar\mu_{a} + \mC^k_{a}(x)\bigl(\mu^k_\ell-m^k_{a}(x)\bigr)\right] - \frac{2 \gamma}{n} \mathbb E_{x \sim \mathcal N(\mu^k_\ell,\mI)} \left[\mathcal{E}^k_{\ell}(x)\right] \\
   &= \mu^{k}_{\ell} - \bar\mu_{a} \\
   &\quad - \frac{2 \gamma}{n} \mathbb E_{x \sim \mathcal N(\mu^k_\ell,\mI)} \left[\sum_{i\in I_{a}} r_i^{k,a}(x)(\mu^k_i - \bar\mu_{a}) + \sum_{i\in I_{a}} r_i^{k,a}(x) \bigl(\mu^k_i-m^k_{a}(x)\bigr) \bigl(\mu^k_i-m^k_{a}(x)\bigr)^\top \bigl(\mu^k_\ell-m^k_{a}(x)\bigr)\right] \\
   &\quad - \frac{2 \gamma}{n} \mathbb E_{x \sim \mathcal N(\mu^k_\ell,\mI)} \left[\mathcal{E}^k_{\ell}(x)\right].
\end{align*}
Since $\sum_{i\in I_{a}} r_i^{k,a}(x) b \bigl(\mu^k_i-m^k_{a}(x)\bigr)^\top = 0$ for all $b \in \R^d,$
\begin{align*}
   \mu^{k+1}_{\ell} - \bar\mu_{a}
   &= \mu^{k}_{\ell} - \bar\mu_{a} \\
   &\quad - \frac{2 \gamma}{n} \mathbb E_{x \sim \mathcal N(\mu^k_\ell,\mI)} \left[\sum_{i\in I_{a}} r_i^{k,a}(x)(\mu^k_i - \bar\mu_{a}) + \sum_{i\in I_{a}} r_i^{k,a}(x) \bigl(\mu^k_i - \bar\mu_{a}\bigr) \bigl(\mu^k_i - m^k_{a}(x)\bigr)^\top \bigl(\mu^k_\ell-m^k_{a}(x)\bigr)\right] \\
   &\quad - \frac{2 \gamma}{n} \mathbb E_{x \sim \mathcal N(\mu^k_\ell,\mI)} \left[\mathcal{E}^k_{\ell}(x)\right] \\
   &= \mu^{k}_{\ell} - \bar\mu_{a} \\
   &\quad - \frac{2 \gamma}{n}  \sum_{i\in I_{a}} \mathbb E_{x \sim \mathcal N(\mu^k_\ell,\mI)} \left[r_i^{k,a}(x) + r_i^{k,a}(x) \bigl(\mu^k_i - m^k_{a}(x)\bigr)^\top \bigl(\mu^k_\ell-m^k_{a}(x)\bigr) \right] \bigl(\mu^k_i - \bar\mu_{a}\bigr) \\
   &\quad - \frac{2 \gamma}{n} \mathbb E_{x \sim \mathcal N(\mu^k_\ell,\mI)} \left[\mathcal{E}^k_{\ell}(x)\right].
\end{align*}
We can conclude that 
\begin{align*}
   \mu^{k+1}_{\ell} - \bar\mu_{a}
   &= \mu^{k}_{\ell} - \bar\mu_{a} - \frac{2 \gamma}{n} \sum_{i \in I_{a}} \mB^{k}_{a,\ell i} (\mu^{k}_{i} - \bar\mu_{a}) - \frac{2 \gamma}{n} \mathbb E_{x \sim \mathcal N(\mu^k_\ell,\mI)} \left[\mathcal{E}^k_{\ell}(x)\right],
\end{align*}
and 
\begin{align}
   \label{eq:BQUgXvTShGIDzvYJH}
   \mZ^{k+1}_a = \left(\mI - \frac{2 \gamma}{n} \mB_a^k\right) \mZ^{k}_a - \frac{2 \gamma}{n} \mE^k_a,
\end{align}
where $\mE^k_a = \left[\mathbb E_{x \sim \mathcal N(\mu^k_{i_{a,1}},\mI)} \left[\mathcal{E}^k_{i_{a,1}}(x)\right]^\top; \dots; \mathbb E_{x \sim \mathcal N(\mu^k_{i_{a,n_a}},\mI)} \left[\mathcal{E}^k_{i_{a,n_a}}(x)\right]^\top\right]$ and $\mB^{k}_a \in \R^{n_{a} \times n_{a}}$ such that
\begin{equation}
\label{eq:aegiybPjoBoRKeigLO}
\begin{aligned}
   \mB^{k}_{a, \ell j} 
   &\eqdef \mathbb E_{x \sim \mathcal N(\mu^k_{i_{a,\ell}},\mI)}
   \left[r_{i_{a,j}}^{k,a}(x) + r_{i_{a,j}}^{k,a}(x) \bigl(\mu^k_{i_{a,j}}-m^k_{a}(x)\bigr)^\top \bigl(\mu^k_{i_{a,\ell}}-m^k_{a}(x)\bigr)\right] \\
   &=\int_{x \in \R^d}
   \left[r_{i_{a,j}}^{k,a}(x) + r_{i_{a,j}}^{k,a}(x) \bigl(\mu^k_{i_{a,j}}-m^k_{a}(x)\bigr)^\top \bigl(\mu^k_{i_{a,\ell}}-m^k_{a}(x)\bigr)\right] \varphi(x - \mu^k_{i_{a,\ell}}) \, dx\\
   &=n_{a} \int_{x \in \R^d}
   \left[r_{i_{a,j}}^{k,a}(x) r^{k,a}_{{i_{a,\ell}}} (x) + r_{i_{a,j}}^{k,a}(x) r^{k,a}_{{i_{a,\ell}}} (x) \bigl(\mu^k_{i_{a,j}}-m^k_{a}(x)\bigr)^\top \bigl(\mu^k_{i_{a,\ell}}-m^k_{a}(x)\bigr)\right] p^{k,a}_{\mu}(x) \, dx \\
   &=n_{a} \mathbb E_{x \sim p^{k,a}_{\mu}}
   \left[r_{i_{a,j}}^{k,a}(x) r^{k,a}_{{i_{a,\ell}}} (x) + r_{i_{a,j}}^{k,a}(x) r^{k,a}_{{i_{a,\ell}}} (x) \bigl(\mu^k_{i_{a,j}}-m^k_{a}(x)\bigr)^\top \bigl(\mu^k_{i_{a,\ell}}-m^k_{a}(x)\bigr)\right],
\end{aligned}
\end{equation}
where $p^{k,a}_{\mu}(x) \eqdef \frac{1}{n_{a}} \sum_{i \in I_{a}} \varphi(x - \mu^k_i).$ Thus, $\mB^k_a$ is symmetric and non-negative definite. Let us define $R^k_a \eqdef \max_{i \in I_a} \norm{\mu^k_i - \bar\mu_a}$ for all $a \in A^*.$ For all $v \in \R^{n_{a}},$
\begin{align*}
   v^\top \mB_a^k v 
   &= n_{a} \int_{x \in \R^d}
   \left[\left(\sum_{j=1}^{n_{a}} r_{i_{a,j}}^{k,a}(x) v_j\right)^2 + \norm{\sum_{j=1}^{n_{a}} r_{i_{a,j}}^{k,a}(x) v_j (\mu^k_{i_{a,j}}-m^k_{a}(x))}^2\right] p^{k,a}_{\mu}(x) \, dx \\
   &\leq n_{a} \int_{x \in \R^d}
   \left[\sum_{j=1}^{n_{a}} r_{i_{a,j}}^{k,a}(x) v_j^2 + \sum_{j=1}^{n_{a}} r_{i_{a,j}}^{k,a}(x) v_j^2 \norm{\mu^k_{i_{a,j}}-m^k_{a}(x)}^2\right] p^{k,a}_{\mu}(x) \, dx \\
   &=
   \sum_{j=1}^{n_{a}} \left[ \int_{x \in \R^d} n_{a} p^{k,a}_{\mu}(x) r_{i_{a,j}}^{k,a}(x) \, dx\right] v_j^2 + \sum_{j=1}^{n_{a}} \left[ \int_{x \in \R^d} \norm{\mu^k_{i_{a,j}}-m^k_{a}(x)}^2 n_{a} p^{k,a}_{\mu}(x) r_{i_{a,j}}^{k,a}(x) \, dx \right] v_j^2 \\
   &\leq 
   \sum_{j=1}^{n_{a}} v_j^2 + 4 (R_a^k)^2 \sum_{j=1}^{n_{a}} v_j^2 = (1 + 4 (R_a^k)^2) \norm{v}^2
   \end{align*}
   since $n_{a} p^{k,a}_{\mu}(x) r_{i_{a,j}}^{k,a}(x) = \varphi(x - \mu^k_{i_{a,j}}),$ and
   \[
      \norm{\mu^k_{i_{a,j}} - m^k_{a}(x)}^2 = \norm{\mu^k_{i_{a,j}} - \bar\mu_a - (m^k_{a}(x) - \bar\mu_a)}^2 \leq 4 (R_a^k)^2.
   \]
Thus,
\begin{align*}
   \norm{\mB_a^k}_{\textnormal{op}} \leq 1 + 4 (R_a^k)^2
\end{align*}
for all $k \geq 0$ and $a \in A^*.$ Using mathematical induction, for all $\bar{K} \leq k \leq \hat{K},$ we assume that
\begin{align}
   &\norm{\mZ^{k}_a}_{\textnormal{op}} \leq 2 \bar{z}, \label{eq:Z} 
\end{align}
for all $a \in A^*,$
where
\begin{align}
   \label{eq:wkBVdTRSfZPDiUDTm}
   \hat{K} \eqdef \bar{K} + \ceil{\frac{n \min\{n, d\} \max\left\{4 n, 8 \bar{z}^2\right\}}{\gamma \varepsilon}}.
\end{align}
We also define 
\begin{align}
   \label{eq:gzBeKKVwmhdsVvkWWVK}
   E_2 \eqdef \min\left\{\frac{n \bar{z}}{2 \gamma \hat{K}}, \frac{\varepsilon^2}{16 \min\{n, d\} \sqrt{\min\{n, d\}} \bar{z}} \min\left\{\frac{1}{4 n}, \frac{1}{8 \bar{z}^2}\right\}, \sqrt{\frac{n \varepsilon^2}{8 \gamma \min\{n, d\}} \min\left\{\frac{1}{4 n}, \frac{1}{8 \bar{z}^2}\right\}}\right\}.
\end{align}
We consider the following lemma, proved in Section~\ref{sec:auxlemmas}.
\begin{restatable}{lemma}{LEMMAPARAMSTWO}
For our choice of parameters, 
\begin{equation}
\label{eq:VheRdmk}
\begin{aligned}
   &\delta_{\min} \geq 8 \bar{z}, \quad \bar{z} \geq 1,\\
   &\bar{z} \leq \tilde{z} \eqdef \hatz,\\
   &\delta_{\min} \geq \sqrt{128 \log\left(\frac{156 \sqrt{n} R^3 \max\left\{n, m\right\}}{E_2}\right)}, \\
   &\hat{K} \leq \tilde{K} \eqdef \hatK, \\
   &R \geq 2 \bar{z} + \bar{R}.
\end{aligned}
\end{equation}
\end{restatable}
For $k = \bar{K}$, the induction hypothesis holds due to \eqref{eq:XgWawiQIXsOKPBdfn}. We now prove the next step of the induction.
Due to \eqref{eq:Z} and $R \geq 2 \bar{z} + \bar{R}$ (see \eqref{eq:VheRdmk}),
\begin{align*}
   \norm{\mu^k_{\ell} - \bar\mu_{a}} \leq 2 \bar{z}
\end{align*}
and 
\begin{align}
   \label{eq:BYoUQZrvqr}
   \norm{\mu^k_{\ell}} \leq R
\end{align}
for all $a \in A^*$ and $\ell \in I_a.$ Thus, for all $i \not\in I_a,$
\begin{equation}
\label{eq:eCdJeeUtKACLbQPRWOi}
\begin{aligned}
   &\norm{\mu^k_{\ell} - \mu^k_i} = \norm{(\bar\mu_{a^*_{\ell}} - \bar\mu_{a^*_i}) + (\mu^k_{\ell} - \bar\mu_{a^*_{\ell}}) - (\mu^k_i - \bar\mu_{a^*_i})} \\
   &\geq \norm{\bar\mu_{a^*_{\ell}} - \bar\mu_{a^*_i}} - \norm{\mu^k_{\ell} - \bar\mu_{a^*_{\ell}}} - \norm{\mu^k_i - \bar\mu_{a^*_i}} 
   \geq \delta_{\min} - 4 \bar{z} \geq \frac{\delta_{\min}}{2}
\end{aligned}
\end{equation}
since $\delta_{\min} \geq 8 \bar{z}$ (see \eqref{eq:VheRdmk}).
Similarly, for all $a \neq b \in [m]$ and for all $\ell \in I_a,$
\begin{equation}
\label{eq:WXZaIXLfmWzKu}
\begin{aligned}
   &\frac{1}{2}\|\mu_\ell^k-\bar\mu_b\|^2 - \frac{1}{2}\|\mu_\ell^k-\bar\mu_{a}\|^2 =\frac{1}{2}\|\mu_\ell^k - \bar\mu_a - (\bar\mu_b - \bar\mu_a)\|^2 - \frac{1}{2}\|\mu_\ell^k-\bar\mu_{a}\|^2 \\
   &=-\inp{\mu_\ell^k - \bar\mu_a}{\bar\mu_b - \bar\mu_a} + \frac{1}{2}\|\bar\mu_b - \bar\mu_a\|^2 \geq \frac{1}{2}\|\bar\mu_b - \bar\mu_a\|^2 - \norm{\mu_\ell^k - \bar\mu_a} \norm{\bar\mu_b - \bar\mu_a} \\
   &\geq \frac{1}{4}\|\bar\mu_b - \bar\mu_a\|^2.
\end{aligned}
\end{equation}
because $\frac{1}{4}\|\bar\mu_b - \bar\mu_a\| \geq \frac{1}{4} \delta_{\min} \geq 2 \bar{z} \geq \norm{\mu_\ell^k - \bar\mu_a}.$ 
For all $a \in A^*,$
\begin{equation}
\label{eq:vjwBAjebeuMpvpliqDeV}
\begin{aligned}
   &\norm{\mE^k_a}_{F} \leq \sqrt{n_a} \max_{\ell \in I_a} \norm{\mathbb E_{x \sim \mathcal N(\mu^k_\ell,\mI)} \left[\mathcal{E}^k_{\ell}(x)\right]} \leq \sqrt{n_a} \max_{\ell \in I_a} \mathbb E_{x \sim \mathcal N(\mu^k_\ell,\mI)} \left[\norm{\mathcal{E}^k_{\ell}(x)}\right] \\
   &\leq 52 \sqrt{n} \max\{R, R^3\} \max_{\ell \in I_a} \left(\mathbb E_{x \sim \mathcal N(\mu^k_\ell,\mI)} \hat\tau^k_{a}(x) +  \mathbb E_{x \sim \mathcal N(\mu^k_\ell,\mI)} \bar \tau_{a}(x)\right)
\end{aligned}
\end{equation}
where we use Lemma~\ref{lemma:bound1}, \eqref{eq:BYoUQZrvqr}, and $n_a \leq n.$ Using Lemma~\ref{lemma:dist_1},
   \begin{align*}
      &\mathbb E_{x \sim \mathcal N(\mu^k_\ell,\mI)} \hat\tau^k_{a}(x) = \sum_{i \in [n], i \not\in I_a} \mathbb E_{x \sim \mathcal N(\mu^k_\ell,\mI)} r_i^{k}(x) \\
      &\leq \sum_{i \in [n], i \not\in I_a} \mathbb E_{x \sim \mathcal N(\mu^k_\ell,\mI)} \left[\min\left\{\exp\left(\frac12\|x-\mu^k_{\ell}\|^2-\frac12\|x-\mu^k_i\|^2\right), 1\right\}\right] \\
      &\leq 2 \sum_{i \in [n], i \not\in I_a} \exp\left(-\frac{1}{32} \norm{\mu^k_{\ell} - \mu^k_{i}}^2\right) \overset{\eqref{eq:eCdJeeUtKACLbQPRWOi}}{\leq} 2 n \exp\left(-\frac{\delta^2_{\min}}{128}\right).
   \end{align*}
   Next,
   \begin{align*}
      \mathbb E_{x \sim \mathcal N(\mu^k_\ell,\mI)} \bar \tau_{a}(x) = \sum_{b \in [m] , b \neq a} \mathbb E_{x \sim \mathcal N(\mu^k_\ell,\mI)} \bar r_b(x).
   \end{align*}
   Due to Lemma~\ref{lemma:dist_2},
   \begin{align*}
      \mathbb E_{x \sim \mathcal N(\mu^k_\ell,\mI)} \bar{r}_b(x) 
      &\leq \mathbb E_{x \sim \mathcal N(\mu^k_\ell,\mI)} \left[\min\left\{\exp\left(
-\frac12
\left(
\|x-\bar\mu_b\|^2-\|x-\bar\mu_{a}\|^2
\right)
\right), 1\right\}\right] \\
&\leq \exp\left(-\frac{\delta_{ab}^k}{2}\right) + 2\exp\left(-\frac{(\delta_{ab}^k)^2}{2\Delta_{ab}^2}
\right),
   \end{align*}
   where $\delta_{ab}^k
\eqdef
\frac{1}{2}\|\mu_\ell^k-\bar\mu_b\|^2
-
\frac{1}{2}\|\mu_\ell^k-\bar\mu_{a}\|^2$
and $
\Delta_{ab} \eqdef \|\bar\mu_{a}-\bar\mu_b\|.$
Using \eqref{eq:WXZaIXLfmWzKu}, $\delta_{ab}^k \geq \frac{1}{4} \Delta_{ab}^2$ and 
\begin{align*}
      &\mathbb E_{x \sim \mathcal N(\mu^k_\ell,\mI)} \bar{r}_b(x) 
\leq \exp\left(-\frac{\Delta_{ab}^2}{8}\right) + 2\exp\left(-\frac{\Delta_{ab}^2}{32}\right) \leq 3\exp\left(-\frac{\Delta_{ab}^2}{32}\right) \leq 3\exp\left(-\frac{\delta_{\min}^2}{32}\right).
   \end{align*}
Therefore,
\begin{align*}
   \mathbb E_{x \sim \mathcal N(\mu^k_\ell,\mI)} \bar \tau_{a}(x) \leq 3 m \exp\left(-\frac{\delta_{\min}^2}{32}\right).
\end{align*}
Substituting the derived bounds into \eqref{eq:vjwBAjebeuMpvpliqDeV},
\begin{equation}
\label{eq:SWEjGFxcW}
\begin{aligned}
   \norm{\mE^k_a}_{F} 
   &\leq 52 \sqrt{n} \max\{R, R^3\} \left(2 n \exp\left(-\frac{\delta^2_{\min}}{128}\right) + 3 m \exp\left(-\frac{\delta_{\min}^2}{32}\right)\right) \\
   &\leq 156 \sqrt{n} R^3 \max\left\{n, m\right\} \exp\left(-\frac{\delta^2_{\min}}{128}\right).
\end{aligned}
\end{equation}
Due to $R \geq 1$ and $\delta_{\min} \geq \sqrt{128 \log\left(\frac{156 \sqrt{n} R^3 \max\left\{n, m\right\}}{E_2}\right)}$ (see \eqref{eq:VheRdmk}),
\begin{align}
   \label{eq:ckwxZENYRIMy}
   \norm{\mE^k_a}_{\textnormal{op}} \leq \norm{\mE^k_a}_{F} \leq E_2
\end{align}
for all $a \in A^*.$ Note that \eqref{eq:ckwxZENYRIMy} is true for $k - 1, \dots, \bar{K}.$ Due to \eqref{eq:Z}, for all $a \in A^*$ and $i \in I_a,$ $\norm{\mu^k_i - \bar\mu_a} \leq 2 \bar{z},$ $R^k_a \leq 2 \bar{z},$ and $\norm{\mB_a^k}_{\textnormal{op}} \leq 1 + 16 \bar{z}^2.$ Using this and \eqref{eq:BQUgXvTShGIDzvYJH},
\begin{align*}
   \norm{\mZ^{k+1}_a}_{\textnormal{op}} 
   &= \norm{\left(\mI - \frac{2 \gamma}{n} \mB_a^k\right) \mZ^{k}_a - \frac{2 \gamma}{n} \mE^k_a}_{\textnormal{op}} \leq \norm{\mZ^{k}_a}_{\textnormal{op}} + \frac{2 \gamma}{n} \norm{\mE^k_a}_{\textnormal{op}}
\end{align*}
since $\gamma \leq \frac{n}{1 + 16 \bar{z}^2}.$ Unrolling the recursion and using \eqref{eq:XgWawiQIXsOKPBdfn} and \eqref{eq:ckwxZENYRIMy},
\begin{align*}
   \norm{\mZ^{k+1}_a}_{\textnormal{op}} 
   &\leq \bar{z} + \frac{2 \gamma \hat{K}}{n} E_2 \leq 2 \bar{z}
\end{align*}
since $k + 1 \leq \hat{K}$ and $E_2 \leq \frac{n \bar{z}}{2 \gamma \hat{K}}$ (see \eqref{eq:gzBeKKVwmhdsVvkWWVK}). Thus, we have proved the next step of the mathematical induction \eqref{eq:Z}. 

We now fix $a \in A^*.$ For all $\bar{K} \leq k \leq \hat{K} - 1,$ consider the update under the squared Frobenius norm:
\begin{align}
   \norm{\mZ^{k+1}_a}^2_{F} 
   &= \norm{\left(\mI - \frac{2 \gamma}{n} \mB_a^k\right) \mZ^{k}_a - \frac{2 \gamma}{n} \mE^k_a}^2_{F} \nonumber \\
   &= \norm{\left(\mI - \frac{2 \gamma}{n} \mB_a^k\right) \mZ^{k}_a}^2_{F} - \frac{4 \gamma}{n} \inp{\left(\mI - \frac{2 \gamma}{n} \mB_a^k\right) \mZ^{k}_a}{\mE^k_a} + \frac{4 \gamma^2}{n^2} \norm{\mE^k_a}^2_{F}. \label{eq:YRozNjjpHQiskNi}
\end{align}
Consider the first norm separately:
\begin{align}
   \norm{\left(\mI - \frac{2 \gamma}{n} \mB_a^k\right) \mZ^{k}_a}^2_{F} &=\norm{\mZ^{k}_a}^2_{F} - \frac{4 \gamma}{n} \inp{\mZ^{k}_a}{\mB_a^k \mZ^{k}_a} + \frac{4 \gamma^2}{n^2} \norm{\mB_a^k \mZ^{k}_a}^2_{F} \nonumber \\
   &\leq \norm{\mZ^{k}_a}^2_{F} - \frac{4 \gamma}{n} \inp{\mZ^{k}_a}{\mB_a^k \mZ^{k}_a} + \frac{4 \gamma^2}{n^2} \norm{\mB_a^k}_{\textnormal{op}} \inp{\mZ^{k}_a}{\mB_a^k \mZ^{k}_a} \nonumber \\
   &\leq \norm{\mZ^{k}_a}^2_{F} - \frac{4 \gamma}{n} \inp{\mZ^{k}_a}{\mB_a^k \mZ^{k}_a} + \frac{2 \gamma}{n} \inp{\mZ^{k}_a}{\mB_a^k \mZ^{k}_a} \nonumber \\
   &= \norm{\mZ^{k}_a}^2_{F} - \frac{2 \gamma}{n} \inp{\mZ^{k}_a}{\mB_a^k \mZ^{k}_a}. \label{eq:mheRAEqzBS}
\end{align}
since $\frac{\gamma}{n} \leq \frac{1}{2 (1 + 16 \bar{z}^2)} \leq \frac{1}{2 \norm{\mB_a^k}_{\textnormal{op}}}.$ We now use the following lemma separately proved in Section~\ref{sec:auxlemmas}.
\begin{restatable}{lemma}{LEMMAZ}
   For all $k \geq \bar{K}$ and $a \in A^*,$
   \begin{align*}
      \inp{\mZ^{k}_a}{\mB_a^k \mZ^{k}_a} \geq \min\left\{\frac{\kappa_a}{4 n_a^2} \norm{\mZ^{k}_a}^4_F, \frac{1}{2} \norm{\mZ^{k}_a}^2_F\right\},
   \end{align*}
   where $\kappa_a \eqdef \frac{n_a}{\min\{n_a, d\}}.$
\end{restatable}

Substituting this bound into \eqref{eq:YRozNjjpHQiskNi} and \eqref{eq:mheRAEqzBS},
\begin{align*}
   \norm{\mZ^{k+1}_a}^2_{F} 
   &\leq \norm{\mZ^{k}_a}^2_{F} - \frac{2 \gamma}{n} \min\left\{\frac{\kappa_a}{4 n_a^2} \norm{\mZ^{k}_a}^4_F, \frac{1}{2} \norm{\mZ^{k}_a}^2_F\right\} - \frac{4 \gamma}{n} \inp{\left(\mI - \frac{2 \gamma}{n} \mB_a^k\right) \mZ^{k}_a}{\mE^k_a} + \frac{4 \gamma^2}{n^2} \norm{\mE^k_a}^2_{F} \\
   &\leq \norm{\mZ^{k}_a}^2_{F} - \frac{2 \gamma}{n} \min\left\{\frac{\kappa_a}{4 n_a^2} \norm{\mZ^{k}_a}^4_F, \frac{1}{2} \norm{\mZ^{k}_a}^2_F\right\} + \frac{4 \gamma}{n} \norm{\left(\mI - \frac{2 \gamma}{n} \mB_a^k\right) \mZ^{k}_a}_F \norm{\mE^k_a}_F + \frac{4 \gamma^2}{n^2} \norm{\mE^k_a}^2_{F} \\
   &\leq \norm{\mZ^{k}_a}^2_{F} - \frac{2 \gamma}{n} \min\left\{\frac{\kappa_a}{4 n_a^2} \norm{\mZ^{k}_a}^4_F, \frac{1}{2} \norm{\mZ^{k}_a}^2_F\right\} + \frac{4 \gamma}{n} \norm{\mZ^{k}_a}_F \norm{\mE^k_a}_F + \frac{4 \gamma^2}{n^2} \norm{\mE^k_a}^2_{F},
\end{align*}
since $\norm{\left(\mI - \frac{2 \gamma}{n} \mB_a^k\right)}_{\textnormal{op}} \leq 1.$ Using \eqref{eq:Z}, $\norm{\mZ^{k}_a}_{F} \leq 2 \sqrt{\min\{n_a, d\}} \bar{z}$ and
\begin{align*}
   \norm{\mZ^{k+1}_a}^2_{F} 
   &\leq \norm{\mZ^{k}_a}^2_{F} - \frac{2 \gamma}{n} \min\left\{\frac{\kappa_a}{4 n_a^2} \norm{\mZ^{k}_a}^4_F, \frac{1}{2} \norm{\mZ^{k}_a}^2_F\right\} + \frac{8 \gamma \sqrt{\min\{n_a, d\}} \bar{z}}{n} \norm{\mE^k_a}_F + \frac{4 \gamma^2}{n^2} \norm{\mE^k_a}^2_{F} \\
   &\overset{\eqref{eq:ckwxZENYRIMy}}{\leq} \norm{\mZ^{k}_a}^2_{F} - \frac{2 \gamma}{n} \min\left\{\frac{\kappa_a}{4 n_a^2} \norm{\mZ^{k}_a}^4_F, \frac{1}{2} \norm{\mZ^{k}_a}^2_F\right\} + \frac{8 \gamma \sqrt{\min\{n, d\}} \bar{z} E_2}{n} + \frac{4 \gamma^2 E_2^2}{n^2}.
\end{align*}
Let us temporarily define $E_3 \eqdef 8 \sqrt{\min\{n, d\}} \bar{z} E_2 + \frac{4 \gamma E_2^2}{n}$ to get
\begin{align*}
   \norm{\mZ^{k+1}_a}^2_{F} 
   &\leq \norm{\mZ^{k}_a}^2_{F} - \frac{2 \gamma}{n} \min\left\{\frac{\kappa_a}{4 n_a^2} \norm{\mZ^{k}_a}^4_F, \frac{1}{2} \norm{\mZ^{k}_a}^2_F\right\} + \frac{\gamma E_3}{n}.
\end{align*}
Due to $\norm{\mZ^{k}_a}_{F} \leq 2 \sqrt{\min\{n, d\}} \bar{z}$ and $\frac{\kappa_a}{n_a^2} = \frac{1}{n_a \min\{n_a, d\}} \geq \frac{1}{n \min\{n, d\}}$,
\begin{align*}
   \norm{\mZ^{k+1}_a}^2_{F} 
   &\leq \norm{\mZ^{k}_a}^2_{F} - \frac{2 \gamma}{n \min\{n, d\}} \min\left\{\frac{1}{4 n}, \frac{1}{8 \bar{z}^2}\right\} \norm{\mZ^{k}_a}^4_F + \frac{\gamma E_3}{n},
\end{align*}
for all $\bar{K} \leq k \leq \hat{K} - 1.$ 
Using proof by contradiction, assume that $\norm{\mZ^{k}_a}_F^2 > \varepsilon$ for all $\bar{K} \leq k \leq \hat{K}.$ 
Since $E_3 \leq \frac{\varepsilon^2}{\min\{n, d\}} \min\left\{\frac{1}{4 n}, \frac{1}{8 \bar{z}^2}\right\}$ (see \eqref{eq:gzBeKKVwmhdsVvkWWVK}), we get
\begin{align*}
   \norm{\mZ^{k+1}_a}^2_{F} 
   &\leq \norm{\mZ^{k}_a}^2_{F} - \frac{\gamma}{n \min\{n, d\}} \min\left\{\frac{1}{4 n}, \frac{1}{8 \bar{z}^2}\right\} \norm{\mZ^{k}_a}^4_F.
\end{align*}
Applying Lemma~\ref{lemma:seq}, we get
\begin{align*}
   \norm{\mu^{\hat{K}}_{\ell} - \bar\mu_{a}}^2\leq \norm{\mZ^{\hat{K}}_a}^2_{F} \leq \frac{n \min\{n, d\} \max\left\{4 n, 8 \bar{z}^2\right\}}{\gamma} \frac{1}{\hat{K} - \bar{K}} \leq \varepsilon,
\end{align*}
for all $\ell \in I_a$ due to the definition of $\hat{K}$ in \eqref{eq:wkBVdTRSfZPDiUDTm}. We have proved a contradiction.
Thus, $\norm{\mZ^{\bar{k}}_a}^2_{F} \leq \varepsilon$ for some $\bar{k} \in \{\bar{K}, \dots, \hat{K}\}$ and for all $\ell \in I_a.$ It remains to prove that this inequality holds for all $k > \bar{k}.$ Notice that 
\begin{align*}
   \norm{\mZ^{\bar{k}}_a}^2_{\textnormal{op}} \leq \norm{\mZ^{\bar{k}}_a}^2_{F} \leq \varepsilon \leq 2 \bar{z}
\end{align*}
since $\bar{z} \geq 1$ (see \eqref{eq:VheRdmk}). Similarly to the previous steps, \eqref{eq:ckwxZENYRIMy} holds for $k = \bar{k}$ and 
\begin{align*}
   \norm{\mZ^{\bar{k} + 1}_a}^2_{F} 
   &\leq \norm{\mZ^{\bar{k}}_a}^2_{F} - \frac{2 \gamma}{n \min\{n, d\}} \min\left\{\frac{1}{4 n}, \frac{1}{8 \bar{z}^2}\right\} \norm{\mZ^{\bar{k}}_a}^4_F + \frac{\gamma E_3}{n} \\
   &\leq \norm{\mZ^{\bar{k}}_a}^2_{F} - \frac{2 \gamma}{n \min\{n, d\}} \min\left\{\frac{1}{4 n}, \frac{1}{8 \bar{z}^2}\right\} \norm{\mZ^{\bar{k}}_a}^4_F + \frac{\gamma \varepsilon^2}{n \min\{n, d\}} \min\left\{\frac{1}{4 n}, \frac{1}{8 \bar{z}^2}\right\}.
\end{align*}
Since 
\begin{align*}
&\max\limits_{0 \leq x \leq \varepsilon}\left\{g(x) \eqdef x - \frac{2 \gamma}{n \min\{n, d\}} \min\left\{\frac{1}{4 n}, \frac{1}{8 \bar{z}^2}\right\} x^2 + \frac{\gamma \varepsilon^2}{n \min\{n, d\}} \min\left\{\frac{1}{4 n}, \frac{1}{8 \bar{z}^2}\right\}\right\} \\
&\overset{x^* = \varepsilon}{=} \varepsilon - \frac{2 \gamma}{n \min\{n, d\}} \min\left\{\frac{1}{4 n}, \frac{1}{8 \bar{z}^2}\right\} \varepsilon^2 + \frac{\gamma \varepsilon^2}{n \min\{n, d\}} \min\left\{\frac{1}{4 n}, \frac{1}{8 \bar{z}^2}\right\} \leq \varepsilon
\end{align*}
because the (unconstrained) optimal point of $g$ on $\R$ is $\frac{n \min\{n, d\}}{4 \gamma \min\left\{\frac{1}{4 n}, \frac{1}{8 \bar{z}^2}\right\}} \geq \frac{n}{4 \gamma} \geq 9 \bar{z}^2 \geq 1 \geq \varepsilon,$ we can conclude that 
\begin{align}
   \label{eq:ggAsdtpeC}
   \norm{\mZ^{\bar{k} + 1}_a}^2_{F} \leq \varepsilon.
\end{align}
Inductively, repeating these steps, we get $\norm{\mu^{k}_{\ell} - \bar\mu_{a}}^2 \leq \varepsilon$ for all $k \in \{\bar{k}, \dots, \hat{K}\}$ and $\ell \in I_a.$ Notice that  
\begin{align}
   \norm{\mZ^{\hat{K}}_a}^2_{F} \leq \varepsilon \leq 2 \bar{z}
\end{align}
for all $a \in A^*.$ Repeating the previous steps again, we can get $\norm{\mu^{k}_{\ell} - \bar\mu_{a}}^2 \leq \norm{\mZ^{k}_a}^2_{F} \leq \varepsilon$ for all $k \geq \hat{K},$ $a \in A^*,$ and $\ell \in I_a.$ 
\end{proof}

\subsection{Auxiliary lemmas}
\label{sec:auxlemmas}
The following lemmas are used in the proof of Theorem~\ref{thm:mainmany} and rely on the definitions and notation introduced therein. 
\LEMMABOUND*
\begin{proof}
   Clearly,
   \begin{align*}
      \norm{m_{\mu^k}(x) - m^k_{a^*_\ell}(x)} = \norm{\sum_{i=1}^n r^k_i(x)\mu^k_i - \sum_{i\in I_{a^*_\ell}} r_i^{k,a^*_\ell}(x)\mu^k_i} = \norm{\sum_{i \in I_{a^*_\ell}} (r^k_i(x) - r_i^{k,a^*_\ell}(x))\mu^k_i + \sum_{i\not\in I_{a^*_\ell}} r_i^{k}(x)\mu^k_i}.
   \end{align*}
   Since 
   \begin{align*}
      r^k_i(x) - r_i^{k,a^*_\ell}(x) 
      &= \frac{\exp\left(-\frac12\|x-\mu^k_i\|^2\right)}{\sum_{j=1}^n\exp\left(-\frac12\|x-\mu^k_j\|^2\right)} - \frac{\exp\left(-\frac12\|x-\mu^k_i\|^2\right)}{\sum_{j\in I_{a^*_\ell}}\exp\left(-\frac12\|x-\mu^k_j\|^2\right)} \\
      &= r_i^{k,a^*_\ell}(x)\left(\frac{\sum_{j\in I_{a^*_\ell}}\exp\left(-\frac12\|x-\mu^k_j\|^2\right)}{\sum_{j=1}^n\exp\left(-\frac12\|x-\mu^k_j\|^2\right)} - 1\right) \\
      &= - r_i^{k,a^*_\ell}(x)\left(\frac{\sum_{j\not\in I_{a^*_\ell}}\exp\left(-\frac12\|x-\mu^k_j\|^2\right)}{\sum_{j=1}^n\exp\left(-\frac12\|x-\mu^k_j\|^2\right)}\right) = - r_i^{k,a^*_\ell}(x) \sum_{j\not\in I_{a^*_\ell}} r^k_j(x),
   \end{align*}
   for all $i \in I_{a^*_\ell},$
   \begin{align}
      \label{eq:umshVQY1}
      \norm{m_{\mu^k}(x) - m^k_{a^*_\ell}(x)} = \norm{\sum_{i\not\in I_{a^*_\ell}} r_i^{k}(x)\left(\mu^k_i - \sum_{i \in I_{a^*_\ell}} r_i^{k,a^*_\ell}(x) \mu^k_i\right)} \leq 2 \hat\tau^k_{a^*_\ell}(x) R_{\max}.
   \end{align}
   Next,
   \begin{align}
      \label{eq:umshVQY}
      &\norm{\bar m(x) - \bar \mu_{a^*_\ell}} = \norm{\sum_{b=1}^m \bar r_b(x) (\bar\mu_b - \bar \mu_{a^*_\ell})} \leq \bar \tau_{a^*_\ell}(x) \max_{b \in [m]}\norm{\bar\mu_b - \bar \mu_{a^*_\ell}} \leq 2 \bar \tau_{a^*_\ell}(x) R_{\max},
   \end{align}
   \begin{align*}
&\norm{\mC_{\mu^k}(x) - \mC^k_{a_\ell^*}(x)}_{\textnormal{op}} \\
&=\norm{\sum_{i=1}^n r^k_i(x) \bigl(\mu^k_i-m_{\mu^k}(x)\bigr) \bigl(\mu^k_i-m_{\mu^k}(x)\bigr)^\top - \sum_{i\in I_{a^*_{\ell}}} r_i^{k,a^*_{\ell}}(x) \bigl(\mu^k_i-m^k_{a^*_{\ell}}(x)\bigr) \bigl(\mu^k_i-m^k_{a^*_{\ell}}(x)\bigr)^\top}_{\textnormal{op}} \\
&\leq
\norm{
\sum_{i\notin I_{a_\ell^*}} r_i^k(x)
\bigl(\mu_i^k-m_{\mu^k}(x)\bigr)
\bigl(\mu_i^k-m_{\mu^k}(x)\bigr)^\top
}_{\textnormal{op}} \\
&\quad+
\norm{
\sum_{i\in I_{a_\ell^*}}
\left(r_i^k(x)-r_i^{k,a_\ell^*}(x)\right)
\bigl(\mu_i^k-m_{\mu^k}(x)\bigr)
\bigl(\mu_i^k-m_{\mu^k}(x)\bigr)^\top
}_{\textnormal{op}} \\
&\quad+
\norm{
\sum_{i\in I_{a_\ell^*}} r_i^{k,a_\ell^*}(x)
\left[
\bigl(\mu_i^k-m_{\mu^k}(x)\bigr)
\bigl(\mu_i^k-m_{\mu^k}(x)\bigr)^\top
-
\bigl(\mu_i^k-m^k_{a_\ell^*}(x)\bigr)
\bigl(\mu_i^k-m^k_{a_\ell^*}(x)\bigr)^\top
\right]
}_{\textnormal{op}}.
\end{align*}
First,
\begin{align*}
&\norm{
\sum_{i\notin I_{a_\ell^*}} r_i^k(x)
\bigl(\mu_i^k-m_{\mu^k}(x)\bigr)
\bigl(\mu_i^k-m_{\mu^k}(x)\bigr)^\top
}_{\textnormal{op}} \leq
\sum_{i\notin I_{a_\ell^*}} r_i^k(x)
\norm{\mu_i^k-m_{\mu^k}(x)}^2 \leq
4R_{\max}^2 \sum_{i\notin I_{a_\ell^*}} r_i^k(x)
=
4R_{\max}^2\hat\tau_{a_\ell^*}^k(x).
\end{align*}
Second, using
\[
r_i^k(x)-r_i^{k,a_\ell^*}(x)
=
-r_i^{k,a_\ell^*}(x)\hat\tau_{a_\ell^*}^k(x),
\qquad i\in I_{a_\ell^*},
\]
we get
\begin{align*}
&\norm{
\sum_{i\in I_{a_\ell^*}}
\left(r_i^k(x)-r_i^{k,a_\ell^*}(x)\right)
\bigl(\mu_i^k-m_{\mu^k}(x)\bigr)
\bigl(\mu_i^k-m_{\mu^k}(x)\bigr)^\top
}_{\textnormal{op}} \\
&\leq
\sum_{i\in I_{a_\ell^*}}
\abs{r_i^k(x)-r_i^{k,a_\ell^*}(x)}
\norm{\mu_i^k-m_{\mu^k}(x)}^2 \\
&=
\hat\tau_{a_\ell^*}^k(x)
\sum_{i\in I_{a_\ell^*}} r_i^{k,a_\ell^*}(x)
\norm{\mu_i^k-m_{\mu^k}(x)}^2.
\end{align*}
Finally, since
\[
\mu_i^k-m_{\mu^k}(x)
=
\mu_i^k-m_{a_\ell^*}^k(x)
+
m_{a_\ell^*}^k(x)-m_{\mu^k}(x),
\]
and
\[
\sum_{i\in I_{a_\ell^*}}
r_i^{k,a_\ell^*}(x)
\bigl(\mu_i^k-m_{a_\ell^*}^k(x)\bigr)
=0,
\]
the cross terms vanish after averaging. Therefore,
\begin{align*}
&\norm{
\sum_{i\in I_{a_\ell^*}} r_i^{k,a_\ell^*}(x)
\left[
\bigl(\mu_i^k-m_{\mu^k}(x)\bigr)
\bigl(\mu_i^k-m_{\mu^k}(x)\bigr)^\top
-
\bigl(\mu_i^k-m_{a_\ell^*}^k(x)\bigr)
\bigl(\mu_i^k-m_{a_\ell^*}^k(x)\bigr)^\top
\right]
}_{\textnormal{op}} \\
&=
\norm{
\bigl(m_{a_\ell^*}^k(x)-m_{\mu^k}(x)\bigr)
\bigl(m_{a_\ell^*}^k(x)-m_{\mu^k}(x)\bigr)^\top
}_{\textnormal{op}}.
\end{align*}
Using these bounds,
\begin{align*}
&\norm{\mC_{\mu^k}(x) - \mC^k_{a_\ell^*}(x)}_{\textnormal{op}} \\
&\leq
4R_{\max}^2\hat\tau^k_{a_\ell^*}(x)
+
\hat\tau^k_{a_\ell^*}(x)
\sum_{i\in I_{a_\ell^*}} r_i^{k,a_\ell^*}(x)
\norm{\mu_i^k-m_{\mu^k}(x)}^2 +
\norm{
\left(m^k_{a_\ell^*}(x)-m_{\mu^k}(x)\right)
\left(m^k_{a_\ell^*}(x)-m_{\mu^k}(x)\right)^\top
}_{\textnormal{op}} \\
&\leq
4R_{\max}^2\hat\tau^k_{a_\ell^*}(x)
+
4R_{\max}^2\hat\tau^k_{a_\ell^*}(x)
+
\norm{m^k_{a_\ell^*}(x)-m_{\mu^k}(x)}^2 \\
&\leq
8R_{\max}^2\hat\tau^k_{a_\ell^*}(x)
+
4R_{\max}^2\left(\hat\tau^k_{a_\ell^*}(x)\right)^2 \\
&\leq
12R_{\max}^2\hat\tau^k_{a_\ell^*}(x),
\end{align*}
where we used \eqref{eq:umshVQY1}. Using \eqref{eq:umshVQY},
   \begin{align*}
      \norm{\bar \mC(x)}_{\textnormal{op}} = \norm{\sum\limits_{b=1}^m \bar r_b(x) \bigl(\bar\mu_b-\bar m(x)\bigr) \bigl(\bar\mu_b-\bar m(x)\bigr)^\top}_{\textnormal{op}} \leq 4 (\bar \tau_{a^*_\ell}(x))^2 R_{\max}^2 + 4 \bar \tau_{a^*_\ell}(x) R_{\max}^2 \leq 8 \bar \tau_{a^*_\ell}(x) R_{\max}^2
   \end{align*}
   because $\norm{m_{\mu^k}(x)} = \norm{\sum_{i=1}^n r^k_i(x)\mu_i^k} \leq R_{\max}$ and $\norm{\bar m(x)} \leq \norm{\sum_{a=1}^m \bar r_{a}(x)\bar\mu_{a}} \leq R_{\max}.$ 
   Combining all bounds together and using the triangle inequality,
   \begin{align*}
      &\norm{\mathcal{E}^k_{\ell}(x)} \\
      &\leq 2 \hat\tau^k_{a^*_\ell}(x) R_{\max} + 2 \bar \tau_{a^*_\ell}(x) R_{\max} + 12R_{\max}^2\hat\tau^k_{a_\ell^*}(x) \norm{\mu^k_\ell - m^k_{a^*_\ell}(x)} + 2 \hat\tau^k_{a^*_\ell}(x) R_{\max} \norm{\mC_{\mu^k}(x)}_{\textnormal{op}} \\
      &\quad + 8 \bar \tau_{a^*_\ell}(x) R_{\max}^2 \norm{\mu^k_\ell-\bar m(x)} \\
      &\leq 2 \hat\tau^k_{a^*_\ell}(x) R_{\max} + 2 \bar \tau_{a^*_\ell}(x) R_{\max} + 24 R_{\max}^3 \hat\tau^k_{a_\ell^*}(x) \\
      &\quad + 8 \hat\tau^k_{a^*_\ell}(x) R_{\max}^3 + 16 \bar \tau_{a^*_\ell}(x) R_{\max}^3 \leq 52 \max\{R_{\max}, R_{\max}^3\} \max\{\hat\tau^k_{a^*_\ell}(x), \bar \tau_{a^*_\ell}(x)\}
   \end{align*}
   because
   \begin{align*}
      \norm{\mC_{\mu^k}(x)}_{\textnormal{op}} \leq \sum_{i=1}^n r^k_i(x) \norm{\bigl(\mu^k_i-m_{\mu^k}(x)\bigr) \bigl(\mu^k_i-m_{\mu^k}(x)\bigr)^\top}_{\textnormal{op}} \leq 4 R_{\max}^2.
   \end{align*}
\end{proof}

\LEMMABOUNDTWO*

\begin{proof}
   Indeed,
   \begin{align*}
      \norm{m^k_{a^*_\ell}(x) - \mu^k_{\ell}} 
      &= \norm{\sum_{i\in I_{a^*_\ell}} r_i^{k,a^*_\ell}(x)\mu^k_i - \mu^k_{\ell}} = \norm{\sum_{i\in I_{a^*_\ell}, i \neq \ell} r_i^{k,a^*_\ell}(x)(\mu^k_i - \mu^k_{\ell})} \\
      &\leq \sum_{i\in I_{a^*_\ell}, i \neq \ell} r_i^{k,a^*_\ell}(x) \norm{\mu^k_i - \mu^k_{\ell}} \leq 2 R_{\max} \tau^k_{\ell}(x).
   \end{align*}
   Thus,
   \begin{align*}
      \norm{\bar{\mathcal{E}}^k_{\ell}(x)} \leq 2 R_{\max} \tau^k_{\ell}(x) + 2 R_{\max} \tau^k_{\ell}(x) \norm{\mC^k_{a^*_\ell}(x)}_{\textnormal{op}} \leq 10 \max\{R_{\max}, R_{\max}^3\} \tau^k_{\ell}(x)
   \end{align*}
   because
   \begin{align*}
      \norm{\mC^k_{a^*_\ell}(x)}_{\textnormal{op}} \leq \sum_{i\in I_{a^*_{\ell}}} r_i^{k,a^*_{\ell}}(x) \norm{\bigl(\mu^k_i-m^k_{a^*_{\ell}}(x)\bigr) \bigl(\mu^k_i-m^k_{a^*_{\ell}}(x)\bigr)^\top}_{\textnormal{op}} \leq 4 R_{\max}^2.
   \end{align*}
\end{proof}

\LEMMAPARAMS*
\begin{proof}
      Indeed,
      \begin{align*}
         \Delta_1 = \frac{2 R^0 \delta_{\min} \rho}{n m^2 \sqrt{d}} \geq \delta^2_{\max}
      \end{align*}
      due to the choice of $R^0$ and $\delta_{\min} \geq 1.$ Next,
      \begin{align*}
         \Delta_2 = (R^0)^2 \left(\frac{\sqrt{2\pi}\rho}{\sqrt d\, n^2}\right)^{\frac{2}{d-1}} \geq \frac{(R^0)^2}{2}.
      \end{align*}
      due to the condition on $d.$ Since $(R^0)^2 \geq 256 \log(55552 \cdot n),$ we get
      \begin{align*}
         \Delta_2 \geq \frac{(R^0)^2}{2} \geq 32 \log\left(55552 n (R^0)^4\right) = \hat\Delta_2.
      \end{align*}
      Next,
      \begin{align*}
         32 \log(434 n R^3 / E_1) = 32 \log(55552 n (R^0)^4 / \delta_{\min}) \leq 32 \log\left(55552 n (R^0)^4\right) = \hat\Delta_2
      \end{align*}
      because $\delta_{\min} \geq 1.$ Notice that 
      \begin{align*}
         \sqrt{32 \log\left(434 R^3 m / E_1\right)} = \sqrt{32 \log\left(55552 (R^0)^4 m / \delta_{\min}\right)} \leq \delta_{\min}
      \end{align*}
      because $\delta_{\min} \geq 2^{10} \sqrt{\log\left(2^{19} m (R^0)^4\right)}.$
      Since 
      \begin{align}
         \label{eq:mfiuPrmGKoEuWJb}
         R^0 \geq \frac{n m^2 \sqrt{d} \delta^2_{\max}}{2 \rho} \geq \delta_{\min},
      \end{align}
      we get 
      \begin{align*}
         E_1 = \frac{\delta_{\min}}{16 R^0} \leq R^0 = \frac{R}{2}.
      \end{align*}
      Due to the definitions of $\bar{\Delta}$ and $\Delta_2,$
      \begin{align*}
         \frac{4 \sqrt{\bar{\Delta} \Delta_2}}{R^0} = 2 \sqrt{\hat\Delta_2} \times \frac{2 \sqrt{\left(\frac{\sqrt{2\pi}\rho}{\sqrt d\, n^2}\right)^{\frac{2}{d-1}}}}{\sqrt{2}} \geq 2 \sqrt{\hat\Delta_2},
      \end{align*}
      where we use the condition on $d.$ Also, due to the condition on $d$ and $\delta_{\min},$ 
      \begin{align*}
         \Delta_3 = \frac{\delta_{\min}}{8} \left(\frac{\rho}{m n^2}\right)^{\frac{1}{d-2}} \geq \frac{\delta_{\min}}{16} \geq 2 \sqrt{\hat\Delta_2}.
      \end{align*}
      The next inequality
      \begin{align*}
         E_1 = \frac{\delta_{\min}}{8 R} \leq \frac{\sqrt{\hat\Delta_2}}{2}
      \end{align*}
      holds because $R = 2 R^0 \overset{\eqref{eq:mfiuPrmGKoEuWJb}}{\geq} \frac{\delta_{\min}}{4}$ and $\hat\Delta_2 \geq 1.$ 
   \end{proof}

\LEMMAPARAMSTWO*

\begin{proof}
   Due to the definition of $\bar{z},$
   \begin{align}
      \bar{z} 
      &\eqdef \frac{8 \sqrt{\bar{\Delta}} \Delta_4}{R^0} + \frac{8 \sqrt{n \bar{\Delta}} \bar{R}}{R^0} + \sqrt{n} E_1 \nonumber \\
      &\overset{\eqref{eq:kjdsqYLZ}}{\leq} \frac{8 \sqrt{16 \log\left(3472 n R^4\right)} \Delta_4}{R^0} + \frac{8 \sqrt{16 n \log\left(3472 n R^4\right)} \bar{R}}{R^0} + \frac{\sqrt{n} \delta_{\min}}{16 R^0} \nonumber \\
      &\overset{\textnormal{Def. of } \Delta_4}{\leq} 8 \sqrt{16 \log\left(3472 n R^4\right)} \sqrt{\frac{n}{d} + \max\left\{\frac{4}{3}\log\left(\frac{2 d}{\rho}\right), \sqrt{\frac{4 n}{d} \log\left(\frac{2 d}{\rho}\right)}\right\}} \nonumber \\
      &\qquad + \frac{8 \sqrt{16 n \log\left(3472 n R^4\right)} \bar{R}}{R^0} + \frac{\sqrt{n} \delta_{\min}}{16 R^0}. \label{eq:JWcjWjNHPFFHUk}
   \end{align}
   Notice that 
   \begin{align*}
      &\frac{8 \sqrt{16 n \log\left(3472 n R^4\right)} \bar{R}}{R^0} \overset{\eqref{eq:kjdsqYLZ}}{=} \frac{8 \sqrt{16 n \log\left(55552 n (R^0)^4\right)} \bar{R}}{R^0} \leq \frac{\delta_{\min}}{16}
   \end{align*}
   since
   \begin{align*}
      R^0 \geq 1024 \sqrt{16 n \max\{\log\left(2^{66} n^3 \bar{R}^4\right), 1\}} \bar{R}
   \end{align*} and $\delta_{\min} \geq 1,$ where we use Lemma~\ref{lemma:log} and the definition of $R^0.$
   Moreover, $R^0 \geq \frac{2 n m^2 \sqrt{d} \delta^2_{\max}}{\rho} \geq 2 \sqrt{n}$ due to the definition of $R^0,$ and 
   \begin{align*}
      \delta_{\min} 
      &\geq 256 \sqrt{16 \log\left(55552 n (R^0)^4\right)} \sqrt{\frac{n}{d} + \max\left\{\frac{4}{3}\log\left(\frac{2 d}{\rho}\right), \sqrt{\frac{4 n}{d} \log\left(\frac{2 d}{\rho}\right)}\right\}} \\
      &= 256 \sqrt{16 \log\left(3472 n R^4\right)} \sqrt{\frac{n}{d} + \max\left\{\frac{4}{3}\log\left(\frac{2 d}{\rho}\right), \sqrt{\frac{4 n}{d} \log\left(\frac{2 d}{\rho}\right)}\right\}}.
   \end{align*} 
   Thus, $\delta_{\min} \geq 8 \bar{z}.$ 
   Notice that 
   \begin{align*}
      \bar{z} 
      &\eqdef \frac{8 \sqrt{\bar{\Delta}} \Delta_4}{R^0} + \frac{8 \sqrt{n \bar{\Delta}} \bar{R}}{R^0} + \sqrt{n} E_1 \\
      &\geq 8 \sqrt{16 \log\left(55552 n (R^0)^4\right)} \sqrt{\frac{n}{d} + \max\left\{\frac{4}{3}\log\left(\frac{2 d}{\rho}\right), \sqrt{\frac{4 n}{d} \log\left(\frac{2 d}{\rho}\right)}\right\}} \\
      &\geq 8 \sqrt{16 \log\left(55552 n (R^0)^4\right)} \sqrt{\frac{4}{3}\log\left(\frac{2 d}{\rho}\right)} \geq 1.
   \end{align*}
   Next, we bound $\bar{z}$ using \eqref{eq:JWcjWjNHPFFHUk}:
   \begin{align*}
      \bar{z} 
      &\leq 8 \sqrt{16 \log\left(3472 n R^4\right)} \sqrt{\frac{n}{d} + \max\left\{\frac{4}{3}\log\left(\frac{2 d}{\rho}\right), \sqrt{\frac{4 n}{d} \log\left(\frac{2 d}{\rho}\right)}\right\}} \\
      &\qquad + \frac{8 \sqrt{16 n \log\left(3472 n R^4\right)} \bar{R}}{R^0} + \frac{\sqrt{n} \delta_{\min}}{16 R^0}.
   \end{align*}
   Since $R = 2 R^0,$ $R^0 \geq \sqrt{n} \bar{R},$ and $R^0 \geq \frac{2 n m^2 \sqrt{d} \delta_{\max}}{\rho} \geq 2 \sqrt{n} \delta_{\max},$
   \begin{align*}
      \bar{z} 
      \leq 65 \sqrt{\log\left(2^{16} n (R^0)^4\right)} \sqrt{\frac{n}{d} + \max\left\{\frac{4}{3}\log\left(\frac{2 d}{\rho}\right), \sqrt{\frac{4 n}{d} \log\left(\frac{2 d}{\rho}\right)}\right\}}.
   \end{align*}
   Next,
   \begin{align*}
      \frac{1}{E_2} 
      &= \max\left\{\frac{2 \gamma \hat{K}}{n \bar{z}}, \frac{16 \min\{n, d\} \sqrt{\min\{n, d\}} \bar{z}}{\varepsilon^2} \max\left\{4 n, 8 \bar{z}^2\right\}, \sqrt{\frac{8 \gamma \min\{n, d\}}{n \varepsilon^2} \max\left\{4 n, 8 \bar{z}^2\right\}}\right\} \\
   &\overset{\textnormal{Def. of } \gamma}{\leq} \max\left\{\frac{2 \gamma \hat{K}}{n \bar{z}}, \frac{16 \min\{n, d\} \sqrt{\min\{n, d\}} \bar{z}}{\varepsilon^2} \max\left\{4 n, 8 \bar{z}^2\right\}, \sqrt{\frac{2 \min\{n, d\}}{9 \bar{z}^2 \varepsilon^2} \max\left\{4 n, 8 \bar{z}^2\right\}}\right\} \\
      &\overset{\bar{z} \geq 1}{\leq} \max\left\{\frac{2 \gamma \hat{K}}{n \bar{z}}, \frac{16 \min\{n, d\} \sqrt{\min\{n, d\}} \bar{z}}{\varepsilon^2} \max\left\{4 n, 8 \bar{z}^2\right\}, \sqrt{\frac{2 n \min\{n, d\}}{\varepsilon^2}}\right\} \\
      &\overset{\bar{z} \geq 1}{\leq} \max\left\{\frac{2 \gamma \hat{K}}{n \bar{z}}, \frac{128 n^{5/2} \bar{z}^3}{\varepsilon^2}, \sqrt{\frac{2 n^2}{\varepsilon^2}}\right\}.
   \end{align*}
   Using \eqref{eq:TUyQcvjVGXjffeyd} and \eqref{eq:wkBVdTRSfZPDiUDTm},
   \begin{align*}
      \hat{K} 
      &\leq \frac{n \log\left(\frac{R^0}{4 \sqrt{16 \log\left(55552 n (R^0)^4\right)}}\right)}{2 \gamma} + \frac{2 n \min\{n, d\} \max\left\{4 n, 8 \bar{z}^2\right\}}{\gamma \varepsilon} \\
      &\leq \frac{n \log\left(R^0\right)}{2 \gamma} + \frac{2 n \min\{n, d\} \max\left\{4 n, 8 \bar{z}^2\right\}}{\gamma \varepsilon}.
   \end{align*}
   Thus
   \begin{align*}
      \frac{1}{E_2} 
      &\overset{R^0 \geq 1}{\leq} \max\left\{\frac{2 \gamma}{n \bar{z}} \left(\frac{n \log\left(R^0\right)}{2 \gamma} + \frac{2 n^2 \max\left\{4 n, 8 \bar{z}^2\right\}}{\gamma \varepsilon}\right), \frac{128 n^{5/2} \bar{z}^3}{\varepsilon^2}, \sqrt{\frac{2 n^2}{\varepsilon^2}}\right\} \\
      &\leq \max\left\{\frac{2 \log\left(R^0\right)}{\bar{z}}, \frac{8 n \max\left\{4 n, 8 \bar{z}^2\right\}}{\varepsilon \bar{z}}, \frac{128 n^{5/2} \bar{z}^3}{\varepsilon^2}, \sqrt{\frac{2 n^2}{\varepsilon^2}}\right\} \\
      &\overset{\bar{z} \geq 1}{\leq} \max\left\{2 \log\left(R^0\right), \frac{8 n \max\left\{4 n, 8 \bar{z}\right\}}{\varepsilon}, \frac{128 n^{5/2} \bar{z}^3}{\varepsilon^2}, \sqrt{\frac{2 n^2}{\varepsilon^2}}\right\} \\
      &\leq \frac{128 n^{5/2} \log\left(R^0\right)}{\min\{\varepsilon^2, 1\}} \bar{z}^3.
   \end{align*}
   Next,
   \begin{align*}
      &\sqrt{128 \log\left(\frac{156 \sqrt{n} R^3 \max\left\{n, m\right\}}{E_2}\right)} \\
      &\leq \sqrt{128 \log\left(1248 \sqrt{n} (R^0)^3 \max\left\{n, m\right\} \frac{128 n^{5/2} \log\left(R^0\right)}{\min\{\varepsilon^2, 1\}} \bar{z}^3\right)} \\
      &\leq \sqrt{128 \log\left(\frac{159744 n^{3} (R^0)^4 \max\left\{n, m\right\}}{\min\{\varepsilon^2, 1\}} \bar{z}^3\right)} \\
      &\overset{\bar{z} \leq \frac{\delta_{\min}}{8}}{\leq} \sqrt{128 \log\left(\frac{312 n^{3} (R^0)^4 \max\left\{n, m\right\}}{\min\{\varepsilon^2, 1\}} \delta^3_{\min}\right)}.
   \end{align*}
   Using Lemma~\ref{lemma:log-linear}, it sufficient for $\delta_{\min}$ to satisfy
   \begin{align*}
      \delta_{\min} \geq 46 \sqrt{\log\left(\frac{2^{20} n^{3} (R^0)^4 \max\left\{n, m\right\}}{\min\{\varepsilon^2, 1\}}\right)}
   \end{align*}
   and ensure that the fourth inequality of the lemma holds, which indeed holds under our assumption about $\delta_{\min}$. 
   Finally,
   \begin{align*}
      R \geq 2 \bar{z} + \bar{R}
   \end{align*}
   since $R \geq 2 \bar{R}$ and 
   \begin{align*}
      2 \bar{z} 
      &\leq 130 \sqrt{\log\left(2^{16} n (R^0)^4\right)} \sqrt{\frac{n}{d} + \max\left\{\frac{4}{3}\log\left(\frac{2 d}{\rho}\right), \sqrt{\frac{4 n}{d} \log\left(\frac{2 d}{\rho}\right)}\right\}} \\
      &\leq 130 \sqrt{5 \log\left(2^{16} n (R^0)^4\right)} \sqrt{n \log\left(\frac{2 d}{\rho}\right)} \\
      &\leq 130 \sqrt{5 \cdot 16 \log\left(2\right) + 5 n + 5 \log\left((R^0)^4\right)} \sqrt{n \log\left(\frac{2 d}{\rho}\right)} \\
      &\leq \left(1300 \sqrt{n} + 260 \sqrt{5} \sqrt{\log\left(R^0\right)}\right) \sqrt{n \log\left(\frac{2 d}{\rho}\right)} \leq R^0 = \frac{R}{2}
   \end{align*}
   because $R^0 \geq \frac{2^{20} n \sqrt{d}}{\rho} \geq \frac{2^{19} n}{\sqrt{2}} \log\left(\frac{2 d}{\rho}\right)$ and $R^0 \geq \sqrt{\frac{2^{19}}{\sqrt{2}}} \sqrt{\log(R^0)} \sqrt{n \log\left(\frac{2 d}{\rho}\right)}.$
\end{proof}

\LEMMAZ*

\begin{proof}
   Using \eqref{eq:aegiybPjoBoRKeigLO},
\begin{align*}
   &\inp{\mZ^{k}_a}{\mB_a^k \mZ^{k}_a} \\
   &= n_{a} \sum_{i \in I_a} \sum_{j \in I_a} \mathbb E_{x \sim p^{k,a}_{\mu}}
   \left[r_i^{k,a}(x) r^{k,a}_{j} (x)  + r_i^{k,a}(x) r^{k,a}_{j} (x) \bigl(\mu^k_i-m^k_{a}(x)\bigr)^\top \bigl(\mu^k_j-m^k_{a}(x)\bigr)\right] (\mu^{k}_{i} - \bar \mu_{a})^\top (\mu^{k}_{j} - \bar \mu_{a}) \\
   &= n_{a} \mathbb E_{x \sim p^{k,a}_{\mu}}
   \left[\sum_{i \in I_a} r_i^{k,a}(x) (\mu^{k}_{i} - \bar \mu_{a})^\top \left(\sum_{j \in I_a} r^{k,a}_{j} (x) (\mu^{k}_{j} - \bar \mu_{a})\right)\right] \\
   &\quad  + n_a \mathbb E_{x \sim p^{k,a}_{\mu}} 
   \left[\sum_{i \in I_a} r_i^{k,a}(x) \bigl(\mu^k_i-m^k_{a}(x)\bigr)^\top \left(\sum_{j \in I_a}  r^{k,a}_{j}(x) \bigl(\mu^k_j-m^k_{a}(x)\bigr) (\mu^{k}_{j} - \bar \mu_{a})^\top \right) (\mu^{k}_{i} - \bar \mu_{a})\right] \\
   &= n_{a} \mathbb E_{x \sim p^{k,a}_{\mu}}
   \left[\norm{\sum_{i \in I_a} r_i^{k,a}(x) (\mu^{k}_{i} - \bar \mu_{a})}^2\right] + n_a \mathbb E_{x \sim p^{k,a}_{\mu}} 
   \left[\norm{\sum_{i \in I_a}  r^{k,a}_{i}(x) \bigl(\mu^k_i-m^k_{a}(x)\bigr) (\mu^{k}_{i} - \bar \mu_{a})^\top}_F^2\right].
\end{align*}
Using the definition of $m^k_{a}(x)$ and $\sum_{i\in I_{a}} r_i^{k,a}(x) \bigl(\mu^k_i-m^k_{a}(x)\bigr) b^\top = 0$ for all $b \in \R^d,$
\begin{align*}
   &\inp{\mZ^{k}_a}{\mB_a^k \mZ^{k}_a} \\
   &= n_{a} \mathbb E_{x \sim p^{k,a}_{\mu}}
   \left[\norm{m^k_{a}(x) - \bar \mu_{a}}^2\right] + n_a \mathbb E_{x \sim p^{k,a}_{\mu}} 
   \left[\norm{\sum_{i \in I_a}  r^{k,a}_{i}(x) \bigl(\mu^k_i-m^k_{a}(x)\bigr) (\mu^{k}_{i} - m^k_{a}(x))^\top}_F^2\right] \\
   &= n_{a} \mathbb E_{x \sim p^{k,a}_{\mu}}
   \left[\norm{m^k_{a}(x) - \bar \mu_{a}}^2\right] + n_a \mathbb E_{x \sim p^{k,a}_{\mu}} 
   \textnormal{tr}\left(\left(\sum_{i \in I_a}  r^{k,a}_{i}(x) \bigl(\mu^k_i-m^k_{a}(x)\bigr) (\mu^{k}_{i} - m^k_{a}(x))^\top\right)^2\right)
\end{align*}
Let $\kappa_a \eqdef \frac{n_a}{\min\{n_a, d\}}.$ Due to $\left(\textnormal{tr}(\mA)\right)^2 \leq \textnormal{rank}(\mA) \textnormal{tr}(\mA^2)$ for all symmetric $\mA \in \R^{d \times d},$ we get
\begin{align*}
   \inp{\mZ^{k}_a}{\mB_a^k \mZ^{k}_a} 
   &\geq n_{a} \mathbb E_{x \sim p^{k,a}_{\mu}}
   \left[\norm{m^k_{a}(x) - \bar \mu_{a}}^2\right] + \kappa_a \mathbb E_{x \sim p^{k,a}_{\mu}} 
   \left[\left(\sum_{i \in I_a}  r^{k,a}_{i}(x) \norm{\mu^k_i-m^k_{a}(x)}^2\right)^2\right] \\
   &= n_{a} \mathbb E_{x \sim p^{k,a}_{\mu}}
   \left[\norm{m^k_{a}(x) - \bar \mu_{a}}^2\right] + \kappa_a \mathbb E_{x \sim p^{k,a}_{\mu}} 
   \left[\left(\sum_{i \in I_a}  r^{k,a}_{i}(x) \norm{\mu^k_i - \bar\mu_a}^2 - \norm{m^k_{a}(x) - \bar\mu_a}^2\right)^2\right] \\
   &\geq n_{a} \mathbb E_{x \sim p^{k,a}_{\mu}}
   \left[\norm{m^k_{a}(x) - \bar \mu_{a}}^2\right] + \kappa_a \left(\mathbb E_{x \sim p^{k,a}_{\mu}} 
   \left[\sum_{i \in I_a}  r^{k,a}_{i}(x) \norm{\mu^k_i - \bar\mu_a}^2\right] - \mathbb E_{x \sim p^{k,a}_{\mu}} 
   \left[\norm{m^k_{a}(x) - \bar\mu_a}^2\right]\right)^2 \\
   &= n_{a} \mathbb E_{x \sim p^{k,a}_{\mu}}
   \left[\norm{m^k_{a}(x) - \bar \mu_{a}}^2\right] + \kappa_a \left(\frac{1}{n_a} \norm{\mZ^{k}_a}^2_F - \mathbb E_{x \sim p^{k,a}_{\mu}} 
   \left[\norm{m^k_{a}(x) - \bar\mu_a}^2\right]\right)^2
\end{align*}
since 
\begin{align*}
   &\mathbb E_{x \sim p^{k,a}_{\mu}} \left[\sum_{i \in I_a}  r^{k,a}_{i}(x) \norm{\mu^k_i - \bar\mu_a}^2\right] = \sum_{i \in I_a} \int_{x \in \R^d} \left[r^{k,a}_{i}(x) p^{k,a}_{\mu}(x) \norm{\mu^k_i - \bar\mu_a}^2\right] \, d x \\
   &= \frac{1}{n_a} \sum_{i \in I_a} \int_{x \in \R^d} \left[\norm{\mu^k_i - \bar\mu_a}^2\right] \varphi(x - \mu^k_i) \, d x = \frac{1}{n_a} \sum_{i \in I_a} \norm{\mu^k_i - \bar\mu_a}^2 = \frac{1}{n_a} \norm{\mZ^{k}_a}^2_F
\end{align*}
Notice also that 
\begin{align*}
   \mathbb E_{x \sim p^{k,a}_{\mu}} \left[\norm{m^k_{a}(x) - \bar\mu_a}^2\right] \leq \mathbb E_{x \sim p^{k,a}_{\mu}} \left[\sum_{i \in I_a}  r^{k,a}_{i}(x) \norm{\mu^k_i - \bar\mu_a}^2\right] = \frac{1}{n_a} \norm{\mZ^{k}_a}^2_F.
\end{align*}
Thus, 
\begin{align*}
   \inp{\mZ^{k}_a}{\mB_a^k \mZ^{k}_a} 
   &\geq \min_{0 \leq x \leq \bar{x}} \left[n_{a} x + \kappa_a \left(\bar{x} - x\right)^2\right],
\end{align*}
where $\bar{x} \eqdef \frac{1}{n_a} \norm{\mZ^{k}_a}^2_F.$ Since 
\begin{align*}
   \min_{0 \leq x \leq \bar{x}} \left[n_{a} x + \kappa_a \left(\bar{x} - x\right)^2\right] \geq \min\left\{\min_{0 \leq x \leq \frac{\bar{x}}{2}}\kappa_a \left(\bar{x} - x\right)^2, \min_{\frac{\bar{x}}{2} \leq x \leq \bar{x}} n_a x\right\} = \min\left\{\frac{\kappa_a}{4} \bar{x}^2, \frac{n_a}{2} \bar{x}\right\},
\end{align*}
we obtain
\begin{align*}
   \inp{\mZ^{k}_a}{\mB_a^k \mZ^{k}_a} \geq \min\left\{\frac{\kappa_a}{4 n_a^2} \norm{\mZ^{k}_a}^4_F, \frac{1}{2} \norm{\mZ^{k}_a}^2_F\right\}.
\end{align*}
\end{proof}

\subsection{Corollaries}

The following corollary states and simplifies the result of Theorem~\ref{thm:mainmany} assuming that $d \geq n$ and ignoring logarithmic factors.

\THMSIMPL*

\begin{proof}
   It is sufficient to use $d \geq n$ and simplify the terms from the theorem.
\end{proof}

\THEOREMTV*

\begin{proof}
   In addition to the events $\Omega_1, \dots, \Omega_4,$ we also assume that the event $\Omega_5$ from Lemma~\ref{lemma:dist_clust} holds. Then, the result of Theorem~\ref{thm:mainmany} holds with probability at least $1 - 5\rho.$ Using the notations from the proof of Theorem~\ref{thm:mainmany},
   \begin{align*}
      \textnormal{TV}(p_{\mu^k}, \bar{p}) 
      &\eqdef \frac{1}{2} \int_{\R^d} \abs{p_{\mu^k}(x) - \bar{p}(x)} \, dx \\
      &= \frac{1}{2} \int_{\R^d} \abs{\frac1n\sum_{i=1}^n \varphi(x - \mu^k_i) - \frac1m\sum_{a=1}^m \varphi(x - \bar\mu_a)} \, dx \\
      &= \frac{1}{2} \int_{\R^d} \abs{\frac{1}{n} \sum_{a=1}^m \sum_{\ell \in I_a} \varphi(x - \mu^k_{\ell}) - \frac1m\sum_{a=1}^m \varphi(x - \bar\mu_a)} \, dx \\
      &= \frac{1}{2} \int_{\R^d} \abs{\sum_{a = 1}^m \left(\frac{1}{n} \sum_{\ell \in I_a} \varphi(x - \mu^k_{\ell}) - \frac1m \varphi(x - \bar\mu_a)\right)} \, dx \\
      &= \frac{1}{2} \int_{\R^d} \abs{\sum_{a = 1}^m \left(\frac{1}{n} \sum_{\ell \in I_a} (\varphi(x - \mu^k_{\ell}) - \varphi(x - \bar\mu_a)) + \left(\frac{n_a}{n} - \frac1m\right)\varphi(x - \bar\mu_a)\right)} \, dx \\
      &\leq \frac{1}{2 n} \sum_{a = 1}^m \sum_{\ell \in I_a}  \int_{\R^d} \abs{\varphi(x - \mu^k_{\ell}) - \varphi(x - \bar\mu_a)} \, dx + \frac{1}{2} \sum_{a = 1}^m  \abs{\frac{n_a}{n} - \frac1m}.
   \end{align*}
   It is well-known \citep{barsov1987estimates,devroye2018total} that for all $a, b \in \R^d,$
   \begin{align*}
      &\frac{1}{2}\int_{\R^d} \abs{\varphi(x - a) - \varphi(x - b)}\,dx = \Prob{-\frac{\norm{a - b}}{2} \leq \xi \leq \frac{\norm{a - b}}{2}},
   \end{align*}
   where $\xi$ is a standard normal random variable. Thus
   \begin{align*}
      \frac{1}{2} \int_{\R^d} \abs{\varphi(x - a) - \varphi(x - b)}\,dx \leq \frac{1}{\sqrt{2 \pi}} \norm{a - b}
   \end{align*}
   and 
   \begin{align*}
      \textnormal{TV}(p_{\mu^k}, \bar{p}) 
      &\leq \frac{1}{\sqrt{2 \pi} n} \sum_{a = 1}^m \sum_{\ell \in I_a} \norm{\mu^k_{\ell} - \bar\mu_a} + \frac{1}{2} \sum_{a = 1}^m  \abs{\frac{n_a}{n} - \frac1m}.
   \end{align*}
   Using the result of Theorem~\ref{thm:mainmany},
   \begin{align*}
      \textnormal{TV}(p_{\mu^k}, \bar{p}) \leq \frac{\sqrt{\varepsilon}}{\sqrt{2 \pi}} + \frac{1}{2} \sum_{a = 1}^m  \abs{\frac{n_a}{n} - \frac1m},
   \end{align*}
   for all $k \geq \tilde{K}.$ Next,
   \begin{align*}
      \textnormal{TV}(p_{\mu^k}, \bar{p}) 
      \leq \frac{\sqrt{\varepsilon}}{\sqrt{2 \pi}} + \frac{1}{2} \sum_{a = 1}^m  \abs{\frac{n_a}{n} - p_a} + \frac{1}{2} \sum_{a = 1}^m \abs{p_a - \frac1m},
   \end{align*}
   where $p_a$ is the probability that $\bar\mu_a$ is the closest vector to $\mu^0_{1}$ among $\{\bar\mu_b\}_{b \in [m]}.$
   Using \eqref{eq:IMEDMwOCneYdqoINSF},
   \begin{align*}
      \textnormal{TV}(p_{\mu^k}, \bar{p}) 
      \leq \frac{\sqrt{\varepsilon}}{2} + \sqrt{\frac{m^2}{8 n}\log\frac{2m}{\rho}} + \frac{1}{2} \sum_{a = 1}^m \abs{p_a - \frac1m}.
   \end{align*}
   Since $\frac{m^2}{8 n}\log\frac{2m}{\rho} \leq \frac{\varepsilon}{4},$ we get the result of the theorem.
\end{proof}

\COROLTV*

\begin{proof}
   In the symmetric case, $p_a = \frac{1}{m}$ for all $a \in [m]$, so
   \[
   \frac12 \sum_{a=1}^m \abs{p_a - \frac{1}{m}} = 0.
   \]
   By Theorem~\ref{thm:thm_tv}, this gives $\textnormal{TV}(p_{\mu^k}, \bar p) \leq \sqrt{\varepsilon}$.

   For orthogonal equal-norm modes, write $\bar\mu_a = r e_a$ in some orthonormal basis. Then
   \[
   a^*(\eta) = \argmin_{a \in [m]} \norm{R^0 \eta - \bar\mu_a}^2
   = \arg\max_{a \in [m]} \inp{\eta}{\bar\mu_a} = \arg\max_{a \in [m]} \eta_a,
   \]
   where $\eta \sim \mathrm{Unif}(\mathbb S^{d-1})$. The coordinates $\eta_1, \dots, \eta_m$ are exchangeable. Thus
   \[
   p_a = \Prob{a^*(\eta) = a} = \frac1m
   \]
   for all $a \in [m].$

\end{proof}

\section{Proof of Theorem~\ref{thm:lower_bound}}

\begin{restatable}[Lower bound]{theorem}{THMLOWERBOUND}
   \label{thm:lower_bound}
   Let $\varepsilon \in (0, 1]$ and $\rho \in (0, 1].$ We consider the problem \eqref{eq:main_gener} with $m = 1,$ $\bar\mu_{1,1} = \bar\mu_{2,1},$ $d \geq 8 + \max\left\{n_{\max}, 2 \log\left(\nicefrac{n_{\max}^2}{\rho}\right), 8 \log\left(\nicefrac{n_{\max}^4}{2 \pi \rho^2}\right)\right\},$ and $n_{\min} < \nicefrac{n_{\max}}{32}.$ Assume that the initialization radius satisfies
   $R^0 = \max\left\{\sqrt{2048 \log(55552 \cdot n_{\max})}, 1024 \sqrt{16 n_{\max} \max\{\log\left(2^{66} n_{\max}^3 \bar{R}^4\right), 1\}} \bar{R}\right\},$
   where $\bar{R} \eqdef \max\{\norm{\bar \mu_{1,1}}, \norm{\bar \mu_{2,1}}\}.$ We sample the initial parameters as $\mu^0_{i,\ell} = R^0 \eta_{i,\ell} / \norm{\eta_{i,\ell}}$, where $\eta_{i,\ell} \sim \mathcal{N}(0, \mI)$ for all $i \in \{1,2\}, \ell \in [n_i]$, and the random vectors $\{\eta_{i,\ell}\}_{i \in \{1,2\},\ell \in [n_i]}$ are i.i.d. If the step size $\gamma \leq 2 n_{\min},$ then, with probability at least $1 - \rho,$ for all $k < \tilde{K}_1 \eqdef \frac{n_{\max}}{16 n_{\min}} \times \log\left(\frac{R^0}{16 \sqrt{\log\left(55552 n_{\max} (R^0)^4\right)}}\right),$ there exists $i \in \{1,2\}$ and $\ell \in [n_i]$ such that 
   $\norm{\mu^{k}_{i,\ell} - \bar\mu_{i,1}}^2 > \varepsilon.$
    If instead $\gamma > 2 n_{\min},$ then, with probability at least $1 - \rho,$ the method diverges.
\end{restatable}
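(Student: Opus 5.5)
The plan is to exploit the separability \eqref{eq:main_gener}. GD on $\mathcal L^{\mathrm{joint}}_{\mathrm{rev}}$ decouples into two independent GD runs with the same step size $\gamma$: one on $\mathcal L^{(1)}_{\mathrm{rev}}$ with $n_{\min}$ components, and one on $\mathcal L^{(2)}_{\mathrm{rev}}$ with $n_{\max}$ components. By Proposition~\ref{prop:grad-general}, the gradient of block $j$ carries the prefactor $\nicefrac{2}{n_j}$. Hence, in the far-apart regime, the dynamics of block $j$ is approximately the linear recursion
\[
\mu^{k+1}_{j,\ell}-\bar\mu_{j,1}\approx\Bigl(1-\tfrac{2\gamma}{n_j}\Bigr)\bigl(\mu^{k}_{j,\ell}-\bar\mu_{j,1}\bigr).
\]
The two claims come from the two blocks. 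If $\gamma\le 2n_{\min}$, block $2$ contracts too slowly. If $\gamma>2n_{\min}$, block $1$ has contraction factor below $-3$ and blows up.

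\emph{Step 1: high-probability event.} Apply Lemma~\ref{lemma:far} to each block separately, with $\rho/2$ each, so that $\Omega_2$ holds for both blocks. Here $m=1$, so $\Omega_1$ and $\Omega_3$ are vacuous. The assumption on $d$ (which includes $d\ge n_{\max}$ and the logarithmic terms) and the choice of $R^0$ are exactly what Lemma~\ref{lemma:params} (eq.~\eqref{eq:tdbKKNEBxox}) needs with $n$ replaced by $n_{\max}$. These give $\Delta_2\ge(R^0)^2/2\ge\hat\Delta_2$ and a small error level $E_1$.

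\emph{Step 2: the case $\gamma\le 2n_{\min}$.} Rerun the global-convergence-stage induction from the proof of Theorem~\ref{thm:mainmany} for block $2$ alone, specialised to $m=1$; only the $\hat\tau$ and $\tau$ errors remain. This gives \eqref{eq:rwPVzXoMnmQzZXKel} with $\norm{E^k_{2,\ell}}\le E_1$ for all $k\le\bar K_2$. Here $\bar K_2$ is \eqref{eq:TUyQcvjVGXjffeyd} with $n=n_{\max}$ and $\bar\Delta=\hat\Delta_2/2$. Then, for $k\le\bar K_2$,
\[
\norm{\mu^k_{2,\ell}-\bar\mu_{2,1}}\ge \Bigl(1-\tfrac{2\gamma}{n_{\max}}\Bigr)^k\frac{R^0}{2}-E_1 .
\]
Since $x\eqdef\nicefrac{2\gamma}{n_{\max}}\le\nicefrac{4n_{\min}}{n_{\max}}<\nicefrac18$, we have $-\log(1-x)\le 2x$. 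Therefore $(1-x)^k\ge\exp(-8kn_{\min}/n_{\max})$. For $k<\tilde K_1$ this exponential is at least $16\sqrt{\log(55552 n_{\max}(R^0)^4)}/R^0$. The right-hand side is then at least $2\sqrt{\hat\Delta_2}-E_1>1\ge\sqrt{\varepsilon}$, using $E_1\le\sqrt{\hat\Delta_2}/2$ and $\hat\Delta_2\ge 1$. It remains to check that $\tilde K_1\le\bar K_2$, so that the induction covers the whole range. This follows from $-\log(1-x)\le 2x$ and a comparison of the logarithms in the two definitions.

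\emph{Step 3: the case $\gamma>2n_{\min}$, and the main obstacle.} Now block $1$ satisfies $q\eqdef\abs{1-\nicefrac{2\gamma}{n_{\min}}}>3$. I would prove by induction that
\[
\norm{\mu^k_{1,\ell}-\bar\mu_{1,1}}\ge 2^k R^0/4
\quad\text{and}\quad
\norm{\mu^k_{1,i}-\mu^k_{1,j}}\ge 2^k\sqrt{\Delta_2}/2 .
\]
Write $\mu^{k+1}-\bar\mu=(1-\nicefrac{2\gamma}{n_{\min}})(\mu^k-\bar\mu)-\nicefrac{2\gamma}{n_{\min}}\,e^k$, with $e^k$ the averaged error $\mathbb E[\bar{\mathcal E}^k+\mathcal E^k]$. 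Distances and pairwise separations get multiplied by $q>3$ and then perturbed by $\nicefrac{2\gamma}{n_{\min}}\norm{e^k}$, which is then absorbed. This step is the obstacle. The error bounds of Lemmas~\ref{lemma:bound1} and \ref{lemma:bound2} carry the factor $R_{\max}^3$, and $R_{\max}$ now grows geometrically; the step size $\gamma$ is also unbounded. I would handle it as follows. By Lemma~\ref{lemma:dist_1}, $\norm{e^k}\le C R_k^3 n_{\min}\exp(-\mathrm{sep}_k^2/32)$, where $R_k\lesssim (q+1)^kR^0$ and $\mathrm{sep}_k\gtrsim 2^k\sqrt{\Delta_2}$. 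I would then show that $\nicefrac{2\gamma}{n_{\min}}\,\norm{e^k}\le\nicefrac{1}{10}\min\{\mathrm{sep}_k,\ 2^kR^0/4\}$. This holds because a Gaussian tail in $4^k\Delta_2$ beats the polynomial factor $(q+1)^{3k}\gamma$ once $(R^0)^2\gtrsim\log(n_{\max})$. If $\gamma$ is astronomically large this needs care, and I would first try to use $\sqrt{\Delta_2}\gtrsim R^0$ to get $\mathrm{sep}_{k+1}\gtrsim q\,\mathrm{sep}_k$ with $q\propto\gamma$, making the exponent scale like $\gamma^2$. Then $\norm{\mu^k_{1,\ell}}\to\infty$, i.e.\ the method diverges on the event of Step 1.
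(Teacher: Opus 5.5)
Your case $\gamma\le 2n_{\min}$ is correct and follows the paper's argument. You rerun the global stage on block $2$ (where $\gamma<n_{\max}/2$), lower-bound $\|\mu^k_{2,\ell}-\bar\mu_{2,1}\|$ through the unrolled recursion, and compare $\bar K_2$ with $\tilde K_1$ via $-\log(1-x)\le 2x$. One adjustment is needed: with $m=1$ the quantity $E_1=\delta_{\min}/(8R)$ is undefined, so you must fix an explicit small error level. The paper uses $\|E^k_{2,\ell}\|\le 248\, n_{\max}\max\{R,R^3\}e^{-\hat\Delta_2/32}\le\frac14$.

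The gap is in the case $\gamma>2n_{\min}$, which you flag yourself, and your proposed remedy does not close it. The inequality you plan to prove, $\frac{2\gamma}{n_{\min}}\|e^k\|\le\frac1{10}\min\{\mathrm{sep}_k,2^kR^0/4\}$, cannot hold uniformly in $\gamma$:
\begin{itemize}
\item At $k=0$, neither $\|e^0\|$ nor the right-hand side depends on $\gamma$, and $R^0$ is fixed by the statement. So the inequality fails for large $\gamma$ unless $e^0=0$.
\item For $k\ge1$, your hypotheses only give separations growing like $2^k$ while $R_k$ grows like $(q+1)^k$. The resulting bound, of order $(q+1)^{3k+1}(R^0)^3n_{\min}e^{-4^k\Delta_2/128}$, is not small once $q$ is large.
\end{itemize}
Your fallback (separations growing by a factor of order $q$) repairs $k\ge1$ but not $k=0$. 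It still tries to beat the step-size prefactor with a Gaussian tail, and at $k=0$ that tail's exponent does not involve $\gamma$.

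The missing observation is $\frac{2\gamma}{n_{\min}}=q+1\le 2q$. The error prefactor is of the same order as the expansion factor, so the error should be compared with $q$ times the current scale, which removes $\gamma$ entirely. Concretely, run the induction at scale $q^k$ rather than $2^k$: assume $\|\mu^k_{1,i}\|\le 2Rq^k$ and $\|\mu^k_{1,i}-\mu^k_{1,j}\|^2\ge q^{2k}\Delta_2/2$, with $\Delta_2$ your block-$1$ separation from Lemma~\ref{lemma:far}. Then $\mathbb E\|\bar{\mathcal E}^k_{1,\ell}\|\le 160R^3n_{\min}q^{3k}e^{-q^{2k}\Delta_2/64}$. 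Summing the unrolled recursion gives $\|E^k_{1,\ell}\|\le 640R^3\gamma e^{-\Delta_2/128}q^k\le 640R^3n_{\min}e^{-\Delta_2/128}q^{k+1}\le\frac14q^{k+1}$ for every $\gamma>2n_{\min}$. This is the paper's route, and it yields $\|\mu^{k+1}_{1,\ell}\|\ge q^{k+1}(R^0-\bar R)-\bar R-\frac14q^{k+1}\ge q^{k+1}R^0/8$, hence divergence.
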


\begin{proof}
   We define $\bar\mu_{1} \eqdef \bar\mu_{1,1} = \bar\mu_{2,1}.$ If the step size $\gamma \leq 2 n_{\min},$
   then the proof is the same as in Theorem~\ref{thm:mainmany}.
   Consider an event $\Omega = \Omega_2,$ where
   \begin{align*}
      \Omega_2 &\eqdef \left\{ \forall i \neq j \in [n_{\max}]: \norm{\mu^0_{2,i} - \mu^0_{2,j}}^2 \geq \Delta_2 \right\}.
   \end{align*}
   Using Lemma~\ref{lemma:far}, we get $\Prob{\Omega} \geq 1 - \rho$ with $\Delta_2 = (R^0)^2 \left(\frac{\sqrt{2\pi}\rho}{\sqrt d\, n_{\max}^2}\right)^{\frac{2}{d-1}}.$ 
   Similarly to the proof of Theorem~\ref{thm:mainmany}, we obtain
   \begin{align}
      \label{eq:OereByxtlYEseHusytmz2}
      \mu^{k+1}_{2,i} = \bar \mu_{1} + \left(1 - \frac{2 \gamma}{n_{\max}}\right)^{k+1} (\mu^0_{2,i} - \bar \mu_{1}) - E_{2,i}^k,
   \end{align}
   where
   \begin{align*}
      E_{2,\ell}^k \eqdef \frac{2 \gamma}{n_{\max}}\sum_{j=0}^{k} \left(1 - \frac{2 \gamma}{n_{\max}}\right)^{k - j} \mathbb E_{x \sim \mathcal N(\mu^j_{2,\ell},\mI)}
      \left[\bar{\mathcal{E}}^j_{2,\ell}(x)\right],
   \end{align*}
   and
   \begin{align*}
      \bar{\mathcal{E}}^k_{2,\ell}(x) \eqdef m_{\mu_2^k}(x) - \mu^k_{2,\ell} + \mC_{\mu_2^k}(x)\bigl(\mu^k_{2,\ell}-m_{\mu_2^k}(x)\bigr).
   \end{align*}
   Moreover
   \begin{align}
      \norm{E_{2,\ell}^k} \leq \max_{j \in \{0, \dots, k\}}\mathbb E_{x \sim \mathcal N(\mu^j_{2,\ell},\mI)}
      \left[\norm{\bar{\mathcal{E}}^j_{2,\ell}(x)}\right].
   \end{align}
   This holds since $\gamma < \frac{n_{\max}}{2}.$
   Similarly, for all $0 \leq k \leq \bar{K},$ we can prove that 
   \begin{align}
      &\norm{\mu^k_{2,\ell}} \leq R, \label{eq:R2}\\
      &\norm{\mu^{k}_{2,\ell} - \mu^{k}_{2,j}}^2 \geq \hat\Delta_2 \label{eq:Delta22} 
   \end{align}
   for all $\ell \neq j \in [n_{\max}],$ where 
   \begin{align}
      \label{eq:kjdsqYLZ2}
      R \eqdef 2 R^0, \quad \hat\Delta_2 \eqdef 32 \log\left(3472 n_{\max} R^4\right), \quad \bar{\Delta} \eqdef \frac{\hat\Delta_2}{2},
   \end{align}
   and
   \begin{align*}
      \bar{K} \eqdef \flr{\log\left(\frac{R^0}{4 \sqrt{\bar{\Delta}}}\right) / \left(-\log\left(1 - \frac{2 \gamma}{n_{\max}}\right)\right)}.
   \end{align*}
   Moreover,
   \begin{align*}
      \norm{E_{2,\ell}^k} \leq 248 n_{\max} \max\{R, R^3\} \exp\left(-\hat\Delta_2 / 32\right) \leq \frac{1}{4}
   \end{align*}
   for all $0 \leq k \leq \bar{K} - 1.$
   Using \eqref{eq:OereByxtlYEseHusytmz2}, for all $1 \leq k \leq \bar{K},$
   \begin{align*}
      \norm{\mu^{k}_{2,i} - \bar \mu_{1}} 
      &\geq \left(1 - \frac{2 \gamma}{n_{\max}}\right)^{k} \norm{(\mu^0_{2,i} - \bar \mu_{1})} - \norm{E_{2,i}^{k-1}} \\
      &\geq 2 \sqrt{\bar{\Delta}} - \frac{1}{4} \geq 2 > \sqrt{\varepsilon}
   \end{align*}
   due to the choice of $\bar{K}.$ Thus, the method required at least 
   \begin{align*}
      &\flr{\log\left(\frac{R^0}{4 \sqrt{\bar{\Delta}}}\right) / \left(-\log\left(1 - \frac{2 \gamma}{n_{\max}}\right)\right)} \geq \frac{n_{\max}}{16 n_{\min}} \log\left(\frac{R^0}{16 \sqrt{\log\left(55552 n_{\max} (R^0)^4\right)}}\right)
   \end{align*}
   iterations.

   We now consider the case when $\gamma > 2 n_{\min}.$ 
   Consider an event $\Omega = \Omega_1,$ where
   \begin{align}
      \Omega_1 &\eqdef \left\{ \forall i \neq j \in [n_{\min}]: \norm{\mu^0_{1,i} - \mu^0_{1,j}}^2 \geq \Delta_1 \right\} \label{eq:EvPlrGmTnwYCRQa2}
   \end{align}
   and $\Delta_1 = (R^0)^2 \left(\frac{\sqrt{2\pi}\rho}{\sqrt d\, n_{\min}^2}\right)^{\frac{2}{d-1}}.$ Using Lemma~\ref{lemma:far}, we get \(\Prob{\Omega_1} \geq 1-\rho\).
   Using mathematical induction, we assume that
   \begin{align}
      &\norm{\mu^{k}_{1,i}} \leq 2 R \left(\abs{1 - \frac{2 \gamma}{n_{\min}}}\right)^{k}, \label{eq:bGabYxOeWSPolMlw}\\
      &\norm{\mu^{k}_{1,i} - \mu^{k}_{1,j}}^2 \geq \left(\abs{1 - \frac{2 \gamma}{n_{\min}}}\right)^{2 k} \frac{\Delta_1}{2} \label{eq:nSjoGnew}
   \end{align}
   for all $k \geq 0,$ and for all $i \neq j \in [n_{\min}],$ where 
   \begin{align*}
      R \eqdef 2 R^0.
   \end{align*}
   The base case holds due to $\norm{\mu^{0}_{1,i}} = R^0$ and \eqref{eq:EvPlrGmTnwYCRQa2}. Similarly to the proof of Theorem~\ref{thm:mainmany}, we get 
   \begin{align}
      \label{eq:OereByxtlYEseHusytmz}
      \mu^{k+1}_{1,i} = \bar \mu_{1} + \left(1 - \frac{2 \gamma}{n_{\min}}\right)^{k+1} (\mu^0_{1,i} - \bar \mu_{1}) - E_{1,i}^k,
   \end{align}
   where
   \begin{align*}
      E_{1,\ell}^k \eqdef \frac{2 \gamma}{n_{\min}}\sum_{j=0}^{k} \left(1 - \frac{2 \gamma}{n_{\min}}\right)^{k - j} \mathbb E_{x \sim \mathcal N(\mu^j_{1,\ell},\mI)}
      \left[\bar{\mathcal{E}}^j_{1,\ell}(x)\right],
   \end{align*}
   and
   \begin{align*}
      \bar{\mathcal{E}}^k_{1,\ell}(x) \eqdef m_{\mu_1^k}(x) - \mu^k_{1,\ell} + \mC_{\mu_1^k}(x)\bigl(\mu^k_{1,\ell}-m_{\mu_1^k}(x)\bigr).
   \end{align*}
   Let us define $r_{1,i}^{k}(x)
   \eqdef
   \frac{
   \exp\left(-\frac12\|x-\mu^k_{1,i}\|^2\right)
   }{
   \sum_{j\in [n_{\min}]}
   \exp\left(-\frac12\|x-\mu^k_{1,j}\|^2\right)
   }$ and $\tau^k_{1, \ell}(x) \eqdef \sum_{i \in [n_{\min}], i \neq \ell} r_{1,i}^{k}(x).$ Using the same arguments as in Lemma~\ref{lemma:bound2} and \eqref{eq:RpwaEvNpBJdp},
   \begin{align*}
      \norm{\bar{\mathcal{E}}^k_{1,\ell}(x)} \leq 80 R^3 \left(\abs{1 - \frac{2 \gamma}{n_{\min}}}\right)^{3 k} \tau^k_{1,\ell}(x)
   \end{align*}
   with
   \begin{align*}
      &\mathbb E_{x \sim \mathcal N(\mu^k_{1,\ell},\mI)} \tau^k_{1,\ell}(x) \leq 2 n_{\min} \exp\left(-\left(\abs{1 - \frac{2 \gamma}{n_{\min}}}\right)^{2 k} \frac{\Delta_1}{64}\right),
   \end{align*}
   where we use \eqref{eq:nSjoGnew}. Therefore,
   \begin{align*}
      \norm{E_{1,\ell}^k} 
      &\leq \norm{\frac{2 \gamma}{n_{\min}}\sum_{j=0}^{k} \left(1 - \frac{2 \gamma}{n_{\min}}\right)^{k - j} \mathbb E_{x \sim \mathcal N(\mu^j_{1,\ell},\mI)}
      \left[\bar{\mathcal{E}}^j_{1,\ell}(x)\right]} \\
      &\leq \frac{2 \gamma}{n_{\min}} \sum_{j=0}^{k} \left(\abs{1 - \frac{2 \gamma}{n_{\min}}}\right)^{k - j} \mathbb E_{x \sim \mathcal N(\mu^j_{1,\ell},\mI)}\norm{\bar{\mathcal{E}}^j_{1,\ell}(x)} \\
      &\leq 320 R^3 \gamma\sum_{j=0}^{k} \left(\abs{1 - \frac{2 \gamma}{n_{\min}}}\right)^{k - j} \left(\abs{1 - \frac{2 \gamma}{n_{\min}}}\right)^{3 j} \exp\left(-\left(\abs{1 - \frac{2 \gamma}{n_{\min}}}\right)^{2 j} \frac{\Delta_1}{64}\right).
   \end{align*}
   Let $x \eqdef \abs{1 - \frac{2 \gamma}{n_{\min}}} > 2,$ then
   \begin{align*}
      \norm{E_{1,\ell}^k} 
      &\leq 320 R^3 \gamma  \left(\abs{1 - \frac{2 \gamma}{n_{\min}}}\right)^k \sum_{j=0}^{k} x^{2 j} \exp\left(-x^{2 j} \frac{\Delta_1}{64}\right).
   \end{align*}
   Due to $x \geq 2,$
   \begin{align*}
      \norm{E_{1,\ell}^k} 
      &\leq 320 R^3 \gamma  \exp\left(-\frac{\Delta_1}{128}\right) \left(\abs{1 - \frac{2 \gamma}{n_{\min}}}\right)^k \sum_{j=0}^{k} x^{2 j} \exp\left(-x^{2 j} \frac{\Delta_1}{128}\right).
   \end{align*}
   Since $\Delta_1 \geq 128$ for our choice of $R^0,$ $$\sum_{j=0}^{k} x^{2 j} \exp\left(-x^{2 j} \frac{\Delta_1}{128}\right) \leq \sum_{j=0}^{k} x^{2 j} \exp\left(-x^{2 j}\right) \leq e^{-1} + \int_{1}^{\infty} t e^{-t} \, d t \leq 2,$$
   and
   \begin{align*}
      \norm{E_{1,\ell}^k} 
      &\leq \frac{640 R^3 \gamma}{\exp\left(\frac{\Delta_1}{128}\right)} \left(\abs{1 - \frac{2 \gamma}{n_{\min}}}\right)^k = \frac{640 R^3 n_{\min}}{2 \exp\left(\frac{\Delta_1}{128}\right)} \times \frac{2 \gamma}{n_{\min}}  \left(\abs{1 - \frac{2 \gamma}{n_{\min}}}\right)^k \\
      &\leq \frac{640 R^3 n_{\min}}{\exp\left(\frac{\Delta_1}{128}\right)} \left(\abs{1 - \frac{2 \gamma}{n_{\min}}}\right)^{k+1},
   \end{align*}
   where we use that $\frac{2 \gamma}{n_{\min}} \leq 2 \abs{1 - \frac{2 \gamma}{n_{\min}}}.$
   Recall that 
   \begin{align*}
      \Delta_1 = (R^0)^2 \left(\frac{\sqrt{2\pi}\rho}{\sqrt d\, n_{\min}^2}\right)^{\frac{2}{d-1}} \geq \frac{(R^0)^2}{2}
   \end{align*}
   due to the condition on $d.$ Thus 
   \begin{align*}
      \norm{E_{1,\ell}^k} \leq \frac{5120 (R^0)^3 n_{\min}}{\exp\left(\frac{\Delta_1}{128}\right)} \left(\abs{1 - \frac{2 \gamma}{n_{\min}}}\right)^{k+1} \leq \frac{5120 (R^0)^3 n_{\min}}{\exp\left(\frac{(R^0)^2}{512}\right)} \left(\abs{1 - \frac{2 \gamma}{n_{\min}}}\right)^{k+1}.
   \end{align*}
   Since $R^0 \geq \sqrt{2048 \log(55552 \cdot n_{\max})},$ we get
   \begin{align}
      \label{eq:TiIdzOw}
      \norm{E_{1,\ell}^k} \leq \frac{1}{4} \left(\abs{1 - \frac{2 \gamma}{n_{\min}}}\right)^{k+1}.
   \end{align}
   Using \eqref{eq:OereByxtlYEseHusytmz},
   \begin{align*}
      \norm{\mu^{k+1}_{1,i} - \mu^{k+1}_{1,j}} 
      &\geq \abs{\left(1 - \frac{2 \gamma}{n_{\min}}\right)^{k+1}}\norm{\mu^0_{1,i} - \mu^0_{1,j}} - \norm{E_{1,i}^k} - \norm{E_{1,j}^k} \\
      &\geq \left(\abs{1 - \frac{2 \gamma}{n_{\min}}}\right)^{k+1} \sqrt{\Delta_1} - \norm{E_{1,i}^k} - \norm{E_{1,j}^k} \geq \left(\abs{1 - \frac{2 \gamma}{n_{\min}}}\right)^{k+1} \frac{\sqrt{\Delta_1}}{\sqrt{2}}
   \end{align*}
   since $\Delta_1 \geq 16,$
   and 
   \begin{align*}
      \norm{\mu^{k+1}_{1,\ell}} 
      &= \norm{\bar \mu_{1} + \left(1 - \frac{2 \gamma}{n_{\min}}\right)^{k+1} (\mu^0_{1,\ell} - \bar \mu_{1}) - E_{1,\ell}^k} \\
      &\leq \norm{\bar \mu_{1}} + \left(\abs{1 - \frac{2 \gamma}{n_{\min}}}\right)^{k+1}(\norm{\mu^0_{1,\ell}} + \norm{\bar \mu_{1}}) + \norm{E_{1,\ell}^k}.
   \end{align*}
   Since $\norm{\bar \mu_{1}} \leq \frac{R}{2},$ $\norm{\mu^0_{1,\ell}} = R^0 \leq \frac{R}{2},$ and $\abs{1 - \frac{2 \gamma}{n_{\min}}} \geq 1,$
   \begin{align*}
      \norm{\mu^{k+1}_{1,\ell}} \leq \frac{3 R}{2}\left(\abs{1 - \frac{2 \gamma}{n_{\min}}}\right)^{k+1} + \norm{E_{1,\ell}^k} \leq 2 R \left(\abs{1 - \frac{2 \gamma}{n_{\min}}}\right)^{k+1}
   \end{align*}
   because $R \geq 1.$ We have proved the next step of the mathematical induction. For all $k \geq 0,$ 
   \begin{align*}
      \norm{\mu^{k+1}_{1,\ell}} 
      &\geq \norm{\bar \mu_{1} + \left(1 - \frac{2 \gamma}{n_{\min}}\right)^{k+1} (\mu^0_{1,\ell} - \bar \mu_{1}) - E_{1,\ell}^k} \\
      &\geq \left(\abs{1 - \frac{2 \gamma}{n_{\min}}}\right)^{k+1}(\norm{\mu^0_{1,\ell}} - \norm{\bar \mu_{1}}) - \norm{\bar \mu_{1}} - \norm{E_{1,\ell}^k} \\
      &\geq \left(\abs{1 - \frac{2 \gamma}{n_{\min}}}\right)^{k+1}(R^0 - \bar{R}) - \bar{R} - \frac{1}{4} \left(\abs{1 - \frac{2 \gamma}{n_{\min}}}\right)^{k+1} \\
      &\geq \left(\abs{1 - \frac{2 \gamma}{n_{\min}}}\right)^{k+1} \frac{R^0}{8} \geq \frac{2^{k+1} R^0}{8}
   \end{align*}
   since $R^0 \geq 16 \bar{R}$ and $R^0 \geq 2.$ The last inequality proves that $\{\mu^{k}_{1,\ell}\}_{k \geq 0}$ diverges.
\end{proof}

\end{document}

%% file: macros.tex
\usepackage{graphicx}
\usepackage{apptools}
\usepackage[flushleft]{threeparttable}
\usepackage{array,booktabs,makecell}
\usepackage{multirow}
\usepackage{nicefrac}
\usepackage{wrapfig}
\usepackage{caption}
\usepackage{siunitx}
\usepackage{nccmath}
\usepackage{empheq}
\usepackage{bbm}
\usepackage{tabularx}

\usepackage{algorithmic}
\usepackage{algorithm}

\usepackage{color}
\usepackage{colortbl}
\definecolor{bgcolor}{rgb}{0.76,0.88,0.50}
\definecolor{bgcolor0}{rgb}{0.93,0.99,1}
\definecolor{bgcolor1}{rgb}{0.8,1,1}
\definecolor{bgcolor2}{rgb}{0.8,1,0.8}
\definecolor{bgcolor3}{rgb}{0.50,0.90,0.50}
\usepackage{tcolorbox}
\usepackage{pifont}

\definecolor{mydarkgreen}{RGB}{39,130,67}
\definecolor{mydarkorange}{RGB}{236,147,14}
\definecolor{mydarkred}{RGB}{192,47,25}
\definecolor{ruby}{RGB}{155,17,30}
\definecolor{chili}{RGB}{191,0,0}
\definecolor{sangria}{RGB}{146,0,10}
\definecolor{burgundy}{RGB}{128,0,32} 
\definecolor{darkred}{RGB}{132,0,0} 
\definecolor{cherry}{RGB}{192,0,0} 
\definecolor{grey}{RGB}{80,80,80} 

\definecolor{blue}{RGB}{0,0,255}

\usepackage[textsize=tiny]{todonotes}

\usepackage{xspace}

\usepackage[scaled=0.86]{helvet}

\newcommand{\norm}[1]{\left\| #1 \right\|}

\newcommand{\inp}[2]{\left\langle#1,#2\right\rangle} 
\newcommand{\abs}[1]{\left| #1 \right|}

\newcommand{\R}{\mathbb{R}} 

\newcommand{\Exp}[1]{{\mathbb{E}}\left[#1\right]}

\newcommand{\Prob}[1]{\mathbb{P}\left(#1\right)} 

\newcommand{\cO}{\mathcal{O}}

\newcommand{\cZ}{\mathcal{Z}}

\newcommand{\mA}{\mathbf{A}}
\newcommand{\mB}{\mathbf{B}}
\newcommand{\mC}{\mathbf{C}}

\newcommand{\mI}{\mathbf{I}}

\newcommand{\mL}{\mathbf{L}}

\newcommand{\mP}{\mathbf{P}}
\newcommand{\mZ}{\mathbf{Z}}
\newcommand{\mU}{\mathbf{U}}
\newcommand{\mE}{\mathbf{E}}

\newcommand{\eqdef}{:=}

\makeatletter
\newcommand{\vast}{\bBigg@{4}}

\DeclareMathOperator*{\argmin}{arg\,min}

\newcommand{\flr}[1]{\left\lfloor #1\right\rfloor} 
\newcommand{\ceil}[1]{\left\lceil #1\right\rceil}

\usepackage{longtable}